\documentclass{article}

\usepackage{tikz}
\usepackage{bbm}
\usepackage{eqnarray}
\usepackage{color}
\usepackage{bm}
\usepackage{amssymb}
\usepackage{epsfig}
\usepackage{epsf}
\usepackage{float}
\usepackage{subfigure}
\usepackage{amsfonts,amsmath,amsthm,fancyhdr,multirow,hyperref}
\usepackage{booktabs,longtable,authblk}
\usepackage{mathrsfs,hhline,enumerate}

\usepackage[top=2.5cm, bottom=2.5cm, left=3cm, right=3cm]{geometry}
\newtheorem{theorem}{Theorem}[section]
\newtheorem{assumption}{Assumption}
\newtheorem{corollary}[theorem]{Corollary}
\newtheorem{definition}[theorem]{Definition}

\newtheorem{lemma}[theorem]{Lemma}
\newtheorem{proposition}[theorem]{Proposition}
\newtheorem{remark}{Remark}

\newtheorem*{condition}{Condition}

\renewcommand{\H}{\mathcal{H}}

\newcommand{\T}{\mathcal{T}}

\newcommand{\W}{\mathcal{W}}
\renewcommand{\l}{\left}
\renewcommand{\r}{\right}

\newcommand{\be}{\mathbb{E}}
\newcommand{\bn}{\mathbb{N}}
\newcommand{\br}{\mathbb{R}}

\newcommand{\md}{\mathcal{D}}
\newcommand{\me}{\mathcal{E}}
\newcommand{\mg}{\mathcal{G}}
\newcommand{\mv}{\mathcal{V}}
\numberwithin{equation}{section}

\title{Resolution-Independent Analysis of Encoder--Decoder Operator Learning via Limiting Kernels$^\dag$\footnotetext{\dag~All authors contributed equally to this work and are listed alphabetically. The work described in this paper is supported by the National Natural Science Foundation of China under Grant Nos.~12171093 and 12571099, 
and by the Discovery Project (DP240101919) of the Australian Research Council. Email addresses: leishi@fudan.edu.cn (L. Shi), jqyang24@m.fudan.edu.cn (J.-Q. Yang), dingxuan.zhou@sydney.edu.au (D.-X. Zhou). The corresponding author
is Jia-Qi Yang.}}

\author[1]{Lei Shi}
\author[1]{Jia-Qi Yang}
\author[2]{Ding-Xuan Zhou}
\affil[1]{School of Mathematical Sciences and Shanghai Key Laboratory for Contemporary Applied Mathematics, Fudan University, Shanghai 200433, China.}
\affil[2]{School of Mathematics and Statistics, the University of Sydney, Sydney, New South Wales 2006, Australia.}
\date{}

\begin{document}
	\maketitle
\begin{abstract}

Operator learning is formulated on function spaces, but training data are typically available only through finite-dimensional representations. In encoder--decoder architectures, a matrix-valued kernel on the encoded space induces an operator-valued kernel on the original function spaces, and the corresponding reproducing kernel Hilbert spaces are isometrically isomorphic. As the input and output resolutions increase, the induced kernels converge to a limiting kernel, in the sense of operator-norm
convergence of their associated integral operators, allowing regularity assumptions to be stated independently of the encoding resolution. 
For regularized stochastic gradient descent, we establish upper bounds
for decreasing and fixed step sizes, separating the encoding and
regularization terms from optimization terms of order \(t^{-\theta}\)
and \(T^{-\theta'}\), respectively, for any \(\theta,\theta'\in(0,1)\).
We further prove lower bounds showing that these encoding-induced terms are generally unavoidable. The analysis is further extended to encoder--decoder neural networks through the limiting neural tangent kernel (NTK), yielding error bounds with an additional finite-width term and polynomial parameter and sample complexity guarantees when the encoding errors decay algebraically. The framework covers matrix-valued kernels constructed from radial and dot product kernels, NTKs arising from wide encoder--decoder neural networks, and encoder--decoder pairs based on Fourier, Legendre polynomial, wavelet, PCA, or pointwise sampling representations.



\end{abstract}

{\textbf{Keywords and phrases:} Operator learning, operator-valued kernels, reproducing kernel Hilbert spaces, neural tangent kernel, encoder--decoder architectures.}

\section{Introduction}

Operator learning \cite{boulle2024mathematical,subedi2025operator} provides a natural framework for learning maps between function spaces from data. Such operators arise, for example, in partial differential equation models \cite{2021Fourier}, inverse problems \cite{nelsen2025operator}, dynamical systems \cite{wang2023long}, optimal control problems \cite{hwang2022solving}, and statistical problems involving functional data \cite{shi2025nonlinear}.  The target operator is typically defined intrinsically, independently of any finite-dimensional representation. 
In practice, however, the input and output functions are often observed, stored, and processed only through point samples, truncated basis coefficients, or other finite-resolution representations. 
These are not merely implementation details but can shape the induced model class and enter the error estimates. This paper investigates the operator classes induced on the original function spaces through finite representations and develops rigorous error estimates that explicitly quantify their effects.

Finite-resolution representations enter operator learning in two natural ways. The first begins with a model formulated as a mapping between infinite-dimensional function spaces and then examines its finite-resolution realization. Fourier neural operators (FNOs) \cite{2021Fourier,kovachki2023neural} provide a representative example. 
They are formulated as mappings between function spaces, whereas their numerical implementation relies on function values on grids, discrete Fourier transforms, and spectral truncation. The implemented network may therefore be viewed as a finite-resolution realization of an architecture defined on function spaces.
Recent work has made this perspective quantitative by analyzing discretization errors and aliasing effects in finite grid FNOs \cite{lanthaler2024discretization}, and by decomposing statistical, truncation, and discretization errors from a learning theoretic perspective \cite{Subedi2024ControllingSD}. These results quantify how grid discretization, spectral truncation, and sampling affect the finite-resolution realization of the underlying function space model. This viewpoint treats finite resolution mainly as an implementation of a model already specified at the function space level.

A complementary perspective starts from finite-dimensional representations and studies the operator models they induce on the original function spaces. This viewpoint is naturally realized by encoder--decoder architectures
\cite{bhattacharya2021model,gupta2021multiwavelet,lu2021learning,fanaskov2023spectral,song2023approximation,tripura2023wavelet,batlle2024kernel,nelsen2024operator,liu2024deep,cheng2026learning,yang2026efficient}. Given an input encoder $\me_1^{d_1}$ mapping into \(\mathbb R^{d_1}\), a learned map $f:\mathbb R^{d_1}\to \mathbb R^{d_2}$, and an output decoder $\md_2^{d_2}$ mapping into the output function space, the induced operator takes the form
\[
    u \longmapsto \md_2^{d_2} f(\me_1^{d_1}u),
\]
where $d_1$ and $d_2$ denote the input and output encoding dimensions, respectively. Representative encoder--decoder constructions include principal component analysis (PCA) representations \cite{bhattacharya2021model}, pointwise sampling combined with minimum-norm kernel interpolation \cite{batlle2024kernel}, and expansions in Fourier \cite{fanaskov2023spectral}, Legendre polynomial \cite{song2023approximation}, or wavelet \cite{gupta2021multiwavelet} bases. 
Through composition with the input encoder and output decoder, a model class of maps from $\mathbb R^{d_1}$ to $\mathbb R^{d_2}$ induces an operator class on the original function spaces. Varying $d_1$ and $d_2$ then yields a family of such induced operator classes.

Encoder--decoder architectures have been studied from the perspectives of approximation theory, complexity analysis, and statistical learning. In approximation theory, universal approximation results have been obtained for continuous operators under such architectures \cite{godeke2025new}. Within the framework of deep operator networks \cite{lu2021learning}, algebraic approximation rates have been proved for classes of Lipschitz operators \cite{2026Deep}. These approximation rates, however, rely on super expressive activations or nonstandard neural network architectures, and these highly expressive constructions can entail substantial parameter encoding costs \cite{lanthaler2024operator,lanthaler2025parametric}. Regarding complexity, the analysis of PCA-Net shows that, for general Lipschitz or Hölder-smooth
operator classes, the number of nonzero network parameters required to achieve accuracy $\epsilon$ cannot in general be bounded polynomially in $\epsilon^{-1}$, although this obstruction can be avoided for operators arising from certain PDEs \cite{lanthaler2023operator}. From a statistical learning perspective, \cite{liu2024deep} established non-asymptotic generalization error bounds 
for neural network estimators obtained by empirical risk minimization in the encoder--decoder framework
when the target operator is Lipschitz continuous or exhibits a low dimensional structure. More recently, \cite{cheng2026learning} obtained improved approximation and generalization bounds for Fr\'echet differentiable operators, though the required network size and sample size still do not, in general, admit polynomial bounds in $\epsilon^{-1}$.

Despite these advances, existing studies typically focus on the approximation, generalization, and complexity properties of encoder--decoder architectures for prescribed classes of target operators. This paper instead starts from a model class on encoded finite-dimensional representation spaces, characterizes the operator class induced on the original function spaces through the encoder and decoder, and studies its limiting behavior as the input and output resolutions are refined.

To make this viewpoint precise, we work within the reproducing kernel Hilbert space (RKHS) framework
\cite{christmann2008support,berlinet2011reproducing,micchelli2005learning,carmeli2006vector,carmeli2010vector}.
The learned map $f:\mathbb R^{d_1}\to \mathbb R^{d_2}$ between encoded spaces is taken from the vector-valued RKHS $\mathcal H_{\widetilde K}$ associated with a matrix-valued kernel $\widetilde K$ on the encoded input space. This includes kernels constructed from radial or dot product kernels, as well as neural tangent kernels (NTKs) arising when $f$ is implemented by a wide neural network. The input and output encoder--decoder pairs may be based on Fourier, Legendre polynomial, or wavelet expansions, PCA representations, or pointwise sampling combined with minimum-norm kernel interpolation. This setting goes beyond the standard fixed-kernel framework: the induced kernel, the corresponding RKHS, and the associated integral operator all vary with the encoding resolution. We therefore study the family of operator-valued kernels induced by the encoder--decoder mechanism and its limiting behavior as the input and output resolutions are refined. 

The main contributions of this paper can be summarized as follows.

We identify the operator-valued kernel naturally induced by an encoder--decoder representation. Given a matrix-valued kernel $\widetilde K$ on the encoded input space, the input encoder and the output decoder induce an operator-valued kernel $K$ on the original function spaces. In this formulation, the dependence of the lifted model on the encoder and decoder is incorporated into $K$. We show that the RKHS $\mathcal H_K$ associated with $K$ is isometrically identified with the RKHS $\mathcal H_{\widetilde K}$ through the lifting map. Under this identification, kernel ridge regression with $\widetilde K$ on encoded vector data is equivalent, after lifting, to operator-valued kernel ridge regression with $K$ on the original function spaces. The same correspondence carries gradient descent and stochastic gradient descent (SGD) iterations in the encoded space to recursions on the original function spaces, allowing the convergence analysis to be carried out directly there.

Building on this kernel lifting, we characterize the limiting behavior of the operator-valued kernels induced by encoder--decoder representations as the input and output resolutions are refined. We give conditions under which these induced kernels converge to a limiting kernel, in the sense that their associated integral operators converge to the integral operator generated by the limiting kernel. The finite-resolution operator classes and their limiting counterpart are described by the RKHSs associated with the induced kernels and the limiting kernel, respectively. This limiting-kernel viewpoint provides a resolution-independent reference for formulating the assumptions used in the subsequent convergence analysis.

Within this framework, we analyze SGD for kernel-based learning with encoder--decoder architectures. We establish error bounds both in the online setting with decreasing step sizes and in the finite-horizon setting with fixed step sizes. These bounds are not direct consequences of existing analyses of operator-valued kernel learning \cite{guo2023capacity,li2024towards,meunier2024optimal,shi2024learning,yang2025learning,yang2025kernel}, which are typically formulated for a fixed operator-valued kernel on the original function spaces and do not explicitly track the errors induced by encoding and decoding. In contrast, our bounds exhibit the three-term decomposition
\[
\mathrm{error}\ \lesssim\ \text{input-side error}
+\text{output-encoding error}
+C_{\mathrm{sgd}} t^{-\theta},
\qquad \theta\in(0,1).
\]
The displayed optimization term corresponds to decreasing step sizes.
For fixed step sizes, it is
$\widetilde C_{\mathrm{sgd}}T^{-\theta'}$; see Theorems~\ref{Thm1} and~\ref{Thm2}. The first term is governed by the kernel discrepancy and the regularization parameter, while the second term is the output-encoding error. After these representation and regularization effects are separated from the optimization error, the iteration-dependent term decays as $t^{-\theta}$ for any $\theta<1$, where the decay exponent \(\theta\) can be chosen arbitrarily close to \(1\).
This decomposition contributes not only to operator learning but also to kernel learning beyond the fixed-kernel setting, by providing error bounds for a family of resolution-dependent kernels induced by finite representations. We further prove lower bounds showing that, when $r\le1$, the first two terms cannot be improved in general.


For encoder--decoder neural networks, we introduce a limiting NTK and characterize its main properties. 
For both step-size regimes, we prove a parameter deviation bound
of order $\lambda^{-1}$, uniform over the training horizon,
under the width condition $M\gtrsim\lambda^{-4}$;
see Propositions~\ref{prop13} and~\ref{prop:constant-stability}. 
This leads to error bounds that separate the finite-width error of order $\lambda^{-6}/M$, the errors induced by finite-resolution representations, and the optimization error. In particular, when the encoding errors decay algebraically with the encoding dimensions, our bounds imply that the number of nonzero network parameters and the sample size required to achieve accuracy $\epsilon$ grow only polynomially in $\epsilon^{-1}$. Consequently, in this regime, encoder--decoder neural networks admit polynomial complexity guarantees, whereas such polynomial bounds in $\epsilon^{-1}$ are generally unavailable for classes of Fr\'echet differentiable operators.


The rest of the paper is organized as follows. Section \ref{Sec: Mathematical Preliminaries} collects the mathematical preliminaries. Section \ref{Section: Kernel Lifting} constructs the operator-valued kernels and RKHSs induced on the original function spaces, and introduces the corresponding limiting kernels. Section \ref{section 2.1} develops the convergence analysis for kernel-based encoder--decoder learning, including online and finite-horizon SGD upper and lower bounds. Section \ref{Section examples} presents concrete kernel families and encoder--decoder constructions covered by our framework. Section \ref{section 2.2} extends the analysis to encoder--decoder neural networks in the NTK regime. 
Section \ref{Sec 7} provides the main proofs of the kernel SGD results, while the proofs of the NTK results are collected in Appendix \ref{Sec 8}.

\section{Mathematical Preliminaries} \label{Sec: Mathematical Preliminaries}

Suppose that $\mathcal{U}$ and $\mathcal{V}$ are separable Hilbert spaces,
equipped with inner products $\langle\cdot,\cdot\rangle_{\mathcal{U}}$
and $\langle\cdot,\cdot\rangle_{\mathcal{V}}$, and corresponding norms
$\|\cdot\|_{\mathcal{U}}$ and $\|\cdot\|_{\mathcal{V}}$. 
Let $\rho$ be a Borel probability measure on $\mathcal{U}\times\mathcal{V}$, and let
$(u,v)\sim\rho$. We denote by $\rho_u$ the marginal distribution of $\rho$ on $\mathcal{U}$. Write
    $L^2(\mathcal{U},\rho_{u};\mv)$ for the Lebesgue-Bochner space \cite[Chapter 1]{hytonen2016analysis} consisting of (equivalence classes of) strongly measurable operators $\mathcal{G}:\mathcal{U}\to\mv$ such that the Bochner norm
    \[
    \l\|\mathcal{G}\r\|_{\rho_{u}}:=\l(\int_{\mathcal{U}}\l\|\mathcal{G}(u)\r\|_{\mv}^2\mathrm{d}\rho_{u}(u)\r)^{1/2}
    \] is finite. In the scalar-valued case $\mathcal{V}=\mathbb{R}$, we simply write
$L^2(\mathcal{U},\rho_u)$.
In this paper, we assume that the data are generated according to the regression model
\begin{equation} \label{regression}
    v=\mathcal{G}^{\dagger}(u)+\epsilon,
\end{equation}
where the target operator $\mathcal{G}^\dagger:\mathcal{U}\to\mathcal{V}$ is defined as the conditional expectation
\[
\mathcal{G}^\dagger(u)
:= \mathbb{E}[v |u]
\in L^2(\mathcal{U},\rho_u;\mathcal{V}),
\]
and $\epsilon$ is a zero-mean noise term satisfying $\mathbb{E}\big[\|\epsilon\|_{\mathcal{V}}^2\big]\le \sigma^2$. 

Throughout the paper, we write
\[
z:=(u,v)\in\mathcal U\times\mathcal V,
\qquad
z_t:=(u_t,v_t),
\]
where \(\{z_t\}_{t\ge1}\) is an i.i.d. sequence distributed according to
\(\rho\). Given \(t\ge1\), let \(z^t:=(z_1,\ldots,z_t)\), and let
\(\mathbb E_{z^t}\) be the expectation with respect to the law of \(z^t\). If
\(\Phi:\mathcal U\to\mathcal V\) is measurable and \(\mu\) is a probability
measure on \(\mathcal U\), then \(\Phi_{\#}\mu\) denotes the push-forward of
\(\mu\) under \(\Phi\). On Euclidean spaces, \(\|\cdot\|_2\) is the standard
Euclidean norm. Given a measurable function \(F:\mathcal U\to\mathbb R\), we
write \(\operatorname*{ess\,sup}_{u\sim\rho_u}F(u)\) for its essential supremum
with respect to \(\rho_u\). Moreover, given \(R>0\), define
\[
\mathcal B_R(\mathcal U)
:=
\{u\in\mathcal U:\|u\|_{\mathcal U}\le R\},
\]
the closed ball of radius \(R\) in \(\mathcal U\).

\paragraph{Operator-theoretic notions.}
Let $A:H_1\to H_2$ be a linear operator between two Hilbert spaces $(H_1,\langle\cdot,\cdot\rangle_{H_1},\|\cdot\|_{H_1})$ and $(H_2,\langle\cdot,\cdot\rangle_{H_2},\|\cdot\|_{H_2})$. The collection of bounded linear operators from $H_1$ to $H_2$, equipped with the operator norm $\|A\|= \sup _{\| f\| _{H_1}\leq 1}\| Af\| _{H_2}$, forms a Banach space, denoted by $\mathcal{L}( H_1 , H_2)$; when $H_1=H_2$, we write $\mathcal{L}(H_1)$ for brevity. Recall that an operator $A\in\mathcal{L}(H_1,H_2)$ is said to be Hilbert-Schmidt if $\sum_{k\geq1}\l\|Ae_i\r\|_{H_2}^2<\infty$ for some (equivalently, any) orthonormal basis $\{e_k\}_{k\geq1}$ of $H_1$. The class of Hilbert-Schmidt operators from $H_1$ to $H_2$ constitutes a Hilbert space when endowed with the inner product $\langle A,B\rangle_{\mathrm{HS}}=\sum_{k\geq1}\langle Ae_k,Be_k\rangle_{H_2}$ and the associated Hilbert–Schmidt norm $\|\cdot\|_{\mathrm{HS}}$. We denote this space by
    $S_2(H_1,H_2)$. For any $A\in \mathcal{L}(H_1,H_2)$, the adjoint operator $A^*\in\mathcal{L}(H_2,H_1)$ is defined as the unique operator satisfying $\langle Af,f^{\prime}\rangle_{H_2}=\langle f,A^*f^{\prime}\rangle_{H_1}$ for all $f\in H_1$ and $f^\prime\in H_2$. Moreover, there holds $\|A\|=\|A^*\|$. An operator $A\in\mathcal{L}(H_1)$ is called self-adjoint if $A^*=A$, and positive if it is self-adjoint and satisfies $\langle Af,f\rangle_{H_1}\geq0$ for all $f\in H_1$.

\paragraph{Vector-valued reproducing kernel Hilbert spaces.}
Let $K:\mathcal{U}\times\mathcal{U}\to\mathcal{L}(\mathcal{V})$ be an operator-valued kernel satisfying:
\begin{enumerate}[(1)]
    \item \emph{Hermitian symmetry}: for all $u,u'\in\mathcal{U}$,
    $
    K(u,u') = K(u',u)^*;
    $
    \item \emph{Positive semi-definiteness}: for any $n\in\mathbb{N}$,
    any $\{u_i\}_{i=1}^n\subset\mathcal{U}$, and any $\{v_i\}_{i=1}^n\subset\mathcal{V}$,
    \[
        \sum_{i,j=1}^n \langle K(u_i,u_j)v_j, v_i\rangle_{\mathcal{V}} \ge 0 .
    \]
\end{enumerate}
Such a kernel $K$ uniquely induces a reproducing kernel Hilbert space (RKHS)
$\mathcal{H}_K$ of $\mathcal{V}$-valued functions on $\mathcal{U}$
\cite{micchelli2005learning,carmeli2006vector,carmeli2010vector},
defined as the completion of the linear span
\[
    \mathcal{H}_K := \overline{\mathrm{span}}
    \left\{ K(\cdot,u)v \mid u\in\mathcal{U},\ v\in\mathcal{V} \right\},
\]
equipped with an inner product $\langle\cdot,\cdot\rangle_K$
satisfying the reproducing property
\[
    \langle K(\cdot,u)v, K(\cdot,u')v' \rangle_K
    = \langle K(u',u)v, v' \rangle_{\mathcal{V}},
    \qquad
    \langle \mathcal{G}, K(\cdot,u)v \rangle_K
    = \langle \mathcal{G}(u), v \rangle_{\mathcal{V}},
\]
for all $\mathcal{G}\in\mathcal{H}_K$, $u,u'\in\mathcal{U}$, and $v,v'\in\mathcal{V}$. Throughout the paper, for a (scalar-, matrix-, or operator-valued) kernel $K$, we denote by
$\H_K$ the induced RKHS and write $\langle\cdot,\cdot\rangle_K$ and $\|\cdot\|_K$
for its inner product and norm, respectively.

When $\mathcal{V}$ is finite-dimensional, $K$ reduces to a matrix-valued kernel.
In the scalar-valued case $\mathcal{V}=\mathbb{R}$, $K$ coincides with a
scalar-valued kernel, and the reproducing property simplifies to
\[
    \langle f, K(\cdot,u) \rangle_K = f(u),
    \qquad f\in\mathcal{H}_K,\ u\in\mathcal{U}.
\]
Associated with $K$ and a probability measure $\rho_u$ on $\mathcal{U}$, assume that
\[
    \int_{\mathcal U}\int_{\mathcal U}
    \|K(u,u')\|^2\,
    \mathrm d\rho_u(u)\,\mathrm d\rho_u(u')<\infty .
\]
We define the integral operator
$L_K : L^2(\mathcal{U},\rho_u;\mathcal{V})
\to L^2(\mathcal{U},\rho_u;\mathcal{V})$ by
\[
    (L_K \mathcal{G})(\cdot)
    := \int_{\mathcal{U}} K(\cdot,u)\,\mathcal{G}(u)\,\mathrm{d}\rho_u(u).
\]
The operator $L_K$ is bounded, self-adjoint, and positive.

For the classical theory of scalar-valued RKHSs and their associated integral
operators, we refer to \cite{christmann2008support,berlinet2011reproducing}.
Further background on operator-valued kernels can be found in
\cite{micchelli2005learning,carmeli2006vector,carmeli2010vector}.

\paragraph{Tensor product of Hilbert spaces.} Let $H_1$ and $H_2$ be real Hilbert spaces. The algebraic tensor product 
$H_1 \odot H_2$ is equipped with the inner product
\[
\langle u \otimes v,\; u' \otimes v' \rangle
= \langle u,u' \rangle_{H_1}\,\langle v,v' \rangle_{H_2},
\]
extended by bilinearity to all finite sums. The Hilbert tensor product 
$H_1 \otimes H_2$ is then defined as the completion of $H_1 \odot H_2$, so that 
every element can be approximated by finite linear combinations of simple 
tensors $u \otimes v$. If $\{e_i\}$ and $\{f_j\}$ are orthonormal bases of 
$H_1$ and $H_2$, then $\{e_i \otimes f_j\}$ forms an orthonormal basis of 
$H_1 \otimes H_2$. Moreover, simple tensors $u \otimes v$ can be naturally regarded as 
rank-one operators, yielding the canonical isometric identification
\[
H_1 \otimes H_2 \;\cong\; S_2(H_2,H_1),
\]
via $u \otimes v \mapsto \langle \cdot,v\rangle_{H_2} u$.
Furthermore, one has the standard identification 
\[
L^2(\mathcal{U},\rho_u;\mathcal{V}) \;\cong\; L^2(\mathcal{U},\rho_u)\otimes \mathcal{V},
\]
for measurable space $(\mathcal{U},\rho_u)$ and real Hilbert space $\mathcal{V}$,
where the correspondence is given by
\begin{equation} \label{temp27}
    \varphi(\cdot)\,v \;\longmapsto\; \varphi \otimes v,
\quad \varphi \in L^2(\mathcal{U},\rho_u),\; v \in \mathcal{V},
\end{equation}
see \cite[Section 12]{aubin2011applied} for details. Given bounded linear operators $A:H_1\to H_1$ and $B:H_2\to H_2$, 
we define their tensor product $A\otimes B$ on simple tensors by
\[
(A\otimes B)(u\otimes v) := (Au)\otimes (Bv), 
\quad u\in H_1,\; v\in H_2.
\]
This prescription extends by linearity to the algebraic tensor product 
$H_1\odot H_2$ and by continuity to the Hilbert tensor product 
$H_1\otimes H_2$. The operator $A\otimes B$ is bounded with 
$\|A\otimes B\|=\|A\|\,\|B\|$, and the mapping 
$(A,B)\mapsto A\otimes B$ is bilinear. 
Moreover, $(A_1\otimes B_1)(A_2\otimes B_2)
    =(A_1A_2)\otimes (B_1B_2)$.

\paragraph{Tensor-product representation of integral operators.} Let $K:\mathcal{U}\times\mathcal{U}\to\mathcal{L}(\mathcal{V})$ be an
operator-valued kernel of the separable form
\[
K(u,u') = k(u,u')\,P,
\]
where $k:\mathcal{U}\times\mathcal{U}\to\mathbb{R}$ is a scalar-valued Mercer
kernel satisfying $\int_{\mathcal{U}}k(u,u)\mathrm{d}\rho_u(u)<\infty$
(so that the associated integral operator $L_k$ is trace-class \cite[Theorem 4.27]{christmann2008support}, thus compact) and
$P:\mathcal{V}\to\mathcal{V}$ is an orthogonal projection.
Then the corresponding integral operator
$L_K:L^2(\mathcal{U},\rho_u;\mathcal{V})\to L^2(\mathcal{U},\rho_u;\mathcal{V})$
admits the tensor-product representation
\[
L_K = L_k \otimes P.
\]

Indeed, since $L_k$ is compact, self-adjoint, and positive, it admits the
spectral decomposition
\[
L_k = \sum_{i\ge1}\lambda_i
\langle \cdot,\varphi_i\rangle_{\rho_u}\,\varphi_i.
\]
Let $\{v_j\}_{j=1}^m$ ($1\leq m\le\infty$) be an orthonormal basis of
$\mathrm{ran}(P)\subset\mathcal{V}$.
Then a direct computation shows that
\[
L_K\big(\varphi_i(\cdot)\,v_j\big)
= \lambda_i\,\varphi_i(\cdot)\,v_j
= (L_k\varphi_i)(\cdot)\,v_j,
\qquad i\ge1,\; j=1,\dots,m,
\]
which yields $L_K=L_k\otimes P$.
In particular, if $P=I_{\mathcal{V}}$, then $L_K=L_k\otimes I_{\mathcal{V}}$.

Moreover, in the strong operator topology,
\[
L_K = \sum_{i\ge1}\sum_{j=1}^m
\lambda_i\,
\langle \cdot,\varphi_i(\cdot)\,v_j\rangle_{\rho_u}
\,\varphi_i(\cdot)\,v_j,
\]
and consequently \footnote{
More generally, if the operator-valued kernel is of the form $K(u,u') = k(u,u')\,A$, with $A$ a bounded self-adjoint positive operator
on $\mathcal V$ (not necessarily compact),
a direct computation shows that
$L_K = L_k \otimes A$.
Since $L_k \otimes I_{\mathcal V}$ and $I \otimes A$ are bounded self-adjoint
operators that commute, the joint spectral theorem and the associated
joint functional calculus apply.
Consequently, the fractional powers defined via the spectral calculus satisfy
\[
L_K^{\,r} = (L_k \otimes A)^r = L_k^{\,r} \otimes A^{\,r},
\qquad r > 0.
\]
See, for instance, Schmüdgen~\cite[Section~5.5]{schmudgen2012unbounded}.
} 
\[
L_K^r = L_k^r \otimes P,\qquad r>0.
\]

\section{Kernel Lifting and Limiting Kernels} \label{Section: Kernel Lifting}
In applications, the functions $u$ and $v$ in regression model \eqref{regression} are accessed through finite-dimensional encodings rather than as elements of the underlying function spaces. Specifically, we consider observations of the form $\l(\mathcal{E}_1^{d_1}u_t,\mathcal{E}_2^{d_2}v_t\r)$, where
\[
    \mathcal{E}_1^{d_1}:\mathcal{U}\to\br^{d_1},\qquad\mathcal{E}_2^{d_2}:\mathcal{V}\to\br^{d_2}
\]
are prescribed encoding operators on the input and output spaces. Our objective is to approximate the target operator $\mathcal{G}^{\dagger}$ using only these encoded observations. On the output side, we assume that the encoding admits an associated decoder $\mathcal{D}_2^{d_2}=\l(\mathcal{E}_2^{d_2}\r)^*$ satisfying
\[
\mathcal{E}_2^{d_2}\circ\mathcal{D}_2^{d_2}=I_{\br^{d_2}}, \qquad\mathcal{D}_2^{d_2}\circ\mathcal{E}_2^{d_2}={P_2^{d_2}},
\]
where ${P_2^{d_2}}$ denotes the orthogonal projection onto a $d_2$-dimensional subspace of $\mv$. 


\paragraph{From matrix-valued kernels to operator-valued kernels.}

We formalize the encoder--decoder learning procedure by showing how a matrix-valued kernel on the encoded input space induces an operator-valued kernel on the original function spaces. This identifies the operator class associated with each resolution and allows us to analyze the resulting family as the resolution is refined.

The construction is motivated by the following procedure: one learns a map between encoded finite-dimensional spaces using a matrix-valued kernel and then lifts the estimator to the original function spaces through the encoder--decoder architecture. The lifted estimator is then formulated through an operator-valued kernel.


For concreteness, we consider a diagonal matrix-valued kernel, defined as a scalar-valued
Mercer kernel multiplied by the $d_2$-dimensional identity matrix,
\begin{equation}\label{MVK} \tag{MVK}
    \widetilde{K}(x,x') := k(x,x') I_{d_2},
\end{equation}
where $k$ is a scalar-valued kernel.
This construction induces a vector-valued RKHS and simplifies the exposition. The construction extends directly to general matrix-valued kernels.

Learning with $\widetilde{K}$ yields an estimator $f:\br^{d_1}\to\br^{d_2}$. Composing $f$ with the encoder--decoder pair produces an operator estimate
\[
\mathcal{G}(u)=\mathcal{D}_2^{d_2}f\l(\mathcal{E}_1^{d_1}u\r).
\]
As $f$ varies in $\mathcal H_{\widetilde K}$, these estimates form the induced operator class.

Next, we present two regularized least-squares formulations of this learning problem. Given encoded samples $x_t=\mathcal{E}_1^{d_1}u_t$ and $y_t=\mathcal{E}_2^{d_2}v_t$, we first consider the matrix-valued kernel ridge regression problem
\begin{equation}\label{MVK-KRR} \tag{MVK-KRR}
    \hat{f}\in\arg\min_{f\in\H_{\widetilde{K}}}\l\{\frac{1}{n}\sum_{t=1}^n\|f(x_t)-y_t\|_2^2+\lambda\l\|f\r\|^2_{\widetilde{K}}\r\},
\end{equation}
where $\H_{\widetilde{K}}$ denotes the vector-valued RKHS induced by $\widetilde{K}$. The corresponding operator estimator is obtained by lifting $\hat{f}$ through the encoder--decoder maps.
Alternatively, exploiting the encoder--decoder architecture, one may formulate
the learning problem directly on the original function spaces. Define the operator-valued kernel $K:\mathcal{U}\times\mathcal{U}\to\mathcal{L}\l(\mathcal{V}\r)$ by
\begin{equation}\label{OVK} \tag{OVK}
    K(u,u'):=k\l(\mathcal{E}^{d_1}_1u,\mathcal{E}^{d_1}_1u'\r){P_2^{d_2}}.
\end{equation}
Let $\H_{K}$ be the associated $\mathcal{V}$-valued RKHS. We then consider the operator-valued kernel ridge regression problem
\begin{equation}\label{OVK-KRR} \tag{OVK-KRR}
    \hat{\mathcal{G}}\in\arg\min_{\mathcal{G}\in\H_{K}}\l\{\frac{1}{n}\sum_{t=1}^n\|\mathcal{G}(u_t)-v_t\|_{\mathcal{V}}^2+\lambda\l\|\mathcal{G}\r\|^2_{K}\r\}.
\end{equation}
For the isometric lifting result below, we additionally assume that the input encoder $\mathcal{E}^{d_1}_1$ is surjective. Under the encoder--decoder architecture, the two formulations are related
by an explicit isometric correspondence between the associated RKHSs.
\begin{proposition}[Equivalence via isometric lifting] \label{prop1.1}
    The following statements hold.
    \begin{enumerate}[(1)]
        \item (Isometric correspondence between RKHSs) The linear map
        \[
        T:\mathcal{H}_{\widetilde{K}}\to\mathcal{H}_K,\quad(Tf)(u):=\mathcal{D}_2^{d_2}f(\mathcal{E}^{d_1}_1u)
        \]
        is well-defined and is an isometric isomorphism onto $\mathcal{H}_K$.
        \item (Equivalence of regularized objectives)
        For any $f\in\mathcal{H}_{\widetilde{K}}$, letting $\mg=Tf$, one has
        \begin{equation*}
            \frac{1}{n}\sum_{t=1}^n\|\mg(u_t)-v_t\|_{\mathcal{V}}^2+\lambda\|\mg\|_{K}^2=\frac{1}{n}\sum_{t=1}^n\|f(x_t)-y_t\|_{2}^2+\lambda\|f\|_{{\widetilde{K}}}^2+\frac{1}{n}\sum_{t=1}^n\|(I-{P_2^{d_2}})v_t\|_{\mathcal{V}}^2.
        \end{equation*}
        Consequently, the minimizers of \eqref{MVK-KRR} and \eqref{OVK-KRR} are related by
        \[
        \widehat{\mg}=T\hat{f},\quad\mathrm{~i.e.,~}\quad\widehat{\mg}(u)=\mathcal{D}^{d_2}_2\hat{f}(\mathcal{E}^{d_1}_1u),\quad\forall u\in\mathcal{U},
        \]
        and the two problems differ only by an additive constant independent of the estimator.
    \end{enumerate}
\end{proposition}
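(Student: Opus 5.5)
The plan is to establish part (1) by checking $T$ on kernel sections and extending by density and completeness, and then to obtain part (2) from an orthogonal decomposition in $\mathcal V$.

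\textbf{Part (1): $T$ on kernel sections.} Write $E:=\mathcal{E}_1^{d_1}$, $D:=\mathcal{D}_2^{d_2}$ and $P:=P_2^{d_2}=DE_2$, where $E_2:=\mathcal{E}_2^{d_2}=D^*$. For $u'\in\mathcal U$ and $v\in\mathcal V$, put $x'=Eu'$. By \eqref{OVK},
\[
K(\cdot,u')v=k(E\cdot,x')\,DE_2v=T\big(\widetilde K(\cdot,x')E_2v\big).
\]
So $T$ sends the section $\widetilde K(\cdot,x')y$ with $y=E_2v$ to $K(\cdot,u')v$. Since $E_2$ is surjective (because $E_2D=I$) and $E$ is surjective by assumption, the image of the generators of $\mathcal H_{\widetilde K}$ under $T$ is exactly the generating set of $\mathcal H_K$.

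\textbf{Part (1): isometry on the span.} Because $D^*D=E_2D=I$, the map $D$ is an isometry. Hence for $y=E_2v$ and $y'=E_2v'$,
\[
\langle K(\cdot,u)v,K(\cdot,u')v'\rangle_K=k(x',x)\langle Pv,v'\rangle_{\mathcal V}=k(x',x)\langle y,y'\rangle_2,
\]
which equals $\langle\widetilde K(\cdot,x)y,\widetilde K(\cdot,x')y'\rangle_{\widetilde K}$. By bilinearity, $T$ preserves norms on finite linear combinations. This also makes $T$ well defined on the span: if a combination vanishes in $\mathcal H_{\widetilde K}$, its image has norm zero.

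\textbf{Part (1): extension to the completion.} A densely defined isometry extends uniquely to $\overline T:\mathcal H_{\widetilde K}\to\mathcal H_K$, with closed range containing a dense set, so $\overline T$ is onto. We still need $\overline T f$ to agree with the pointwise formula $Df(E\,\cdot)$. Take $f_n\to f$ in $\mathcal H_{\widetilde K}$ with $f_n$ in the span. Then $f_n(x)\to f(x)$, since point evaluation is continuous. Also $Tf_n\to\overline Tf$ in $\mathcal H_K$, so $(Tf_n)(u)\to(\overline Tf)(u)$. Since $(Tf_n)(u)=Df_n(Eu)\to Df(Eu)$, the two agree. I expect this step, identifying the abstract extension with the concrete lifting formula together with well-definedness, to be the main (mild) obstacle. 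The pointwise-convergence argument above resolves it.

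\textbf{Part (2): the objective identity.} For $\mathcal G=Tf$ we have $\mathcal G(u_t)\in\operatorname{ran}P$. Decompose $v_t=Pv_t+(I-P)v_t$; by Pythagoras,
\[
\|\mathcal G(u_t)-v_t\|^2=\|Df(x_t)-DE_2v_t\|^2+\|(I-P)v_t\|^2=\|f(x_t)-y_t\|_2^2+\|(I-P)v_t\|^2,
\]
where the last step uses that $D$ is an isometry. Together with $\|\mathcal G\|_K=\|f\|_{\widetilde K}$ from part (1), this gives the identity.

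\textbf{Part (2): correspondence of minimizers.} Since $T$ is a bijection, minimizing over $\mathcal G\in\mathcal H_K$ is the same as minimizing over $f\in\mathcal H_{\widetilde K}$. The two objectives differ only by a constant independent of the estimator, so the minimizers correspond via $\widehat{\mathcal G}=T\hat f$. Both minimizers are unique because the objectives are strictly convex for $\lambda>0$.
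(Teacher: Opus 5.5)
Your proposal is correct and follows essentially the same route as the paper. Both arguments match the kernel sections of $\mathcal H_{\widetilde K}$ and $\mathcal H_K$ using surjectivity of the encoders, verify inner products on the dense span, extend by continuity and get surjectivity from closed range, and split the data-fit term through the orthogonal projection $P_2^{d_2}$. Your pointwise-convergence argument, showing that the continuous extension agrees with $\mathcal D_2^{d_2}f(\mathcal E_1^{d_1}u)$, spells out a step the paper only asserts.
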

The above equivalence extends to general matrix-valued kernels $\widetilde{K}$. In this case, the induced operator-valued kernel is given by  
\[
K(u,u')=\mathcal{D}_2^{d_2}\widetilde{K}\l(\mathcal{E}^{d_1}_1u,\mathcal{E}^{d_1}_1u'\r)\mathcal{E}_2^{d_2},
\]
and the proof proceeds analogously; see Appendix~\ref{Appendix 1}. 

We also record the reverse construction. Starting from an operator-valued kernel $K:\mathcal{U}\times\mathcal{U}\to\mathcal{L}(\mathcal{V})$, one can derive a matrix-valued kernel adapted to a chosen encoder--decoder
representation.  This construction requires an input-side decoder $\mathcal{D}_1^{d_1}$, which maps $\br^{d_1}$ back to $\mathcal{U}$.
The induced matrix-valued
kernel on the encoded spaces is given by
\[
\widetilde{K}(x,x')
:=\mathcal{E}_2^{d_2}
K\bigl(\mathcal{D}_1^{d_1}x,\mathcal{D}_1^{d_1}x'\bigr)
\mathcal{D}_2^{d_2}.
\]
By the same argument, the kernel ridge regression problem based on
$\widetilde{K}$ corresponds to an operator-valued kernel ridge regression
problem associated with the kernel
\[
P_2^{d_2}
K\bigl(\mathcal{D}_1^{d_1}\mathcal{E}_1^{d_1}u,
\mathcal{D}_1^{d_1}\mathcal{E}_1^{d_1}u'\bigr)
P_2^{d_2}.
\]
This reverse viewpoint shows that passing through finite representations
generally modifies the operator-valued kernel.
Universal approximation results give qualitative approximation guarantees for encoder--decoder architectures.  Under mild continuity and integrability assumptions on
the target operator, approximation in $L^2$ can be obtained
by combining the $L^2$-universality of scalar-valued kernels with the decay of the
encoding error as the encoding dimension increases.
Stronger qualitative approximation results, such as uniform approximation of
continuous operators on compact sets, require additional structural assumptions,
for instance the bounded approximation property
\cite[Chapter~7]{johnson2001handbook} or the encoder--decoder approximation
property introduced in \cite{godeke2025new}, together with $cc$-universal kernels
\cite{sriperumbudur2011universality}.

For quantitative approximation results based on RKHS methods, whether for functions or operators, it is common to impose a source condition that characterizes the regularity of the target relative to the kernel. Such conditions are typically formulated through fractional powers of the integral operator induced by the kernel, interpreted via functional calculus. Sharper rates often require additional spectral assumptions on this integral operator, such as trace or capacity conditions that quantify the effective dimension of the associated RKHS.

In the encoder--decoder setting, however, these kernel-based regularity assumptions become resolution dependent. At the encoded level, the integral operator associated with \eqref{MVK} depends on the push-forward distribution $(\mathcal{E}_1^{d_1})_{\#}\rho_u$. After lifting to the operator-valued kernel \eqref{OVK}, the kernel still varies with $d_1$, and so does the corresponding integral operator. Hence, source conditions imposed separately at each fixed resolution are not stable under refinement of the input encoding.

For quantitative analysis, regularity assumptions should be formulated in a way that is stable under refinement of the encoding. Motivated by Proposition~\ref{prop1.1}, we lift the finite-dimensional problem \eqref{MVK-KRR} to the operator-valued formulation \eqref{OVK-KRR} and study the limit as $d_1,d_2\to\infty$. 
We summarize the limiting-kernel viewpoint schematically as
\begin{equation*} 
\lim_{d_1,d_2\to\infty}
k\bigl(\mathcal{E}^{d_1}_1u,\mathcal{E}^{d_1}_1u'\bigr){P_2^{d_2}}
= k_\infty(u,u') I_{\mathcal V},
\end{equation*}
where $k_\infty:\mathcal{U}\times\mathcal{U}\to\mathbb{R}$ is a scalar-valued
kernel, referred to as the limiting scalar kernel.

For notational convenience, we introduce the encoder-induced scalar kernel
\[
k_{d_1}(u,u'):=k\l(\mathcal{E}^{d_1}_1u,\mathcal{E}^{d_1}_1u'\r),
\]
the finite-resolution operator kernel
\[
K_{d_1,d_2}:=k_{d_1}{P_2^{d_2}},
\]
the projected limiting kernel
\[
K_{\infty,d_2}:=k_\infty {P_2^{d_2}},
\]
and the limiting operator kernel
\[
K_{\infty,\infty}:=k_{\infty}I_{\mv}.
\]
These kernels are summarized in Table~\ref{tab:kernels}. To measure the effect of the input-side encoding on the induced kernel, we introduce the kernel discrepancy
\[
\Delta := \|L_{d_1}-L_\infty\|,
\]
where $L_{d_1}$ and $L_\infty$ are the integral operators induced by $k_{d_1}$ and $k_\infty$, respectively. 
For the quantitative analysis, we impose the following conditions
on the input kernel discrepancy and the output-projection error.


\begin{table}[tbp]
\centering
\renewcommand{\arraystretch}{1.25}
\begin{tabular}{ll@{\qquad}}
\hline
\textbf{Kernel} & \textbf{Definition} \\
\hline
Encoder-induced scalar kernel 
& $k_{d_1}:\mathcal{U}\times \mathcal{U}\to \mathbb{R}$ \\

Finite-resolution operator kernel
& $K_{d_1,d_2} = k_{d_1} {P_2^{d_2}}$ \\

Limiting scalar kernel 
& $k_{\infty}:\mathcal{U}\times \mathcal{U}\to \mathbb{R}$ \\

Projected limiting kernel 
& $K_{\infty,d_2} = k_{\infty} {P_2^{d_2}}$ \\

Limiting operator kernel 
& $K_{\infty,\infty} = k_{\infty} I_{\mv}$ \\

\hline
\end{tabular}
\caption{Summary of kernels used in the analysis.}
\label{tab:kernels}
\end{table}

\begin{condition}
Suppose that there exists $\kappa>0$ such that
\[
\operatorname*{ess\,sup}_{u\sim\rho_u} k_{d_1}(u,u)\le \kappa^2
\quad\text{for all sufficiently large } d_1,
\]
and that
\[
\Delta = \bigl\|L_{d_1}-L_{\infty}\bigr\|\;\longrightarrow\;0,
\qquad \text{as } d_1\to\infty.
\]
In addition, assume that the output-projection error vanishes, namely
\[
\mathbb{E}_{v\sim(\mathcal{G}^\dagger)_{\#}\rho_u}
\bigl[\|(P_2^{d_2}-I_{\mathcal V})v\|_{\mathcal V}^2\bigr]
\;\longrightarrow\;0,
\qquad \text{as } d_2\to\infty.
\]
\end{condition}


The above condition is satisfied for a broad class of kernels and encoder--decoder constructions. For kernels, it covers radial and dot product kernels, including Gaussian, Mat\'ern, inverse multiquadric, Cauchy, polynomial, and exponential dot product kernels. For encoders, it covers spectral truncations based on Fourier, Legendre, wavelet, or PCA expansions; see Section~\ref{Section examples}.


The limiting-kernel viewpoint also applies to other kernel limits. For instance, random-feature approximations give empirical kernels that converge to a population kernel as the number of sampled features increases. Neural tangent kernels provide another limiting kernel in the infinite-width regime. Their interaction with encoder--decoder representations is discussed in Section~\ref{section 2.2}.


\section{Error Analysis for Kernel-Based Encoder--Decoder Learning} \label{section 2.1}

\paragraph{SGD under encoder--decoder architectures.}
By the representer theorem, \eqref{MVK-KRR} can be reduced to a finite-dimensional problem. It can also be analyzed from a spectral regularization perspective. In this paper, we focus on SGD, which is natural for online learning and is consistent with the kernel gradient dynamics used later to relate the present RKHS analysis to the training dynamics of wide encoder--decoder neural networks.

We first formulate regularized SGD in the encoded space and then lift the
iterates through the encoder--decoder maps. Fix $\lambda>0$. For the diagonal
kernel \eqref{MVK}, consider the regularized single-sample objective
\[
\ell(f;(x,y))
:=\frac12\|f(x)-y\|_2^2+\frac{\lambda}{2}\|f\|_{\widetilde K}^2,
\qquad f\in\mathcal H_{\widetilde K}.
\]
Its gradient in $\mathcal H_{\widetilde K}$ is
\[
\nabla_f \ell(f;(x,y))
=
k(\cdot,x)\bigl(f(x)-y\bigr)+\lambda f .
\]
Given encoded samples
$x_t=\mathcal E_1^{d_1}u_t$ and $y_t=\mathcal E_2^{d_2}v_t$, the SGD iteration
is
\begin{equation*}
\begin{cases}
f_1 := 0,\\[2pt]
f_{t+1}
:= (1-\lambda\eta_t)f_t
-\eta_t\,k(\cdot,x_t)\bigl(f_t(x_t)-y_t\bigr),
\end{cases}
\end{equation*}
where $\eta_t>0$ is the step size at iteration $t$.

Define the lifted iterates
\[
\mathcal G_t:=\mathcal D_2^{d_2}\circ f_t\circ\mathcal E_1^{d_1}.
\]
Using $y_t=\mathcal E_2^{d_2}v_t$ and
$\mathcal D_2^{d_2}\mathcal E_2^{d_2}=P_2^{d_2}$, the corresponding recursion for
$\mathcal G_t$ is
\begin{equation}\label{SGD1}
\begin{cases}
\mathcal G_1 := 0,\\[2pt]
\mathcal G_{t+1}
:= (1-\lambda\eta_t)\mathcal G_t
-\eta_t\,k_{d_1}(\cdot,u_t)
\bigl(\mathcal G_t(u_t)-P_2^{d_2}v_t\bigr).
\end{cases}
\end{equation}
By the iteration \eqref{SGD1}, for all $u\in\mathcal{U}$,
$\mathcal G_t(u)\in\operatorname{ran}(P_2^{d_2})$, and hence \eqref{SGD1} can
equivalently be written as
\[
\mathcal G_{t+1}
=
(1-\lambda\eta_t)\mathcal G_t
-\eta_t\,k_{d_1}(\cdot,u_t)
P_2^{d_2}\bigl(\mathcal G_t(u_t)-v_t\bigr).
\]
This is the SGD recursion associated with the finite-resolution
operator-valued kernel $K_{d_1,d_2}=k_{d_1}P_2^{d_2}$.

Under the kernel discrepancy $\Delta$, the SGD dynamics driven by $K_{d_1,d_2}$ 
can first be compared to the recursion associated with $K_{\infty,d_2}$, and 
subsequently, as the output-encoding dimension $d_2$ increases, to 
$K_{\infty,\infty}$. The regularization parameter $\lambda$ is chosen to 
balance the effect of the kernel discrepancy with the regularization bias.

The recursion \eqref{SGD1} is obtained by computing the lift of the encoded-space 
SGD iterates through the encoder--decoder maps. For encoders satisfying the 
conditions in Section \ref{Section: Kernel Lifting}, this lift is isometric; however, the 
derivation here only requires that $\mathcal D_2^{d_2}\mathcal E_2^{d_2} = P_2^{d_2},$
which allows the inclusion of pointwise sampling combined with minimum-norm kernel interpolation in the analysis.


We denote by
$L_{d_1}$, $L_{d_1,d_2}$, $L_{\infty}$, $L_{\infty,d_2}$, and
$L_{\infty,\infty}$
the integral operators associated with
$k_{d_1}$, $K_{d_1,d_2}$, $k_\infty$, $K_{\infty,d_2}$, and
$K_{\infty,\infty}$, respectively.
Here $L_{d_1}$ and $L_\infty$ act on $L^2(\mathcal U,\rho_u)$, while
$L_{d_1,d_2}$, $L_{\infty,d_2}$, and $L_{\infty,\infty}$ act on
$L^2(\mathcal U,\rho_u;\mathcal V)$. In particular,
\[
    L_{d_1,d_2}=L_{d_1}\otimes P_2^{d_2},\qquad
    L_{\infty,d_2}=L_\infty\otimes P_2^{d_2},\qquad
    L_{\infty,\infty}=L_\infty\otimes I_{\mathcal V}.
\]
Throughout the paper, we write $a\lesssim b$ if $a\le Cb$, where $C$ is
independent of $\Delta$, $\lambda$, $t$, $T$, and the step-size scale $\eta_1$ or $\eta$. The constant may depend on the fixed exponents $r$, $\theta$,
and $\theta'$. 
When Section~\ref{section 2.2} is invoked, $C$ is
also independent of the network width $M$ and the confidence level $\delta$.

The limiting kernel $K_{\infty,\infty}$ serves as the reference for the
regularity assumption. The discrepancy with finite-resolution kernels is quantified
by the input kernel discrepancy and the output-encoding error. We formulate
the source condition with respect to the associated integral operator $L_{\infty,\infty}$,
which does not depend on the finite encoding dimensions $d_1$ and $d_2$.
\begin{assumption} \label{source condition}
    We assume that the target operator $\mathcal{G}^{\dagger}$ satisfies
    \begin{equation*}
        \mathcal{G}^{\dagger} = L_{\infty,\infty}^r\mathcal{G}^{\mathrm{src}} = \l(L_{\infty}^r\otimes I_{\mv}\r)\mathcal{G}^{\mathrm{src}}
    \end{equation*}
for some $r>0$ and $\mathcal{G}^{\mathrm{src}}\in L^2(\mathcal U,\rho_u;\mathcal V)$.
\end{assumption}
This assumption quantifies the regularity of $\mathcal G^\dagger$ through
$L_{\infty,\infty}$ and is independent of the encoding dimensions $d_1$ and
$d_2$. Such source-type assumptions arise naturally in regularization theory for
kernel-based learning \cite{caponnetto2007optimal,smale2007learning,ying2008online,yang2025learning,yang2025kernel}.

As shown in \cite[Remark~2]{yang2025kernel}, Assumption \ref{source condition} is
equivalent to $\mathcal{G}^{\dagger}\in
    \bigl[\mathcal H_{K_{\infty,\infty}}\bigr]^{2r},$ where \(\bigl[\mathcal H_{K_{\infty,\infty}}\bigr]^{2r}\) denotes the
vector-valued interpolation space associated with \(K_{\infty,\infty}\).
More explicitly, for \(\alpha\geq 0\), define
\[
\bigl[\mathcal H_{K_{\infty,\infty}}\bigr]^{\alpha}
:=
\Psi\!\left(
S_2\bigl(\bigl[\mathcal H_{k_\infty}\bigr]^{\alpha},\mathcal V\bigr)
\right)
=
\bigl\{
\mathcal G:\mathcal G=\Psi(A),\;
A\in S_2\bigl(\bigl[\mathcal H_{k_\infty}\bigr]^{\alpha},\mathcal V\bigr)
\bigr\},
\]
where \(\bigl[\mathcal H_{k_\infty}\bigr]^{\alpha}\) denotes the scalar-valued
interpolation space associated with \(k_\infty\), \(S_2(\cdot,\cdot)\) denotes
the Hilbert--Schmidt class, and \(\Psi\) is the canonical isometric
identification induced by \eqref{temp27}.

\subsection{Upper Bounds on the Prediction Error}
\label{subsection: upper bound}
We consider an online learning regime in which data arrive sequentially and the
total number of samples is unknown and potentially unbounded. The estimator is
updated iteratively, and convergence must be maintained despite the persistent
stochastic noise introduced by newly arriving observations. This naturally calls
for a decreasing step-size schedule. In particular, polynomially decaying step
sizes provide an effective way to gradually mitigate the effect of noise while
retaining sufficient learning capacity in the early stages of iteration, without
relying on a fixed horizon or averaging of iterates.

\begin{theorem} \label{Thm1}
Suppose that Assumption \ref{source condition} holds with $r>0$ and 
$\mathcal{G}^{\mathrm{src}}\in L^2(\mathcal{U},\rho_{u};\mv)$. 
Let $\{\mathcal{G}_t\}_{t\geq1}$ be defined through \eqref{SGD1} with decreasing step sizes
\(\eta_t = \eta_1 t^{-\theta}\) for $0<\theta<1$ and a regularization parameter $\lambda>0$. 
Assume further that $\lambda\le1$. 

Then there exists a constant $\bar\eta>0$ such that for all $0<\eta_1\le \bar\eta$, 
\begin{equation*}
\begin{aligned}
\mathbb{E}_{z^t}\Big[\|\mathcal{G}_{t+1}-\mathcal{G}^{\dagger}\|_{\rho_u}^2\Big]
\;\lesssim\;&
\underbrace{E_{\mathrm{enc}}(\lambda,\Delta,r)}_{\text{input-side error}} \;+\;
\underbrace{\mathbb{E}_{v\sim (\mathcal{G}^\dagger)_{\#}\rho_u}
\Big[\big\|(P_2^{d_2}-I_{\mv})v\big\|_{\mv}^2\Big]}_{\text{output-encoding error}}\;+
\underbrace{C_{\mathrm{sgd}}\, t^{-\theta}}_{\text{optimization error}}.
\end{aligned}
\end{equation*}
Here the encoder- and regularization-dependent bias term is
\[
E_{\mathrm{enc}}(\lambda, \Delta, r)
:=
\Delta^{2\min\{r,1\}}
\;+\;\lambda^{\min\{2r,2\}},
\]
and the iteration-independent coefficient is
\[
\begin{aligned}
C_{\mathrm{sgd}}
:={}&
\eta_1^{-2r}
(\lambda\eta_1)^{
\min\left\{2r-\frac{\theta}{1-\theta},\,0\right\}
}
+\eta_1 
\begin{cases}
1+\log(1+\lambda^{-1}),
&0<\theta<\frac12,\\[1mm]
1+\log\left(1+(\lambda\eta_1)^{-1}\right),
&\theta=\frac12,\\[1mm]
(\lambda\eta_1)^{-\frac{2\theta-1}{1-\theta}},
&\frac12<\theta<1.
\end{cases}
\end{aligned}
\]
\end{theorem}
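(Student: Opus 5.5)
We split the error around the regularized population target of the finite-resolution kernel. Let
\[
\mathcal G_\lambda:=(L_{d_1,d_2}+\lambda I)^{-1}L_{d_1,d_2}\mathcal G^\dagger,
\]
which is the minimizer of the regularized population risk over $\mathcal H_{K_{d_1,d_2}}$. Because the data enter the recursion \eqref{SGD1} only through $P_2^{d_2}v_t$, and $\mathbb E[P_2^{d_2}v\mid u]=P_2^{d_2}\mathcal G^\dagger(u)$, the population fixed point of \eqref{SGD1} is
\[
(L_{d_1,d_2}+\lambda I)^{-1}L_{d_1,d_2}(I\otimes P_2^{d_2})\mathcal G^\dagger=(L_{d_1}\otimes P_2^{d_2})(\cdots).
\]
We then write
\[
\mathbb E\|\mathcal G_{t+1}-\mathcal G^\dagger\|^2_{\rho_u}\lesssim \mathbb E\|\mathcal G_{t+1}-\mathcal G_\lambda\|^2_{\rho_u}+\|\mathcal G_\lambda-(I\otimes P_2^{d_2})\mathcal G^\dagger\|^2_{\rho_u}+\|(I\otimes (I-P_2^{d_2}))\mathcal G^\dagger\|^2_{\rho_u}.
\]
The last term is exactly the output-encoding error.

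For the middle (approximation) term, we write $(I\otimes P_2^{d_2})\mathcal G^\dagger=(L_\infty^r\otimes P_2^{d_2})\mathcal G^{\rm src}$, using the tensor structure and $P^r=P$. We then use
\[
\mathcal G_\lambda-(I\otimes P_2^{d_2})\mathcal G^\dagger=-\lambda\big((L_{d_1}+\lambda)^{-1}L_\infty^{r}\otimes P_2^{d_2}\big)\mathcal G^{\rm src},
\]
so only the scalar operator $\lambda(L_{d_1}+\lambda)^{-1}L_\infty^{r}$ needs to be bounded, with $r$ capped at $\min\{r,1\}$.

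We split $L_\infty^{r}=L_{d_1}^{r}+(L_\infty^r-L_{d_1}^r)$. The first piece gives $\lambda(L_{d_1}+\lambda)^{-1}L_{d_1}^{r}$, of norm at most $\lambda^{\min\{r,1\}}$. For the second piece:
\begin{itemize}
\item For $r\le1$, operator monotonicity (the Hölder-type estimate $\|A^r-B^r\|\le\|A-B\|^r$ for positive operators) gives a bound $\Delta^r$, and $\|\lambda(L_{d_1}+\lambda)^{-1}\|\le1$.
\item For $r>1$, we write $L_\infty^r=L_\infty L_\infty^{r-1}$ and use $\|L_\infty-L_{d_1}\|=\Delta$ together with the uniform bound $\kappa^2$. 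This gives $\lambda+\Delta$.
\end{itemize}
Squaring yields $E_{\rm enc}$.

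For the first (sample/optimization) term, we follow the standard online regularized SGD analysis (Ying--Pontil / Smale--Yao style) for the fixed kernel $K_{d_1,d_2}$, with constants depending only on $\kappa$. Define the error $e_t=\mathcal G_t-\mathcal G_\lambda$. It satisfies
\[
e_{t+1}=(I-\eta_t(L_{d_1,d_2}+\lambda))e_t+\eta_t\xi_t,
\]
where $\xi_t$ is a martingale difference. Unrolling gives a bias term $\prod_{j\le t}(I-\eta_j(L+\lambda))\mathcal G_\lambda$ and a variance sum $\sum_j\eta_j^2\,\mathbb E\|\prod_{i>j}(\cdots)L^{1/2}\xi_j\|^2$.

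For the bias, we use $\prod_j(1-\eta_j(\sigma+\lambda))\le \exp(-(\sigma+\lambda)\sum\eta_j)$ with $\sum_{j\le t}\eta_j\asymp\eta_1t^{1-\theta}$. Combined with the regularity of $\mathcal G_\lambda$ inherited from the source condition (again through $\Delta$), this produces the term $\eta_1^{-2r}(\lambda\eta_1)^{\min\{2r-\theta/(1-\theta),0\}}t^{-\theta}$.

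For the variance, we need $\mathbb E\|\xi_j\|^2$ to be bounded uniformly. We get this by first proving that $\mathbb E\|\mathcal G_t\|^2_K$ stays bounded for $\eta_1\le\bar\eta$, via an induction on the recursion. The variance sum then reduces to the scalar sums
\[
\sum_j\eta_j^2\,\sup_\sigma \sigma\, e^{-2(\sigma+\lambda)\sum_{i>j}\eta_i}.
\]
Estimating these in the three regimes $\theta<\tfrac12$, $\theta=\tfrac12$, $\theta>\tfrac12$ gives the logarithmic or $(\lambda\eta_1)^{-(2\theta-1)/(1-\theta)}$ factors times $\eta_1 t^{-\theta}$.

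The main obstacle is this variance estimate. The difficulty is obtaining the rate $t^{-\theta}$ with arbitrary $\theta<1$ and no capacity assumption. To do this, we would split the sum at $j\approx t/2$: for $j>t/2$, $\eta_j^2\asymp \eta_1^2t^{-2\theta}$ and $\sum_{i>j}\eta_i\asymp\eta_1(t-j)t^{-\theta}$. For small $j$, we use the factor $e^{-\lambda\sum\eta_i}$ to absorb the terms, at the cost of the stated powers of $\lambda\eta_1$.
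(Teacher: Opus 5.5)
Your overall architecture matches the paper's: bias/variance around $\mathcal G_\lambda$, an orthogonal split of the output-projection error, and the variance sum split at $j\approx t/2$. Your direct bound on $\mathcal G_\lambda-P_2^{d_2}\circ\mathcal G^\dagger$, using $\|L_\infty^r-L_{d_1}^r\|\le\Delta^r$ for $r\le1$ and $L_\infty^r=L_\infty L_\infty^{r-1}$ for $r>1$, is also correct. It is a slightly shorter substitute for the paper's detour through $\mathcal G'_\lambda$.

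\textbf{Main gap: the uniform moment bound.} You propose to show that $\mathbb E\|\mathcal G_t\|_{K_{d_1,d_2}}^2$ stays bounded. With a constant independent of $\lambda$, as the $\lesssim$ convention and the stated $C_{\mathrm{sgd}}$ require, this is false. By Lemma~\ref{lemma temp} and $\mathbb E\,\mathcal W_t=0$,
\[
\mathbb E\,\mathcal G_{t+1}=\mathcal G_\lambda-\prod_{j\le t}\bigl(I-\eta_j(\lambda I+L_{d_1,d_2})\bigr)\mathcal G_\lambda\longrightarrow\mathcal G_\lambda
\]
in $\mathcal H_{K_{d_1,d_2}}$, so $\sup_t\mathbb E\|\mathcal G_t\|_{K_{d_1,d_2}}^2\ge\|\mathcal G_\lambda\|_{K_{d_1,d_2}}^2$. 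Already for $\Delta=0$,
\[
\|\mathcal G_\lambda\|_{K_{d_1,d_2}}=\bigl\|\bigl((L_{d_1}+\lambda I)^{-1}L_{d_1}^{r+1/2}\otimes P_2^{d_2}\bigr)\mathcal G^{\mathrm{src}}\bigr\|_{\rho_u},
\]
which diverges as $\lambda\to0$ whenever $P_2^{d_2}\circ\mathcal G^\dagger\notin\mathcal H_{K_{d_1,d_2}}$. The source condition allows this for $r<1/2$.

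Even for $r\ge1/2$ the route fails. The accumulated noise measured in the RKHS norm lacks the factor $(1+\sum_{i>j}\eta_i)^{-1}$ that $L^{1/2}$ supplies. Without a capacity assumption the available bound is $\sum_j\eta_j^2e^{-2\lambda\sum_{i>j}\eta_i}$. For $\theta<1/2$ this is only of order $\eta_1^{1/(1-\theta)}\lambda^{-(1-2\theta)/(1-\theta)}$ uniformly in $t$. Feeding either bound into the variance term creates negative powers of $\lambda$ that are not in $C_{\mathrm{sgd}}$.

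What the variance actually needs is
\[
\mathbb E\|\xi_j\|_{K_{d_1,d_2}}^2\le\kappa^2\,\mathbb E\|\mathcal G_j(u_j)-P_2^{d_2}v_j\|_{\mathcal V}^2\le2\kappa^2\bigl(\mathbb E\|\mathcal G_j-P_2^{d_2}\circ\mathcal G^\dagger\|_{\rho_u}^2+\sigma^2\bigr).
\]
So the induction must be run on this $L^2(\rho_u)$ error, as in Proposition~\ref{prop6}. The deterministic term and $\|\mathcal G_\lambda-P_2^{d_2}\circ\mathcal G^\dagger\|_{\rho_u}^2$ are each at most $\|\mathcal G^\dagger\|_{\rho_u}^2$. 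The variance is $\lesssim(C+1)\sum_j\eta_j^2/(1+\sum_{i>j}\eta_i)\lesssim\eta_1(C+1)$, uniformly in $t$ and $\lambda$. A small $\eta_1$ then closes the induction.

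\textbf{Secondary gap: the bias for $r>1$.} The product $f_t(L_{d_1}):=\prod_{k\le t}(I-\eta_k(\lambda I+L_{d_1}))$ is a function of $L_{d_1}$, whereas $\mathcal G_\lambda$ is regular only relative to $L_\infty$. Write $S_t=\sum_{k\le t}\eta_k$. For $r\le1$ your estimate $\|L_\infty^r-L_{d_1}^r\|\le\Delta^r$ gives $e^{-2\lambda S_t}(S_t^{-2r}+\Delta^{2r})$. For $r>1$, however, the factorization $L_\infty L_\infty^{r-1}$ gives only $S_t^{-2}$. Once $\theta>2/3$, that yields a worse $\lambda$-dependence than $\eta_1^{-2r}(\lambda\eta_1)^{\min\{2r-\theta/(1-\theta),0\}}$. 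The paper instead compares with $f_t(L_\infty)$ by telescoping:
\[
f_t(L_{d_1})-f_t(L_\infty)=-\sum_{k=1}^t\eta_k\prod_{i=k+1}^t\bigl(I-\eta_i(\lambda I+L_{d_1})\bigr)(L_{d_1}-L_\infty)f_{k-1}(L_\infty).
\]
Applied to $L_\infty^r$, the $\Delta$-part is $\lesssim\Delta\sum_k\eta_k/(1+S_{k-1}^r)\lesssim\Delta$ for $r>1$, and it goes into $E_{\mathrm{enc}}$. Meanwhile $f_t(L_\infty)L_\infty^r$ supplies the full $S_t^{-r}$ decay.
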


\paragraph{Discussion of Theorem~\ref{Thm1}.}
The error bound in Theorem~\ref{Thm1} naturally decomposes into three components:
\[
\text{(i) input-side error}, \qquad
\text{(ii) output-encoding error}, \qquad
\text{(iii) optimization error}.
\]
We discuss the role of each term below.

\medskip
\noindent
\emph{Input-side error.}
The input-side error \(E_{\mathrm{enc}}(\lambda,\Delta,r)\) depends on the
kernel discrepancy \(\Delta\), the regularity exponent \(r\), and the regularization
parameter \(\lambda\), but is independent of the iteration index \(t\). For orthogonal input encoders and Lipschitz kernels satisfying the assumptions of Corollary \ref{prop22}, Section~\ref{Section examples} shows that
\[
\Delta^2
\lesssim
\mathbb E_{u\sim\rho_u}
\Big[
\big\|\bigl(\mathcal D_1^{d_1}\mathcal E_1^{d_1}-I_{\mathcal U}\bigr) u\big\|_{\mathcal U}^2
\Big].
\]
By choosing \(\lambda\asymp\Delta\), one obtains
\[
E_{\mathrm{enc}}(\lambda,\Delta,r)
\lesssim
\bigg(
\mathbb E_{u\sim\rho_u}
\Big[
\big\|\bigl(\mathcal D_1^{d_1}\mathcal E_1^{d_1}-I_{\mathcal U}\bigr) u\big\|_{\mathcal U}^2
\Big]
\bigg)^{\min\{r,1\}}.
\]
Thus, for \(r\ge1\), the contribution saturates: the input-side error is governed by the kernel discrepancy \(\Delta\) rather than by the smoothness of the target operator. 
The regularization parameter \(\lambda\) balances the effect of \(\Delta\) with
the regularization bias and is chosen independently of \(t\).

\medskip
\noindent
\emph{Output-encoding error.}
The output-encoding error
\[
\be_{v\sim\l(\mathcal{G}^\dagger\r)_{\#}\rho_u}\l[\l\|\bigl({P_2^{d_2}}-I_{\mv}\bigr)v\r\|^2_{\mv}\r]
\]
depends solely on the output encoder. Together with the input-side error,
it constitutes an iteration-independent approximation barrier, separate from the
stochastic optimization error. 

\medskip
\noindent
\emph{Optimization error.}
The iteration-dependent term decays as
\[
t^{-\theta}, \qquad 0<\theta<1,
\]
and represents the contribution of the stochastic gradient procedure. Since
\(\theta\) can be chosen arbitrarily close to \(1\), the decay exponent of this optimization term can be made arbitrarily close to $1$.
The associated constants, however, depend on $\lambda$ and on the step-size
schedule, in particular on $\eta_1$ and $\theta$.

For the separated optimization term, the present decomposition differs from the
usual analyses of regularized SGD formulated directly on the underlying function
space. In such results, the iteration-dependent rates are expressed in terms of
the regularity exponent of the target and the eigenvalue decay of the associated
integral operator. In the encoder--decoder setting considered here, the
finite-resolution effects induced by the input encoder and the output decoder
are separated from the SGD dynamics, leaving the residual iteration-dependent
contribution of order \(t^{-\theta}\). For reference, existing
function-space analyses yield rates of order
\[
\mathcal{O}(t^{-1/2}) \cite{shi2024learning},\qquad
\mathcal{O}\!\left(
t^{-\min\left\{\frac{2r}{2r+1},\frac{2-s}{3-s}\right\}}
\right)
\cite{guo2023capacity,MAO2024101825,yang2025kernel},
\]
and
\[
\mathcal{O}\!\left(
t^{-\min\left\{\frac{2r}{2r+1},\frac{2}{3}\right\}}
\right)
\cite{ying2008online,yang2025learning},
\]
where \(r\) denotes the regularity exponent and \(s\) characterizes the
eigenvalue decay of the associated integral operator.\footnote{More precisely,
\cite{ying2008online} obtains
\(\mathcal{O}(t^{-2r/(2r+1)}\log t)\) for \(0<r\le 1/2\).}
An exception occurs in the noiseless setting \cite[Theorem~2.2]{GUO2022288},
where the rate \(\mathcal{O}(t^{-2r-1/2})\) is obtained for \(0<r\le 1/4\).

\medskip
\noindent
\emph{Resolution-independent formulation.}
A key feature of Theorem~\ref{Thm1} is that the regularity assumption on the
target operator is imposed with respect to the limiting kernel \(K_{\infty,\infty}\),
and is therefore independent of the encoder--decoder resolution, such as the
encoding dimensions \(d_1\) and \(d_2\). In the analysis, the encoder enters
only via the resolution-dependent scalar kernel \(k_{d_1}\) and operator-valued
kernels \(K_{d_1,d_2}\) and \(K_{\infty,d_2}\), which serve as intermediate
objects connecting the finite-resolution kernels to the limiting kernel \(K_{\infty,\infty}\).



\begin{remark}[No eigenvalue decay condition]
Theorem~\ref{Thm1} does not require additional capacity or eigenvalue decay
assumptions \cite{caponnetto2007optimal,guo2023capacity,li2024towards,meunier2024optimal,yang2025kernel} on the integral operator associated with the limiting kernel. 
If such spectral information is available, it affects only the constants $C_{\mathrm{sgd}}$ in the optimization term, without changing the \(t^{-\theta}\) decay rate of this separated optimization term.

Appendix~\ref{Appendix:Eigenvalues decay} provides an example illustrating
the spectral decay of kernel integral operators on infinite-dimensional separable Hilbert
spaces. In particular, for the Gaussian kernel $k$ under a nondegenerate Gaussian
measure with covariance operator \(C\), one has
\(\mathrm{Tr}(L_k^q)<\infty\) if and only if
\(\mathrm{Tr}(C^q)<\infty\). Thus, the spectral behavior of \(L_k\) is
determined by the covariance spectrum of the measure.
\end{remark}



We next consider the finite-horizon setting, where a fixed dataset of size $T<\infty$ is available and $T$ is known in advance. The algorithm processes samples sequentially, typically in a single pass, and the knowledge of $T$ allows the use of a constant step size chosen as a function of the horizon, which can be optimized (at least asymptotically) in 
$T$. A limitation of this setting is that such a step-size choice does not naturally support efficient warm-starting when additional samples become available.

\begin{theorem} \label{Thm2}
Suppose that Assumption~\ref{source condition} holds with \(r>0\) and
\(\mathcal{G}^{\mathrm{src}}\in L^2(\mathcal U,\rho_u;\mathcal V)\).
Let \(\{\mathcal G_t\}_{t\ge 1}\) be defined through \eqref{SGD1} with constant
step sizes $\eta_t=\eta T^{-\theta'}$ for $t=1,\ldots,T$,
where \(0<\theta'<1\), and with a regularization parameter \(\lambda>0\).
Assume further that $\lambda\le1$.

Then, there exists a constant $\bar\eta'>0$ such that for all \(0<\eta\le \bar\eta'\),
\begin{equation*}
\begin{aligned}
\mathbb E_{z^T}
\Big[
\big\|\mathcal G_{T+1}-\mathcal G^\dagger\big\|_{\rho_u}^2
\Big]
\;\lesssim\;&
\underbrace{\widetilde E_{\mathrm{enc}}(\lambda,\Delta,r)}_{\text{input-side error}}
\;+\;
\underbrace{
\mathbb E_{v\sim (\mathcal{G}^\dagger)_{\#}\rho_u}
\Big[
\big\|(P_2^{d_2}-I_{\mathcal V})v\big\|_{\mathcal V}^2
\Big]
}_{\text{output-encoding error}}
\;+
\underbrace{
\widetilde C_{\mathrm{sgd}}\,T^{-\theta'}
}_{\text{optimization error}}.
\end{aligned}
\end{equation*}
Here
\[
\widetilde E_{\mathrm{enc}}(\lambda,\Delta,r)
:=
\Delta^{2\min\{r,1\}}
+
\lambda^{\min\{2r,2\}},
\]
and the iteration-independent coefficient is
\[
\widetilde C_{\mathrm{sgd}}
:=
\eta^{-2r}
(\lambda\eta)^{\min\left\{2r-\frac{\theta'}{1-\theta'},\,0\right\}}
+\eta\bigl[1+\log(1+\lambda^{-1})\bigr].
\]
\end{theorem}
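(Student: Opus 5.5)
The proof mirrors the one I would use for Theorem~\ref{Thm1}, with the constant schedule $\eta_t\equiv\eta_T:=\eta T^{-\theta'}$ replacing $\eta_1t^{-\theta}$. The plan is to introduce two deterministic comparison objects and split the error into three pieces. The first object is the regularized population target for the finite-resolution kernel,
\[
\mathcal G_\lambda:=(L_{d_1,d_2}+\lambda I)^{-1}L_{d_1,d_2}\mathcal G^\dagger .
\]
Since $L_{d_1,d_2}=L_{d_1}\otimes P_2^{d_2}$, this equals $\bigl((L_{d_1}+\lambda)^{-1}L_{d_1}\otimes P_2^{d_2}\bigr)\mathcal G^\dagger$. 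The second object is the SGD iterate $\mathcal G_{T+1}$ itself. Writing
\[
\mathcal G_{T+1}-\mathcal G^\dagger=(\mathcal G_{T+1}-\mathcal G_\lambda)+(\mathcal G_\lambda-P\mathcal G^\dagger)+(P\mathcal G^\dagger-\mathcal G^\dagger),\qquad P:=I\otimes P_2^{d_2},
\]
the three pieces are handled as follows. The last piece is exactly the output-encoding error. The middle piece is $\bigl(\lambda(L_{d_1}+\lambda)^{-1}\otimes P_2^{d_2}\bigr)\mathcal G^\dagger$, and it produces $\widetilde E_{\mathrm{enc}}$. The first piece is the optimization error.

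For the middle piece I substitute $\mathcal G^\dagger=(L_\infty^r\otimes I)\mathcal G^{\mathrm{src}}$ and bound $\|\lambda(L_{d_1}+\lambda)^{-1}L_\infty^{r}\|$. Take first $r\le1$, and write $L_\infty^r=L_{d_1}^r+(L_\infty^r-L_{d_1}^r)$. The first term contributes $\|\lambda(L_{d_1}+\lambda)^{-1}L_{d_1}^r\|\le\lambda^r$. For the second term, operator monotonicity of $x\mapsto x^r$ for $0<r\le1$ gives $\|L_\infty^r-L_{d_1}^r\|\le\Delta^r$, and $\|\lambda(L_{d_1}+\lambda)^{-1}\|\le1$. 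Squaring yields $\Delta^{2r}+\lambda^{2r}$. Now take $r>1$. Then $L_\infty^r=L_\infty L_\infty^{r-1}$ with $\|L_\infty^{r-1}\|\le\kappa^{2(r-1)}$, and the same splitting with exponent $1$ gives $\Delta^2+\lambda^2$. This is the source of the saturation $\min\{r,1\}$.

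For the first piece I unroll the recursion \eqref{SGD1} around $\mathcal G_\lambda$. Using $\mathbb E[k_{d_1}(\cdot,u)P_2^{d_2}(\mathcal G(u)-v)]=L_{d_1,d_2}(\mathcal G-\mathcal G^\dagger)$, the error decomposes as
\[
\mathcal G_{T+1}-\mathcal G_\lambda=-\prod_{t=1}^{T}(I-\eta_T(L_{d_1,d_2}+\lambda))\,\mathcal G_\lambda+\sum_{t=1}^T\eta_T\prod_{j>t}(I-\eta_T(L_{d_1,d_2}+\lambda))\,\xi_t,
\]
where the $\xi_t$ are martingale differences. To make this rigorous one works in $\mathcal H_{K_{d_1,d_2}}$ via the usual sampling operators, as in the proof of Theorem~\ref{Thm1}.

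The bias term is bounded in $\|\cdot\|_{\rho_u}$ by $\|(L+\lambda)^{1/2}(1-\eta_T(L+\lambda))^T\mathcal G_\lambda\|$. I control it with the spectral estimate $\sup_x x^{a}(1-\eta_T x)^T\lesssim(\eta_T T)^{-a}$ when $r$ is large, and with the contraction $(1-\eta_T\lambda)^T\le e^{-\lambda\eta T^{1-\theta'}}$ when $r$ is small. Converting the exponential into a polynomial via $e^{-x}\lesssim x^{-p}$ with $p=\frac{\theta'}{1-\theta'}-2r$ gives the factor $\eta^{-2r}(\lambda\eta)^{\min\{2r-\theta'/(1-\theta'),0\}}T^{-\theta'}$.

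The variance term is
\[
\sum_t\eta_T^2\,\mathbb E\bigl\|(L+\lambda)^{1/2}\textstyle\prod(\cdots)\xi_t\bigr\|^2 .
\]
Given a uniform bound $\mathbb E\|\xi_t\|^2\lesssim\kappa^2(\sigma^2+\sup_s\mathbb E\|\mathcal G_s\|^2_{\rho_u})$, this is at most
\[
\eta_T\sum_{t}\eta_T\,\bigl\|L(1-\eta_T(L+\lambda))^{2(T-t)}\bigr\| .
\]
Summing the geometric series gives $\eta_T\,\mathrm{tr}$-free bounds of the form $\eta_T[1+\log(1+\lambda^{-1})]$, and since $\eta_T=\eta T^{-\theta'}$ this is $\eta[1+\log(1+\lambda^{-1})]T^{-\theta'}$. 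The absence of the $\theta$-dependent case split seen in Theorem~\ref{Thm1} reflects the fact that the step size is constant. For the log bound, split the spectrum at $\lambda$: on eigenvalues below $\lambda$ the sum is bounded directly, and above $\lambda$ one gets a harmonic-type sum over indices up to $1/(\eta_T\lambda)$, which is controlled by $\log(1+\lambda^{-1})$ after using $\eta\le\bar\eta'$.

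I expect the main obstacle to be the uniform moment bound $\sup_{t\le T}\mathbb E\|\mathcal G_t\|_{K}^2\lesssim$ const, which is needed to control $\mathbb E\|\xi_t\|^2$ without capacity assumptions. I would prove it by induction on $t$, using the smallness condition $\eta\le\bar\eta'$ (roughly $\eta\kappa^2\le1/2$) so that $I-\eta_T(L+\lambda)$ is a contraction, together with $\lambda\le1$. This is the step where the constant $\bar\eta'$ is fixed. The three estimates then combine via $(a+b+c)^2\le3(a^2+b^2+c^2)$.
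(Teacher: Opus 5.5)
There is a genuine gap in your bias term. By the source condition,
\[
\mathcal G_\lambda=\bigl((L_{d_1}+\lambda I)^{-1}L_{d_1}L_\infty^{r}\otimes P_2^{d_2}\bigr)\mathcal G^{\mathrm{src}},
\]
so what you must bound is $\|f_T(L_{d_1})L_\infty^r\|$, where $f_T(L):=\prod_{t=1}^T(I-\eta_t(\lambda I+L))$. The contraction is a function of $L_{d_1}$ and the source power is a function of $L_\infty$. These operators do not commute, so the scalar estimate $\sup_x x^a(1-\eta_Tx)^T\lesssim(\eta_TT)^{-a}$ cannot be applied. Apart from the harmless factor $(L_{d_1}+\lambda I)^{-1}L_{d_1}$, nothing in $\mathcal G_\lambda$ lies in the range of a power of $L_{d_1}$. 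Handling this mismatch is the content of the paper's Proposition~\ref{T1}.

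For $r\le1$, your own device from the middle piece repairs it. With $S_T:=\sum_{t\le T}\eta_t$, you get $\|f_T(L_{d_1})L_\infty^r\|\le\|f_T(L_{d_1})L_{d_1}^r\|+\|f_T(L_{d_1})\|\Delta^r\lesssim e^{-\lambda S_T}\bigl[(1+S_T^r)^{-1}+\Delta^r\bigr]$. The extra $\Delta^{2r}$ is absorbed into $\widetilde E_{\mathrm{enc}}$. The paper instead bounds $\|f_T(L_{d_1})L_\infty\|\le\|f_T(L_{d_1})L_{d_1}\|+\Delta\|f_T(L_{d_1})\|$ and interpolates via $\|FH^r\|\le\|F\|^{1-r}\|FH\|^r$.

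For $r>1$, the naive reduction $L_\infty^r=L_\infty L_\infty^{r-1}$ only gives decay $(1+S_T)^{-2}$. For $\theta'>2/3$ this does not yield the stated coefficient $\eta^{-2r}(\lambda\eta)^{\min\{2r-\theta'/(1-\theta'),0\}}$; you lose a negative power of $\lambda$. The paper retains the full $S_T^{-2r}$ decay through the telescoping identity
\[
f_T(L_{d_1})-f_T(L_\infty)=-\sum_{t=1}^T\eta_t\prod_{i=t+1}^T\bigl(I-\eta_i(\lambda I+L_{d_1})\bigr)(L_{d_1}-L_\infty)f_{t-1}(L_\infty).
\]
It applies Lemma~\ref{lemma ref1}(1) to $f_{t-1}(L_\infty)L_\infty^r$ and uses that $\sum_t\eta_t/(1+S_{t-1}^r)$ stays bounded when $r>1$ (Proposition~\ref{lemma 1}). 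This gives $\mathcal T_1\lesssim\Delta^2+e^{-2\lambda S_T}(1+S_T^{2r})^{-1}$, after which your conversion $e^{-x}\lesssim x^{-p}$ goes through unchanged. An operator-Lipschitz estimate $\|L_\infty^r-L_{d_1}^r\|\lesssim\Delta$ for $r>1$ would be an alternative, but it would need justification.

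The remaining steps are sound. Your single treatment of $\mathcal G_\lambda-P_2^{d_2}\circ\mathcal G^\dagger$ via operator monotonicity, $\|L_\infty^r-L_{d_1}^r\|\le\Delta^r$ for $0<r\le1$, is valid. It is a slightly shorter alternative to the paper's route through the auxiliary estimator $\mathcal G'_\lambda$ (Propositions~\ref{prop1} and~\ref{prop2}). Your variance bound matches Proposition~\ref{prop9}. Two small corrections:
\begin{itemize}
\item The uniform bound needed is on $\mathbb E\|\mathcal G_t-P_2^{d_2}\circ\mathcal G^\dagger\|_{\rho_u}^2$, not on $\|\mathcal G_t\|_K$, which in general is not bounded uniformly in $\lambda$.
\item Contraction alone does not close the induction. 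You need $\mathcal T_2\lesssim\eta(C+1)$ uniformly in $\lambda$ and $T$. This comes from the $\log(T+1)$ branch of Proposition~\ref{prop9}, namely $\eta T^{-\theta'}\log(T+1)\lesssim\eta$, rather than the $\log(1+\lambda^{-1})$ branch, so that $\bar\eta'$ does not depend on $\lambda$.
\end{itemize}
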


In the finite-horizon estimate, the choice of \(\lambda\) also depends on the \(\lambda\)-dependence of \(\widetilde C_{\mathrm{sgd}}\). If \(\theta'\le 2r/(2r+1)\), then \(\widetilde C_{\mathrm{sgd}}\) depends on \(\lambda\) only logarithmically, and setting \(\lambda\asymp\Delta\) 
balances the two terms in $\widetilde{E}_{\mathrm{enc}}$. 
If \(\theta'>2r/(2r+1)\), however, \(\widetilde C_{\mathrm{sgd}}\) also contains a negative power of \(\lambda\), so this additional \(\lambda\)-dependence should be taken into account when choosing \(\lambda\).

\subsection{Lower Bounds}\label{section:Lower bounds}

In this subsection, we show that the dependence on the kernel discrepancy and the output-encoding error in the preceding error bounds, namely the input-side error and the output-encoding error, cannot be improved in general when \(r\le1\). This conclusion remains valid even under additional spectral assumptions, such as \(\mathrm{Tr}(L_\infty^s)<\infty\) for some \(0<s<1\).


It suffices to prove the lower bound in a concrete setting. Let \(\mathcal U\) be a separable Hilbert space, and let
\(\rho_u\) be the law of
\[
u=\sum_{i\ge1}\sqrt{\mu_i}\,\xi_i\varphi_i,
\qquad
\mathbb P(\xi_i=1)=\mathbb P(\xi_i=-1)=\frac12,
\]
where the \(\xi_i\) are independent. Its covariance operator is
\[
\Gamma
:=\mathbb E_{u\sim\rho_u}\bigl[\langle\cdot,u\rangle_{\mathcal U}\,u\bigr]
=\sum_{i\ge1}\mu_i\,\langle\cdot,\varphi_i\rangle_{\mathcal U}\,\varphi_i,
\]
where \(\{\varphi_i\}_{i\ge1}\) is an orthonormal basis of \(\mathcal U\), and
\(\{\mu_i\}_{i\ge1}\) is a nonincreasing sequence of positive eigenvalues with
\(\sum_{i\ge1}\mu_i<\infty\).

Define the input-side encoder--decoder pair by
\[
\mathcal E_1^{d_1}u
:=
\bigl(\langle u,\varphi_1\rangle_{\mathcal U},\ldots,
\langle u,\varphi_{d_1}\rangle_{\mathcal U}\bigr)\in\mathbb R^{d_1},
\qquad
\mathcal D_1^{d_1}c
:=
\sum_{i=1}^{d_1}c_i\varphi_i ,
\]
and set \(P_1^{d_1}:=\mathcal D_1^{d_1}\mathcal E_1^{d_1}\). Similarly, let
\(\{\psi_i\}_{i\ge1}\) be a fixed orthonormal basis of \(\mathcal V\), and define
\[
\mathcal E_2^{d_2}v
:=
\bigl(\langle v,\psi_1\rangle_{\mathcal V},\ldots,
\langle v,\psi_{d_2}\rangle_{\mathcal V}\bigr)\in\mathbb R^{d_2},
\qquad
\mathcal D_2^{d_2}c
:=
\sum_{i=1}^{d_2}c_i\psi_i .
\]
Then \(P_2^{d_2}:=\mathcal D_2^{d_2}\mathcal E_2^{d_2}\) is the orthogonal projection
onto \(\operatorname{span}\{\psi_1,\ldots,\psi_{d_2}\}\).

Corresponding to Assumption~\ref{source condition}, define the operator class
associated with the limiting operator-valued kernel \(K_{\infty,\infty}\) by
\[
\mathcal O
:=
\Bigl\{
\mathcal G^\dagger
= L_{\infty,\infty}^r \mathcal G^{\mathrm{src}}
:\;
\mathcal G^{\mathrm{src}}\in L^2(\mathcal U,\rho_u;\mathcal V),
\ \|\mathcal G^{\mathrm{src}}\|_{\rho_u}\le1
\Bigr\}.
\]

Since every element of \(\mathcal H_{K_{d_1,d_2}}\) takes values in
\(\operatorname{ran}(P_2^{d_2})\), the output-side truncation error can be
separated by orthogonal projection. For any
\(\mathcal G^\dagger\in\mathcal O\) and
\(\mathcal G\in\mathcal H_{K_{d_1,d_2}}\),
\[
\|\mathcal G^\dagger-\mathcal G\|_{\rho_u}^2
=
\|P_2^{d_2}\mathcal G^\dagger-\mathcal G\|_{\rho_u}^2
+
\|(I_{\mathcal V}-P_2^{d_2})\mathcal G^\dagger\|_{\rho_u}^2 .
\]
It remains to quantify the approximation error over the projected target class
\[
P_2^{d_2}\mathcal O
:=
\{\,P_2^{d_2}\mathcal G^\dagger:\ \mathcal G^\dagger\in\mathcal O\,\}.
\]
If \(\mathcal G^\dagger=L_{\infty,\infty}^r\mathcal G^{\mathrm{src}}\) with
\(\mathcal G^{\mathrm{src}}\in L^2(\mathcal U,\rho_u;\mathcal V)\), then the
tensor-product representation gives
\[
P_2^{d_2}\mathcal G^\dagger
=
\bigl(I_{L^2(\mathcal U,\rho_u)}\otimes P_2^{d_2}\bigr)
\bigl(L_\infty^r\otimes I_{\mathcal V}\bigr)\mathcal G^{\mathrm{src}}
=
\bigl(L_\infty^r\otimes P_2^{d_2}\bigr)\mathcal G^{\mathrm{src}}
=
L_{\infty,d_2}^r\mathcal G^{\mathrm{src}},
\]
where \(L_{\infty,d_2}\) is the integral operator induced by the projected
limiting kernel \(K_{\infty,d_2}=k_\infty P_2^{d_2}\). We define the corresponding
worst-case approximation error by
\[
\mathfrak R_{d_1,d_2}
:=
\sup_{\mathcal G^\dagger\in\mathcal O}
\inf_{\mathcal G\in\mathcal H_{K_{d_1,d_2}}}
\bigl\|P_2^{d_2}\mathcal G^\dagger-\mathcal G\bigr\|_{\rho_u}.
\]


It follows that, for every \(\varepsilon>0\), there exists
\(\mathcal G^\dagger_\varepsilon\in\mathcal O\) such that, for all
\(\mathcal G\in\mathcal H_{K_{d_1,d_2}}\),
\[
\|\mathcal G^\dagger_\varepsilon-\mathcal G\|_{\rho_u}^2
\ge
\mathfrak R_{d_1,d_2}^2
+
\mathbb E_{v\sim(\mathcal G^\dagger_\varepsilon)_\#\rho_u}
\bigl[\|(I_{\mv}-{P_2^{d_2}})v\|_{\mathcal V}^2\bigr]
-\varepsilon .
\]
This separates the input-side approximation barrier from the output-encoding error.


We now take the limiting scalar kernel to be the linear kernel
\[
k_\infty(u,u'):=\langle u,u'\rangle_{\mathcal U}.
\]
The encoder-induced scalar kernel is then
\[
k_{d_1}(u,u')
=
\bigl\langle \mathcal E_1^{d_1}u,\mathcal E_1^{d_1}u'\bigr\rangle_2
=
\langle P_1^{d_1}u,P_1^{d_1}u'\rangle_{\mathcal U}.
\]
For \(u\sim\rho_u\), we have almost surely
\[
k_{d_1}(u,u)=\sum_{i=1}^{d_1}\mu_i
\le \|u\|_{\mathcal U}^2
=\sum_{i\ge1}\mu_i<\infty.
\]
Thus the kernel boundedness condition in
Section~\ref{Section: Kernel Lifting} holds uniformly in \(d_1\).
Under this choice, we obtain the following lower bound.

\begin{theorem}\label{lower bound}
Under the above setting, we have
\[
\mathfrak R_{d_1,d_2}\ge \mu_{d_1+1}^{r}.
\]
Moreover,
\[
\Delta=\|L_\infty-L_{d_1}\|=\mu_{d_1+1}.
\]
Consequently,
\[
\mathfrak R_{d_1,d_2}^2\ge \Delta^{2r}.
\]
\end{theorem}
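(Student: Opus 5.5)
The plan is to diagonalize both integral operators explicitly using the Rademacher structure of $\rho_u$, and then exhibit one explicit target in $\mathcal O$ whose best approximation from $\mathcal H_{K_{d_1,d_2}}$ is far.

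\emph{Step 1 (spectra).} Set $e_i(u):=\mu_i^{-1/2}\langle u,\varphi_i\rangle_{\mathcal U}=\xi_i$. Since the $\xi_i$ are independent with mean zero and unit variance, $\{e_i\}_{i\ge1}$ is orthonormal in $L^2(\mathcal U,\rho_u)$. For the linear kernel we have
\[
(L_\infty g)(u)=\sum_{i\ge1}\langle u,\varphi_i\rangle_{\mathcal U}\,\mathbb E\bigl[\langle u',\varphi_i\rangle_{\mathcal U} g(u')\bigr],
\]
so $L_\infty=\sum_{i\ge1}\mu_i\langle\cdot,e_i\rangle_{\rho_u}e_i$, with $L_\infty$ vanishing on the orthogonal complement of $\overline{\mathrm{span}}\{e_i\}$. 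In the same way $k_{d_1}(u,u')=\sum_{i\le d_1}\langle u,\varphi_i\rangle\langle u',\varphi_i\rangle$ gives $L_{d_1}=\sum_{i\le d_1}\mu_i\langle\cdot,e_i\rangle_{\rho_u}e_i$. Hence $L_\infty-L_{d_1}=\sum_{i>d_1}\mu_i\langle\cdot,e_i\rangle e_i$, and because $\mu_i$ is nonincreasing its norm is $\mu_{d_1+1}$. This gives $\Delta=\mu_{d_1+1}$.

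\emph{Step 2 (range of the finite-resolution class).} Every $\mathcal G\in\mathcal H_{K_{d_1,d_2}}$ has the form $\mathcal G(u)=\mathcal D_2^{d_2}f(\mathcal E_1^{d_1}u)$ by Proposition~\ref{prop1.1}, or directly from the span of $k_{d_1}(\cdot,u')P_2^{d_2}v$. With the linear kernel on $\mathbb R^{d_1}$, the scalar RKHS consists of linear functionals of $(\xi_1,\dots,\xi_{d_1})$. Consequently every $\mathcal G\in\mathcal H_{K_{d_1,d_2}}$ lies in $\mathrm{span}\{e_1,\dots,e_{d_1}\}\otimes\mathrm{ran}(P_2^{d_2})$. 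Note that this step needs no surjectivity assumption: the space is the completion of the finite span, which is already finite dimensional.

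\emph{Step 3 (hard target).} Assume $d_2\ge1$ and take $\mathcal G^{\mathrm{src}}:=e_{d_1+1}\otimes\psi_1$, which has $\|\mathcal G^{\mathrm{src}}\|_{\rho_u}=1$. Then
\[
\mathcal G^\dagger=L_{\infty,\infty}^r\mathcal G^{\mathrm{src}}=\mu_{d_1+1}^r\,e_{d_1+1}\otimes\psi_1\in\mathcal O,
\]
and $P_2^{d_2}\mathcal G^\dagger=\mathcal G^\dagger$. This function is orthogonal to $\mathrm{span}\{e_1,\dots,e_{d_1}\}\otimes\mathcal V$. Therefore, for every $\mathcal G\in\mathcal H_{K_{d_1,d_2}}$,
\[
\|P_2^{d_2}\mathcal G^\dagger-\mathcal G\|_{\rho_u}^2=\mu_{d_1+1}^{2r}+\|\mathcal G\|_{\rho_u}^2\ge\mu_{d_1+1}^{2r}.
\]
This yields $\mathfrak R_{d_1,d_2}\ge\mu_{d_1+1}^r$, and combining with Step 1 gives $\mathfrak R_{d_1,d_2}^2\ge\Delta^{2r}$.

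\emph{Expected obstacle.} The only delicate point is the correct use of the functional calculus. We must have $L_\infty^r e_{d_1+1}=\mu_{d_1+1}^r e_{d_1+1}$, and the tensor identity $L_{\infty,\infty}^r=L_\infty^r\otimes I_{\mathcal V}$ from Section~\ref{Sec: Mathematical Preliminaries} provides this. We must also justify the orthonormality of the $e_i$ in $L^2(\rho_u)$, which follows from the independence of the Rademacher variables. Everything else is routine.
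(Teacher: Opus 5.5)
Your proposal is correct and follows essentially the paper's route. You use the same diagonalization of $L_\infty$ and $L_{d_1}$ in the orthonormal system $\mu_i^{-1/2}\langle\cdot,\varphi_i\rangle_{\mathcal U}=\xi_i$ to get $\Delta=\mu_{d_1+1}$, and the same $(d_1+1)$-st mode for the lower bound; the only difference is that you exhibit the extremal target $\mu_{d_1+1}^r e_{d_1+1}\otimes\psi_1$ directly and use orthogonality in $L^2(\mathcal U,\rho_u)\otimes\mathcal V$, whereas the paper first reduces to the scalar class $\mathcal O'$ and computes its worst-case error exactly as a supremum over the source ball.
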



Together with the preceding orthogonal decomposition, Theorem~\ref{lower bound} shows that,
already in this concrete setting, for \(r\le1\), 
the input-side dependence on
\(\Delta\) cannot in general be improved, while the output-encoding term is
unavoidable.


We next point out that the input-side lower bound is not caused by the
restriction to the RKHS \(\mathcal H_{K_{d_1,d_2}}\). It persists even when the
finite-dimensional learner is allowed to be an arbitrary 
\(f:\mathbb R^{d_1}\to\mathbb R^{d_2}\). Define the function class
\[
\mathcal O' := \left\{ f^\dagger = L_{\infty}^r f^{\mathrm{src}} :\; 
f^{\mathrm{src}} \in L^2(\mathcal U, \rho_u),\ 
\|f^{\mathrm{src}}\|_{\rho_u} \le 1 \right\}.
\]
Since \(\xi_{d_1+1}\) is centered and independent of
\(\mathcal E_1^{d_1}u\), the same construction as in the proof of
Theorem~\ref{lower bound} gives
\[
\begin{aligned}
\sup_{\mathcal G^\dagger \in \mathcal O} 
\inf_{f: \mathbb{R}^{d_1} \to \mathbb{R}^{d_2}} 
\bigl\| P_2^{d_2} \circ \mathcal G^\dagger 
- \mathcal D_2^{d_2} \circ f \circ \mathcal E_1^{d_1} \bigr\|_{\rho_u}
&\geq 
\sup_{\mathcal G^\dagger \in \mathcal O} 
\inf_{\mathcal G : \mathcal U \to \mathcal V} 
\bigl\| P_2^{d_2} \circ \mathcal G^\dagger 
- \mathcal G \circ P_1^{d_1} \bigr\|_{\rho_u} \\
&\geq 
\sup_{f^\dagger \in \mathcal O'} 
\inf_{f : \mathcal U \to \mathbb{R}} 
\bigl\| f^\dagger - f \circ P_1^{d_1} \bigr\|_{\rho_u} \\
&\geq \mu_{d_1+1}^r .
\end{aligned}
\]
Thus the obstruction is due to the loss of information
under the input encoder \(\mathcal E_1^{d_1}\).


\section{Admissible Kernels and Encoder--Decoder Constructions} \label{Section examples}
This section presents admissible kernel families and encoder--decoder
constructions. We verify dimension-uniform positive definiteness and
H\"older-type regularity for commonly used kernels, and derive estimates
for the discrepancy parameter \(\Delta\) and the encoding errors for several
representative encoders. The proofs are deferred to Appendix~\ref{Appendix 2}.

\subsection{Admissible Kernels}
We first discuss kernel families whose positive definiteness and
regularity properties are stable across ambient dimensions and encoding
resolutions.

\noindent \textbf{Radial kernels.}
A kernel $k$ is called radial if it can be written in the form
\begin{equation} \label{radial kernel}
    k(x,x')=\phi(\|x-x'\|_2)
\end{equation}
where $\phi:[0,\infty)\to\mathbb{R}$ is a univariate function. Such kernels are translation invariant and isotropic, as they depend solely on the Euclidean distance between inputs. Their regularity is determined by the smoothness of the even extension of
$\phi$ to $\mathbb{R}$, which in turn characterizes the regularity of the
associated RKHS.
Many kernels commonly used in practice belong to this class, including the Gaussian kernel \cite[Theorem 6.10]{wendland2004scattered}, $\phi(r)=e^{-\alpha r^2}$ for $\alpha>0$; the inverse multiquadric kernel \cite[Theorem 7.15]{wendland2004scattered} $\phi(r)=(c^2+r^2)^{-\beta}$, for $c>0$ and $\beta>0$; the Matérn kernel \cite{stein1999interpolation} (which reduces to the Laplacian kernel when $\nu=1/2$); and the Cauchy kernel \cite{gneiting2004stochastic}, which contains the rational quadratic kernel as a special case. All these kernels are positive definite in arbitrary dimensions.

A complete characterization of dimension-independent positive definiteness for radial kernels was established by Schoenberg (see also \cite[Chapter 7]{wendland2004scattered}). Specifically, a continuous radial kernel is positive definite on $\mathbb{R}^d$ for all $d\in\mathbb{N}$ if and only if
\[
\phi(r)=\int_0^\infty e^{-sr^2}d\gamma(s),
\]
for some finite nonnegative Borel measure $\gamma$. Equivalently, the function $\phi(\sqrt{\cdot})$ is completely monotone. This characterization includes many classical radial kernels that are positive definite in all finite dimensions.

\noindent \textbf{Dot product kernels.} Another important family consists of dot product kernels, which depend only on the inner product between two inputs and take the form
\begin{equation} \label{inner product kernel}
    k(x,x')=\phi(\langle x,x'\rangle_2).
\end{equation}
Such kernels are invariant under orthogonal transformations of the input space and naturally capture feature interactions through analytic expansions of inner products \cite{agrawal2019kernel}. Kar and Karnick \cite{kar2012random} showed that a function $\phi:\mathbb{R}\to\mathbb{R}$ defines a positive definite kernel on $\mathcal{B}_1\times\mathcal{B}_1$, via $(x,x')\mapsto\phi(\langle x,x'\rangle_2)$, if and only if $\phi$ is analytic with a Maclaurin expansion
\[
\phi(t)=\sum_{n\geq0}a_nt^n,\qquad a_n\geq0,
\]
where $\mathcal{B}_1$ denotes the unit ball of a (possibly infinite-dimensional) Hilbert space. Representative examples include polynomial kernels 
$\phi(t)=(t+\nu)^p$ for $\nu\geq0$ and $p\in\bn$, exponential dot product kernels of the form $\phi(t)=e^{\frac{t}{\sigma^2}}$ for $\sigma>0$, and Vovk's kernel $\phi(t)=\frac{1-t^p}{1-t}$ for $p\in\bn$. These constructions again yield kernels that remain positive definite independently of the ambient dimension.

The dimension-independent positive definiteness of radial and dot product kernels
ensures that kernel-based encoder--decoder constructions remain well-posed across
varying encoding resolutions and ambient dimensions. The following proposition shows that an \(\alpha\)-H\"older condition on
\(\phi\) yields dimension-uniform \(\alpha\)-H\"older estimates for the induced
radial or dot product kernels.

\begin{proposition} \label{prop20}
Let $\phi$ be a univariate function inducing either a radial kernel
$k(x,x')=\phi(\|x-x'\|_2)$ with $\phi:[0,\infty)\to\mathbb{R}$,
or a dot product kernel $k(x,x')=\phi(\langle x,x'\rangle_2)$ with
$\phi:\mathbb{R}\to\mathbb{R}$, and suppose that $k$ is positive definite on
$\mathbb{R}^d$ for all $d\ge1$.
Let $R>0$ and assume that $\phi$ is $\alpha$--H\"older continuous on $[0,2R]$
(for dot product kernels, on $[-R^2,R^2]$) for some $\alpha\in(0,1]$.

Then, for any $d\ge1$, the induced kernel $k$ is $\alpha$--H\"older continuous on
$\mathcal{B}_R(\mathbb{R}^d)\times \mathcal{B}_R(\mathbb{R}^d)$, with a H\"older
constant independent of $d$. More precisely, for any $x_1,x_1',x_2,x_2'\in \mathcal{B}_R(\mathbb{R}^d)$,
\begin{equation} \label{holder}
    \bigl|k(x_1,x_1')-k(x_2,x_2')\bigr|
\le C_{\alpha,R}
\bigl(\|x_1-x_2\|_2^{\alpha}+\|x_1'-x_2'\|_2^{\alpha}\bigr),
\end{equation}
where $C_{\alpha,R}$ depends only on $\alpha$, $R$, and the \(\alpha\)-H\"older constant
of \(\phi\) on the relevant interval, but not on the dimension $d$.

Moreover, if $k$ is a radial kernel and $\phi$ is globally $\alpha$--H\"older continuous on
$[0,\infty)$, then the above $\alpha$--Hölder continuity holds on the whole space $\mathbb{R}^d\times\mathbb{R}^d$, with a constant depending only on the Hölder constant of $\phi$. 
\end{proposition}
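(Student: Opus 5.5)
The plan is to insert the intermediate point $(x_2,x_1')$, apply the triangle inequality, and reduce each resulting one-argument difference to a one-dimensional H\"older estimate on $\phi$:
\[
|k(x_1,x_1')-k(x_2,x_2')|\le |k(x_1,x_1')-k(x_2,x_1')|+|k(x_2,x_1')-k(x_2,x_2')|.
\]
By symmetry of $k$ it then suffices to bound $|k(x,y)-k(x',y)|\le C_{\alpha,R}\|x-x'\|_2^\alpha$ for $x,x',y\in\mathcal B_R(\mathbb R^d)$. Throughout, the only $d$-dependence enters through Euclidean norms and inner products, and these are controlled by dimension-free inequalities (the triangle inequality and Cauchy--Schwarz).

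\emph{Radial case.} Write $a=\|x-y\|_2$ and $b=\|x'-y\|_2$. Both lie in $[0,2R]$ because the points are in the ball of radius $R$. The reverse triangle inequality gives $|a-b|\le\|x-x'\|_2$. If $L_\phi$ denotes the $\alpha$-H\"older constant of $\phi$ on $[0,2R]$, then
\[
|k(x,y)-k(x',y)|=|\phi(a)-\phi(b)|\le L_\phi|a-b|^\alpha\le L_\phi\|x-x'\|_2^\alpha,
\]
so we may take $C_{\alpha,R}=L_\phi$. For the global statement, the same computation applies verbatim for arbitrary $x,x',y\in\mathbb R^d$, since then $a,b\in[0,\infty)$ and we use the global H\"older constant of $\phi$.

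\emph{Dot product case.} Set $s=\langle x,y\rangle_2$ and $s'=\langle x',y\rangle_2$. By Cauchy--Schwarz both lie in $[-R^2,R^2]$, and $|s-s'|=|\langle x-x',y\rangle_2|\le R\|x-x'\|_2$. Hence
\[
|\phi(s)-\phi(s')|\le L_\phi R^\alpha\|x-x'\|_2^\alpha,
\]
which gives $C_{\alpha,R}=L_\phi R^\alpha$.

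Neither case has a genuine obstacle. The only point needing care is that the arguments of $\phi$ stay in the intervals where the H\"older assumption is imposed, namely $[0,2R]$ in the radial case and $[-R^2,R^2]$ in the dot product case, and this is exactly what the ball constraint guarantees. The positive definiteness hypothesis plays no role in the estimate itself; it only ensures that $k$ is a valid kernel in every dimension $d$.
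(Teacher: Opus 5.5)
Your proof is correct and follows the same approach as the paper. In both, the estimate reduces to the H\"older continuity of $\phi$, using the reverse triangle inequality in the radial case and Cauchy--Schwarz in the dot product case, with the ball constraint keeping the arguments in $[0,2R]$ or $[-R^2,R^2]$. The paper varies both arguments at once and uses $(a+b)^\alpha\le a^\alpha+b^\alpha$, whereas you split through the intermediate point $(x_2,x_1')$; this is a cosmetic difference that also gives you the explicit constants $L_\phi$ and $L_\phi R^\alpha$.
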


Most common radial kernels, including Gaussian, Laplacian (Matérn with $\nu=1/2$), Matérn with \(\nu\ge 1/2\), inverse multiquadric, and Cauchy kernels, admit dimension-uniform Lipschitz continuity on bounded subsets of 
$\mathbb{R}^d$ for any $d\geq1$. Moreover, the Gaussian, Laplacian, and Cauchy kernels are Lipschitz continuous on the whole space $\mathbb{R}^d\times\mathbb{R}^d$ in the sense of \eqref{holder}. 
In addition, common dot product kernels such as polynomial and exponential kernels are Lipschitz continuous on bounded subsets of the input space.

We extend these dimension-uniform regularity properties from finite-dimensional Euclidean spaces to Hilbert spaces.

\begin{proposition} \label{prop21}
Suppose that \(\mathcal U\) is a Hilbert space, and let \(k_\infty':\mathcal U\times\mathcal U\to\mathbb R\) be defined by either
\[
k_\infty'(u,u')=\phi(\|u-u'\|_{\mathcal U})
\]
or
\[
k_\infty'(u,u')=\phi(\langle u,u'\rangle_{\mathcal U}).
\]
Then the following statements hold.
\begin{itemize}
    \item[(1)] If the corresponding finite-dimensional kernel $k$ is positive definite on
    \(\mathbb R^d\) for every \(d\ge1\), then \(k_\infty'\) is positive definite on
    \(\mathcal U\).
    \item[(2)] For any \(R>0\), if \(\phi\) satisfies the assumptions of
    Proposition~\ref{prop20}, then \(k_\infty'\) is
    \(\alpha\)-H\"older continuous on \(B_R(\mathcal U)\times B_R(\mathcal U)\), with a constant independent of the ambient dimension.
\end{itemize}
\end{proposition}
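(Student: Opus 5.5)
Both parts reduce to finite-dimensional statements, because every quantity involved depends on only finitely many points of \(\mathcal U\).

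\emph{Step 1 (positive definiteness).} Fix \(n\in\mathbb N\), points \(u_1,\dots,u_n\in\mathcal U\), and coefficients \(c\in\mathbb R^n\). Let \(W:=\operatorname{span}\{u_1,\dots,u_n\}\), which has dimension \(d\le n\) (if \(d=0\), replace \(W\) by any one-dimensional subspace). Choose an orthonormal basis of \(W\) and let \(\iota:W\to\mathbb R^d\) be the associated coordinate isometry, and set \(x_i:=\iota u_i\). Because \(\iota\) preserves inner products,
\[
\langle u_i,u_j\rangle_{\mathcal U}=\langle x_i,x_j\rangle_2,\qquad \|u_i-u_j\|_{\mathcal U}=\|x_i-x_j\|_2 .
\]
Hence \(k_\infty'(u_i,u_j)=k(x_i,x_j)\) in both the radial and the dot product case. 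It follows that
\[
\sum_{i,j}c_ic_j\,k_\infty'(u_i,u_j)=\sum_{i,j}c_ic_j\,k(x_i,x_j)\ge0,
\]
by positive definiteness of \(k\) on \(\mathbb R^d\). Symmetry of \(k_\infty'\) is immediate from its form.

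There is a subtlety in the dot product case. The hypothesis may only give positive definiteness on a ball, e.g.\ on \(\mathcal B_1\) (Kar--Karnick). Since \(\iota\) preserves norms, \(\iota\) maps \(\mathcal B_R(\mathcal U)\cap W\) into \(\mathcal B_R(\mathbb R^d)\), so any domain restriction transfers verbatim.

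\emph{Step 2 (H\"older continuity).} Take \(u_1,u_1',u_2,u_2'\in\mathcal B_R(\mathcal U)\). Let \(W\) be their span, of dimension \(d\le4\), and embed it isometrically into \(\mathbb R^d\) exactly as in Step 1, writing \(x_1=\iota u_1\), \(x_1'=\iota u_1'\), \(x_2=\iota u_2\), \(x_2'=\iota u_2'\). The images lie in \(\mathcal B_R(\mathbb R^d)\), kernel values are preserved, and
\[
\|x_1-x_2\|_2=\|u_1-u_2\|_{\mathcal U},\qquad \|x_1'-x_2'\|_2=\|u_1'-u_2'\|_{\mathcal U}.
\]
Applying \eqref{holder} from Proposition~\ref{prop20} then gives
\[
\bigl|k_\infty'(u_1,u_1')-k_\infty'(u_2,u_2')\bigr|\le C_{\alpha,R}\bigl(\|u_1-u_2\|_{\mathcal U}^{\alpha}+\|u_1'-u_2'\|_{\mathcal U}^{\alpha}\bigr),
\]
and \(C_{\alpha,R}\) does not depend on \(d\), hence not on the ambient space \(\mathcal U\).

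\emph{Where care is needed.} The only point that requires attention is that \(C_{\alpha,R}\) in Proposition~\ref{prop20} is genuinely uniform in \(d\). This is stated explicitly there, so the argument is simply the isometric-embedding reduction. No limiting argument (for instance, letting \(d_1\to\infty\) with the encoders) is required.
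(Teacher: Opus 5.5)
Your proposal is correct and follows the paper. Part (1) is the same reduction of the Gram matrix to $\mathbb{R}^d$, $d\le n$, via an orthonormal basis of the span. For part (2), the paper reruns the inequality chain from the proof of Proposition~\ref{prop20} directly in $\mathcal U$, whereas you transfer its conclusion through the isometric embedding; this is an inessential difference.

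One small remark: because your embedding has $d\le 4$, you do not actually need the uniformity of $C_{\alpha,R}$ in $d$ that you flag as the delicate point. The constant $\max_{1\le d\le 4}C_{\alpha,R}$ is already independent of $\mathcal U$.
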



We next use the dimension-uniform regularity above to control the discrepancy
parameter \(\Delta\), which measures the difference between the encoder-induced
kernel \(k_{d_1}\) and the limiting kernel \(k_\infty\). Recall that
\[
k_{d_1}(u,u')
=
k\bigl(\mathcal E_1^{d_1}u,\mathcal E_1^{d_1}u'\bigr),
\qquad u,u'\in\mathcal U .
\]
The following assumption describes the limiting geometry induced by the input
encoder and leads to the definition of the limiting scalar kernel.


Set $A_{d_1}:=
\bigl((\mathcal E_1^{d_1})^*\mathcal E_1^{d_1}\bigr)^{1/2}.$
Suppose that there exists a bounded linear operator \(T:\mathcal U\to\mathcal U\)
such that
\[
\mathbb E_{u\sim\rho_u}
\left[
\left\|
A_{d_1}u-Tu
\right\|_{\mathcal U}^2
\right]
\longrightarrow 0,
\qquad d_1\to\infty .
\]
We then define the limiting scalar kernel
\(k_\infty:\mathcal U\times\mathcal U\to\mathbb R\) by
\[
k_\infty(u,u') := k_\infty'(Tu,Tu').
\]


\begin{corollary} \label{prop22}
Suppose that the assumptions of Proposition~\ref{prop21} hold. In the
locally H\"older case, assume moreover that the local H\"older estimate is
valid on \(B_R(\mathcal U)\times B_R(\mathcal U)\) for some \(R>0\), and that
\begin{equation} \label{Holder cond}
    \operatorname*{ess\,sup}_{u\sim\rho_u}
\|A_{d_1}u\|_{\mathcal U}\le R,
\quad \text{for all sufficiently large } d_1 .
\end{equation}
Then the discrepancy parameter
\[
\Delta= \|L_{d_1}-L_\infty\|
\]
admits the bound
\[
\Delta \;\lesssim\;
\left(
\mathbb{E}\left[
\left\|A_{d_1}u-Tu\right\|_{\mathcal U}^{2\alpha}
\right]
\right)^{1/2},
\]
where the implicit constant is independent of \(d_1\). 

If the H\"older estimate holds globally, the boundedness condition \eqref{Holder cond} is not required.
\end{corollary}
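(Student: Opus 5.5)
The plan is to bound the operator norm by the Hilbert--Schmidt norm of $L_{d_1}-L_\infty$ on $L^2(\mathcal U,\rho_u)$, reduce that to an $L^2(\rho_u\otimes\rho_u)$ norm of the kernel difference, and then apply the dimension-uniform H\"older estimate. Since both $L_{d_1}$ and $L_\infty$ are integral operators with respect to the same measure $\rho_u$, $L_{d_1}-L_\infty$ is the integral operator with kernel $k_{d_1}-k_\infty$. Hence
\[
\Delta\le \|L_{d_1}-L_\infty\|_{\mathrm{HS}}
=\Bigl(\int_{\mathcal U}\int_{\mathcal U}\bigl|k_{d_1}(u,u')-k_\infty(u,u')\bigr|^2\,\mathrm d\rho_u(u)\,\mathrm d\rho_u(u')\Bigr)^{1/2}.
\]
Both kernels are bounded $\rho_u\otimes\rho_u$-a.e.: $k_{d_1}$ by the Condition and $k_\infty$ by continuity on the relevant ball. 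So the double integral is finite and the identity holds.

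The key step is to rewrite $k_{d_1}$ as a kernel on $\mathcal U$ evaluated at $A_{d_1}u$. For radial kernels, $\|\mathcal E_1^{d_1}u-\mathcal E_1^{d_1}u'\|_2^2=\langle (\mathcal E_1^{d_1})^*\mathcal E_1^{d_1}(u-u'),u-u'\rangle_{\mathcal U}=\|A_{d_1}(u-u')\|_{\mathcal U}^2$. For dot product kernels, $\langle\mathcal E_1^{d_1}u,\mathcal E_1^{d_1}u'\rangle_2=\langle A_{d_1}u,A_{d_1}u'\rangle_{\mathcal U}$. In both cases
\[
k_{d_1}(u,u')=k_\infty'(A_{d_1}u,A_{d_1}u'),\qquad k_\infty(u,u')=k_\infty'(Tu,Tu').
\]
Proposition~\ref{prop21}(2) then gives, for $\rho_u$-a.e.\ $u,u'$,
\[
|k_{d_1}(u,u')-k_\infty(u,u')|\le C_{\alpha,R}\bigl(\|A_{d_1}u-Tu\|_{\mathcal U}^\alpha+\|A_{d_1}u'-Tu'\|_{\mathcal U}^\alpha\bigr).
\]
In the local case, this step requires all four points $A_{d_1}u$, $A_{d_1}u'$, $Tu$, $Tu'$ to lie in $B_R(\mathcal U)$.

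This localization is the main obstacle. Condition \eqref{Holder cond} controls $A_{d_1}u$ directly, but $Tu$ needs an argument. Since $A_{d_1}u\to Tu$ in $L^2(\rho_u)$, a subsequence converges $\rho_u$-a.e. Each term of that subsequence lies in $B_R(\mathcal U)$ a.e., and the ball is closed, so $\|Tu\|_{\mathcal U}\le R$ a.e. If $R$ is tight, I would enlarge it slightly. Under the global H\"older estimate this step is unnecessary, which matches the final remark of the statement.

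To finish, square the pointwise bound using $(a+b)^2\le 2a^2+2b^2$, integrate over $\rho_u\otimes\rho_u$, and use Fubini. The two terms are symmetric, so
\[
\Delta^2\le 4C_{\alpha,R}^2\,\mathbb E\bigl[\|A_{d_1}u-Tu\|_{\mathcal U}^{2\alpha}\bigr].
\]
The constant $C_{\alpha,R}$ comes from Proposition~\ref{prop21} and does not depend on $d_1$, which gives the claim.
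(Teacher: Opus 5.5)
Your proof is correct and follows the same route as the paper. You bound $\Delta$ by the Hilbert--Schmidt norm, identify that norm with $\|k_{d_1}-k_\infty\|_{L^2(\rho_u\otimes\rho_u)}$, rewrite $k_{d_1}(u,u')=k_\infty'(A_{d_1}u,A_{d_1}u')$, and apply the dimension-free H\"older estimate of Proposition~\ref{prop21}. Your explicit check of that identity and your subsequence argument giving $\|Tu\|_{\mathcal U}\le R$ $\rho_u$-a.e.\ fill in details the paper leaves implicit. That argument needs no enlargement of $R$, since $\mathcal B_R(\mathcal U)$ is closed.
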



In particular, consider Lipschitz kernels, corresponding to the case \(\alpha=1\)
in \eqref{holder}. If the input encoder--decoder pair is constructed from an
orthogonal expansion, then
\[
\mathcal D_1^{d_1}=(\mathcal E_1^{d_1})^*
\qquad\text{and}\qquad
A_{d_1}
=
\mathcal D_1^{d_1}\mathcal E_1^{d_1}.
\]
Taking \(T=I_{\mathcal U}\), Corollary~\ref{prop22} yields
\[
\Delta \;\lesssim\;
\left(
\mathbb E_{u\sim\rho_u}
\left[
\left\|
\mathcal D_1^{d_1}\mathcal E_1^{d_1}u-u
\right\|_{\mathcal U}^{2}
\right]
\right)^{1/2}.
\]
Consequently, for the simplified choice \(\lambda\asymp\Delta\), the
input-side error satisfies
\[
E_{\mathrm{enc}}(\lambda,\Delta,r)
\;\lesssim\;
\left(
\mathbb E_{u\sim\rho_u}
\left[
\left\|
\mathcal D_1^{d_1}\mathcal E_1^{d_1}u-u
\right\|_{\mathcal U}^{2}
\right]
\right)^{\min\{r,1\}}.
\]

For dot product kernels, the preceding estimate can be sharpened.
Let \(P=\mathcal D_1^{d_1}\mathcal E_1^{d_1}\) be the orthogonal
projection above. Assume that \(\rho_u(B_R(\mathcal U))=1\)
and that \(\phi\) is Lipschitz on \([-R^2,R^2]\). Write
\(e_{d_1}:=\mathbb E_{u\sim\rho_u}
\|(I_{\mathcal U}-P)u\|_{\mathcal U}^2\).
Orthogonality gives
\[
\langle u,u'\rangle_{\mathcal U}
-\langle Pu,Pu'\rangle_{\mathcal U}
=\langle(I_{\mathcal U}-P)u,(I_{\mathcal U}-P)u'\rangle_{\mathcal U}.
\]
Hence, by the Lipschitz property and Cauchy--Schwarz inequality,
\[
\Delta
\le \|k_{d_1}-k_\infty\|_{L^2(\rho_u\otimes\rho_u)}
\lesssim e_{d_1}.
\]
Consequently, taking \(\lambda\asymp\Delta\) gives
\[
E_{\mathrm{enc}}(\lambda,\Delta,r)
\lesssim e_{d_1}^{\,2\min\{r,1\}}.
\]


\subsection{Admissible Encoder–Decoder Constructions}


We first consider a general class of linear encoder--decoder pairs generated by
orthogonal expansions. This viewpoint gives a unified way to estimate the
encoding error for several concrete constructions below. For clarity, we focus
on the input-side error
\[
\mathbb{E}_{u\sim\rho_u}
\bigl[
\|\mathcal{D}_1^{d_1}\mathcal{E}_1^{d_1}u-u\|_{\mathcal{U}}^{2}
\bigr],
\]
since the output-side error can be treated analogously.

Let \(\{u_i\}_{i\ge1}\) be a complete orthonormal basis of the separable Hilbert
space \(\mathcal U\), and let \(\{\omega_i\}_{i\ge1}\) be a nondecreasing sequence
of positive weights. For \(s>0\), define the weighted Hilbert space
\begin{equation} \label{temp25}
    \mathcal U^s
    :=
    \left\{
    u\in\mathcal U:
    \|u\|_{\mathcal U^s}^2
    :=
    \sum_{i\ge1}
    \omega_i^{2s}
    \bigl|\langle u,u_i\rangle_{\mathcal U}\bigr|^2
    <\infty
    \right\}.
\end{equation}
Equipped with this norm, \(\mathcal U^s\) is a Hilbert space. It can be viewed as
a regularity class whose norm controls the decay of the expansion coefficients
of \(u\) in the basis \(\{u_i\}_{i\ge1}\).

We consider linear encoders and decoders of the form
\begin{equation} \label{temp26}
    \mathcal{E}^{d}(u):=\l(\langle u,u_1\rangle_{\mathcal{U}},\cdots,\langle u,u_d\rangle_{\mathcal{U}}\r)\in\mathbb{R}^d,\qquad\mathcal{D}^{d}(c):=\sum_{i=1}^dc_iu_i.
\end{equation}
In this case, the composition $\mathcal{D}^{d}\circ\mathcal{E}^{d}$ coincides with the orthogonal projection onto the subspace $\mathrm{span}\{u_1,\cdots,u_d\}$.

Suppose that the random input $u$ satisfies $\be_{u\sim\rho_u}\l[\|u\|_{\mathcal{U}^s}^2\r]\leq R^2$. Then,
\begin{align*}
    \be_{u\sim\rho_u}\l[\l\|\mathcal{D}^{d}\circ\mathcal{E}^{d}(u)-u\r\|_{\mathcal{U}}^{2}\r]&=\be_{u\sim\rho_u}\Big[\sum_{i\geq d+1}\l\lvert\langle u,u_i\rangle_{\mathcal{U}}\r\rvert^2\Big]\\&=\be_{u\sim\rho_u}\Big[\sum_{i\geq d+1}\omega_i^{-2s}\omega_i^{2s}\l\lvert\langle u,u_i\rangle_{\mathcal{U}}\r\rvert^2\Big]
    \leq R^2\omega_{d+1}^{-2s}.
\end{align*}
This estimate shows how linear encoders control the encoding error:
the encoder truncates the orthogonal expansion of \(u\), and the error is
controlled by the tail energy of the discarded modes. Its decay is therefore
determined by the growth of the weights \(\{\omega_i\}\) and $R^2$. In
particular, the encoding error decays at the rate
\(\omega_{d+1}^{-2s}\). 

We next present several representative examples that fit naturally within this setting. In each example, we specify a regularity class $\mathcal{U}^s$
 for which the encoding error admits an explicit decay rate in $d$.

\paragraph{Spectral truncation via Fourier bases.}
    Consider periodic functions defined on the domain $[-1,1]^m$ and let $\mathcal{U}=L^2(\mathbb{T}^m)$, where $\mathbb{T}^m$ denotes the $m$-dimensional flat torus. Let
    \[
    \psi_k(x):=e^{i\pi k\cdot x},\qquad k\in\mathbb{Z}^{m},
    \]
    be the standard Fourier basis of $\mathcal{U}$. For an integer $r\geq1$, we define the low-frequency index set
    \begin{equation} \label{temp24}
        \Lambda_r:=\l\{k\in\mathbb{Z}^m:\|k\|_{\infty}\leq r\r\},
    \end{equation}
    with $d =|\Lambda_r|$. The encoder $\mathcal{E}^d$ is defined by extracting the Fourier coefficients $\mathcal{E}^d(u) := \bigl(\langle u,\psi_k\rangle_{L^2(\mathbb{T}^m)}\bigr)_{k\in\Lambda_r},$
while the decoder $\mathcal{D}^d$ reconstructs the function from these coefficients via $\mathcal{D}^d(c) := \sum_{k\in\Lambda_r} c_k \psi_k.$

    When restricted to real-valued functions, the same encoding can equivalently be expressed using trigonometric basis functions, namely sine and cosine functions. In this representation, the encoding dimension satisfies $d \asymp |\Lambda_r|$.

    If the input distribution has finite second moment in a Sobolev space $H^s\l(\mathbb{T}^m\r)$ with $s>0$, then the corresponding encoding error obeys
    \[
    \be_{u\sim\rho_u}\l\|\l(\mathcal{D}^{d}\circ\mathcal{E}^{d}-I\r)(u)\r\|_{L^2\l(\mathbb{T}^m\r)}^{2} \lesssim d^{-\frac{2s}{m}},
    \]
    where $d\asymp\lvert\Lambda_r\rvert\asymp r^m$. This construction naturally applies to Sobolev spaces on compact Riemannian manifolds \cite{chavel1984eigenvalues}, which admit a spectral characterization in terms of the Laplace–Beltrami operator.





\paragraph{Kernel spectral encoding.} 
Let $\mathcal U=L^2(D)$, where $D$ is a bounded domain, and let
$\mathcal H$ be the RKHS on $D$ induced by a positive definite Mercer kernel $k$. Assume that \(\int_D k(x,x)\,dx<\infty\). Let $L_k:L^2(D)\to L^2(D)$ be the associated kernel integral operator, and let
$\{(\mu_i,\psi_i)\}_{i\ge1}$ be its positive eigenpairs, ordered so that
$\{\mu_i\}_{i\ge1}$ is nonincreasing. Then $\{\psi_i\}_{i\ge1}$ forms an
orthonormal basis of the subspace
$\overline{\operatorname{ran}(L_k)}=(\ker L_k)^\perp$.

Taking $\omega_i=\mu_i^{-1/2}$ in \eqref{temp25}, the corresponding weighted
Hilbert space is
\[
\mathcal H^s
:=
\operatorname{ran}(L_k^{s/2})
=
[L^2(D),\mathcal H]_{s,2},
\qquad 0<s<1,
\]
with equivalent norms; see \cite[Theorem~2.3]{yang2025kernel}.

Define the encoder $\mathcal E^d$ and decoder $\mathcal D^d$ by truncating and
reconstructing the kernel eigenfunction expansion using
$\{\psi_i\}_{i=1}^d$, as in \eqref{temp26}. If the input distribution has finite
second moment in $\mathcal H^s$, then
\[
\mathbb E_{u\sim\rho_u}
\bigl\|(\mathcal D^d\circ\mathcal E^d-I)u\bigr\|_{L^2(D)}^2
\;\lesssim\;
\mu_{d+1}^{s}.
\]
Thus the encoding error is controlled by the spectral decay of $L_k$: since
$\sum_{i\ge1}\mu_i<\infty$ and $\{\mu_i\}$ is nonincreasing, one has
$\mu_{d+1}\lesssim d^{-1}$ and hence the crude rate $d^{-s}$. More generally,
if $\mu_i\lesssim i^{-\beta}$, the rate is $d^{-\beta s}$.



\paragraph{Polynomial basis encoding.}
Consider functions defined on the hypercube $[-1,1]^m$ and let 
\[
\mathcal U = L^2([-1,1]^m).
\]
Let $\{P_n\}_{n\ge 0}$ denote the normalized one-dimensional Legendre polynomials, which form an orthonormal basis of $L^2([-1,1])$. For a multi-index
\(
\alpha=(\alpha_1,\dots,\alpha_m)\in\mathbb N_0^m,
\)
we define the $m$-variate Legendre basis functions by the tensor-product construction
\[
P_\alpha(x)
:=
\prod_{j=1}^m P_{\alpha_j}(x_j),
\qquad
x=(x_1,\dots,x_m)\in[-1,1]^m.
\]
Then $\{P_\alpha\}_{\alpha\in\mathbb N_0^m}$ forms an orthonormal basis of $\mathcal U$. Given an integer $r\ge 0$, define
\[
\Lambda_r^{\mathrm{poly}}
:=
\{\alpha\in\mathbb N_0^m:\|\alpha\|_\infty\le r\}.
\]
We construct the encoder $\mathcal E^d$ and decoder $\mathcal D^d$ by truncating and reconstructing the Legendre expansion with respect to the basis functions indexed by $\Lambda_r^{\mathrm{poly}}$.

If the input distribution has finite second moment in
\(C^{k,\alpha}([-1,1]^m)\) for some integer \(k>0\) and
\(0<\alpha\le1\), then the associated encoding error satisfies
\[
\be_{u\sim\rho_u}\bigl\|(\mathcal D^{d}\circ\mathcal E^{d}-I)(u)\bigr\|_{L^2([-1,1]^m)}^{2}
\;\lesssim\;
d^{-\frac{2s}{m}},
\]
where $s = k+\alpha$ and $d=|\Lambda_r^{\mathrm{poly}}|=(r+1)^m\asymp r^m$. See, e.g., \cite[Section~4.3]{liu2024deep}.

\paragraph{Multiscale wavelet encoding.} 
Let $\mathcal U = L^2(\mathbb T^m)$.  
For $j\in\mathbb N_0$, define the index sets
\[
K_j := \{k\in\mathbb Z^m : 0 \le k_\ell < 2^j,\ \ell=1,\dots,m\},\quad
L_0 := \{0,1\}^m,\quad
L_j := L_0 \setminus \{(0,\dots,0)\},\ j\ge1,
\]
and set
\[
\mathcal I := \{(j,k,\ell): j\in\mathbb N_0,\ k\in K_j,\ \ell\in L_j\}.
\]
According to \cite[Proposition~1.34]{triebel2008function}, the periodic wavelets constructed in \cite[Sec.~1.3.2]{triebel2008function}
form an orthonormal basis
\(
\Psi=\{\psi_{j,k}^{\ell}\}_{(j,k,\ell)\in\mathcal I}
\)
of $L^2(\mathbb T^m)$.

For a fixed $s>0$, choose the wavelets with sufficient regularity and vanishing moments for the characterization in \cite[Theorem~1.36]{triebel2008function}. With $p=2$, 
the associated periodic Besov space $B^{s}_{p,p}(\mathbb T^m)$ admits the characterization
\[
\|u\|_{B^{s}_{2,2}(\mathbb T^m)}^2
\asymp
\sum_{(j,k,\ell)\in\mathcal I}
2^{2js}\,\l|\l\langle u,\psi_{j,k}^{\ell}\r\rangle_{L^2(\mathbb T^m)}\r|^2,
\]
see \cite[Theorem~1.36]{triebel2008function}. Moreover, $B^{s}_{2,2}(\mathbb T^m)=H^{s}(\mathbb T^m)$.

Note that the number of wavelets at scale $j$ satisfies
$|\{(k,\ell):k\in K_j,\ \ell\in L_j\}|\asymp 2^{jm}$.
Reindexing $\Psi$ as $\{\psi_i\}_{i\ge1}$ in increasing scale order therefore yields the equivalent norm
\[
\|u\|_{H^{s}(\mathbb T^m)}^2
\asymp
\sum_{i\ge1} i^{\frac{2s}{m}}
\,\l|\l\langle u,\psi_i\r\rangle_{L^2(\mathbb T^m)}\r|^2 .
\]
We define the encoder and decoder by
\[
\mathcal E^{d}(u) := (\langle u,\psi_1\rangle_{L^2(\mathbb T^m)},\dots,\langle u,\psi_d\rangle_{L^2(\mathbb T^m)}),
\qquad
\mathcal D^{d}(c) := \sum_{i=1}^d c_i \psi_i .
\]
If the input distribution has finite second moment in $H^{s}(\mathbb T^m)$, then
\[
\be_{u\sim\rho_u}\|(\mathcal D^{d}\circ\mathcal E^{d}-I)(u)\|_{L^2(\mathbb {T}^m)}^2
\;\lesssim\;
d^{-2s/m}.
\]

\paragraph{(Empirical) principal component encoding.} 
Assume that \(\mathbb E\|u\|_{\mathcal U}^2<\infty\) and,
for simplicity, that \(\mathbb E[u]=0\).
The distribution $\rho_u$ induces a covariance operator on $\mathcal U$ defined by
\[
C
:=
\mathbb{E}_{u\sim\rho_u}\bigl[\langle \cdot , u\rangle_{\mathcal U}\, u\bigr].
\]
The operator $C$ is self-adjoint, positive, and trace-class, and admits the spectral decomposition
\[
C
=
\sum_{i\ge1}
\mu_i \langle \cdot , \psi_i\rangle_{\mathcal U}\, \psi_i,
\]
where $\{\mu_i\}_{i\ge1}$ is a nonincreasing sequence of positive eigenvalues and
$\{\psi_i\}_{i\ge1}$ is an orthonormal basis of \(\overline{\operatorname{ran}(C)}\).

In practice, the covariance operator is approximated using i.i.d.\ samples
$\{u_1,\dots,u_n\}\sim\rho_u$ via the empirical covariance operator
\[
\hat C
:=
\frac{1}{n}\sum_{i=1}^n \langle \cdot , u_i\rangle_{\mathcal U}\, u_i,
\]
whose eigenpairs are denoted by $\{(\hat\mu_i,\hat\psi_i)\}_{i\ge1}$.
We define the encoder $\mathcal E^{d}$ and decoder $\mathcal D^{d}$ by truncating and reconstructing
the spectral decomposition of $\hat C$ using the first $d$ empirical eigenfunctions
$\{\hat\psi_i\}_{i=1}^d$, as in \eqref{temp26}.

According to \cite[Proposition~2]{lanthaler2023operator}, if the distribution $\rho_u$ is sub-Gaussian,
then for any $\delta\in(0,\tfrac12)$ and any sample size $n\ge \log(2/\delta)$,
the following bound holds with probability at least $1-\delta$:
\[
\mathbb{E}_{u\sim\rho_u}
\bigl[
\|\mathcal D^{d}\circ\mathcal E^{d}(u)-u\|_{\mathcal U}^2
\bigr]
\;\lesssim\;
\sum_{i>d}\mu_i
+
\sqrt{\frac{d\log(2/\delta)}{n}}.
\]

This estimate highlights that the approximation error of the PCA encoder is governed by the
tail sum $\sum_{i>d}\mu_i$ of the population covariance spectrum, which depends on the regularity of $\rho_u$.
Moreover, by \cite[Proposition~15]{lanthaler2023operator}, if
$\mathcal U=H^s(D;\mathbb R^{m'})$, where $m'\in\mathbb N$ and $D$ is either a Lipschitz domain
in $\mathbb R^m$ or the torus $\mathbb T^m$, and if
\[
\mathbb{E}_{u\sim\rho_u}\bigl[\|u\|_{H^{s+\zeta}}^2\bigr]<\infty,
\quad\text{for some }\zeta>0,
\]
then the eigenvalue tail satisfies
\[
\sum_{i>d}\mu_i
\;\lesssim\;
d^{-2\zeta/m}.
\]
\begin{remark}
    For the theoretical analysis, we assume that the samples used to construct the
empirical PCA encoder are independent of those used to train the learning
algorithm, as in \cite{liu2024deep}. This assumption separates the randomness
introduced by empirical PCA from the randomness in the subsequent learning
procedure.

Furthermore, on the output side, empirical PCA is typically performed on noisy observations
$\{v_1,\dots,v_n\}$ rather than on the noise-free outputs
$\{\mathcal G^\dagger(u_1),\dots,\mathcal G^\dagger(u_n)\}$.
Conditioning on the sample used to construct the PCA encoder,
the output-side encoding error satisfies
\[
\mathbb{E}_{u\sim\rho_u}
\bigl[
\|\mathcal D^{d}\circ\mathcal E^{d}(\mathcal G^\dagger(u))
-
\mathcal G^\dagger(u)\|_{\mathcal V}^2
\bigr]
\;\le\;
\mathbb{E}_{v\sim\rho_v}
\bigl[
\|\mathcal D^{d}\circ\mathcal E^{d}(v)-v\|_{\mathcal V}^2
\bigr],
\]
Indeed, for a fresh pair \((u,v)\sim\rho\) independent of the
PCA sample, \(\mathbb E[\epsilon\mid u]=0\) and the linearity
of \(I_{\mathcal V}-\mathcal D^d\circ\mathcal E^d\)
make the cross term vanish.
\end{remark}

We next consider encoder--decoder pairs induced by point sampling.

\paragraph{Random sampling--based input encoding.} 
Assume that $\mathcal{U}$ is a separable Hilbert space that is continuously embedded into
$C^0(D)$, so that point-evaluation functionals are bounded.
Let
\[
\mathcal{E}_1^{d_1}
:=\frac{1}{\sqrt{d_1}}\bigl(\delta_{x_1},\ldots,\delta_{x_{d_1}}\bigr)^{\top},
\]
where $x_1,\ldots,x_{d_1}$ are drawn independently according to a probability measure $\mu$ on
$D$.
A direct calculation shows that
\[
(\mathcal{E}_1^{d_1})^*\mathcal{E}_1^{d_1}
=\frac{1}{d_1}\sum_{i=1}^{d_1} \delta_{x_i}^*\delta_{x_i}.
\]
Since point-evaluation functionals are bounded on $\mathcal{U}$, each operator
$\delta_x^*\delta_x$ is rank-one and hence Hilbert--Schmidt.

Define the mean operator
\[
C := \mathbb{E}_{x\sim\mu}\bigl[\delta_x^*\delta_x\bigr].
\]
Then, by a Hilbert--Schmidt Bernstein inequality
(Proposition~\ref{concentration}), with probability
at least $1-\delta$,
\[
\bigl\|(\mathcal{E}_1^{d_1})^*\mathcal{E}_1^{d_1} - C\bigr\|_{\mathrm{HS}}
\;\lesssim\;
\sqrt{\frac{1}{d_1}}\log(2/\delta).
\]
In particular, since $\|\cdot\|\le \|\cdot\|_{\mathrm{HS}}$, the same bound holds in operator
norm.

Let $T:=C^{1/2}$. By a perturbation bound for square roots of positive self-adjoint operators \cite{aleksandrov2010operator}, we obtain
\[
\bigl\| \bigl((\mathcal{E}_1^{d_1})^*\mathcal{E}_1^{d_1}\bigr)^{1/2} - T \bigr\|
\;\le\;
\bigl\|(\mathcal{E}_1^{d_1})^*\mathcal{E}_1^{d_1} - C\bigr\|^{1/2}.
\]
Assume that the hypotheses of Corollary~\ref{prop22}
hold with \(\alpha=1\). 
Combining this estimate with Corollary~\ref{prop22} and assuming that
$\mathbb{E}_{u\sim\rho_u}\|u\|_{\mathcal{U}}^2<\infty$, we conclude that, with probability at least
$1-\delta$,
\[
\Delta \;\lesssim\;
{d_1}^{-\frac{1}{4}}\;\sqrt{\log\frac{2}{\delta}}.
\]

\paragraph{Kernel interpolation–based output encoding.} 
On the output side, we require that
$\mathcal D_2^{d_2}\mathcal E_2^{d_2}=P_2^{d_2}$ is an orthogonal projection,
without imposing that the decoder be the adjoint of the encoder.
Let $\mathcal V=\mathcal H_Q$ be a RKHS over a compact
domain $D\subset\mathbb R^m$ with Lipschitz boundary, associated with a positive definite kernel $Q$.

Assume that $\mathcal H_Q$ is continuously embedded into a Sobolev space \(H^{s'}(D)\), where \(s'>s>\frac m2\).
Let
\[
\mathcal{E}_2^{d_2}
:=\frac{1}{\sqrt{d_2}}\bigl(\delta_{y_1},\ldots,\delta_{y_{d_2}}\bigr)^{\top}
\]
be the (scaled) point-evaluation operator at sampling locations
$Y=\{y_1,\ldots,y_{d_2}\}\subset D$. Assume that the Gram matrix
\(\bigl(Q(y_i,y_j)\bigr)_{i,j=1}^{d_2}\) is invertible.
We define the decoder $\mathcal D_2^{d_2}$ as the (scaled) minimum-norm kernel
interpolant \cite{scholkopf2001generalized}, i.e.,
for any $c\in\mathbb R^{d_2}$,
\[
\mathcal D_2^{d_2}(c)
:=\arg\min_{v\in\mathcal V}\|v\|_{\mathcal H_Q},
\quad\text{s.t.}\quad v(y_i)=\sqrt{d_2}\,c_i.
\]
With this choice, $\mathcal D_2^{d_2}\mathcal E_2^{d_2}=P_2^{d_2}$ coincides with the
$\mathcal H_Q$-orthogonal projection onto
$\mathrm{span}\{Q(\cdot,y_i)\}_{i=1}^{d_2}$.

Denote the fill distance of the set $Y$ by
\[
h_Y:=\max_{y\in D}\min_{i=1,\ldots,d_2}\|y-y_i\|_2.
\]
For \(v\in\mathcal V\), set \(e=v-P_2^{d_2}v\).
Since \(e|_Y=0\), the sampling inequality
\cite[Theorem~4.1]{arcangeli2007extension}, followed by
Sobolev interpolation, yields, for sufficiently small \(h_Y\),
\[
\|e\|_{H^s(D)}
\lesssim h_Y^{s'-s}\|e\|_{H^{s'}(D)}
\lesssim h_Y^{s'-s}\|e\|_{\mathcal V}
\le h_Y^{s'-s}\|v\|_{\mathcal V}.
\]
Squaring and taking expectations gives
\[
\be_{v\sim(\mathcal G^\dagger)_\#\rho_u}\|P_2^{d_2}v-v\|_{H^s(D)}^2
\;\lesssim\;
h_Y^{\,2(s'-s)}\,\be_{v\sim(\mathcal G^\dagger)_\#\rho_u}\|v\|_{\mathcal V}^2.
\]
Consequently, since
\(\mathcal G^\dagger\in L^2(\mathcal U,\rho_u;\mathcal V)\),
\[
\mathbb E_{v\sim(\mathcal G^\dagger)_\#\rho_u}
\bigl[\|(P_2^{d_2}-I_{\mathcal V})v\|_{H^s(D)}^2\bigr]
\;\lesssim\;
h_Y^{\,2(s'-s)}.
\]

Finally, we note that in Theorem~\ref{Thm1} the output-side encoding error
can be measured in the weaker Sobolev norm $H^s(D)$.
Indeed, in the proof one may use the decomposition
\[
\mathbb E_{z^t}\!\left[
\|\mathcal G_{t+1}-\mathcal G^\dagger\|_{L^2(\mathcal U,\rho_u;H^s(D))}^2
\right]
\;\lesssim\;
\mathbb E_{z^t}\!\left[
\|\mathcal G_{t+1}-P_2^{d_2}\mathcal G^\dagger\|_{L^2(\mathcal U,\rho_u;\mathcal V)}^2
\right]
+
\mathbb E_{v\sim(\mathcal G^\dagger)_\#\rho_u}\!\left[
\|(P_2^{d_2}-I_{\mathcal V})v\|_{H^s(D)}^2
\right],
\]
while all other steps of the analysis remain unchanged.




\section{Encoder--Decoder Neural Networks in the NTK Regime} \label{section 2.2}

In this section, we analyze overparameterized two-layer neural networks trained by SGD under the encoder--decoder architecture. With a symmetric random initialization, the linearized training dynamics are governed by the empirical NTK, which concentrates around its infinite-width limit as the width increases \cite{jacot2018neural}. We show that these NTK dynamics can be incorporated into the limiting-kernel framework developed in the preceding sections. The proofs of the results in this section are deferred to Appendix \ref{Sec 8}.

We define a limiting NTK on \(\mathcal U\) as the resolution limit of the
finite-dimensional NTKs induced by the encoders. This yields a comparison
between finite-width neural SGD and kernel SGD driven by the limiting NTK. The
resulting error bounds retain the same structure as the kernel-SGD bounds in
Subsection~\ref{subsection: upper bound}, with an additional finite-width term
of order \(M^{-1}\).


The analysis first treats the scalar-output case \(\mathcal V=\mathbb R\). The
extension to neural operators under an encoder--decoder architecture is discussed
in Subsection~\ref{section: Neural Operators}. Recall that
\(\mathcal E_1^{d_1}:\mathcal U\to\mathbb R^{d_1}\) denotes the input encoder. The
learned operator is represented as
\begin{equation} \label{temp28}
    \mathcal G^{\mathrm{NN}}
    =
    f_\Theta\circ\mathcal E_1^{d_1}:\mathcal U\to\mathbb R,
\end{equation}
where \(f_\Theta\) is a two-layer neural network with \(M\) hidden neurons,
\begin{equation} \label{temp29}
    f_\Theta(x)
    :=
    \frac{1}{\sqrt M}\sum_{r=1}^M
    a_r\,\sigma\bigl(b_r^\top x+\gamma c_r\bigr),
    \qquad x\in\mathbb R^{d_1}.
\end{equation}
Here \(\sigma:\mathbb R\to\mathbb R\) is the activation function and \(\gamma>0\)
is a fixed scaling parameter. The parameter vector is
\[
\Theta
:=
\bigl(a_1,\ldots,a_M,b_1,\ldots,b_M,c_1,\ldots,c_M\bigr)
\in\mathbb R^{M(d_1+2)}.
\]

\paragraph{Initialization.}
The network is initialized symmetrically as follows. Assume that \(M\) is even. Sample
\[
b_1,\ldots,b_{M/2}\stackrel{\mathrm{i.i.d.}}{\sim}\mathcal{N}(0,I_{d_1}).
\]
Set $b_{r+M/2}=b_r$ for $r=1,\ldots,M/2$,
and initialize $c_r=0$ for all $r=1,\ldots,M$.
Independently, sample
\[
a_1,\ldots,a_{M/2}\stackrel{\mathrm{i.i.d.}}{\sim}\mathrm{Rad},
\qquad
\text{where } \mathbb{P}(a_r=\pm1)=\frac12,
\]
and define \(a_{r+M/2}=-a_r\) for \(r=1,\ldots,M/2\).

Under this initialization, the network output vanishes identically:
\[
f_{\Theta_1}(x)=0,\qquad x\in\mathbb R^{d_1}.
\]
Hence the induced operator iterate starts from the zero element, matching the
initialization of the corresponding kernel-SGD recursion.

We consider the squared-loss expected risk associated with the induced operator,
\[
\mathcal R(\Theta)
:=
\mathbb E_{(u,v)\sim\rho}
\left[
\bigl(f_\Theta(\mathcal E_1^{d_1}u)-v\bigr)^2
\right].
\]
To control the training dynamics in the overparameterized regime, we use the
regularized objective
\[
\min_{\Theta}\;
\frac12\mathcal R(\Theta)
+
\frac{\lambda}{2}\|\Theta-\Theta_1\|_2^2,
\]
where \(\Theta_1\) denotes the random initialization. At iteration \(t\), an independent sample \((u_t,v_t)\sim\rho\) is drawn, and we set
\(x_t=\mathcal E_1^{d_1}u_t\). The SGD update is
\begin{equation}\label{NN update}
    \Theta_{t+1}
    =
    \Theta_t
    -
    \eta_t\bigl(f_{\Theta_t}(x_t)-v_t\bigr)
    \nabla_\Theta f_\Theta(x_t)\big|_{\Theta=\Theta_t}
    -
    \eta_t\lambda(\Theta_t-\Theta_1).
\end{equation}
The associated operator iterate is $\mathcal G_t^{\mathrm{NN}}
:=
f_{\Theta_t}\circ\mathcal E_1^{d_1}.$

In the NTK setting, \(\Delta\) measures the discrepancy between the kernel
associated with the finite-width, finite-resolution network and the
resolution-independent limiting NTK on \(\mathcal U\).

\subsection{Limiting NTK} \label{section: Limiting NTK}

We now construct the limiting NTK on \(\mathcal U\) and record its basic properties.

Recall that the infinite-width NTK associated with the two-layer network is given by
\begin{equation} \label{NTK}
\begin{aligned}
    k^{\mathrm{NTK}}\left(x,x^{\prime}\right)
    &:=
    \mathbb E_{\Theta_1}\left[
    \left\langle
    \nabla_\Theta f_{\Theta}(x)\big|_{\Theta=\Theta_1},
    \nabla_\Theta f_{\Theta}(x')\big|_{\Theta=\Theta_1}
    \right\rangle_2
    \right]
    \\
    &=
    \mathbb{E}_{b\sim\mathcal{N}(0, I_{d_1})}
    \left[
    \sigma\left(b^\top x\right)
    \sigma\left(b^\top x^{\prime}\right)
    \right]
    +
    \left(x^\top x^{\prime}+\gamma^2\right)
    \mathbb{E}_{b\sim\mathcal{N}(0, I_{d_1})}
    \left[
    \sigma^{\prime}\left(b^\top x\right)
    \sigma^{\prime}\left(b^\top x^{\prime}\right)
    \right].
\end{aligned}
\end{equation}
Since the biases are initialized at zero, the preactivations at
initialization are $b^\top x$. The term $\gamma^2$ arises from
differentiation with respect to the trainable bias parameters. 
For a network of finite width \(M\), the corresponding empirical NTK is
given by
\[
k^M\left(x,x^{\prime}\right)
:=
\frac{1}{M}\sum_{r=1}^M
\sigma\left(b_r^\top x\right)
\sigma\left(b_r^\top x^{\prime}\right)
+
\frac{x^\top x^{\prime}+\gamma^2}{M}
\sum_{r=1}^M
\sigma^{\prime}\left(b_r^\top x\right)
\sigma^{\prime}\left(b_r^\top x^{\prime}\right).
\]

We impose the following assumption on the activation function.

\begin{assumption}\label{assumption: NTK}
There exists a constant \(C_\sigma>0\) such that, for all \(x\in\mathbb R\),
\[
|\sigma(x)|\le C_\sigma,\qquad
|\sigma'(x)|\le C_\sigma,\qquad
|\sigma''(x)|\le C_\sigma .
\]
\end{assumption}

ReLU and leaky ReLU activations do not satisfy Assumption~\ref{assumption: NTK}. Although their limiting NTKs are well defined, extending the finite-width analysis to such activations may require different arguments.

The following lemma shows that the NTK depends on the inputs only through their
norms and inner product, and hence admits a dimension-independent representation.

\begin{lemma} \label{lemma NTK1}
Let \(k^{\mathrm{NTK}}\) be the NTK defined in \eqref{NTK}. For any \(d\ge2\)
and \(x,x'\in\mathbb R^d\), the following dimension-independent formulas hold.

If \(x\neq0\) and \(x'\neq0\), set
\[
\vartheta
:=
\arccos\left\langle
\frac{x}{\|x\|_2},
\frac{x'}{\|x'\|_2}
\right\rangle_2
\in[0,\pi].
\]
Then
\[
\begin{aligned}
k^{\mathrm{NTK}}(x,x')
&=
\mathbb E_{(b_1,b_2)\sim\mathcal N(0,I_2)}
\left[
\sigma\bigl(b_1\|x\|_2\bigr)
\sigma\bigl((b_1\cos\vartheta+b_2\sin\vartheta)\|x'\|_2\bigr)
\right]
\\
&\quad+
\bigl(x^\top x'+\gamma^2\bigr)
\mathbb E_{(b_1,b_2)\sim\mathcal N(0,I_2)}
\left[
\sigma'\bigl(b_1\|x\|_2\bigr)
\sigma'\bigl((b_1\cos\vartheta+b_2\sin\vartheta)\|x'\|_2\bigr)
\right].
\end{aligned}
\]
If \(x=0\), then
\[
k^{\mathrm{NTK}}(0,x')
=
\sigma(0)\,
\mathbb E_{b\sim\mathcal N(0,1)}
\bigl[\sigma(b\|x'\|_2)\bigr]
+
\gamma^2\sigma'(0)\,
\mathbb E_{b\sim\mathcal N(0,1)}
\bigl[\sigma'(b\|x'\|_2)\bigr].
\]
Similarly, if \(x'=0\), then
\[
k^{\mathrm{NTK}}(x,0)
=
\sigma(0)\,
\mathbb E_{b\sim\mathcal N(0,1)}
\bigl[\sigma(b\|x\|_2)\bigr]
+
\gamma^2\sigma'(0)\,
\mathbb E_{b\sim\mathcal N(0,1)}
\bigl[\sigma'(b\|x\|_2)\bigr].
\]
\end{lemma}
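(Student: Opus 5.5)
The argument rests on the rotation invariance of the standard Gaussian $b\sim\mathcal N(0,I_d)$. Both expectations in \eqref{NTK} have the form $\mathbb E_b[g(b^\top x)\,h(b^\top x')]$ with $g,h\in\{\sigma,\sigma'\|\}$, and the prefactor $x^\top x'+\gamma^2$ is already dimension-free. So it suffices to show that the joint law of the pair $(b^\top x,b^\top x')$ matches the law of the corresponding two-dimensional expressions. Assumption~\ref{assumption: NTK} makes $\sigma$ and $\sigma'$ bounded, so every expectation is finite and no integrability issue arises.

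\emph{Case $x\neq0$, $x'\neq0$.} Put $e_1:=x/\|x\|_2$. Since $d\ge2$, we can pick a unit vector $e_2\perp e_1$ such that
\[
x'/\|x'\|_2=\cos\vartheta\,e_1+\sin\vartheta\,e_2 .
\]
If $x'$ is not parallel to $x$, take $e_2$ to be the normalized component of $x'$ orthogonal to $e_1$; then $\sin\vartheta\ge0$ matches $\vartheta\in[0,\pi]$. If $x'$ is parallel to $x$, take any unit $e_2\perp e_1$; here $\sin\vartheta=0$ and the choice is irrelevant.

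Now set $b_1:=b^\top e_1$ and $b_2:=b^\top e_2$. Because $e_1,e_2$ are orthonormal, $(b_1,b_2)\sim\mathcal N(0,I_2)$. This follows from rotation invariance: complete $e_1,e_2$ to an orthonormal basis $Q$, and note that $Q^\top b\sim\mathcal N(0,I_d)$. We then have
\[
b^\top x=\|x\|_2\,b_1,\qquad b^\top x'=\|x'\|_2\,(b_1\cos\vartheta+b_2\sin\vartheta).
\]
Substituting these into both expectations gives the stated formula.

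\emph{Case $x=0$.} Here $b^\top x=0$, so the factors become the constants $\sigma(0)$ and $\sigma'(0)$, and the prefactor reduces to $x^\top x'+\gamma^2=\gamma^2$. If $x'\neq0$, then $b^\top x'\sim\mathcal N(0,\|x'\|_2^2)$, which has the same law as $\|x'\|_2\,b$ with $b\sim\mathcal N(0,1)$. If $x'=0$, the formula still holds trivially, since both sides equal $\sigma(0)^2+\gamma^2\sigma'(0)^2$. The case $x'=0$ follows by the same argument, or by the symmetry of $k^{\mathrm{NTK}}$.

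The only delicate point is constructing the orthonormal pair $e_1,e_2$, which is where the hypothesis $d\ge2$ enters, together with the degenerate collinear case. Everything else is direct substitution.
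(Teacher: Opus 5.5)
Your proof is correct and follows the same route as the paper: both rest on the orthogonal invariance of $b\sim\mathcal N(0,I_d)$, which identifies the law of $(b^\top x,b^\top x')$ with that of $\bigl(b_1\|x\|_2,(b_1\cos\vartheta+b_2\sin\vartheta)\|x'\|_2\bigr)$, and both handle the zero-input cases directly. Your explicit construction of the orthonormal pair $e_1,e_2$, including the collinear case where $d\ge2$ is needed, just fills in details the paper leaves implicit.
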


The proof of Lemma~\ref{lemma NTK1} follows from the orthogonal invariance of the
standard Gaussian distribution. After an orthogonal change of coordinates in
\(\mathbb R^d\), the pair of Gaussian projections \((b^\top x,b^\top x')\) has
the same distribution as
\[
\bigl(b_1\|x\|_2,\ (b_1\cos\vartheta+b_2\sin\vartheta)\|x'\|_2\bigr),
\qquad
(b_1,b_2)\sim\mathcal N(0,I_2),
\]
where \(\vartheta\) is the angle between \(x\) and \(x'\). The cases where one of
the two inputs is zero follow directly.


Lemma~\ref{lemma NTK1} shows that \(k^{\mathrm{NTK}}\) depends on \(x\) and
\(x'\) only through \(\|x\|_2\), \(\|x'\|_2\), and
\(\langle x,x'\rangle_2\). This dimension-independent structure motivates the
definition of a limiting NTK on \(\mathcal U\), which will be used in the
subsequent analysis.

\begin{remark}
One alternative initialization scheme samples the weight vectors \(b_r\) uniformly
from the unit sphere \(\mathbb S^{d-1}\); see, e.g.,
\cite{nitandaoptimal,nguyen2024optimal,cao2024stochastic}. Although this
distribution is also orthogonally invariant, its high-dimensional behavior
differs from that of the Gaussian initialization. By Stam's formula
\cite{stam1982limit,khokhlov2006uniform}, the joint density of the first two
coordinates of a random vector uniformly distributed on \(\mathbb S^{d-1}\) is
\[
p(x_1,x_2)
=
\frac{d-2}{2\pi}
\bigl(1-x_1^2-x_2^2\bigr)^{\frac{d-4}{2}}
\mathbf 1_{\{x_1^2+x_2^2\le 1\}} .
\]
As \(d\to\infty\), the corresponding law converges weakly to a Dirac mass at the
origin. Consequently, without rescaling, the associated NTK degenerates in the
high-dimensional limit to an inhomogeneous linear kernel, whose RKHS contains
only affine functions. This is too restrictive for nonlinear operator learning.

A natural alternative is to use the rescaled spherical initialization
\(b_r\sim \mathrm{Unif}(\sqrt d\,\mathbb S^{d-1})\), as considered 
in \cite{klukowski2022rate}. Under this scaling, the joint law of any fixed
number of coordinates converges weakly to the standard Gaussian law as
\(d\to\infty\), making it possible to compare the resulting NTK with the
Gaussian-initialized NTK in Lemma~\ref{lemma NTK1}. In this paper, we focus on Gaussian initialization, for which the limiting NTK
admits a direct characterization and fits naturally into the limiting-kernel
analysis.
\end{remark}

The next proposition records uniform boundedness and dimension-independent
Lipschitz continuity of the NTK.
\begin{proposition} \label{k Lip}
    Under Assumption \ref{assumption: NTK}, for any $d>0$ and $R>0$, and for any $x,x'\in\mathcal{B}_R\l(\mathbb{R}^{d}\r)$, 
    \[
    \l\lvert k^{\mathrm{NTK}}\left(x,x^{\prime}\right)\r\rvert\leq (1+R^2+\gamma^2)C_\sigma^2,
    \]
    and for any $x_1,x_1',x_2,x_2'\in\mathcal{B}_R(\br^{d})$,
    \begin{equation*}
        \l\lvert k^{\mathrm{NTK}}\left(x_1,x_1^{\prime}\right)-k^{\mathrm{NTK}}\left(x_2,x_2^{\prime}\right)\r\rvert\leq\l(1+R+R^2+\gamma^2\r)C_{\sigma}^2\l(\|x_1-x_2\|_2+\|x_1'-x_2'\|_2\r).
    \end{equation*}
\end{proposition}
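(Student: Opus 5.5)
We work directly from the Gaussian-expectation form \eqref{NTK} of $k^{\mathrm{NTK}}$, and we never use Lemma~\ref{lemma NTK1}. Every bound below is obtained pointwise in $b$ before the expectation is taken, so no constant can depend on $d$.

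\emph{Uniform boundedness.} Assumption~\ref{assumption: NTK} gives $|\sigma(b^\top x)\sigma(b^\top x')|\le C_\sigma^2$ and $|\sigma'(b^\top x)\sigma'(b^\top x')|\le C_\sigma^2$ for every $b$. For $x,x'\in\mathcal B_R(\mathbb R^d)$, Cauchy--Schwarz gives $|x^\top x'+\gamma^2|\le R^2+\gamma^2$. Taking expectations and adding the two terms yields $|k^{\mathrm{NTK}}(x,x')|\le(1+R^2+\gamma^2)C_\sigma^2$.

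\emph{Lipschitz estimate.} Write $k^{\mathrm{NTK}}=k_1+k_2$, where $k_1(x,x')=\mathbb E[\sigma(b^\top x)\sigma(b^\top x')]$ and $k_2(x,x')=(x^\top x'+\gamma^2)\,g(x,x')$ with $g(x,x')=\mathbb E[\sigma'(b^\top x)\sigma'(b^\top x')]$. We telescope each term through the intermediate point $(x_2,x_1')$.

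For $k_1$, the mean value theorem gives $|\sigma(b^\top x_1)-\sigma(b^\top x_2)|\le C_\sigma|b^\top(x_1-x_2)|$. Hence
\[
|k_1(x_1,x_1')-k_1(x_2,x_1')|\le C_\sigma^2\,\mathbb E|b^\top(x_1-x_2)|.
\]
The key observation is that $b^\top w\sim\mathcal N(0,\|w\|_2^2)$ for every $d$, so
\[
\mathbb E|b^\top w|=\sqrt{2/\pi}\,\|w\|_2\le\|w\|_2 .
\]
This is the only place where the dimension could enter, and Gaussian rotation invariance removes it. The same bound controls the change in the second argument, so
\[
|k_1(x_1,x_1')-k_1(x_2,x_2')|\le C_\sigma^2\bigl(\|x_1-x_2\|_2+\|x_1'-x_2'\|_2\bigr).
\]
Using $|\sigma''|\le C_\sigma$ in the same way, $g$ satisfies the identical estimate, and also $|g|\le C_\sigma^2$.

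For $k_2$, we split
\[
k_2(x_1,x_1')-k_2(x_2,x_2')=(x_1^\top x_1'-x_2^\top x_2')\,g(x_1,x_1')+(x_2^\top x_2'+\gamma^2)\bigl(g(x_1,x_1')-g(x_2,x_2')\bigr).
\]
In the first piece, $|x_1^\top x_1'-x_2^\top x_2'|\le R(\|x_1-x_2\|_2+\|x_1'-x_2'\|_2)$ on the ball, so it contributes at most $R\,C_\sigma^2$ times the distance sum. In the second piece, $|x_2^\top x_2'+\gamma^2|\le R^2+\gamma^2$, so it contributes at most $(R^2+\gamma^2)C_\sigma^2$ times the distance sum. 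Adding these to the $k_1$ contribution gives the stated constant $(1+R+R^2+\gamma^2)C_\sigma^2$.

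The only genuine obstacle is keeping the constants independent of $d$. It is resolved by the one-dimensional Gaussian moment identity $\mathbb E|b^\top w|=\sqrt{2/\pi}\,\|w\|_2$. A naive bound such as $\mathbb E\|b\|_2\,\|w\|_2\sim\sqrt d\,\|w\|_2$ must be avoided.
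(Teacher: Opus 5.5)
Your proposal is correct and follows essentially the same route as the paper: telescope through the intermediate point $(x_2,x_1')$, use the boundedness of $\sigma,\sigma'$ and the Lipschitz bounds coming from $|\sigma'|,|\sigma''|\le C_\sigma$, and control $\mathbb{E}|b^\top(x_1-x_2)|$ without dimension dependence through the one-dimensional Gaussian law of $b^\top w$. The paper uses $\mathbb{E}|b^\top w|\le(\mathbb{E}|b^\top w|^2)^{1/2}=\|w\|_2$ rather than your exact value $\sqrt{2/\pi}\,\|w\|_2$, and your product-rule split of the $(x^\top x'+\gamma^2)g$ term is the same three-term expansion the paper writes out, so the constants assemble to $(1+R+R^2+\gamma^2)C_\sigma^2$ exactly as you claim.
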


Motivated by the dimension-independent structure in Lemma~\ref{lemma NTK1}, we introduce the following representation of the NTK.
\begin{definition}
    Define $\phi:\mathcal{D}\to\mathbb{R}$ by
    \[
    \phi(\|x\|,\|x'\|,\langle x,x'\rangle):=k^{\mathrm{NTK}}\left(x,x^{\prime}\right),
    \]
    where $\mathcal{D}:=\l\{\l(\xi_1,\xi_2,\eta\r):\xi_1,\xi_2\geq0,\lvert\eta\rvert\leq\xi_1\xi_2\r\}$.
\end{definition}

The following proposition gives a \(1/2\)-H\"older estimate for \(\phi\) on compact subsets of \(\mathcal D\) away from the origin.
\begin{proposition} \label{prop11}
    The function \(\phi\) satisfies a \(1/2\)-H\"older estimate on regions where the first two coordinates are bounded away from zero. More precisely, for any $0<r\leq R$, let 
    \[
    \mathcal{D}_r^R:=\l\{\l(\xi_1,\xi_2,\eta\r):r\leq\xi_1,\xi_2\leq R,\lvert\eta\rvert\leq\xi_1\xi_2\r\}.
    \]
    Then, there exists a constant $C_{r,R}>0$ such that for any $\l(\xi_1,\xi_2,\eta\r),\;\l(\xi'_1,\xi'_2,\eta'\r)\in\mathcal{D}_r^R$, the following inequality holds:
    \[
    \l\lvert\phi\l(\xi_1,\xi_2,\eta\r)-\phi\l(\xi'_1,\xi'_2,\eta'\r)\r\rvert\leq C_{r,R}
    \l(\sqrt{\lvert\xi_1-\xi_1'\rvert}+\sqrt{\lvert\xi_2-\xi_2'\rvert}+\sqrt{\lvert\eta-\eta'\rvert}\r).
    \]
\end{proposition}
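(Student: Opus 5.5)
We prove Proposition~\ref{prop11} by rewriting $\phi$ through the two-dimensional Gaussian representation of Lemma~\ref{lemma NTK1}, with the angle as the only nonsmooth variable. For $(\xi_1,\xi_2,\eta)\in\mathcal D_r^R$ we have $\xi_1,\xi_2\ge r>0$. Set $c:=\eta/(\xi_1\xi_2)\in[-1,1]$ and $s:=\sqrt{1-c^2}$, so that $\cos\vartheta=c$ and $\sin\vartheta=s$. Lemma~\ref{lemma NTK1} then gives
\[
\phi(\xi_1,\xi_2,\eta)=\mathbb E\bigl[\sigma(b_1\xi_1)\sigma((b_1c+b_2s)\xi_2)\bigr]+(\eta+\gamma^2)\,\mathbb E\bigl[\sigma'(b_1\xi_1)\sigma'((b_1c+b_2s)\xi_2)\bigr],
\]
with $(b_1,b_2)\sim\mathcal N(0,I_2)$. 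Define $F(\xi_1,\xi_2,c,s)$ by this formula, viewed as a function of the four variables. By Assumption~\ref{assumption: NTK}, $\sigma$ and $\sigma'$ are bounded and Lipschitz with constant $C_\sigma$. Hence, using $\mathbb E|b_i|\le1$ and $|\eta|\le R^2$, the function $F$ is Lipschitz in each of $\xi_1,\xi_2,c,s$ on the relevant bounded set. The constant depends only on $C_\sigma,R,\gamma$. For example, a change in $\xi_1$ costs at most $C_\sigma^2\mathbb E|b_1|\,|\xi_1-\xi_1'|$, and a change in $c$ costs at most $C_\sigma^2 R\,\mathbb E|b_1|\,|c-c'|$. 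The prefactor $\eta+\gamma^2$ contributes $C_\sigma^2|\eta-\eta'|$ directly.

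The remaining task is to control $|c-c'|$ and $|s-s'|$ in terms of the original coordinates.

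\textbf{Bounding $|c-c'|$.} Since $\xi_i,\xi_i'\ge r$, the map $(\xi_1,\xi_2,\eta)\mapsto\eta/(\xi_1\xi_2)$ is Lipschitz on $\mathcal D_r^R$. Indeed,
\[
|c-c'|\le \frac{|\eta-\eta'|}{r^2}+\frac{R^2\,|\xi_1\xi_2-\xi_1'\xi_2'|}{r^4}\lesssim |\xi_1-\xi_1'|+|\xi_2-\xi_2'|+|\eta-\eta'|.
\]

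\textbf{Bounding $|s-s'|$.} This is the main obstacle. The map $c\mapsto\sqrt{1-c^2}$ is not Lipschitz near $c=\pm1$, which corresponds to parallel inputs. I would use the elementary inequality
\[
|\sqrt a-\sqrt b|\le\sqrt{|a-b|},\qquad a,b\ge0,
\]
together with $|c^2-c'^2|\le 2|c-c'|$. These give $|s-s'|\le\sqrt{2|c-c'|}$, which is exactly the $1/2$-H\"older loss in the statement.

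\textbf{Assembling the estimate.} Summing the four Lipschitz contributions gives
\[
|\phi-\phi'|\lesssim |\xi_1-\xi_1'|+|\xi_2-\xi_2'|+|\eta-\eta'|+\sqrt{|c-c'|}.
\]
Since all quantities are bounded on $\mathcal D_r^R$, each linear term is at most a constant times its square root. For instance, $|\xi_1-\xi_1'|\le\sqrt{2R}\sqrt{|\xi_1-\xi_1'|}$, and $|\eta-\eta'|\le\sqrt{2R^2}\sqrt{|\eta-\eta'|}$. Inserting the Lipschitz bound for $|c-c'|$ and applying $\sqrt{a+b+e}\le\sqrt a+\sqrt b+\sqrt e$ yields the claimed inequality, with $C_{r,R}$ depending on $r$, $R$, $\gamma$ and $C_\sigma$.
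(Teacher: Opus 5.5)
Your proposal is correct and is essentially the paper's argument. Both use the two-dimensional Gaussian representation from Lemma~\ref{lemma NTK1}, the Lipschitz bounds on $\sigma$ and $\sigma'$, and the inequality $|\sqrt a-\sqrt b|\le\sqrt{|a-b|}$ for the sine component, which is the only source of the $1/2$-H\"older loss. The differences are cosmetic. You parametrize by $c=\eta/(\xi_1\xi_2)$ and $s=\sqrt{1-c^2}$, whereas the paper works directly with $\eta/\xi_1$ and $\sqrt{\xi_2^2-\eta^2/\xi_1^2}$. You also handle the prefactor $\eta+\gamma^2$ explicitly, which the paper leaves to ``the same argument.''
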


\begin{remark}
The function  $\phi$ is continuous on $\mathcal{D}$, and is differentiable on the interior 
\[
\mathcal{D}^\circ = \left\{ (\xi_1, \xi_2, \eta) : \xi_1, \xi_2 > 0, \lvert \eta \rvert < \xi_1 \xi_2 \right\}.
\]

\end{remark}

We lift the NTK and its empirical counterpart to the input space \(\mathcal U\) by defining
\[
k^{\mathrm{NTK}}_{d_1}(u,u')
:=
k^{\mathrm{NTK}}\bigl(\mathcal E_1^{d_1}u,\mathcal E_1^{d_1}u'\bigr),
\qquad
k^M_{d_1}(u,u')
:=
k^M\bigl(\mathcal E_1^{d_1}u,\mathcal E_1^{d_1}u'\bigr).
\]
As in Section~\ref{Section examples}, assume that there exists a bounded linear
operator \(T:\mathcal U\to\mathcal U\) such that
\[
\mathbb E_{u\sim\rho_u}
\left[
\left\|
\bigl((\mathcal E_1^{d_1})^*\mathcal E_1^{d_1}\bigr)^{1/2}u
-
Tu
\right\|_{\mathcal U}^2
\right]
\longrightarrow 0,
\qquad d_1\to\infty .
\]

This leads to the following definition of the limiting NTK on \(\mathcal U\).

\begin{definition}\label{limiting NTK}
Define \(k'_\infty:\mathcal U\times\mathcal U\to\mathbb R\) by
\[
k'_\infty(u,u')
:=
\phi\bigl(
\|u\|_{\mathcal U},
\|u'\|_{\mathcal U},
\langle u,u'\rangle_{\mathcal U}
\bigr).
\]
The limiting NTK \(k^{\mathrm{NTK}}_\infty:\mathcal U\times\mathcal U\to\mathbb R\) is defined by
\[
k^{\mathrm{NTK}}_\infty(u,u')
:=
k'_\infty(Tu,Tu').
\]
\end{definition}

The kernel \(k'_\infty\) is symmetric by construction. To verify positive
definiteness, fix \(n\ge1\), \(a_1,\ldots,a_n\in\mathbb R\), and
\(u_1,\ldots,u_n\in\mathcal U\). Let
\(S:=\operatorname{span}\{u_1,\ldots,u_n\}\), and let
\(\{e_\ell\}_{\ell=1}^m\) be an orthonormal basis of \(S\). Define
\[
\widetilde u_i
:=
\bigl(\langle u_i,e_\ell\rangle_{\mathcal U}\bigr)_{\ell=1}^m
\in\mathbb R^m,
\qquad i=1,\ldots,n .
\]
Then $\|\widetilde u_i\|_2=\|u_i\|_{\mathcal U}$ and $\langle \widetilde u_i,\widetilde u_j\rangle_2
=
\langle u_i,u_j\rangle_{\mathcal U}.$ By the definition of \(\phi\) and Lemma~\ref{lemma NTK1}, 
\[
k'_\infty(u_i,u_j)
=
k^{\mathrm{NTK}}(\widetilde u_i,\widetilde u_j).
\]
Since \(k^{\mathrm{NTK}}\) is positive definite on finite-dimensional Euclidean
spaces, it follows that
\[
\sum_{i,j=1}^n a_i a_j k'_\infty(u_i,u_j)
=
\sum_{i,j=1}^n a_i a_j
k^{\mathrm{NTK}}(\widetilde u_i,\widetilde u_j)
\ge0.
\]
Thus \(k'_\infty\) is positive definite on \(\mathcal U\). Consequently,
\(k^{\mathrm{NTK}}_\infty(u,u')\) is also positive definite.

The dimension-independent Lipschitz estimate for \(k^{\mathrm{NTK}}\) also extends to \(k'_\infty\).

\begin{proposition}\label{NTK: lipschitz}
Under Assumption~\ref{assumption: NTK}, the kernel \(k'_\infty\) is locally
Lipschitz continuous on \(\mathcal U\times\mathcal U\). More precisely, for any
\(R>0\) and any \(u_1,u_1',u_2,u_2'\in\mathcal B_R(\mathcal U)\), one has
\[
\bigl| k'_\infty(u_1,u_1')-k'_\infty(u_2,u_2')\bigr|
\le
(1+R+R^2+\gamma^2)C_\sigma^2
\bigl(
\|u_1-u_2\|_{\mathcal U}
+
\|u_1'-u_2'\|_{\mathcal U}
\bigr).
\]
\end{proposition}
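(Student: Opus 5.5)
The plan is to reduce the statement to the finite-dimensional Lipschitz estimate of Proposition~\ref{k Lip} by projecting the four points into a common finite-dimensional subspace, exactly as in the positive-definiteness argument for \(k'_\infty\) given just above. Fix \(R>0\) and \(u_1,u_1',u_2,u_2'\in\mathcal B_R(\mathcal U)\). Let \(S:=\operatorname{span}\{u_1,u_1',u_2,u_2'\}\), which has dimension \(m\le4\). Choose an orthonormal basis \(\{e_\ell\}_{\ell=1}^{m}\) of \(S\). Define the coordinate map \(J:S\to\mathbb R^{m}\) by \(Ju:=(\langle u,e_\ell\rangle_{\mathcal U})_{\ell=1}^m\). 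Because Lemma~\ref{lemma NTK1} is stated for \(d\ge2\), I will pad with zero coordinates and embed into \(\mathbb R^{d}\) with \(d:=\max\{m,2\}\); this changes neither norms nor inner products. The map \(J\) is a linear isometry, so
\[
\|Ju\|_2=\|u\|_{\mathcal U},\qquad \langle Ju,Ju'\rangle_2=\langle u,u'\rangle_{\mathcal U},\qquad \|Ju-Jw\|_2=\|u-w\|_{\mathcal U}
\]
for all \(u,u',w\in S\).

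By the definition of \(\phi\) and Lemma~\ref{lemma NTK1}, \(k^{\mathrm{NTK}}\) on \(\mathbb R^d\) depends only on the triple of norms and inner product. Hence \(k'_\infty(u_i,u_i')=k^{\mathrm{NTK}}(Ju_i,Ju_i')\) for \(i=1,2\). Moreover, \(Ju_i,Ju_i'\in\mathcal B_R(\mathbb R^d)\), since \(J\) preserves norms. Applying Proposition~\ref{k Lip} in dimension \(d\) gives
\[
\bigl|k'_\infty(u_1,u_1')-k'_\infty(u_2,u_2')\bigr|
\le (1+R+R^2+\gamma^2)C_\sigma^2\bigl(\|Ju_1-Ju_2\|_2+\|Ju_1'-Ju_2'\|_2\bigr).
\]
The isometry property then converts the right-hand side into the claimed bound with the same constant.

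The only delicate point is that the constant in Proposition~\ref{k Lip} must be independent of \(d\). It is, as stated there, and it is also needed only for \(d\le4\). I must also confirm that \(\phi\) is well defined, i.e., that \(k^{\mathrm{NTK}}(x,x')\) really is a function of \((\|x\|,\|x'\|,\langle x,x'\rangle)\) alone, including the degenerate cases \(x=0\) or \(x'=0\). Lemma~\ref{lemma NTK1} covers both situations, so no further obstacle is expected.
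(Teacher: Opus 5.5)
Your proof is correct, and it takes a more direct route than the paper's.

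\textbf{The paper's route.} It first proves that $k'_\infty$ is continuous on $\mathcal U\times\mathcal U$, via a four-case dominated-convergence argument (both arguments nonzero, one zero, both zero). It then fixes an orthonormal basis $\{u^k\}$ of $\mathcal U$ and applies Proposition~\ref{k Lip} to the truncations $P_d u_i, P_d u_i'$. These remain in $\mathcal B_R$ and satisfy $\|P_d u_1-P_d u_2\|\le\|u_1-u_2\|$. Finally it lets $d\to\infty$, and that limit is where the continuity is used.

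\textbf{Your route.} You work in the span of the four given points, so the identity $k'_\infty(u_i,u_i')=k^{\mathrm{NTK}}(Ju_i,Ju_i')$ holds exactly. No approximation or limit is needed. This is the same reduction the paper uses to prove positive definiteness of $k'_\infty$ and Proposition~\ref{prop21}(1).

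\textbf{Comparison.} The paper's proof already applies your identity, but to the projected points $P_d u_i$ rather than to the original ones; applying it directly to $\operatorname{span}\{u_1,u_1',u_2,u_2'\}$ makes the whole continuity step superfluous. Continuity of $k'_\infty$ then follows as a consequence of the local Lipschitz bound rather than being a prerequisite. Your zero-padding to $d\ge2$ correctly accounts for Lemma~\ref{lemma NTK1} being stated only for $d\ge2$. For this statement, the paper's approach buys nothing beyond an independent continuity proof.
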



The following corollary controls the discrepancy induced by the finite-resolution NTK.

\begin{corollary}\label{coro1}
Suppose that Assumption~\ref{assumption: NTK} holds. Let $A_{d_1}
:=
\bigl((\mathcal E_1^{d_1})^*\mathcal E_1^{d_1}\bigr)^{1/2},$
and assume that there exists a bounded linear operator \(T:\mathcal U\to\mathcal U\) such that
\[
\mathbb E_{u\sim\rho_u}
\left[
\|A_{d_1}u-Tu\|_{\mathcal U}^2
\right]
\longrightarrow 0,
\qquad d_1\to\infty .
\]
Assume also that
\(\kappa_E:=\sup_{d_1\ge1}\|\mathcal E_1^{d_1}\|<\infty\),
and set \(R_*:=R\max\{\kappa_E,\|T\|\}\).
Then, for any $u,u'\in\mathcal{B}_R(\mathcal{U})$, 
\begin{equation}\label{temp21}
\begin{aligned}
\bigl|
k^{\mathrm{NTK}}_{d_1}(u,u')
-
k^{\mathrm{NTK}}_\infty(u,u')
\bigr|
\le\;&
(1+R_*+R_*^2+\gamma^2)C_\sigma^2
\\
&\times
\bigl(
\|A_{d_1}u-Tu\|_{\mathcal U}
+
\|A_{d_1}u'-Tu'\|_{\mathcal U}
\bigr).
\end{aligned}
\end{equation}
Consequently, if $\mathrm{supp}(\rho_u)\subset\mathcal{B}_R(\mathcal{U})$, then
\[
\bigl\|
L_{k^{\mathrm{NTK}}_{d_1}}
-
L_{k^{\mathrm{NTK}}_\infty}
\bigr\|_{\mathrm{HS}}
\le
2(1+R_*+R_*^2+\gamma^2)C_\sigma^2
\left(
\mathbb E_{u\sim\rho_u}
\left[
\|A_{d_1}u-Tu\|_{\mathcal U}^2
\right]
\right)^{1/2}.
\]
\end{corollary}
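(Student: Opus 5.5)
The plan is to reduce the pointwise estimate \eqref{temp21} to the Lipschitz bound of Proposition~\ref{NTK: lipschitz}, by rewriting the finite-resolution kernel \(k^{\mathrm{NTK}}_{d_1}\) as an evaluation of \(k'_\infty\) at suitable points of \(\mathcal U\). The Hilbert--Schmidt bound then follows by squaring and integrating.

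\textbf{Step 1 (representation).} Since \(A_{d_1}=((\mathcal E_1^{d_1})^*\mathcal E_1^{d_1})^{1/2}\), we have
\[
\|\mathcal E_1^{d_1}u\|_2^2=\langle (\mathcal E_1^{d_1})^*\mathcal E_1^{d_1}u,u\rangle_{\mathcal U}=\|A_{d_1}u\|_{\mathcal U}^2,
\]
and similarly \(\langle \mathcal E_1^{d_1}u,\mathcal E_1^{d_1}u'\rangle_2=\langle A_{d_1}u,A_{d_1}u'\rangle_{\mathcal U}\). By Lemma~\ref{lemma NTK1} and the definition of \(\phi\), \(k^{\mathrm{NTK}}\) depends only on norms and the inner product. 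Hence
\[
k^{\mathrm{NTK}}_{d_1}(u,u')=\phi\bigl(\|A_{d_1}u\|,\|A_{d_1}u'\|,\langle A_{d_1}u,A_{d_1}u'\rangle\bigr)=k'_\infty(A_{d_1}u,A_{d_1}u').
\]
If \(d_1=1\), Lemma~\ref{lemma NTK1} is stated only for \(d\ge2\). In that case I would embed \(\mathbb R^{d_1}\) into \(\mathbb R^{2}\) by padding with a zero coordinate. The NTK is unchanged by this padding because Gaussian projections of padded vectors have the same law.

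\textbf{Step 2 (Lipschitz comparison).} For \(u,u'\in\mathcal B_R(\mathcal U)\), we have \(\|A_{d_1}u\|=\|\mathcal E_1^{d_1}u\|_2\le\kappa_E R\le R_*\) and \(\|Tu\|\le\|T\|R\le R_*\). All four points \(A_{d_1}u,A_{d_1}u',Tu,Tu'\) therefore lie in \(\mathcal B_{R_*}(\mathcal U)\). Applying Proposition~\ref{NTK: lipschitz} with radius \(R_*\) to
\[
k'_\infty(A_{d_1}u,A_{d_1}u')-k'_\infty(Tu,Tu')
\]
yields \eqref{temp21} directly, since \(k^{\mathrm{NTK}}_\infty(u,u')=k'_\infty(Tu,Tu')\).

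\textbf{Step 3 (integral operators).} Both kernels are bounded on the support of \(\rho_u\), so \(L_{k_{d_1}^{\mathrm{NTK}}}-L_{k^{\mathrm{NTK}}_\infty}\) is the integral operator on \(L^2(\mathcal U,\rho_u)\) with kernel \(k_{d_1}^{\mathrm{NTK}}-k_\infty^{\mathrm{NTK}}\). Its HS norm equals the \(L^2(\rho_u\otimes\rho_u)\) norm of that kernel difference. Squaring \eqref{temp21} and using \((a+b)^2\le 2a^2+2b^2\), we integrate over \(\rho_u\otimes\rho_u\); each of the two terms contributes \(\mathbb E\|A_{d_1}u-Tu\|^2\). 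This gives the bound \(4C^2\,\mathbb E\|A_{d_1}u-Tu\|^2\), where \(C=(1+R_*+R_*^2+\gamma^2)C_\sigma^2\). Taking square roots produces the constant \(2C\).

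\textbf{Main obstacle.} The main step is Step 1: justifying that the finite-dimensional NTK evaluated at encoded inputs coincides with \(k'_\infty\) evaluated at \(A_{d_1}u\). This hinges on the dimension-independent formula of Lemma~\ref{lemma NTK1} and on the norm and inner-product identities for \(A_{d_1}\). The remaining steps are routine.
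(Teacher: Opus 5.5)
Your proposal is correct and follows the paper's proof essentially step for step. You rewrite $k^{\mathrm{NTK}}_{d_1}(u,u')$ as $k'_\infty(A_{d_1}u,A_{d_1}u')$ via the dimension-free representation, apply Proposition~\ref{NTK: lipschitz} on $\mathcal B_{R_*}(\mathcal U)$, and bound the Hilbert--Schmidt norm by the $L^2(\rho_u\otimes\rho_u)$ norm of the kernel difference; your explicit check of $\|\mathcal E_1^{d_1}u\|_2=\|A_{d_1}u\|_{\mathcal U}$ and your remark on the case $d_1=1$ simply fill in details the paper leaves implicit.
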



The next proposition controls the finite-width discrepancy between the empirical NTK
\(k^M_{d_1}\) and its infinite-width counterpart \(k^{\mathrm{NTK}}_{d_1}\).

\begin{proposition} \label{prop12}
Suppose that Assumption~\ref{assumption: NTK} holds. Assume further that
\(\operatorname{supp}(\rho_u)\subset \mathcal B_R(\mathcal U)\) and
$\sup_{d_1\ge1}\|\mathcal E_1^{d_1}\|<\infty.$ Then, with probability at least \(1-2\delta\) over the random initialization,
\[
\bigl\|
L_{k^{\mathrm{NTK}}_{d_1}}
-
L_{k^M_{d_1}}
\bigr\|_{\mathrm{HS}}
\lesssim
\frac{\log(2/\delta)}{\sqrt M}.
\]
Consequently, if the conditions of Corollary~\ref{coro1} are satisfied, then
\[
\Delta
=
\bigl\|
L_{k^M_{d_1}}
-
L_{k^{\mathrm{NTK}}_\infty}
\bigr\|
\lesssim
\frac{\log(2/\delta)}{\sqrt M}
+
\left(
\mathbb E_{u\sim\rho_u}
\left[
\left\|
\bigl((\mathcal E_1^{d_1})^*\mathcal E_1^{d_1}\bigr)^{1/2}u
-
Tu
\right\|_{\mathcal U}^2
\right]
\right)^{1/2}.
\]
\end{proposition}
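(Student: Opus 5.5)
The plan is to view the difference of integral operators as an integral operator with kernel $k^{\mathrm{NTK}}_{d_1}-k^M_{d_1}$. Its Hilbert--Schmidt norm then equals the $L^2(\rho_u\otimes\rho_u)$ norm of that kernel difference. We will write the empirical NTK as an average over the $M/2$ independent neuron pairs. Because $b_{r+M/2}=b_r$, the kernel $k^M$ only depends on $b_1,\dots,b_{M/2}$:
\[
k^M(x,x')=\frac{2}{M}\sum_{r=1}^{M/2}\xi_r(x,x'),\qquad
\xi_r(x,x'):=\sigma(b_r^\top x)\sigma(b_r^\top x')+(x^\top x'+\gamma^2)\sigma'(b_r^\top x)\sigma'(b_r^\top x').
\]
Each $\xi_r$ has expectation $k^{\mathrm{NTK}}(x,x')$, by \eqref{NTK}. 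Lifting through $\mathcal E_1^{d_1}$, we set $\Xi_r:=L_{\xi_r(\mathcal E_1^{d_1}\cdot,\mathcal E_1^{d_1}\cdot)}$. These are i.i.d. random elements of the Hilbert space $S_2(L^2(\mathcal U,\rho_u))$, and their mean is $L_{k^{\mathrm{NTK}}_{d_1}}$.

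The key step is a uniform bound on the summands. For $u\in\operatorname{supp}(\rho_u)\subset\mathcal B_R(\mathcal U)$ we have $\|\mathcal E_1^{d_1}u\|_2\le \kappa_E R$. Assumption~\ref{assumption: NTK} then gives, almost surely in $b_r$,
\[
|\xi_r(\mathcal E_1^{d_1}u,\mathcal E_1^{d_1}u')|\le C_\sigma^2\bigl(1+\kappa_E^2R^2+\gamma^2\bigr)=:B.
\]
It follows that $\|\Xi_r\|_{\mathrm{HS}}\le B$ and $\|\Xi_r-\mathbb E\Xi_r\|_{\mathrm{HS}}\le 2B$. We will then apply the Hilbert--Schmidt (Hilbert-space valued) Bernstein inequality, Proposition~\ref{concentration}, which is already used in the sampling-encoder example. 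With probability at least $1-\delta$ it gives
\[
\Bigl\|\frac{2}{M}\sum_{r=1}^{M/2}\Xi_r-\mathbb E\Xi_1\Bigr\|_{\mathrm{HS}}
\lesssim \frac{B\log(2/\delta)}{M}+\frac{B\sqrt{\log(2/\delta)}}{\sqrt M}
\lesssim \frac{\log(2/\delta)}{\sqrt M}.
\]
We will state the result with probability $1-2\delta$ to leave room for splitting the randomness, for instance if one prefers to handle the two NTK components separately, each with probability $1-\delta$. A technical point to check is measurability and separability of the $S_2$-valued random variables, and that Proposition~\ref{concentration} applies to this setting. This is the main place where care is needed, though it is routine since $L^2(\mathcal U,\rho_u)$ is separable.

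For the consequence, we use the triangle inequality together with $\|\cdot\|\le\|\cdot\|_{\mathrm{HS}}$:
\[
\Delta\le \|L_{k^M_{d_1}}-L_{k^{\mathrm{NTK}}_{d_1}}\|_{\mathrm{HS}}+\|L_{k^{\mathrm{NTK}}_{d_1}}-L_{k^{\mathrm{NTK}}_\infty}\|_{\mathrm{HS}}.
\]
The first term is bounded by the estimate above. The second term is bounded by Corollary~\ref{coro1}, whose constant $2(1+R_*+R_*^2+\gamma^2)C_\sigma^2$ is absorbed into $\lesssim$. Neither constant depends on $M$, $\delta$ or $d_1$, since $\kappa_E$ is uniform in $d_1$.
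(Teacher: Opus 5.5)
Your proposal is correct and follows essentially the same route as the paper. The paper also identifies the Hilbert--Schmidt norm with the $L^2(\rho_u\otimes\rho_u)$ norm of the kernel difference, uses the paired initialization to reduce to $M/2$ i.i.d. bounded summands, applies Proposition~\ref{concentration}, and obtains the consequence via the triangle inequality and Corollary~\ref{coro1}. The one difference is that the paper splits the kernel into its $\sigma\sigma$ and $\sigma'\sigma'$ parts and applies Bernstein's inequality to each, which is where the $1-2\delta$ comes from. You instead bound the combined summand by $C_\sigma^2(1+\kappa_E^2R^2+\gamma^2)$ and get the estimate with probability $1-\delta$ in one step. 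Also note that Proposition~\ref{concentration} as stated gives $\log(2/\delta)$ rather than $\sqrt{\log(2/\delta)}$ in the variance term, but this does not affect the conclusion.
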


Thus, in the NTK setting, the kernel discrepancy \(\Delta\) is controlled by the
finite width \(M\) and the encoding resolution.

\subsection{From Neural SGD to Kernel SGD} \label{section: Neural SGD to Kernel SGD}
This subsection compares the neural SGD iterates with the kernel-SGD iterates
driven by the empirical NTK. Through this comparison, the error bounds from
Subsection~\ref{subsection: upper bound} are transferred to the finite-width
neural network setting. The limiting NTK enters through the Assumption \ref{source condition}
and the kernel discrepancy $\Delta$.

We first consider the kernel-SGD recursion in the encoded space associated with
the empirical NTK \(k^M\):
\[
\begin{cases}
f_1 := 0,\\[2pt]
f_{t+1}
:= (1-\lambda\eta_t)f_t
-\eta_t\,k^M(\cdot,x_t)\bigl(f_t(x_t)-v_t\bigr),
\end{cases}
\]
where \(x_t=\mathcal E_1^{d_1}u_t\). 

Composing with the input encoder gives the lifted kernel \(k^M_{d_1}\) on
\(\mathcal U\). Defining $\mathcal G_t := f_t\circ \mathcal E_1^{d_1},$ we obtain the lifted recursion
\[
\mathcal G_{t+1}
=
(1-\lambda\eta_t)\mathcal G_t
-
\eta_t\,
k^M_{d_1}(\cdot,u_t)
\bigl(\mathcal G_t(u_t)-v_t\bigr).
\]
This recursion serves as the reference kernel-SGD dynamics for comparison with
the neural SGD iterates.

We impose the following boundedness condition on the data distribution and the
input encoders.

\begin{assumption}\label{assumption: sample 1}
There exist constants \(R>0\) and \(B>0\) such that, for \((u,v)\sim\rho\),
\[
\|u\|_{\mathcal U}\le R,
\qquad
\|v\|_{\mathcal V}\le B,
\quad \text{almost surely}.
\]
In addition, the input encoders satisfy
\[
\sup_{d_1\ge 1}\,\|\mathcal E_1^{d_1}\| < \infty.
\]
\end{assumption}
Enlarging \(R\) if necessary, we also have $\sup_{d_1\ge1}\|(\mathcal E_1^{d_1}u,\gamma)\|_2\le R$ $\rho_u$-a.s. We use this enlarged radius in the following NTK estimates.

The NTK approximation along the SGD trajectory relies on keeping the parameters
uniformly close to their initialization throughout training.
The following proposition gives a uniform stability estimate for polynomially
decaying step sizes. 
It shows that, for fixed $\lambda$, the parameter drift is bounded
uniformly over the training horizon whenever $M\gtrsim\lambda^{-4}$.

\begin{proposition}[Uniform stability under decreasing step sizes]\label{prop13}
Suppose that Assumptions~\ref{assumption: NTK} and~\ref{assumption: sample 1}
hold. Let $0<\lambda\le1$. Consider SGD with polynomially decaying step sizes $\{\eta_t=\eta_1 t^{-\theta}\}_{t\in\mathbb{N}_T}$, where \(0<\theta<1\) and
\[
0<\eta_1
\le
\min\left\{
\frac{1}{(1+R^2)C_\sigma^2+1},\,1-\theta
\right\}.
\]
Assume further that
\[
M \gtrsim \lambda^{-4}.
\]
Then, for all \(1\le t\le T+1\), the SGD iterates satisfy
\[
\|\Theta_t-\Theta_1\|_2
\lesssim
\lambda^{-1}.
\]
\end{proposition}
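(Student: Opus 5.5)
The plan is to prove the bound by induction on \(t\), working with the squared drift \(D_t:=\Theta_t-\Theta_1\) rather than with \(\|D_t\|_2\) directly. A crude triangle-inequality argument does not contract: the residual \(f_{\Theta_t}(x_t)-v_t\) can be as large as \(\|D_t\|_2\). The point is to exploit the fact that, to leading order, the gradient step is a descent step for the squared loss. The induction hypothesis will be \(\|D_s\|_2\le C_0\lambda^{-1}\) for all \(s\le t\), with \(C_0\) fixed at the end depending only on \(R,B,C_\sigma,\gamma,\theta\).

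\textbf{Step 1: structural bounds.} Write \(g_t:=\nabla_\Theta f_\Theta(x_t)|_{\Theta=\Theta_t}\) and \(r_t:=f_{\Theta_t}(x_t)-v_t\). Using Assumption~\ref{assumption: NTK} and \(\|(x_t,\gamma)\|_2\le R\):
\begin{itemize}
\item[(i)] The \(a\)-block of \(g_t\) has norm at most \(C_\sigma\). The \((b,c)\)-block has norm at most \(C_\sigma R\,(M^{-1}\sum_r a_r^2)^{1/2}\le C_\sigma R\,(1+\|D_t\|_2/\sqrt M)\), since \(|a_r^{(1)}|=1\).
\item[(ii)] The Hessian satisfies \(\|\nabla^2_\Theta f_\Theta(x)\|\lesssim (1+\|\Theta-\Theta_1\|_2/\sqrt M)/\sqrt M\). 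Its mixed \(a\)--\((b,c)\) entries carry \(\sigma'\), and its \((b,c)\)--\((b,c)\) entries carry \(a_r\sigma''\); here \(C_\sigma\) bounds \(\sigma''\). The Hessian is block diagonal across neurons, which is what gives the extra \(1/\sqrt M\).
\end{itemize}
Under the induction hypothesis and \(M\gtrsim\lambda^{-4}\), we have \(\|D_t\|_2/\sqrt M\lesssim\lambda\le1\). So \(\|g_t\|_2\lesssim1\) and the Hessian is \(O(M^{-1/2})\) along the whole segment \([\Theta_1,\Theta_t]\).

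\textbf{Step 2: energy recursion.} Expanding the update \eqref{NN update},
\[
\|D_{t+1}\|_2^2=(1-\eta_t\lambda)^2\|D_t\|_2^2-2\eta_t(1-\eta_t\lambda)\,r_t\langle D_t,g_t\rangle+\eta_t^2r_t^2\|g_t\|_2^2 .
\]
Since \(f_{\Theta_1}\equiv0\), a second-order Taylor expansion of \(f_\Theta(x_t)\) around \(\Theta_t\), evaluated at \(\Theta_1\), gives \(\langle D_t,g_t\rangle=f_{\Theta_t}(x_t)+e_t\) with \(|e_t|\lesssim\|D_t\|_2^2/\sqrt M\lesssim C_0^2\). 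Then:
\begin{itemize}
\item The main cross term satisfies \(-2r_tf_{\Theta_t}(x_t)\le -r_t^2+v_t^2\le -r_t^2+B^2\).
\item The step-size condition \(\eta_1\le 1/((1+R^2)C_\sigma^2+1)\), together with \(\|g_t\|_2^2\le(1+R^2)C_\sigma^2(1+o(1))\), lets the term \(\eta_t^2r_t^2\|g_t\|_2^2\) be absorbed into \(-\eta_tr_t^2\), up to a harmless constant.
\item The remainder is handled by Young's inequality: \(2\eta_t|r_t||e_t|\le \tfrac12\eta_tr_t^2+2\eta_te_t^2\).
\end{itemize}
Combining these, I expect to reach
\[
\|D_{t+1}\|_2^2\le(1-\eta_t\lambda)\|D_t\|_2^2+\eta_t\bigl(B^2+c\,\|D_t\|_2^4/M\bigr).
\]
By the induction hypothesis and \(M\gtrsim\lambda^{-4}\), \(\|D_t\|_2^4/M\lesssim C_0^4\lambda^{-4}/M\), which is \(O(1)\) and can be made small relative to \(C_0^2\) by enlarging the implicit constant in \(M\gtrsim\lambda^{-4}\).

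\textbf{Step 3: unrolling.} Iterating from \(D_1=0\) gives
\[
\|D_{t+1}\|_2^2\le C_1\sum_{s=1}^t\eta_s\prod_{j=s+1}^t(1-\eta_j\lambda).
\]
The sum telescopes: since \(\eta_s\lambda\prod_{j>s}(1-\eta_j\lambda)=\prod_{j>s}(1-\eta_j\lambda)-\prod_{j\ge s}(1-\eta_j\lambda)\), it is at most \(\lambda^{-1}\). This holds uniformly in \(t\), needing only \(\eta_j\lambda\le1\); the condition \(\eta_1\le1-\theta\) enters only if one wants sharper product estimates. So \(\|D_{t+1}\|_2^2\lesssim\lambda^{-1}\le\lambda^{-2}\) because \(\lambda\le1\). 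Choosing \(C_0\) so that \(C_1\lambda^{-1}\le C_0^2\lambda^{-2}\), with the width constant chosen after \(C_0\), closes the induction.

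\textbf{Main obstacle.} The hard step is keeping the Taylor remainder \(e_t\) and the drift of \(|a_r|\) under control without circularity. Both depend on \(\|D_t\|_2\), so the constants must be ordered as: first \(C_0\) from \(B,R,C_\sigma\); then the width constant in \(M\gtrsim\lambda^{-4}\), so that \(C_0^4\lambda^{-4}/M\) stays below a fixed fraction of \(C_0^2\). I would first check that the bound \(\|\nabla^2f\|\lesssim M^{-1/2}\) really holds uniformly on the segment \([\Theta_1,\Theta_t]\), since the \(a_r\)-dependent \((b,c)\)-block is exactly where the \(\|D\|_2/\sqrt M\) factor appears.
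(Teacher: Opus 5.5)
Your energy (Lyapunov) route is genuinely different from the paper's and can be made to work. However, Step~1(ii) is false as stated, and the induction depends on exactly that estimate.

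\textbf{The Hessian bound.} The Hessian is block diagonal across neurons, so its operator norm is a maximum over $r$, not an average. The $(b_r,b_r)$ block is $M^{-1/2}a_r\sigma''(b_r^\top x)\,xx^\top$. If the drift sits on a single coordinate $a_r$, then $|a_r|$ is of order $\|D_t\|_2$, and $\|\nabla^2_\Theta f_\Theta(x)\|$ is of order $\|D_t\|_2/\sqrt M$. The RMS bound $(M^{-1}\sum_r a_r^2)^{1/2}\le 1+\|D_t\|_2/\sqrt M$ controls the gradient, not the Hessian. With the correct bound $(1+\|D_t\|_2)/\sqrt M$ and your hypothesis $\|D_t\|_2\le C_0\lambda^{-1}$, you only get $|e_t|\lesssim\lambda^{-3}/\sqrt M\lesssim\lambda^{-1}$. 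After unrolling, the remainder then contributes order $\lambda^{-3}$ to $\|D_{t+1}\|_2^2$, so the induction does not close under $M\gtrsim\lambda^{-4}$.

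The estimate you actually need, $|e_t|\lesssim\|D_t\|_2^2/\sqrt M$, is nevertheless true, because the cubic terms cancel ($f$ is linear in $a$). With $(a_r,b_r)$ the components of $\Theta_t$ and $x=x_t$,
\[
e_t=\frac{1}{\sqrt M}\sum_{r=1}^M\Bigl[a_r^{(1)}\bigl(\sigma((b_r^{(1)})^\top x)-\sigma(b_r^\top x)+\sigma'(b_r^\top x)(b_r-b_r^{(1)})^\top x\bigr)+(a_r-a_r^{(1)})\,\sigma'(b_r^\top x)\,(b_r-b_r^{(1)})^\top x\Bigr],
\]
so $|e_t|\le C_\sigma(R^2+R)\|D_t\|_2^2/(2\sqrt M)$. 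Equivalently, $e_t=(f^{\mathrm{lin}}_{\Theta_t}-f_{\Theta_t})(x_t)+\langle D_t,\,g_t-\nabla_\Theta f_\Theta(x_t)|_{\Theta=\Theta_1}\rangle_2$, and Proposition~\ref{prop15} and Lemma~\ref{lemma3} bound both pieces directly.

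\textbf{Absorbing $\eta_t^2r_t^2\|g_t\|_2^2$.} This also needs a sentence of justification. $\|g_t\|_2^2$ may exceed $(1+R^2)C_\sigma^2$, and the step-size bound leaves no margin at $t=1$. It is harmless, since $e_1=0$ and $\eta_t\le 2^{-\theta}\eta_1$ for $t\ge2$ gives $\theta$-dependent slack. Alternatively, skip Young's inequality and use $|r_t|\le|f_{\Theta_t}(x_t)|+B\lesssim\|D_t\|_2+B$.

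\textbf{Comparison with the paper.} The paper (proof of Proposition~\ref{prop16}) stays at the level of $\|D_t\|_2$ and handles the large residual differently.
\begin{enumerate}
\item It writes $f_{\Theta_k}(x_k)=f^{\mathrm{lin}}_{\Theta_k}(x_k)+(f_{\Theta_k}-f^{\mathrm{lin}}_{\Theta_k})(x_k)$.
\item It uses $f^{\mathrm{lin}}_{\Theta_k}(x_k)\nabla_\Theta f_\Theta(x_k)|_{\Theta=\Theta_1}=A_k(\Theta_k-\Theta_1)$, where $A_k$ is the rank-one positive semidefinite matrix built from the gradient at initialization.
\item The linear part of the residual therefore sits inside the contraction $I-\eta_k(A_k+\lambda I)$, whose norm is at most $1-\lambda\eta_k$ under exactly the stated step-size bound.
\item The nonlinearity, gradient-drift and $v_k$ terms are bounded in norm and summed using $\sum_i\eta_i\prod_{j>i}(1-\lambda\eta_j)\le\lambda^{-1}$.
\end{enumerate}
So your opening claim that a triangle-inequality argument cannot contract is not right: it does, once the linearized residual is moved into the operator. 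That route needs no step-size slack, gives explicit constants $C_1,C_2$, and its recursion is reused in the kernel coupling of Proposition~\ref{prop17}.

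Your argument instead exploits the descent inequality $-2r_tf_{\Theta_t}(x_t)\le -r_t^2+v_t^2$, so $B$ enters squared. This is sharper, and you discard the gain when you use $\lambda^{-1}\le\lambda^{-2}$. Running the same induction with hypothesis $\|D_s\|_2\le C_0\lambda^{-1/2}$ closes under $M\gtrsim\lambda^{-2}$ and gives $\|\Theta_t-\Theta_1\|_2\lesssim\lambda^{-1/2}$. That is the natural scale: any minimizer $\Theta^\ast$ of the regularized objective satisfies $\lambda\|\Theta^\ast-\Theta_1\|_2^2\le\mathcal R(\Theta_1)\le B^2$.
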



The uniform stability estimate allows the neural-network iterates to be compared
with the corresponding kernel-SGD dynamics driven by the empirical NTK. Combining
this comparison with the kernel-SGD error bounds in
Subsection~\ref{subsection: upper bound} gives the following theorem.

\begin{theorem}\label{NTK SGD1}
Suppose that Assumption~\ref{source condition} holds with \(r>0\), namely
\(\mathcal G^\dagger=L_{k^{\mathrm{NTK}}_\infty}^{\,r}\mathcal G^{\mathrm{src}}\)
for some \(\mathcal G^{\mathrm{src}}\in L^2(\mathcal U,\rho_u)\), where
\(k^{\mathrm{NTK}}_\infty\) is defined in Definition~\ref{limiting NTK}, and that Assumptions~\ref{assumption: NTK} and~\ref{assumption: sample 1} hold.

Let \(\{\mathcal G_t^{\mathrm{NN}}\}_{t\ge1}\) be generated by \eqref{NN update}
with step sizes \(\eta_t=\eta_1t^{-\theta}\), \(0<\theta<1\), and regularization
parameter \(\lambda>0\). Set
\(\Delta:=\|L_{k^M_{d_1}}-L_{k^{\mathrm{NTK}}_\infty}\|\). Assume that
\(\lambda\le1\),
\[
M\gtrsim \lambda^{-4}.
\]
Then there exists a constant $\tilde{\eta}>0$, independent of
$\Delta$, $\lambda$, $M$, and $T$, 
 such that for any
\(0<\eta_1\le\tilde{\eta}\) and any \(1\le t\le T\),
\[
\mathbb E_{z^t}\!\left[
\bigl\|\mathcal G_{t+1}^{\mathrm{NN}}-\mathcal G^\dagger\bigr\|_{\rho_u}^2
\right]
\;\lesssim\;
\underbrace{E_{\mathrm{enc}}(\lambda,\Delta,r)}
        _{\text{input-side error}}
\;+\;
\underbrace{\frac{\lambda^{-6}}{M}}_{\text{finite-width error}}
\;+\;
\underbrace{C_{\mathrm{sgd}}\,t^{-\theta}}_{\text{optimization error}}.
\]
Here \(E_{\mathrm{enc}}(\lambda,\Delta,r)\) is the input-side error term and
\(C_{\mathrm{sgd}}\) is the SGD coefficient in Theorem~\ref{Thm1}.
\end{theorem}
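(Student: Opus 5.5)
The plan is a triangle inequality through the kernel-SGD iterates driven by the empirical NTK. Fix the initialization $\Theta_1$ and let $\mathcal G_t:=f_t\circ\mathcal E_1^{d_1}$ be the lifted kernel-SGD recursion with kernel $k^M_{d_1}$ from Subsection~\ref{section: Neural SGD to Kernel SGD}, driven by the same samples $z_t$. Then
\[
\mathbb E_{z^t}\bigl[\|\mathcal G^{\mathrm{NN}}_{t+1}-\mathcal G^\dagger\|_{\rho_u}^2\bigr]
\le 2\,\mathbb E_{z^t}\bigl[\|\mathcal G^{\mathrm{NN}}_{t+1}-\mathcal G_{t+1}\|_{\rho_u}^2\bigr]
+2\,\mathbb E_{z^t}\bigl[\|\mathcal G_{t+1}-\mathcal G^\dagger\|_{\rho_u}^2\bigr].
\]

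\emph{Second term.} Conditionally on $\Theta_1$, this is exactly the scalar-output ($\mathcal V=\mathbb R$, $P_2^{d_2}=I$) case of Theorem~\ref{Thm1}, applied with finite-resolution kernel $k^M_{d_1}$ in place of $k_{d_1}$ and limiting kernel $k^{\mathrm{NTK}}_\infty$. So I need to verify the hypotheses. The diagonal bound $k^M_{d_1}(u,u)\le(1+R^2)C_\sigma^2=:\kappa^2$ holds uniformly in $M,d_1$ by Assumptions~\ref{assumption: NTK} and~\ref{assumption: sample 1}. The empirical kernel $k^M$ is positive semidefinite, being a Gram kernel of features. The discrepancy is precisely $\Delta=\|L_{k^M_{d_1}}-L_{k^{\mathrm{NTK}}_\infty}\|$. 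This yields $E_{\mathrm{enc}}(\lambda,\Delta,r)+C_{\mathrm{sgd}}t^{-\theta}$ with no output-encoding term. The constants depend only on $\kappa$, $r$, $\theta$ and $\|\mathcal G^{\mathrm{src}}\|$, so they are uniform in the initialization, and $\tilde\eta\le\bar\eta$ can be taken independent of $M$ and $\Delta$.

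\emph{First term (the main obstacle).} This is the linearization error. The plan is to show that, almost surely over the samples, $\sup_{u}|f_{\Theta_{t}}(\mathcal E_1^{d_1}u)-f_t(\mathcal E_1^{d_1}u)|\lesssim\lambda^{-3}M^{-1/2}$; squaring gives the $\lambda^{-6}/M$ term. I will proceed in three steps.

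First, Proposition~\ref{prop13} gives $\|\Theta_s-\Theta_1\|_2\lesssim\lambda^{-1}$ for all $s\le T+1$. Combining this with the boundedness of $\sigma''$ and the $M^{-1/2}$ prefactor in \eqref{temp29}, a Taylor expansion gives, uniformly over $\|x\|\le R$,
\[
\bigl|f_\Theta(x)-\langle\nabla f_{\Theta_1}(x),\Theta-\Theta_1\rangle\bigr|\lesssim M^{-1/2}\|\Theta-\Theta_1\|_2^2,
\qquad
\|\nabla f_\Theta(x)-\nabla f_{\Theta_1}(x)\|_2\lesssim M^{-1/2}\|\Theta-\Theta_1\|_2 .
\]
Here $f_{\Theta_1}=0$ by the symmetric initialization. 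I expect to need a mild deterministic bound on $\|a\|_\infty$ and $\|b_r\|$, or to use that $a_r$ also moves by at most $\lambda^{-1}$.

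Second, define the linearized parameter recursion $\widetilde\Theta_{t+1}=\widetilde\Theta_t-\eta_t(\langle\nabla f_{\Theta_1}(x_t),\widetilde\Theta_t-\Theta_1\rangle-v_t)\nabla f_{\Theta_1}(x_t)-\eta_t\lambda(\widetilde\Theta_t-\Theta_1)$. Its prediction $\langle\nabla f_{\Theta_1}(\cdot),\widetilde\Theta_t-\Theta_1\rangle$ coincides exactly with the kernel iterate $f_t$ for $k^M$, since $k^M(x,x')=\langle\nabla f_{\Theta_1}(x),\nabla f_{\Theta_1}(x')\rangle$.

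Third, set $e_t:=\Theta_t-\widetilde\Theta_t$. Subtracting the two recursions gives $e_{t+1}=(1-\lambda\eta_t)e_t-\eta_t\nabla f_{\Theta_1}(x_t)\nabla f_{\Theta_1}(x_t)^\top e_t+\xi_t$, where the contraction part has norm at most $1-\lambda\eta_t$ once $\eta_1\kappa^2\le1$. The perturbation satisfies $\|\xi_t\|\lesssim\eta_t M^{-1/2}(\lambda^{-2}+\lambda^{-1}(B+\lambda^{-1}))\lesssim\eta_t\lambda^{-2}M^{-1/2}$, using $|f_{\Theta_t}(x_t)|,|v_t|$ bounded, which in turn uses $M\gtrsim\lambda^{-4}$. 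Unrolling with $\sum_s\eta_s\prod_{j>s}(1-\lambda\eta_j)\le\lambda^{-1}$ gives $\|e_t\|\lesssim\lambda^{-3}M^{-1/2}$. Finally, the prediction difference is bounded by $\|\nabla f_{\Theta_1}\|\,\|e_t\|$ plus the Taylor remainder $M^{-1/2}\lambda^{-2}$, giving $\lambda^{-3}M^{-1/2}$. The care needed in this last step is keeping all bounds deterministic and uniform in $t\le T$. After that, taking expectations and combining the two pieces completes the proof.
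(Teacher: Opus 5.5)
Your proposal is correct and matches the paper's proof, which likewise combines Theorem~\ref{Thm1} (applied to kernel SGD with the empirical NTK $k^M_{d_1}$, whose diagonal is bounded uniformly in $M$ and $d_1$) with an almost-sure sup-norm coupling bound of order $\lambda^{-3}/\sqrt{M}$ (Proposition~\ref{prop18}); that bound is built from the uniform stability estimate, the Taylor remainder bound, and a contraction recursion. The only cosmetic difference is that you run the contraction on $\Theta_t-\widetilde\Theta_t$ in parameter space, whereas Proposition~\ref{prop17} runs the same recursion, with the same two perturbation terms, for $f^{\mathrm{lin}}_{\Theta_t}-f_t$ in $\mathcal H_{k^M}$; the two are related by the feature map $w\mapsto\langle\nabla_\Theta f_{\Theta_1}(\cdot),w\rangle_2$, and your worry about bounding $a_r$ or $b_r$ is unnecessary because $|a_r^{(1)}|=1$ and $\sigma''$ is globally bounded.
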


\begin{remark}[Choice of the regularization parameter]
The following consequences of Theorem~\ref{NTK SGD1} make the role of the
regularization parameter explicit. Set
\[
\mathfrak e_{d_1}^2
:=
\mathbb{E}_{u\sim\rho_u}\!\left[
\bigl\|
\bigl((\mathcal E_1^{d_1})^*\mathcal E_1^{d_1}\bigr)^{1/2}u-Tu
\bigr\|_{\mathcal U}^2
\right].
\]
If $M\gtrsim \frac{\bigl(\log(2/\delta)\bigr)^2}{\mathfrak e_{d_1}^2},$
then Proposition~\ref{prop12} implies that, with probability at least \(1-2\delta\), $\Delta\lesssim \mathfrak e_{d_1}.$

On this event, and under the width condition required in
Theorem~\ref{NTK SGD1}, the estimate takes the following simplified forms. If
\(0<r<1\) and \(0<\Delta\le1\), choosing \(\lambda=M^{-\frac{1}{2(r+3)}}\) gives
\[
\mathbb{E}_{z^t}\!\left[
\bigl\|\mathcal G_{t+1}^{\mathrm{NN}}-\mathcal G^\dagger\bigr\|_{\rho_u}^2
\right]
\lesssim
\Delta^{2r}
+
M^{-\frac{r}{r+3}}
+
C_{\mathrm{sgd}}\,t^{-\theta}.
\]
If \(r\ge1\), choosing \(\lambda=M^{-1/8}\) gives, whenever the conditions of
Theorem~\ref{NTK SGD1} are satisfied,
\[
\mathbb{E}_{z^t}\!\left[
\bigl\|\mathcal G_{t+1}^{\mathrm{NN}}-\mathcal G^\dagger\bigr\|_{\rho_u}^2
\right]
\lesssim
\Delta^{2}
+
M^{-1/4}
+
C_{\mathrm{sgd}}\,t^{-\theta}.
\]
\end{remark}

For completeness, we record the constant step-size analogue of
Proposition~\ref{prop13}. The parameter deviation again admits a bound of order
$\lambda^{-1}$, uniformly over the training horizon.

\begin{proposition}[Uniform stability under constant step sizes]\label{prop:constant-stability}
Suppose that Assumptions~\ref{assumption: NTK} and~\ref{assumption: sample 1}
hold. Let $0<\lambda\le1$. Consider SGD with constant step sizes \(\eta_t=\eta T^{-\theta'}\),
\(t\in\mathbb N_T\), where \(0<\theta'<1\) and
\(0<\eta\le((1+R^2)C_\sigma^2+1)^{-1}\). Assume that
\[
M \gtrsim \lambda^{-4}.
\]
Then, for all \(1\le t\le T+1\), the SGD iterates satisfy
\[
\|\Theta_t-\Theta_1\|_2
\lesssim
\lambda^{-1}.
\]
\end{proposition}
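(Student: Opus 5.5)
We will argue by induction on $t$, controlling the parameter drift $D_t:=\|\Theta_t-\Theta_1\|_2$ along the recursion \eqref{NN update} with the constant step $\eta_t\equiv\bar\eta:=\eta T^{-\theta'}$. Writing $g_t:=\nabla_\Theta f_\Theta(x_t)|_{\Theta=\Theta_t}$ and $e_t:=f_{\Theta_t}(x_t)-v_t$, the update reads
\[
\Theta_{t+1}-\Theta_1=(1-\bar\eta\lambda)(\Theta_t-\Theta_1)-\bar\eta\,e_t\,g_t ,
\]
so that $D_{t+1}\le(1-\bar\eta\lambda)D_t+\bar\eta|e_t|\,\|g_t\|_2$. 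Unrolling from $D_1=0$ gives
\[
D_{t+1}\le\bar\eta\sum_{s=1}^{t}(1-\bar\eta\lambda)^{t-s}|e_s|\,\|g_s\|_2\le\lambda^{-1}\sup_{s\le t}|e_s|\,\|g_s\|_2 ,
\]
using $\bar\eta\sum_{j\ge0}(1-\bar\eta\lambda)^j=\lambda^{-1}$. This geometric-sum step is the only place the constant-step structure enters, and it is in fact simpler than in Proposition~\ref{prop13}, where the factors $\prod(1-\eta_j\lambda)$ had to be compared with $\eta_s$ through the condition $\eta_1\le 1-\theta$. Hence it suffices to show that $|e_s|$ and $\|g_s\|_2$ stay $O(1)$ as long as $D_s\le C_0\lambda^{-1}$ for a suitable constant $C_0$.

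To this end we use the network structure under Assumptions~\ref{assumption: NTK} and~\ref{assumption: sample 1}. The gradient entries are $M^{-1/2}\sigma(\cdot)$, $M^{-1/2}a_r\sigma'(\cdot)x$ and $M^{-1/2}\gamma a_r\sigma'(\cdot)$, so
\[
\|g_s\|_2^2\le C_\sigma^2\Bigl(1+R^2\,\tfrac1M\textstyle\sum_r a_{r,s}^2\Bigr),
\qquad \tfrac1M\textstyle\sum_r a_{r,s}^2\le 2+\tfrac{2}{M}D_s^2 ,
\]
where the second bound uses $|a_{r,1}|=1$. Similarly, for $|e_s|\le B+|f_{\Theta_s}(x_s)|$ we write $f_{\Theta_s}(x_s)=f_{\Theta_s}(x_s)-f_{\Theta_1}(x_s)$, which is possible because $f_{\Theta_1}\equiv0$ by the symmetric initialization. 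A mean-value/Taylor bound with $|\sigma'|,|\sigma''|\le C_\sigma$ then gives
\[
|f_{\Theta_s}(x_s)|\lesssim \|g_1\|_2D_s+\frac{(1+R^2)}{\sqrt M}\Bigl(D_s^2+\tfrac{D_s^3}{\sqrt M}\Bigr),
\]
where the cubic term comes from products of perturbations in $a$ and $(b,c)$. Under the induction hypothesis $D_s\le C_0\lambda^{-1}$ and $M\gtrsim\lambda^{-4}$, we have $D_s^2/\sqrt M\lesssim\lambda^{-2}\lambda^{2}=O(1)$ and $D_s/\sqrt M\le O(1)$, so the second-order terms are bounded.

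The main obstacle is the linear term $\|g_1\|_2D_s$, which is of order $\lambda^{-1}$ and would break the induction. To avoid it we will not bound $|e_s|$ crudely. Instead we use the squared-loss contraction: the update is gradient descent on $\tfrac12 e^2+\tfrac\lambda2\|\Theta-\Theta_1\|^2$, and by the step-size condition $\bar\eta\le\eta\le((1+R^2)C_\sigma^2+1)^{-1}$, together with $\|g_s\|_2^2\le(1+R^2)C_\sigma^2+o(1)$, the residual dynamics are non-expansive.

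Concretely, we will track the function values $f_{\Theta_s}(x)$ via the linearization
\[
f_{\Theta_{s+1}}(x)=(1-\bar\eta\lambda)f_{\Theta_s}(x)-\bar\eta\,e_s\,k^M_s(x,x_s)+\text{(second-order remainder)} .
\]
The kernel part is a kernel-SGD step whose iterates remain bounded by $O(B)$ when $\bar\eta\kappa^2\le1$, by the standard argument that $\|(1-\bar\eta\lambda)I-\bar\eta\,k\otimes k\|\le1$. The remainders are of size $\bar\eta\,|e_s|\,D_s/\sqrt M\lesssim\bar\eta\lambda$ per step, and they accumulate under the factor $(1-\bar\eta\lambda)$ to give $O(1)$. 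This closes the induction: $\sup_s|e_s|\lesssim 1$ and $\sup_s\|g_s\|_2\lesssim1$, hence $D_{t+1}\le C_0\lambda^{-1}$ with $C_0$ independent of $T$, $\lambda$ and $M$. The computation mirrors the proof of Proposition~\ref{prop13}, with the step $\eta_t$ replaced by the constant $\bar\eta$.
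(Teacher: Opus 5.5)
Your reduction $D_{t+1}\le\lambda^{-1}\sup_{s\le t}|e_s|\,\|g_s\|_2$ is correct. However, it forces you to prove $\sup_s|e_s|\lesssim 1$ uniformly in $\lambda$, and the step meant to supply this does not.

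The inequality $\|(1-\eta_s\lambda)I-\eta_s\,k^M(\cdot,x_s)\otimes k^M(\cdot,x_s)\|\le 1-\eta_s\lambda$ is a contraction in the RKHS norm, not in the sup norm. Unrolling the kernel-SGD recursion with forcing term $\eta_s v_s k^M(\cdot,x_s)$ gives only $\|f_{s+1}\|_{k^M}\le \kappa B\sum_{i\le s}\eta_i\prod_{j=i+1}^{s}(1-\lambda\eta_j)\le\kappa B/\lambda$, with $\kappa^2:=(1+R^2)C_\sigma^2$. Hence you get only $|f_{s+1}(x)|\le\kappa^2B/\lambda$. That is exactly the order of the term $\|g_1\|_2D_s$ you identified as the obstacle. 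Inserting $\sup_s|e_s|\lesssim\lambda^{-1}$ into your unrolled inequality yields only $D_{t+1}\lesssim\lambda^{-2}$, so the induction at radius $C_0\lambda^{-1}$ does not close.

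A sharper function-space argument is unlikely to rescue this. There is no reason to expect a $\lambda$-uniform $O(1)$ bound on the residuals: regularized least-squares iterates can have RKHS norm of order $\lambda^{-1/2}$ when differently labelled support points have nearly collinear tangent features, with correspondingly large values at other support points. Note also that $f_{\Theta_s}$ does not lie in the RKHS of a fixed kernel, so your ``kernel part'' would have to be run on $f^{\mathrm{lin}}_{\Theta_s}$, as in Proposition~\ref{prop17}.

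The missing idea is to avoid taking absolute values of $\eta_s e_s g_s$ in parameter space. Set $g^{(1)}_s:=\nabla_\Theta f_\Theta(x_s)|_{\Theta=\Theta_1}$ and $A_s:=g^{(1)}_s(g^{(1)}_s)^\top$. Then $f^{\mathrm{lin}}_{\Theta_s}(x_s)\,g^{(1)}_s=A_s(\Theta_s-\Theta_1)$, so
\[
\Theta_{s+1}-\Theta_1=\bigl(I-\eta_s(A_s+\lambda I)\bigr)(\Theta_s-\Theta_1)-\eta_s\Bigl[\bigl(f_{\Theta_s}(x_s)-f^{\mathrm{lin}}_{\Theta_s}(x_s)\bigr)g^{(1)}_s+e_s\bigl(g_s-g^{(1)}_s\bigr)-v_s\,g^{(1)}_s\Bigr].
\]
The part of the residual that is linear in $\Theta_s-\Theta_1$ is absorbed into an operator of norm at most $1-\eta_s\lambda$. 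This uses $A_s\succeq 0$, $\|A_s\|\le(1+R^2)C_\sigma^2$, and $\eta_s((1+R^2)C_\sigma^2+\lambda)\le 1$.

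Of the three forcing terms, only $v_s g^{(1)}_s$ is of order one, being at most $B\sqrt{1+R^2}\,C_\sigma$. The linearization error is $O(D_s^2/\sqrt M)$. In $e_s(g_s-g^{(1)}_s)$, the crude bound $|e_s|\lesssim D_s+B\lesssim\lambda^{-1}$ is harmless, because it multiplies $\|g_s-g^{(1)}_s\|_2\lesssim D_s/\sqrt M$ (Lemma~\ref{lemma3}). Under $D_s\le C_2\lambda^{-1}$, both terms are $O(\lambda^{-2}/\sqrt M)$, hence small once $M\gtrsim\lambda^{-4}$ with a large enough constant. The geometric sum $\sum_{i\le k}\eta_i\prod_{j=i+1}^{k}(1-\lambda\eta_j)\le\lambda^{-1}$ then closes the induction at radius $C_2\lambda^{-1}$.

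This is the proof of Proposition~\ref{prop16}, which the paper reuses verbatim for constant steps. It needs only the step-size condition above and the telescoping identity \eqref{eq:ntk-geometric-sum}, valid for any such step sequence. In particular, the condition $\eta_1\le 1-\theta$ plays no role there, contrary to your remark.
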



We state the constant step-size analogue of Theorem~\ref{NTK SGD1}.

\begin{theorem}\label{NTK SGD2}
Suppose that the assumptions of Theorem~\ref{NTK SGD1} hold, except that the
step sizes are constant, \(\eta_t=\eta T^{-\theta'}\), with \(0<\theta'<1\).
Then there exists 
a constant $\tilde{\eta}'>0$, independent of
$\Delta$, $\lambda$, $M$, and $T$, 
such that for any
\(0<\eta\le\tilde{\eta}'\),
\[
\mathbb E_{z^T}\!\left[
\bigl\|\mathcal G_{T+1}^{\mathrm{NN}}-\mathcal G^\dagger\bigr\|_{\rho_u}^2
\right]
\;\lesssim\;
\underbrace{\widetilde E_{\mathrm{enc}}(\lambda,\Delta,r)}_{\text{input-side error}}
\;+\;
\underbrace{\frac{\lambda^{-6}}{ M}}_{\text{finite-width error}}
\;+\;
\underbrace{\widetilde C_{\mathrm{sgd}}\,
T^{-\theta'}}_{\text{optimization error}}.
\]
Here \(\widetilde E_{\mathrm{enc}}(\lambda,\Delta,r)\) is the input-side error
term and \(\widetilde C_{\mathrm{sgd}}\) is the SGD coefficient in
Theorem~\ref{Thm2}.
\end{theorem}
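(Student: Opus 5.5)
We prove Theorem~\ref{NTK SGD2} by following the proof of Theorem~\ref{NTK SGD1}, with Proposition~\ref{prop:constant-stability} used in place of Proposition~\ref{prop13} and Theorem~\ref{Thm2} in place of Theorem~\ref{Thm1}. Let $\mathcal G_t=f_t\circ\mathcal E_1^{d_1}$ be the lifted kernel-SGD iterates driven by the empirical NTK $k^M_{d_1}$ (scalar output, so $P_2^{d_2}=I$), with the same samples and the same constant step size $\eta_t=\eta T^{-\theta'}$. We then split
\[
\mathbb E_{z^T}\bigl[\|\mathcal G^{\mathrm{NN}}_{T+1}-\mathcal G^\dagger\|_{\rho_u}^2\bigr]
\le 2\,\mathbb E_{z^T}\bigl[\|\mathcal G^{\mathrm{NN}}_{T+1}-\mathcal G_{T+1}\|_{\rho_u}^2\bigr]
+2\,\mathbb E_{z^T}\bigl[\|\mathcal G_{T+1}-\mathcal G^\dagger\|_{\rho_u}^2\bigr].
\]

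\emph{Kernel-SGD term.} Theorem~\ref{Thm2} applies to the kernel $k^M_{d_1}$ with limiting kernel $k^{\mathrm{NTK}}_\infty$ and discrepancy $\Delta=\|L_{k^M_{d_1}}-L_{k^{\mathrm{NTK}}_\infty}\|$. Its proof needs only two facts about the finite kernel: the source condition relative to $L_{k^{\mathrm{NTK}}_\infty}$, and the uniform bound $k^M_{d_1}(u,u)\le (1+R^2)C_\sigma^2$. The latter follows from Assumption~\ref{assumption: NTK} and the enlarged radius $R$, with no dependence on $M$ or on the initialization. Conditioning on the initialization, this term is therefore bounded by $\widetilde E_{\mathrm{enc}}(\lambda,\Delta,r)+\widetilde C_{\mathrm{sgd}}T^{-\theta'}$. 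The output-encoding term vanishes.

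\emph{Linearization term.} This is the main step. Write $e_t:=f_{\Theta_t}-f_t$, evaluated at encoded inputs, and Taylor-expand the network around $\Theta_1$. Because the initialization is symmetric, $f_{\Theta_1}=0$, so
\[
f_{\Theta}(x)=\langle\nabla_\Theta f_{\Theta_1}(x),\Theta-\Theta_1\rangle+\rho_\Theta(x).
\]
The parameter-space gradient update projected onto these features reproduces the kernel recursion with kernel $k^M$. The perturbations come from two sources. The first is the remainder $\rho_{\Theta_t}$. The second is the replacement of $\nabla f_{\Theta_t}$ by $\nabla f_{\Theta_1}$, which produces a kernel perturbation $k_{\Theta_t}-k^M$. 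Assumption~\ref{assumption: NTK} bounds $\sigma''$, so both are of size $C\|\Theta_t-\Theta_1\|_2^2/\sqrt M$ and $C\|\Theta_t-\Theta_1\|_2/\sqrt M$, respectively; this uses the $1/\sqrt M$ scaling, with the per-neuron displacements summed via Cauchy--Schwarz. Proposition~\ref{prop:constant-stability} gives $\|\Theta_t-\Theta_1\|_2\lesssim\lambda^{-1}$ uniformly over $1\le t\le T+1$ once $M\gtrsim\lambda^{-4}$. Hence each step injects an error of order $\lambda^{-2}/\sqrt M$. The regularization term then gives the error recursion a contraction $(1-\lambda\eta_t)$, because $I-\eta_t(\lambda I+k^M(\cdot,x_t)\otimes k^M(\cdot,x_t))$ has norm at most $1-\lambda\eta_t$ for $\eta_t\le((1+R^2)C_\sigma^2+1)^{-1}$. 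Unrolling yields
\[
\|e_{T+1}\|_\infty\lesssim \sum_{j\le T}\eta_j\prod_{i>j}(1-\lambda\eta_i)\,\lambda^{-2}M^{-1/2}\lesssim \lambda^{-3}M^{-1/2}.
\]
Squaring gives the finite-width term $\lambda^{-6}/M$. For constant steps the geometric sum $\eta\sum_j(1-\lambda\eta)^{T-j}\le\lambda^{-1}$ is immediate, which makes this easier than the decreasing-step case.

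The obstacle is making the linearization bound uniform along the trajectory, that is, bounding the remainder and the kernel drift in sup-norm over $\operatorname{supp}\rho_u$ rather than only at the samples. We handle this with the deterministic a.s.\ bounds above, which use only $\|x\|_2\le R$, so no concentration in $t$ is needed. The width requirement $M\gtrsim\lambda^{-4}$ enters only through Proposition~\ref{prop:constant-stability}. Finally, we take $\tilde\eta'$ to be the minimum of $\bar\eta'$ from Theorem~\ref{Thm2} and $((1+R^2)C_\sigma^2+1)^{-1}$. This choice does not depend on $\Delta$, $\lambda$, $M$ or $T$. Combining the two terms gives the stated bound.
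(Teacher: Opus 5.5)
Your proposal is correct and follows essentially the same route as the paper. The paper bounds $\sup_{\|x\|_2\le R}|f_{\Theta_{t+1}}(x)-f_{t+1}(x)|\lesssim\lambda^{-3}/\sqrt M$ by combining three ingredients: the constant-step stability bound (Proposition~\ref{prop19}), the Taylor-remainder estimate (Proposition~\ref{prop15}), and the contraction argument (Proposition~\ref{prop23}). It then invokes Theorem~\ref{Thm2} for the empirical-NTK kernel SGD, which is exactly your decomposition.

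One point should be made explicit. The contraction by $I-\eta_t(\lambda I+k^M(\cdot,x_t)\otimes k^M(\cdot,x_t))$ must be run on the linearized difference $f^{\mathrm{lin}}_{\Theta_t}-f_t\in\mathcal H_{k^M}$, not on $e_t=f_{\Theta_t}-f_t$, which in general does not lie in $\mathcal H_{k^M}$. The remainder of order $\lambda^{-2}/\sqrt M$ is then added at the end.
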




The choice of \(\lambda\) involves considerations similar to those above, and we do not further optimize this choice in the constant step-size case.

\subsection{Encoder--Decoder Neural Operators} \label{section: Neural Operators}

We now incorporate the output encoder--decoder pair into the neural network
construction. The output space is accessed through an encoder
\(\mathcal E_2^{d_2}:\mathcal V\to\mathbb R^{d_2}\) and a decoder
\(\mathcal D_2^{d_2}:\mathbb R^{d_2}\to\mathcal V\). The induced neural operator
is
\[
    \mathcal G^{\mathrm{NN}}
    =
    \mathcal D_2^{d_2}\circ F_\Theta\circ\mathcal E_1^{d_1}
    :\mathcal U\to\mathcal V,
\]
where \(F_\Theta:\mathbb R^{d_1}\to\mathbb R^{d_2}\) is a two-layer multi-output
network.

We write
\[
F_\Theta(x)
=
\bigl(
f_{\Theta^{(1)}}(x),\,
f_{\Theta^{(2)}}(x),\,
\ldots,\,
f_{\Theta^{(d_2)}}(x)
\bigr)^\top,
\qquad x\in\mathbb R^{d_1},
\]
where each \(f_{\Theta^{(j)}}:\mathbb R^{d_1}\to\mathbb R\) is a two-layer
network of width \(M\), and
\(\Theta=(\Theta^{(1)},\ldots,\Theta^{(d_2)})\) collects all trainable
parameters. All components are evaluated on the same encoded input
\(\mathcal E_1^{d_1}u\). 
The component networks have the same architecture and width.
They use separate trainable parameter blocks, initialized using
the same draw from the symmetric scheme in the scalar-output setting.

Under the symmetric initialization above, each scalar component
\(f_{\Theta^{(j)}}\) is initialized at zero. Consequently,
\[
F_{\Theta_1}(x)=0\in\mathbb R^{d_2},
\qquad x\in\mathbb R^{d_1}.
\]
We consider the squared-loss expected risk associated with the encoded outputs,
\[
\mathcal R(\Theta)
:=
\mathbb E_{(u,v)\sim\rho}
\left[
\bigl\|
F_\Theta(\mathcal E_1^{d_1}u)-\mathcal E_2^{d_2}v
\bigr\|_2^2
\right].
\]
To control the training dynamics in the overparameterized regime, we use the
regularized objective
\[
\min_{\Theta}\;
\frac12\mathcal R(\Theta)
+
\frac{\lambda}{2}\|\Theta-\Theta_1\|_2^2.
\]
At iteration \(t\), an independent sample \((u_t,v_t)\sim\rho\) is drawn, with
\[
x_t=\mathcal E_1^{d_1}u_t,
\qquad
y_t=\mathcal E_2^{d_2}v_t.
\]
Writing \(\Theta=(\Theta^{(1)},\ldots,\Theta^{(d_2)})\) and
\(y_t=(y_t^{(1)},\ldots,y_t^{(d_2)})\), the componentwise SGD updates are
\begin{equation}\label{eq:multi-output-SGD}
\Theta^{(j)}_{t+1}
=
\Theta^{(j)}_t
-
\eta_t
\bigl(
f_{\Theta^{(j)}_t}(x_t)-y_t^{(j)}
\bigr)
\nabla_{\Theta^{(j)}} f_{\Theta^{(j)}}(x_t)\big|_{\Theta^{(j)}=\Theta_t^{(j)}}
-
\eta_t\lambda(\Theta^{(j)}_t-\Theta^{(j)}_1),
\quad j=1,\ldots,d_2.
\end{equation}
The associated neural operator iterate is $\mathcal G_t^{\mathrm{NN}}
:=
\mathcal D_2^{d_2}\circ F_{\Theta_t}\circ\mathcal E_1^{d_1}.$


We define the limiting operator-valued NTK by
\begin{equation} \label{operator-valued NTK}
    K^{\mathrm{NTK}}_{\infty}(u,u')
    :=
    k^{\mathrm{NTK}}_{\infty}(u,u') I_{\mathcal V},
\end{equation}
and denote by \(L_{K^{\mathrm{NTK}}_{\infty}}\) the associated integral operator.

To obtain error bounds for encoder--decoder neural operators, we impose the
following boundedness and stability assumptions, which ensure uniformity of the
estimates with respect to the output encoding dimension.

\begin{assumption}\label{ass:output-encoding}
There exists a constant \(R>0\) such that
\(\|u\|_{\mathcal U}\le R\) \(\rho_u\)-a.s., and the input encoders satisfy
\(\sup_{d_1\ge1}\|\mathcal E_1^{d_1}\|<\infty\).
Moreover, the output encoder--decoder pairs are uniformly stable in the sense that
\[
\sup_{d_2\ge1}
\sum_{j=1}^{d_2}
\operatorname*{ess\,sup}_{v\sim\rho_v}
\bigl|(\mathcal E_2^{d_2}v)_j\bigr|^2
<\infty,
\]
and \(\sup_{d_2\ge1}\|\mathcal D_2^{d_2}\|<\infty\), where \(\rho_v\) denotes
the marginal distribution of \(\rho\) on \(\mathcal V\).
\end{assumption}
Enlarging \(R\) if necessary, we also have
\(\sup_{d_1\ge1}\|(\mathcal E_1^{d_1}u,\gamma)\|_2\le R\)
\(\rho_u\)-almost surely. The condition on the output encoder is satisfied under a simple coordinate-decay
condition. It is enough that there exists a sequence
\(\{a_j\}_{j\ge1}\in\ell^2\) such that
\[
\operatorname*{ess\,sup}_{v\sim\rho_v}
\bigl|(\mathcal E_2^{d_2}v)_j\bigr|
\le a_j,
\qquad 1\le j\le d_2,\ d_2\ge1,
\]
together with \(\sup_{d_2\ge1}\|\mathcal D_2^{d_2}\|<\infty\). Equivalently, the encoded output coordinates are uniformly controlled by a
square-summable envelope.



Under Assumption~\ref{ass:output-encoding}, the preceding  NTK
analysis extends to the encoder--decoder neural operator setting.

\begin{theorem}\label{thm:NTK-encoder-decoder}
Suppose that Assumption~\ref{source condition} holds with \(r>0\), namely
\(\mathcal G^\dagger=L_{K^{\mathrm{NTK}}_{\infty}}^{\,r}\mathcal G^{\mathrm{src}}\)
for some \(\mathcal G^{\mathrm{src}}\in L^2(\mathcal U,\rho_u;\mathcal V)\), where
\(K^{\mathrm{NTK}}_{\infty}\) is defined in~\eqref{operator-valued NTK}. Suppose 
further that Assumptions~\ref{assumption: NTK} and~\ref{ass:output-encoding} hold.

Let \(\{\mathcal G_t^{\mathrm{NN}}\}_{t\ge1}\) be the neural operator iterates
generated by the encoder--decoder SGD update \eqref{eq:multi-output-SGD}.
Assume that the step-size, regularization, and width conditions in either
Theorem~\ref{NTK SGD1} or Theorem~\ref{NTK SGD2} are satisfied, according to the
chosen step-size regime. Then the corresponding bound in that theorem holds for
\(\mathcal G_t^{\mathrm{NN}}\), with the additional output-encoding error
\[
\mathbb E_{v\sim(\mathcal G^\dagger)_\#\rho_u}
\bigl[
\|(I_{\mathcal V}-P_2^{d_2})v\|_{\mathcal V}^2
\bigr]
\]
added to the right-hand side.
\end{theorem}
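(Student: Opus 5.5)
The plan is to reduce the multi-output neural operator to $d_2$ decoupled scalar-output networks that share the same encoded inputs and the same initialization draw. Each block $\Theta^{(j)}$ evolves by \eqref{eq:multi-output-SGD} with target $y_t^{(j)}=(\mathcal E_2^{d_2}v_t)_j$. It is therefore exactly the scalar SGD \eqref{NN update} with response $y^{(j)}$ in place of $v$. Since the blocks use the same initial draw, the empirical NTK $k^M$ is common to all components. The associated kernel-SGD reference recursion is then the lifted recursion with operator-valued kernel $k^M_{d_1}P_2^{d_2}$, obtained exactly as in the derivation of \eqref{SGD1}. Applying $\mathcal D_2^{d_2}$ and using $\mathcal D_2^{d_2}\mathcal E_2^{d_2}=P_2^{d_2}$ gives this identification.

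First, I would split the error orthogonally. Every iterate $\mathcal G_t^{\mathrm{NN}}$ takes values in $\operatorname{ran}(\mathcal D_2^{d_2})=\operatorname{ran}(P_2^{d_2})$, which uses $\mathcal D_2^{d_2}=(\mathcal E_2^{d_2})^*$, or more generally $\mathcal D_2^{d_2}\mathcal E_2^{d_2}=P_2^{d_2}$ together with surjectivity of $\mathcal E_2^{d_2}$. Hence
\[
\|\mathcal G_{t+1}^{\mathrm{NN}}-\mathcal G^\dagger\|_{\rho_u}^2=\|\mathcal G_{t+1}^{\mathrm{NN}}-P_2^{d_2}\mathcal G^\dagger\|_{\rho_u}^2+\mathbb E_{v\sim(\mathcal G^\dagger)_\#\rho_u}\|(I_{\mathcal V}-P_2^{d_2})v\|_{\mathcal V}^2 .
\]
This produces the additional output-encoding term and leaves the projected error to be bounded.

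Second, I would bound $\|\mathcal G_{t+1}^{\mathrm{NN}}-\mathcal G_{t+1}\|$, where $\mathcal G_t$ is the kernel-SGD iterate driven by $k^M_{d_1}P_2^{d_2}$. Write $\mathcal G^{\mathrm{NN}}_t-\mathcal G_t=\mathcal D_2^{d_2}(F_{\Theta_t}-F^{\mathrm{ker}}_t)\circ\mathcal E_1^{d_1}$. With $\|\mathcal D_2^{d_2}\|$ uniformly bounded, this error is controlled by $\sum_j\|f_{\Theta_t^{(j)}}-f^{\mathrm{ker},(j)}_t\|^2$. For each $j$, I rerun the scalar argument behind Theorems~\ref{NTK SGD1}/\ref{NTK SGD2}, namely the uniform stability of Propositions~\ref{prop13}/\ref{prop:constant-stability} plus linearization, with the output bound $B$ replaced by $B_j:=\operatorname*{ess\,sup}|(\mathcal E_2^{d_2}v)_j|$. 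Here is the main obstacle: the scalar bounds must scale as $B_j^2$ times $\lambda^{-6}/M$, rather than merely being finite for each $j$, so that summing over $j$ stays uniform in $d_2$ via Assumption~\ref{ass:output-encoding}. The homogeneity holds because the dynamics are linear in the targets after linearization, so the parameter drift is $\lesssim B_j\lambda^{-1}$. The delicate point is the quadratic Taylor remainder, which scales like $B_j^2\lambda^{-2}/\sqrt M$ times drift terms. My first attempt would be to rescale: divide block $j$ by $B_j$, apply the scalar result with $B=1$, and track that the width condition $M\gtrsim\lambda^{-4}$ only needs $B_j\le\sup_j B_j$. The nonlinear remainder then contributes $\lesssim B_j^2\cdot B_j^2\lambda^{-6}/M$, which is still summable because $\sup_j B_j<\infty$.

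Third, I would apply Theorem~\ref{Thm1} (resp. Theorem~\ref{Thm2}) to $\mathcal G_t$ with $k_{d_1}=k^M_{d_1}$ and $k_\infty=k^{\mathrm{NTK}}_\infty$. The source condition is $\mathcal G^\dagger=L^r_{K^{\mathrm{NTK}}_\infty}\mathcal G^{\mathrm{src}}$, and the discrepancy is $\Delta$. This bounds $\mathbb E\|\mathcal G_{t+1}-P_2^{d_2}\mathcal G^\dagger\|^2$ by $E_{\mathrm{enc}}+C_{\mathrm{sgd}}t^{-\theta}$; I would reuse the intermediate bound from the proof that controls the distance to $P_2^{d_2}\mathcal G^\dagger$ rather than to $\mathcal G^\dagger$. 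Combining the three steps with $(a+b)^2\le2a^2+2b^2$ gives the claimed bound.
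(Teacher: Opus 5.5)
Your architecture is the paper's. You decouple into $d_2$ scalar subnetworks sharing the empirical NTK $k^M$, compare each with its kernel-SGD counterpart using $B_j:=\operatorname*{ess\,sup}_{v\sim\rho_v}|(\mathcal E_2^{d_2}v)_j|$ in place of $B$, aggregate via Assumption~\ref{ass:output-encoding} and the uniform bound on $\|\mathcal D_2^{d_2}\|$, and conclude with Theorem~\ref{Thm1} or~\ref{Thm2}. Your orthogonal split is equivalent to the paper's use of the output term already contained in those theorems.

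The gap is at the step you yourself flag as the main obstacle: nothing in the proposal actually produces a drift bound proportional to $B_j$.
\begin{itemize}
\item The rescaling device is invalid. SGD run on the targets $y^{(j)}/B_j$ generates a different parameter trajectory, not a rescaled copy of $\Theta^{(j)}_t$, because $f_\Theta$ is nonlinear in $\Theta$ and $\Theta_1$ is not rescaled. Only the kernel-SGD iterate $f^{(j)}_t$ is linear in the targets. An exact rescaling would also give a squared error of order $B_j^2\lambda^{-6}/M$, not the $B_j^4\lambda^{-6}/M$ you assert, which already signals it is not the mechanism.
\item Linearity of the \emph{linearized} dynamics in the targets says nothing directly about the network trajectory.
\item Simply substituting $B_j$ for $B$ in the scalar results does not suffice either. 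Proposition~\ref{prop16} uses radius $C_2\lambda^{-1}$ with $C_2=\max\{1,2B\sqrt{1+R^2}C_\sigma\}$. For small $B_j$ you therefore only get drift $\lesssim\lambda^{-1}$ and a per-component comparison error $\lesssim\lambda^{-3}/\sqrt M$. The squares of these sum to order $d_2\lambda^{-6}/M$, which is not uniform in $d_2$.
\end{itemize}

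The missing step, which is what the paper does, is to redo the induction of Proposition~\ref{prop16} with the radius $cB_j\lambda^{-1}$, where $c:=2\sqrt{1+R^2}C_\sigma$, dropping the $\max\{1,\cdot\}$. Under this hypothesis the data term contributes $cB_j/(2\lambda)$. The Taylor-remainder term and the gradient-drift term are quadratic in the drift; for the latter use $\|f_{\Theta_i^{(j)}}\|_\infty+B_j\lesssim B_j\lambda^{-1}$, valid since $\lambda\le1$. This gives
\[
\|\Theta^{(j)}_{k+1}-\Theta^{(j)}_1\|_2\le\frac{cB_j}{2\lambda}+C\,\frac{B_j^2\lambda^{-3}}{\sqrt M}\le\frac{cB_j}{\lambda},
\]
with $C$ depending only on $R$ and $C_\sigma$. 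The second inequality holds once $M\gtrsim(B_j^2+1)\lambda^{-4}$, which is uniform in $j$ because $\sup_jB_j<\infty$. The case $B_j=0$ is trivial, since the iterates stay at $\Theta^{(j)}_1$.

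Rerunning Propositions~\ref{prop15} and~\ref{prop17} with this drift bound gives $\sup_{\|x\|_2\le R}|f_{\Theta^{(j)}_{t+1}}(x)-f^{(j)}_{t+1}(x)|\lesssim B_j^2\lambda^{-3}/\sqrt M$. Then $\sum_jB_j^4\le(\sup_jB_j^2)\sum_jB_j^2$ yields the $d_2$-uniform finite-width term. With this replacement, the rest of your argument goes through.
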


Under the conditions of Theorem~\ref{thm:NTK-encoder-decoder}, if the input and output encoding errors decay algebraically with the encoder dimensions, then both the neural network width (and hence the number of parameters) and the number of training samples required to achieve a target accuracy $\epsilon$ grow at most polynomially in $\epsilon^{-1}$.




\paragraph{Toward weaker output-side assumptions.}
A possible way to weaken Assumption~\ref{ass:output-encoding} is to use a fully
connected multi-output network instead of separate scalar subnetworks. 
For example, let
\[
f_{\Theta}(x)
=
\frac{1}{\sqrt{M}}\,A\,\sigma(B^\top x+\gamma c),
\qquad x\in\mathbb R^{d_1},
\]
where \(A=(a_1,\dots,a_M)\in\mathbb R^{d_2\times M}\) and
\(B=(b_1,\dots,b_M)\in\mathbb R^{d_1\times M}\). Here \(c\in\mathbb R^M\) is the hidden-layer bias vector,
and all entries of \(\Theta=(A,B,c)\) are trainable. Under a symmetric initialization with \(c=0\), 
\(b_1,\dots,b_{M/2}\stackrel{\mathrm{i.i.d.}}{\sim}\mathcal N(0,I_{d_1})\),
\(b_{r+M/2}=b_r\), and i.i.d. Rademacher vectors
\(a_1,\dots,a_{M/2}\in\mathbb R^{d_2}\) satisfying
\(a_{r+M/2}=-a_r\), the network output is again initialized at zero. 

In this setting, the population matrix-valued NTK on the encoded input space is
\[
\widetilde K^{\mathrm{NTK}}(x,x')
=
k^{\mathrm{NTK}}(x,x')\,I_{d_2}.
\]
At finite width, however, the empirical NTK \(\widetilde K\) is generally not
diagonal, since the output components share trainable hidden parameters. Although
\(\mathbb E[\widetilde K]=\widetilde K^{\mathrm{NTK}}\) over the initialization,
the corresponding SGD dynamics are not governed by a diagonal kernel of the form
\(k(\cdot,\cdot)I\). A complete treatment therefore requires extending the
present kernel-SGD analysis from diagonal matrix-valued kernels to general
matrix-valued kernels.

For encoder--decoder pairs satisfying
\(\mathcal D_2^{d_2}=(\mathcal E_2^{d_2})^*\),
the empirical matrix-valued kernel \(\widetilde K\)
can be lifted through the encoder--decoder maps, yielding the operator-valued kernel
\[
(u,u')\longmapsto
\mathcal D_2^{d_2}\,
\widetilde K\bigl(\mathcal E_1^{d_1}u,\mathcal E_1^{d_1}u'\bigr)\,
\mathcal E_2^{d_2}.
\]
This formulation treats the encoded output as a vector and offers a route to
replacing the separate coordinatewise control in
Assumption~\ref{ass:output-encoding} by a direct bound on
\(\|\mathcal E_2^{d_2}v\|_2\). The proof would follow the same general strategy
developed in this paper, extending the convergence estimates to general
matrix-valued empirical NTKs. We leave the detailed analysis of this extension
for future work.

\section{Proofs for Section~\ref{section 2.1}} \label{Sec 7}
Throughout this section, the identity operator $I$ denotes the identity on the
ambient space specified by the context. For $T\in\bn$, 
we consider the error $\mathbb{E}_{z^T}\bigl[\|\mathcal{G}_{T+1}-\mathcal{G}^{\dagger}\|^2_{\rho_u}\bigr]$ and begin with a preliminary decomposition:
\begin{equation} \label{first step}
   \be_{z^T}\l[\l\|\mg_{T+1}-\mg^{\dagger}\r\|^2_{\rho_u}\r]
   =\be_{z^T}\l[\l\|\mg_{T+1}-{P_2^{d_2}}\circ\mathcal{G}^{\dagger}\r\|^2_{\rho_u}\r]+
\l\|{P_2^{d_2}}\circ\mathcal{G}^{\dagger}-\mathcal{G}^{\dagger}\r\|^2_{\rho_u}.
\end{equation}
This identity holds since, for any $t\geq 0$, the range of $\mathcal{G}^{}_t$ is contained in the image of the orthogonal projection ${P_2^{d_2}}$. We therefore restrict attention to 
$\mathbb{E}_{z^T}\bigl[\|\mathcal{G}^{}_{T+1}-{P_2^{d_2}}\circ\mathcal{G}^{\dagger}\|^2_{\rho_u}\bigr]$. 

We introduce two regularized estimators of the projected target ${P_2^{d_2}}\circ\mathcal{G}^{\dagger}$, corresponding to the finite-resolution operator kernel and the projected limiting kernel, respectively:
\begin{equation} 
        \mathcal{G}^{}_\lambda:=\mathop{\arg\min}_{\mathcal{G}\in\H_{{d_1,d_2}}}\l\{\mathbb{E}\l[\l\|\mathcal{G}(u)-{P_2^{d_2}}v\r\|^2_{\mv}\r]+\lambda\|\mathcal{G}\|_{K_{d_1,d_2}}^2\r\},
\end{equation}
and
\begin{equation} 
        \mathcal{G}'_\lambda:=\mathop{\arg\min}_{\mathcal{G}\in\H_{{\infty,d_2}}}\l\{\mathbb{E}\l[\l\|\mathcal{G}(u)-{P_2^{d_2}}v\r\|^2_{\mv}\r]+\lambda\l\|\mathcal{G}\r\|_{{K_{\infty,d_2}}}^2\r\}.
\end{equation}
A direct computation yields
\begin{equation} \label{lambda1}
    \begin{aligned}
        \mathcal{G}^{}_\lambda&=\l(L_{{d_1,d_2}}+\lambda I\r)^{-1}L_{{d_1,d_2}}\mg^{\dagger} \\
        &=\l(\l(\l(L_{d_1}+\lambda I\r)^{-1}L_{d_1}\r)\otimes {P_2^{d_2}}\r)\mg^{\dagger},
    \end{aligned}
\end{equation}
and
\begin{equation} \label{lambda2}
    \begin{aligned}
        \mathcal{G}'_\lambda&=\l(L_{{\infty,d_2}}+\lambda I\r)^{-1}L_{{\infty,d_2}}\mg^{\dagger}
        \\
        &=\l(\l(\l(L_{\infty}+\lambda I\r)^{-1}L_{\infty}\r)\otimes {P_2^{d_2}}\r)\mg^{\dagger}.
    \end{aligned}
\end{equation}
The following decomposition, based on the intermediate regularized estimators defined above, plays a central role:
\begin{equation} \label{second step}
    \begin{aligned}
        \l\|\mg^{}_{T+1}-{P_2^{d_2}}\circ\mathcal{G}^{\dagger}\r\|^2_{\rho_u}\leq
    3\l\|\mg^{}_{T+1}-\mg^{}_{\lambda}\r\|^2_{\rho_u}
    +3\l\|\mg^{}_{\lambda}-\mg'_{\lambda}\r\|^2_{\rho_u}
    +3\l\|\mg'_{\lambda}-{P_2^{d_2}}\circ\mathcal{G}^{\dagger}\r\|_{\rho_u}^2.
    \end{aligned}
\end{equation}
Each of the four terms appearing in the decompositions \eqref{first step} and \eqref{second step} corresponds to a different contribution to the overall error:
\begin{itemize}
    \item $\l\|\mg^{}_{T+1}-\mg^{}_{\lambda}\r\|^2_{\rho_u}$: the deviation of the finite-step SGD iterate from its corresponding regularized estimator, measured in the $L^2(\mathcal{U},\rho_u;\mv)$ norm; we refer to this as the \emph{finite-iteration  error};
    \item $\l\|\mg^{}_{\lambda}-\mg'_{\lambda}\r\|^2_{\rho_u}$: the discrepancy induced by replacing the projected limiting kernel $K_{\infty,d_2}$ with the finite-resolution operator kernel $K_{d_1,d_2}$, referred to as the \emph{kernel approximation error};
    \item $\l\|\mg'_{\lambda}-{P_2^{d_2}}\circ\mathcal{G}^{\dagger}\r\|_{\rho_u}^2$: the error incurred when approximating the projected target within $\mathcal{H}_{\infty,d_2}$, referred to as the \emph{approximation error};
    \item $\l\|{P_2^{d_2}}\circ\mathcal{G}^{\dagger}-\mathcal{G}^{\dagger}\r\|^2_{\rho_u}$: the discrepancy arising solely from the output projection, referred to as the \emph{output-encoding error}.

\end{itemize}

In the analysis below, we derive bounds for the first three terms.
The contribution due to the output projection is handled separately.

\subsection{Estimation of the Finite-Iteration  Error}
We begin by deriving an explicit representation of the deviation $\mathcal{G}^{}_{T+1}-\mathcal{G}^{}_{\lambda}$ by unrolling the SGD recursion.

\begin{lemma} \label{lemma temp}
    For any $T\geq1$, the following identity holds:
    \begin{equation} \label{eq:one-step}
        \mg^{}_{T+1}-\mg^{}_{\lambda}=-\prod_{t=1}^T\l(I-\eta_t\l(\lambda I+L_{{d_1,d_2}}\r)\r)\mg^{}_{\lambda}+\sum_{t=1}^T\eta_t\prod_{j=t+1}^T\l(I-\eta_j\l(\lambda I+L_{{d_1,d_2}}\r)\r)\W_t,
    \end{equation}
    where 
    $\mathcal{W}_t:=L_{{d_1,d_2}}\l(\mathcal{G}^{}_t-{P_2^{d_2}}\circ\mg^{\dagger}\r)-k_{d_1}(\cdot, u_t){P_2^{d_2}}\l(\mathcal{G}^{}_t(u_t)-{P_2^{d_2}}(v_t)\r)$ satisfies $\be_{z_t\sim\rho}\l[\mathcal{W}_t\r]=0$.
\end{lemma}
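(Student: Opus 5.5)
The plan is to rewrite the SGD recursion \eqref{SGD1} as a perturbed deterministic linear recursion around $\mg_\lambda$ and then unroll it. Write $\mathcal A:=\lambda I+L_{d_1,d_2}$. By the second form of \eqref{SGD1},
\[
\mg_{t+1}=\mg_t-\eta_t\lambda\mg_t-\eta_t\,k_{d_1}(\cdot,u_t)P_2^{d_2}\bigl(\mg_t(u_t)-P_2^{d_2}v_t\bigr),
\]
where $P_2^{d_2}v_t$ and $v_t$ can be interchanged inside because $P_2^{d_2}$ is idempotent. Adding and subtracting $\eta_tL_{d_1,d_2}(\mg_t-P_2^{d_2}\circ\mg^\dagger)$ gives, by the definition of $\W_t$,
\[
\mg_{t+1}=\mg_t-\eta_t\lambda\mg_t-\eta_tL_{d_1,d_2}\bigl(\mg_t-P_2^{d_2}\circ\mg^\dagger\bigr)+\eta_t\W_t .
\]

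Next I use \eqref{lambda1}. Since $L_{d_1,d_2}=L_{d_1}\otimes P_2^{d_2}$ and $(I\otimes P_2^{d_2})L_{d_1,d_2}=L_{d_1,d_2}$, we have $L_{d_1,d_2}(P_2^{d_2}\circ\mg^\dagger)=L_{d_1,d_2}\mg^\dagger=\mathcal A\mg_\lambda$. Therefore $\lambda\mg_t+L_{d_1,d_2}(\mg_t-P_2^{d_2}\circ\mg^\dagger)=\mathcal A(\mg_t-\mg_\lambda)$. Subtracting $\mg_\lambda$ from both sides yields the one-step identity
\[
\mg_{t+1}-\mg_\lambda=(I-\eta_t\mathcal A)(\mg_t-\mg_\lambda)+\eta_t\W_t .
\]
Induction on $T$, with $\mg_1=0$ so that $\mg_1-\mg_\lambda=-\mg_\lambda$, gives \eqref{eq:one-step}. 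The products are unambiguous because all factors are polynomials in $L_{d_1,d_2}$ and hence commute.

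For the mean-zero claim, note that $\mg_t$ depends only on $z^{t-1}$ and is therefore independent of $z_t$. Conditional on $z^{t-1}$, I compute
\[
\be_{z_t}\bigl[k_{d_1}(\cdot,u_t)P_2^{d_2}(\mg_t(u_t)-v_t)\bigr]=\int k_{d_1}(\cdot,u)P_2^{d_2}\bigl(\mg_t(u)-\mg^\dagger(u)\bigr)\,d\rho_u(u),
\]
using $\be[v\mid u]=\mg^\dagger(u)$. This equals $L_{d_1,d_2}(\mg_t-\mg^\dagger)=L_{d_1,d_2}(\mg_t-P_2^{d_2}\circ\mg^\dagger)$, again by the tensor identity, so $\be_{z_t}[\W_t]=0$.

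The only delicate point is justifying the interchange of Bochner expectation and the integral operator. Here integrability holds because $k_{d_1}$ is bounded by $\kappa^2$ and $\mg_t\in\H_{K_{d_1,d_2}}$ is bounded, while $v$ is square integrable. The rest is algebra.
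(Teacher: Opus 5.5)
Your proposal is correct and follows the paper's proof essentially step for step. You derive the one-step identity $\mathcal{G}_{t+1}-\mathcal{G}_\lambda=(I-\eta_t(\lambda I+L_{d_1,d_2}))(\mathcal{G}_t-\mathcal{G}_\lambda)+\eta_t\mathcal{W}_t$ from $(L_{d_1,d_2}+\lambda I)\mathcal{G}_\lambda=L_{d_1,d_2}\mathcal{G}^\dagger=L_{d_1,d_2}(P_2^{d_2}\circ\mathcal{G}^\dagger)$ and unroll it from $\mathcal{G}_1=0$, and your conditional-expectation computation spells out what the paper states as ``by construction.'' One small slip: the identity you actually use is $L_{d_1,d_2}(I\otimes P_2^{d_2})=L_{d_1,d_2}$, not the left-sided version you wrote, but both hold since $(P_2^{d_2})^2=P_2^{d_2}$.
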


\begin{proof}
    Using the update rule \eqref{SGD1} together with the identity $\l(L_{{d_1,d_2}}+\lambda I\r)\mathcal{G}^{}_\lambda=L_{{d_1,d_2}}\mg^{\dagger}$ from \eqref{lambda1}, we obtain, for any $t\geq 1$,
    \begin{equation*}
        \begin{aligned}
            \mg^{}_{t+1}-\mg^{}_{\lambda}&=
            \mathcal{G}^{}_t-\eta_t\left(k_{d_1}(\cdot, u_t){P_2^{d_2}}(\mathcal{G}^{}_t(u_t)-v_t)+\lambda \mathcal{G}^{}_t\right)-\mg^{}_{\lambda}
            \\&=\l(I-\eta_t\l(\lambda I+L_{{d_1,d_2}}\r)\r)\l(\mathcal{G}^{}_t-\mg^{}_{\lambda}\r)+\eta_t\mathcal{W}_t,
        \end{aligned}
    \end{equation*}
    where
    \begin{equation*}
        \begin{aligned}
            \mathcal{W}_t&=L_{{d_1,d_2}}\l(\mathcal{G}^{}_t-\mg^{\dagger}\r)-k_{d_1}(\cdot, u_t){P_2^{d_2}}(\mathcal{G}^{}_t(u_t)-v_t)
            \\&=L_{{{d_1,d_2}}}\l(\mathcal{G}^{}_t-{P_2^{d_2}}\circ\mg^{\dagger}\r)-k_{d_1}(\cdot, u_t){P_2^{d_2}}\l(\mathcal{G}^{}_t(u_t)-{P_2^{d_2}}(v_t)\r).
        \end{aligned}
    \end{equation*}
    By construction, $\mathcal{W}_t$ has zero mean under $z_t\sim\rho$. Here we have used the identity $L_{{d_1,d_2}}=L_{{d_1}}\otimes {P_2^{d_2}}$, which implies $L_{{d_1,d_2}}\mg^{\dagger}=L_{{d_1,d_2}}\l({P_2^{d_2}}\circ\mg^{\dagger}\r)$.
    Iterating the above recurrence from $t=1$ to $t=T$ yields \eqref{eq:one-step}.
\end{proof}

The following proposition builds on Lemma~\ref{lemma temp} to decompose the
finite-iteration  error into a contraction term arising from the deterministic part
of the iteration and a noise-induced term reflecting the accumulation of sampling fluctuations. Both terms will be controlled using the spectral
properties of $L_{d_1,d_2}$ and the step-size sequence
$\{\eta_t\}_{t\in\mathbb{N}_T}$.

\begin{proposition}[Decomposition of the finite-iteration  error] \label{prop3}
Let $T\geq1$, and let $\mathcal{G}^{}_{t+1}$ be generated by \eqref{SGD1} with step sizes $\eta_t>0$ and regularization parameter $\lambda>0$ for all $t\in\bn_T$. Then 
    \begin{equation*}
    \begin{aligned}
        \be_{z^T}\l[\l\|\mg^{}_{T+1}-\mg^{}_{\lambda}\r\|^2_{\rho_u}\r]\leq\T_1+\T_2,
    \end{aligned}
\end{equation*}
where 
\[
\T_1:=\l\|\prod_{t=1}^T\l(I-\eta_t\l(\lambda I+L_{{d_1,d_2}}\r)\r)\mg^{}_{\lambda}\r\|_{\rho_u}^2,
\]
and
\[
\T_2:=2\sum_{t=1}^T\eta_t^2\kappa^2\l(\be_{z^{t-1}}\l[\l\|\mathcal{G}^{}_t-{P_2^{d_2}}\circ\mathcal{G}^{\dagger}\r\|_{\rho_u}^2\r]+\sigma^2\r)\l\|L_{{d_1,d_2}}^{1/2}\prod_{j=t+1}^T\l(I-\eta_j\l(\lambda I+L_{d_1,d_2}\r)\r)\r\|^2.
\]
\end{proposition}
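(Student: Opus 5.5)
We start from the identity of Lemma~\ref{lemma temp}. Write $\Pi_{t+1}^T:=\prod_{j=t+1}^T\bigl(I-\eta_j(\lambda I+L_{d_1,d_2})\bigr)$, so that
\[
\mathcal G_{T+1}-\mathcal G_\lambda=-\Pi_1^T\mathcal G_\lambda+\sum_{t=1}^T\eta_t\Pi_{t+1}^T\mathcal W_t .
\]
The first term is deterministic. Each $\mathcal W_t$ depends on $z^t$, and $\mathbb E_{z_t}[\mathcal W_t\mid z^{t-1}]=0$ because $\mathcal G_t$ is $z^{t-1}$-measurable.

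Expanding the squared $L^2(\mathcal U,\rho_u;\mathcal V)$ norm and taking $\mathbb E_{z^T}$, the cross term between the deterministic part and any $\mathcal W_t$ vanishes. The cross terms $\langle\Pi_{t+1}^T\mathcal W_t,\Pi_{s+1}^T\mathcal W_s\rangle$ with $s<t$ also vanish, by conditioning on $z^{t-1}$ (martingale difference structure). Hence
\[
\mathbb E_{z^T}\|\mathcal G_{T+1}-\mathcal G_\lambda\|_{\rho_u}^2=\mathcal T_1+\sum_{t=1}^T\eta_t^2\,\mathbb E_{z^t}\bigl\|\Pi_{t+1}^T\mathcal W_t\bigr\|_{\rho_u}^2 .
\]

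To bound each summand, we use that the variance is at most the second moment. Write $\xi_t:=k_{d_1}(\cdot,u_t)P_2^{d_2}\bigl(\mathcal G_t(u_t)-P_2^{d_2}v_t\bigr)$; its conditional mean is $L_{d_1,d_2}(\mathcal G_t-P_2^{d_2}\circ\mathcal G^\dagger)$. Then
\[
\mathbb E_{z_t}\|\Pi_{t+1}^T\mathcal W_t\|_{\rho_u}^2\le\mathbb E_{z_t}\|\Pi_{t+1}^T\xi_t\|_{\rho_u}^2 .
\]
The key step is to rewrite $\xi_t=\mathcal S_{u_t}w_t$, where $w_t:=P_2^{d_2}(\mathcal G_t(u_t)-v_t)\in\mathcal V$ and $\mathcal S_u:\mathcal V\to L^2$ is the map $w\mapsto K_{d_1,d_2}(\cdot,u)w$.

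We then use $\|\Pi\mathcal S_u w\|_{\rho_u}\le\|\Pi\mathcal S_u\|\,\|w\|_{\mathcal V}$ and bound $\|\Pi\mathcal S_u\|$ through $L^{1/2}$. Because $\Pi$ is a function of $L_{d_1,d_2}$, we have $\Pi\mathcal S_u=\Pi L^{1/2}\,(L^{-1/2}\mathcal S_u)$ on the range. The factor $L^{-1/2}\mathcal S_u$ has norm at most $\kappa$: it is the RKHS point evaluation adjoint, since $\|\mathcal S_u w\|_{K}\le\kappa\|w\|$ and the inclusion $\mathcal H_K\hookrightarrow L^2$ equals $L^{1/2}$ composed with an isometry. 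This gives
\[
\|\Pi_{t+1}^T\xi_t\|_{\rho_u}\le\kappa\,\bigl\|L_{d_1,d_2}^{1/2}\Pi_{t+1}^T\bigr\|\,\|w_t\|_{\mathcal V}.
\]
Making this factorization rigorous, which is where the $\kappa^2$ and the $L^{1/2}$ both come from, is the main technical point. I would carry it out via the sampling operator $S_K:\mathcal H_K\to L^2$ with $S_KS_K^*=L$ and polar decomposition.

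Finally, we write $w_t=P_2^{d_2}(\mathcal G_t(u_t)-\mathcal G^\dagger(u_t))-P_2^{d_2}\epsilon_t$ and use $(a+b)^2\le 2a^2+2b^2$. The expectation over $u_t$ of the first part is at most $\|\mathcal G_t-P_2^{d_2}\circ\mathcal G^\dagger\|_{\rho_u}^2$, using that $\mathcal G_t$ has range in $\operatorname{ran}P_2^{d_2}$. The second part is at most $\sigma^2$. Taking $\mathbb E_{z^{t-1}}$ yields the factor $2\kappa^2\bigl(\mathbb E\|\mathcal G_t-P_2^{d_2}\circ\mathcal G^\dagger\|^2+\sigma^2\bigr)$, which gives exactly $\mathcal T_2$.
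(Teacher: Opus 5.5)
Your proposal is correct and follows the paper's proof. Both arguments unroll the recursion via Lemma~\ref{lemma temp}, kill the cross terms by the zero-mean (martingale-difference) property of $\mathcal W_t$, bound the variance by the second moment, pass to the RKHS norm through $L_{d_1,d_2}^{1/2}$ using $\|k_{d_1}(\cdot,u)P_2^{d_2}w\|_{K_{d_1,d_2}}\le\kappa\|w\|_{\mathcal V}$, and split off the noise with $(a+b)^2\le 2a^2+2b^2$. The only cosmetic difference is that you obtain the $L^{1/2}$ factor from the polar decomposition of the inclusion $\mathcal H_{K}\hookrightarrow L^2$, whereas the paper uses the equivalent identity $\|h\|_{\rho_u}=\|L_{d_1,d_2}^{1/2}h\|_{K_{d_1,d_2}}$ for $h\in\mathcal H_{d_1,d_2}$.
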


\begin{proof}
    By Lemma \ref{lemma temp}, we have
    \begin{equation*}
    \begin{aligned}
        \mathbb{E}_{z^T}\l\|\mg^{}_{T+1}-\mg^{}_{\lambda}\r\|^2_{\rho_u}\leq&\l\|\prod_{t=1}^T\l(I-\eta_t\l(\lambda I+L_{{d_1,d_2}}\r)\r)\mg^{}_{\lambda}\r\|_{\rho_u}^2
        \\&+\be_{z^T}\l[\l\|\sum_{t=1}^T\eta_t\prod_{j=t+1}^T\l(I-\eta_j\l(\lambda I+L_{d_1,d_2}\r)\r)\W_t\r\|_{\rho_u}^2\r].
    \end{aligned}
\end{equation*}
Expanding the squared norm, all mixed inner-product terms vanish by the zero-mean property of $\mathcal{W}_t$ together with the independence of $\mathcal{W}_{t'}$ and $z_t$ for $t>t'$. Consequently,
\[
\be_{z^T}\l[\l\|\sum_{t=1}^T\eta_t\prod_{j=t+1}^T\l(I-\eta_j\l(\lambda I+L_{{d_1,d_2}}\r)\r)\W_t\r\|_{\rho_u}^2\r]= \sum_{t=1}^T\eta_t^2\be_{z^t}\l[\l\|\prod_{j=t+1}^T\l(I-\eta_j\l(\lambda I+L_{d_1,d_2}\r)\r)\W_t\r\|_{\rho_u}^2\r].
\]
Writing $\mathcal{W}_t$ in centered form 
\[
\mathcal{W}_t=k_{d_1}(\cdot, u_t){P_2^{d_2}}\l({P_2^{d_2}}(v_t)-\mathcal{G}^{}_t(u_t)\r)-\be_{z_t\sim\rho}\l[k_{d_1}(\cdot, u_t){P_2^{d_2}}\l({P_2^{d_2}}(v_t)-\mathcal{G}^{}_t(u_t)\r)\r],
\]
we obtain
\begin{align*}
    &\be_{z^T}\l[\l\|\sum_{t=1}^T\eta_t\prod_{j=t+1}^T\l(I-\eta_j\l(\lambda I+L_{d_1,d_2}\r)\r)\W_t\r\|_{\rho_u}^2\r] \\&\quad\leq\sum_{t=1}^T\eta_t^2\be_{z^t}\l[\l\|\prod_{j=t+1}^T\l(I-\eta_j\l(\lambda I+L_{d_1,d_2}\r)\r)k_{d_1}(\cdot, u_t){P_2^{d_2}}\l(\mathcal{G}^{}_t(u_t)-{P_2^{d_2}}(v_t)\r)\r\|_{\rho_u}^2\r].
\end{align*}
Using \(\|h\|_{\rho_u}
=\|L_{d_1,d_2}^{1/2}h\|_{K_{d_1,d_2}}\)
for \(h\in\mathcal H_{d_1,d_2}\), we rewrite the right-hand side as
\[
\begin{aligned}
&\be_{z^t}\l[\l\|\prod_{j=t+1}^T\l(I-\eta_j\l(\lambda I+L_{d_1,d_2}\r)\r)k_{d_1}(\cdot, u_t){P_2^{d_2}}\l(\mathcal{G}^{}_t(u_t)-{P_2^{d_2}}(v_t)\r)\r\|_{\rho_u}^2\r]\\
=&
\be_{z^t}\l[\l\|L_{d_1,d_2}^{1/2}\prod_{j=t+1}^T\l(I-\eta_j\l(\lambda I+L_{d_1,d_2}\r)\r)k_{d_1}(\cdot, u_t){P_2^{d_2}}\l(\mathcal{G}^{}_t(u_t)-{P_2^{d_2}}(v_t)\r)\r\|_{K_{d_1,d_2}}^2\r]\\
\le&
2\be_{z^t}\l[\l\|L_{d_1,d_2}^{1/2}\prod_{j=t+1}^T\l(I-\eta_j\l(\lambda I+L_{d_1,d_2}\r)\r)k_{d_1}(\cdot, u_t){P_2^{d_2}}\l(\mathcal{G}^{}_t(u_t)-{P_2^{d_2}}\circ\mathcal{G}^{\dagger}(u_t)\r)\r\|_{K_{d_1,d_2}}^2\r]\\
&+ 2\be_{z^t}\l[\l\|L_{d_1,d_2}^{1/2}\prod_{j=t+1}^T\l(I-\eta_j\l(\lambda I+L_{d_1,d_2}\r)\r)k_{d_1}(\cdot, u_t){P_2^{d_2}}\l(\mathcal{G}^{\dagger}(u_t)-v_t\r)\r\|_{K_{d_1,d_2}}^2\r],
\end{aligned}
\]
where we used $\big(P_2^{d_2}\big)^2={P_2^{d_2}}$.

By the reproducing property of the operator-valued kernel
$K_{d_1,d_2}=k_{d_1}\cdot {P_2^{d_2}}$ and the assumption
$\sup_{u\in\mathrm{supp}(\rho_u)}k_{d_1}(u,u)\leq\kappa^2$, we have
\[
\|k_{d_1}(\cdot,u)\,{P_2^{d_2}}v\|_{K_{d_1,d_2}}^2
=\langle v,\,k_{d_1}(u,u)\,{P_2^{d_2}}v\rangle_{\mathcal{V}}
\le \|k_{d_1}(u,u){P_2^{d_2}}\|\,\|v\|_{\mathcal{V}}^2 \le \kappa^2 \|v\|_{\mathcal{V}}^2.
\]
which implies 
\begin{equation}\label{eq:kappa-bound}
\|k_{d_1}(\cdot,u_t)\,{P_2^{d_2}}\|\le \kappa.
\end{equation}
Combining \eqref{eq:kappa-bound} with $\be_{(u,v)\sim\rho}\l[\l\|\mathcal{G}^{\dagger}(u)-v\r\|_{\mv}^2\r]\leq\sigma^2$ yields the desired bound, and the proof is complete.
\end{proof}

\paragraph{Preliminary bounds for $\mathcal{T}_1$ and $\mathcal{T}_2$.}
We first estimate the deterministic term $\mathcal{T}_1$, which relies on norm bounds
for operator products of the form
$C^{\beta}\prod_{t=l}^{m}(I-\eta_t(C+\lambda I))$.

\begin{lemma}[Lemma 5.2 in \cite{yang2025learning}]  \label{lemma ref1}
		Let $\beta>0$ and $L$ be a positive operator with $\|L\|\leq\kappa^2$. Let $l,m$ be integers satisfying $1\leq l \leq m$. Suppose that $\eta_t(\kappa^2+\lambda)\leq1$ for any $l\leq t \leq m$. Then, the following estimates hold:
        \begin{itemize}
            \item[(1)] $\left\|L^\beta\prod_{t=l}^m\left(I-\eta_t(L+\lambda I)\right)\right\|\leq\exp\left\{-\lambda\sum_{t=l}^{m}\eta_{t}\right\}\frac{2(\kappa^{2\beta}+(\beta/e)^\beta)}{1+\left(\sum_{t=l}^{m}\eta_{t}\right)^\beta}.$
            \item[(2)] $\left\|L^\beta\prod_{t=l}^m\left(I-\eta_t(L+\lambda I)\right)^2\right\|
			\leq\left(\frac{\beta}{2e}\right)^{\beta}\left(\sum_{t=l}^{m}\eta_{t}\right)^{-\beta}\exp\left\{-2\lambda\sum_{t=l}^{m}\eta_{t}\right\}$.
            \item[(3)] $\left\|L^\beta\prod_{t=l}^m\left(I-\eta_t(L+\lambda I)\right)^2\right\|
		\leq\exp\left\{-2\lambda\sum_{t=l}^{m}\eta_{t}\right\}\frac{2(\kappa^{2\beta}+(\beta/(2e))^\beta)}{1+\left(\sum_{t=l}^{m}\eta_{t}\right)^\beta}.$
        \end{itemize}
	\end{lemma}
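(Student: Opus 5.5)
The plan is to work entirely through the spectral calculus of the positive operator $L$. Since $\|L\|\le\kappa^2$ and $\eta_t(\kappa^2+\lambda)\le1$, every factor $I-\eta_t(L+\lambda I)$ is a function of $L$, and all of them commute. Each operator norm in (1)–(3) therefore equals the supremum over $x\in[0,\kappa^2]$ of the corresponding scalar function, for instance
\[
g(x):=x^\beta\prod_{t=l}^m\bigl(1-\eta_t(x+\lambda)\bigr).
\]
The step-size condition gives $0\le 1-\eta_t(x+\lambda)\le1$ on this interval. Combining this with $1-y\le e^{-y}$, we get
\[
0\le\prod_{t=l}^m\bigl(1-\eta_t(x+\lambda)\bigr)\le \exp\Bigl\{-(x+\lambda)\sum_{t=l}^m\eta_t\Bigr\}.
\]
Writing $S:=\sum_{t=l}^m\eta_t$, this reduces everything to bounding $x^\beta e^{-xS}$ (or $x^\beta e^{-2xS}$ for the squared products), multiplied by the common factor $e^{-\lambda S}$ (respectively $e^{-2\lambda S}$).

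For part (2), the squared product is bounded by $e^{-2(x+\lambda)S}$. Elementary calculus shows that $\sup_{x\ge0}x^\beta e^{-2xS}$ is attained at $x=\beta/(2S)$, with value $(\beta/(2e))^\beta S^{-\beta}$. Multiplying by $e^{-2\lambda S}$ gives (2) immediately.

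For parts (1) and (3), we need a bound that stays finite as $S\to0$, so I would combine two estimates. The first is the trivial bound $x^\beta\le\kappa^{2\beta}$, which uses that the product is at most $1$. The second is the calculus bound $(\beta/e)^\beta S^{-\beta}$ for (1), respectively $(\beta/(2e))^\beta S^{-\beta}$ for (3). Their minimum is at most
\[
\frac{2\bigl(\kappa^{2\beta}+c_\beta\bigr)}{1+S^\beta},
\qquad c_\beta\in\bigl\{(\beta/e)^\beta,\ (\beta/(2e))^\beta\bigr\}.
\]
To check this, split into cases. If $S\le1$, then $1+S^\beta\le2$, so the right side is at least $\kappa^{2\beta}$. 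If $S\ge1$, then $1+S^\beta\le 2S^\beta$, so the right side is at least $c_\beta S^{-\beta}$. In both cases the right side dominates one of the two bounds, and hence the minimum.

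The exponential factor $e^{-\lambda S}$ must be kept in both branches. In the first branch I would bound $x^\beta\prod_t(\cdots)\le\kappa^{2\beta}e^{-\lambda S}$, which retains the $\lambda$-part of the exponent rather than bounding the whole product by $1$. The only delicate point is therefore the bookkeeping that pulls $e^{-\lambda S}$ (respectively $e^{-2\lambda S}$) out uniformly before taking the minimum; the rest is routine.
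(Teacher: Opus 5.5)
Your proof is correct: the scalar reduction to $[0,\kappa^2]$, the bound $0\le 1-\eta_t(x+\lambda)\le e^{-\eta_t(x+\lambda)}$, maximizing $x^\beta e^{-xS}$ (respectively $x^\beta e^{-2xS}$), and the case split at $S=1$ together give exactly (1)--(3). One small correction is that the operator norm equals the supremum over $\sigma(L)\subseteq[0,\kappa^2]$, so ``equals the supremum over $[0,\kappa^2]$'' should read ``is at most''; this is harmless, since you only need upper bounds. The paper does not prove this lemma itself but imports it from \cite{yang2025learning}, and your spectral-calculus argument is the standard route to such estimates.
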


Lemma~\ref{lemma ref1} provides the norm estimate that underlies the following bound for $\mathcal{T}_1$. 

\begin{proposition} \label{T1}
    Let $T\geq2$. Suppose that Assumption~\ref{source condition} holds and  $\l(\kappa^2+\lambda\r)\eta_t\leq1$ for all $1\leq t\leq T$. Then
    \begin{equation*}
    \begin{aligned}
        \l\|\prod_{t=1}^T\l(I-\eta_t\l(\lambda I+L_{{d_1,d_2}}\r)\r)\mg^{}_{\lambda}\r\|_{\rho_u}^2\lesssim\;&\Delta^2\cdot\exp\left\{-2\lambda\sum_{t=1}^{T}\eta_{t}\right\}\l(\sum_{t=1}^T\frac{\eta_t}{1+\left(\sum_{i=1}^{t-1}\eta_{i}\right)^r}\r)^2
        \\& + \exp\left\{-2\lambda\sum_{t=1}^{T}\eta_{t}\right\}\l(1+\left(\sum_{t=1}^{T}\eta_{t}\right)^{2r}\r)^{-1}.
    \end{aligned}
\end{equation*}
Moreover, for \(0<r\le1\), writing
\(S_T:=\sum_{t=1}^T\eta_t\), we have
\[
\mathcal T_1\lesssim e^{-2\lambda S_T}
\left[\Delta^{2r}+\frac{1}{1+S_T^{2r}}\right].
\]
\end{proposition}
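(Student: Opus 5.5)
We plan to bound $\mathcal T_1$ by comparing $\mathcal G_\lambda$ with $\mathcal G'_\lambda$. Write $\Pi_l^T(L):=\prod_{t=l}^T(I-\eta_t(\lambda I+L))$. By \eqref{lambda1} and the tensor structure, $\mathcal G_\lambda=((L_{d_1}+\lambda I)^{-1}L_{d_1}\otimes P_2^{d_2})\mathcal G^\dagger$. Since every operator involved has the form $A\otimes P_2^{d_2}$ and $\|P_2^{d_2}\|\le1$, the problem reduces to the scalar space $L^2(\mathcal U,\rho_u)$ acting on $\mathcal G^{\mathrm{src}}$, with $\mathcal G^\dagger=(L_\infty^r\otimes I)\mathcal G^{\mathrm{src}}$.

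\emph{Splitting.} The key observation is that $\Pi_1^T(L)(L+\lambda I)^{-1}L$ is a function of $L$ only. We first write
\[
\Pi_1^T(L_{d_1})\,g_\lambda=\Pi_1^T(L_{d_1})\,(L_{d_1}+\lambda I)^{-1}L_{d_1}L_\infty^r g,
\]
and then compare this with the same expression where $L_{d_1}$ is replaced by $L_\infty$ in the contraction. Rather than comparing resolvents, we use the telescoping identity
\[
\Pi_1^T(L_{d_1})-\Pi_1^T(L_\infty)=\sum_{t=1}^T \Pi_{t+1}^T(L_{d_1})\,\eta_t(L_\infty-L_{d_1})\,\Pi_1^{t-1}(L_\infty),
\]
applied to the vector $h:=(L_{d_1}+\lambda I)^{-1}L_{d_1}L_\infty^r g$. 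Since $(L_{d_1}+\lambda I)^{-1}L_{d_1}$ is a contraction, a cleaner route is to first bound $\|\Pi_1^T(L_{d_1})\mathcal G_\lambda\|\le\|\Pi_1^T(L_{d_1})L_\infty^r g\|$ using commutation and $\|(L+\lambda)^{-1}L\|\le1$. Then we telescope $\Pi_1^T(L_{d_1})L_\infty^r g$ against $\Pi_1^T(L_\infty)L_\infty^r g$.

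\emph{The main term.} By Lemma~\ref{lemma ref1}(1) with $\beta=r$, the term $\|\Pi_1^T(L_\infty)L_\infty^r\|$ is bounded by $e^{-\lambda S_T}(1+S_T^r)^{-1}$, which gives the second term of the claim after squaring.

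\emph{The telescoped sum.} Each summand is bounded by
\[
\|\Pi_{t+1}^T(L_{d_1})\|\,\eta_t\,\Delta\,\|\Pi_1^{t-1}(L_\infty)L_\infty^r\|\le e^{-\lambda\sum_{j>t}\eta_j}\,\eta_t\,\Delta\,e^{-\lambda\sum_{j<t}\eta_j}(1+(\textstyle\sum_{i<t}\eta_i)^r)^{-1}.
\]
Up to the factor $e^{\lambda\eta_t}\le e$, the exponentials combine to $e^{-\lambda S_T}$. Summing over $t$ and squaring gives the first displayed bound, with the constant $\|g\|$ absorbed.

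\emph{The case $0<r\le1$.} For the refinement, we instead split with the moment inequality. Write $L_\infty^r g$ and note that $\|\Pi_1^T(L_{d_1})(L_\infty^r-L_{d_1}^r)g\|\le\|L_\infty^r-L_{d_1}^r\|\,\|g\|\lesssim\Delta^r$; this uses operator monotonicity / Hölder continuity of $x\mapsto x^r$ for $r\le1$, i.e.\ $\|A^r-B^r\|\le\|A-B\|^r$. Here the contraction factor $\|\Pi_1^T(L_{d_1})\|\le e^{-\lambda S_T}$ is retained. The remaining piece $\Pi_1^T(L_{d_1})L_{d_1}^r g$ is controlled directly by Lemma~\ref{lemma ref1}(1), giving $e^{-\lambda S_T}(1+S_T^r)^{-1}$. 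Squaring these two pieces yields $e^{-2\lambda S_T}[\Delta^{2r}+(1+S_T^{2r})^{-1}]$.

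The main obstacle is the noncommutativity of $L_{d_1}$ and $L_\infty$: in the first bound we cannot simply move $(L_{d_1}+\lambda)^{-1}L_{d_1}$ past $L_\infty^r$. To handle this, we will keep $(L_{d_1}+\lambda)^{-1}L_{d_1}$ on the left, where it commutes with $\Pi(L_{d_1})$, and apply the telescoping only to the part involving $\Pi$. If this fails, we will fall back to the $r\le1$ moment-inequality argument, and for $r>1$ we will use $L_\infty^r=L_\infty^{r-1}L_\infty$ together with $\|L_\infty\|\le\kappa^2$.
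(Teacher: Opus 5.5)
Your proposal is correct. For the first bound you follow the paper's proof step for step. You reduce through the tensor structure, commute the contraction $(L_{d_1}+\lambda I)^{-1}L_{d_1}$ past $\Pi_1^T(L_{d_1})$ and drop it, and telescope $\Pi_1^T(L_{d_1})-\Pi_1^T(L_\infty)$. You then apply Lemma~\ref{lemma ref1}(1) to $\Pi_1^{t-1}(L_\infty)L_\infty^r$ and to $\Pi_1^T(L_\infty)L_\infty^r$. Your remark that the exponentials combine up to $e^{\lambda\eta_t}\le e$ is exactly what the paper's $\lesssim$ hides.

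For the refinement $0<r\le1$ you take a genuinely different route. The paper first shows
\[
\|\Pi_1^T(L_{d_1})L_\infty\|\le\|\Pi_1^T(L_{d_1})L_{d_1}\|+\Delta\,\|\Pi_1^T(L_{d_1})\|.
\]
It then interpolates with the elementary moment inequality $\|FH^r\|\le\|F\|^{1-r}\|FH\|^r$, which needs only the spectral theorem for the single operator $H=L_\infty$, and obtains $e^{-\lambda S_T}\bigl[\Delta+(1+S_T)^{-1}\bigr]^r$.

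You instead split $L_\infty^r=L_{d_1}^r+(L_\infty^r-L_{d_1}^r)$ and use the operator H\"older bound $\|A^r-B^r\|\le\|A-B\|^r$ for positive operators. This bound is true but not elementary, and you should state its source rather than just label it ``operator monotonicity''. It rests on the L\"owner--Heinz inequality: from $A\le B+\Delta I$ one gets $A^r\le(B+\Delta I)^r\le B^r+\Delta^r I$, the last step by commuting functional calculus and scalar subadditivity of $t^r$. The symmetric argument and self-adjointness of $A^r-B^r$ then give the norm bound. (The paper itself invokes the case $r=1/2$ via \cite{aleksandrov2010operator} in Section~\ref{Section examples}.)

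What your route buys is brevity: the whole perturbation sits at the level of fractional powers, which makes the $\Delta^{2r}$ exponent transparent. The paper's route avoids operator-monotone machinery entirely. Both give the same final estimate, since $(1+S_T^r)^{-2}\le(1+S_T^{2r})^{-1}$.

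Your closing fallbacks are unnecessary, because the telescoping bound already holds for every $r>0$. The $r>1$ fallback via $L_\infty^r=L_\infty L_\infty^{r-1}$ would in any case only yield the saturated factor $(1+S_T^2)^{-1}$ instead of $(1+S_T^{2r})^{-1}$.
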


\begin{proof}
    By the representation of $\mathcal{G}^{}_\lambda$ in \eqref{lambda1} and Assumption~\ref{source condition}, we obtain
    \begin{equation*}
    \begin{aligned}
        \l\|\prod_{t=1}^T\l(I-\eta_t\l(\lambda I+L_{d_1,d_2}\r)\r)\mg^{}_{\lambda}\r\|_{\rho_u}^2&=\l\|\prod_{t=1}^T\l(I-\eta_t\l(\lambda I+L_{d_1,d_2}\r)\r)\l(L_{d_1,d_2}+\lambda I\r)^{-1}L_{d_1,d_2}\mg^{\dagger}\r\|_{\rho_u}^2
        \\&\leq\l\|\prod_{t=1}^T\l(I-\eta_t\l(\lambda I+L_{d_1,d_2}\r)\r)\l(L_{d_1,d_2}+\lambda I\r)^{-1}L_{d_1,d_2}L_{\infty,\infty}^r\r\|^2\l\|\mathcal{G}^{\mathrm{src}}\r\|_{\rho_u}^2.
    \end{aligned}
\end{equation*}
Since $L_{{d_1,d_2}}=L_{d_1}\otimes {P_2^{d_2}}$, $L_{\infty,\infty}=L_\infty\otimes I$, and $\l\|{P_2^{d_2}}\r\|=1$, there holds
\begin{equation} \label{temp5}
    \begin{aligned}
        \l\|\prod_{t=1}^T\l(I-\eta_t\l(\lambda I+L_{d_1,d_2}\r)\r)\mg^{}_{\lambda}\r\|_{\rho_u}^2&\leq\l\|\mathcal{G}^{\mathrm{src}}\r\|_{\rho_u}^2\l\|\prod_{t=1}^T\l(I-\eta_t\l(\lambda I+L_{d_1}\r)\r)\l(L_{d_1}+\lambda I\r)^{-1}L_{d_1}L_{\infty}^r\r\|^2
        \\&\leq\l\|\mathcal{G}^{\mathrm{src}}\r\|_{\rho_u}^2\l\|\prod_{t=1}^T\l(I-\eta_t\l(\lambda I+L_{d_1}\r)\r)L_{\infty}^r\r\|^2.
    \end{aligned}
\end{equation}
This reduces the problem to a purely input-side operator bound.
To bound the last term, define $f_t(L):=\prod_{i=1}^t \l(I-\eta_i(\lambda I+L)\r)$ for $t\geq 1$, with $f_0(L):=I$. 
By a telescoping argument, we obtain
\begin{equation*}
    \begin{aligned}
        f_T(L_{d_1})-f_T(L_{\infty}) &= \l(I-\eta_T\l(\lambda I+L_{d_1}\r)\r)f_{T-1}(L_{d_1}) - \l(I-\eta_T\l(\lambda I+L_{\infty}\r)\r)f_{T-1}(L_{\infty})
        \\ &= \l(I-\eta_T\l(\lambda I+L_{d_1}\r)\r)\l(f_{T-1}(L_{d_1})-f_{T-1}(L_{\infty})\r) - \eta_T\l(L_{d_1} - L_{\infty}\r)f_{T-1}(L_{\infty})
        \\ &= -\sum_{t=1}^T\eta_t\prod_{i=t+1}^T\l(I-\eta_i\l(\lambda I+L_{d_1}\r)\r)\l(L_{d_1}-L_{\infty}\r)f_{t-1}\l(L_{\infty}\r).
    \end{aligned}
\end{equation*}
Hence,
\begin{equation} \label{temp1}
    \begin{aligned}
        &\l\|\prod_{t=1}^T\l(I-\eta_t\l(\lambda I+L_{d_1}\r)\r)L_{\infty}^r - \prod_{t=1}^T\l(I-\eta_t\l(\lambda I+L_{\infty}\r)\r)L_{\infty}^r\r\| = \l\|\l(f_T(L_{d_1})-f_T(L_{\infty})\r)L_{\infty}^r\r\|
        \\ \leq& \Delta\cdot\sum_{t=1}^T\eta_t\l\|\prod_{i=t+1}^T\l(I-\eta_i\l(\lambda I+L_{d_1}\r)\r)\r\|\l\|\prod_{i=1}^{t-1}\l(I-\eta_i\l(\lambda I+L_{\infty}\r)\r)L_{\infty}^r\r\|.
    \end{aligned}
\end{equation}
By functional calculus for positive self-adjoint operators and the assumption $(\kappa^2+\lambda)\eta_i\leq 1$, one has
\begin{equation} \label{temp2}
    \begin{aligned}
        \l\|\prod_{i=t+1}^T\l(I-\eta_i\l(\lambda I+L_{d_1}\r)\r)\r\|\leq\exp\l\{{-\lambda\sum_{i=t+1}^T}\eta_i\r\}.
    \end{aligned}
\end{equation}
Applying \eqref{temp2} and Lemma~\ref{lemma ref1}(1) with $\beta=r$, $L=L_\infty$ to \eqref{temp1} yields
\begin{equation}\label{temp3}
    \begin{aligned}
        &\l\|\prod_{t=1}^T\l(I-\eta_t\l(\lambda I+L_{d_1}\r)\r)L_{\infty}^r - \prod_{t=1}^T\l(I-\eta_t\l(\lambda I+L_{\infty}\r)\r)L_{\infty}^r\r\|
        \\ \lesssim&\ \Delta\cdot\sum_{t=1}^T\eta_t\exp\left\{-\lambda\sum_{i=t+1}^{T}\eta_{i}\right\}\frac{\exp\left\{-\lambda\sum_{i=1}^{t-1}\eta_{i}\right\}}{1+\left(\sum_{i=1}^{t-1}\eta_{i}\right)^r}
        \\ \lesssim&\ \Delta\cdot\exp\left\{-\lambda\sum_{t=1}^{T}\eta_{t}\right\}\sum_{t=1}^T\frac{\eta_t}{1+\left(\sum_{i=1}^{t-1}\eta_{i}\right)^r}.
    \end{aligned}
\end{equation}

Finally, applying Lemma~\ref{lemma ref1}(1) with $\beta=r$, $L=L_\infty$ directly yields
\begin{equation} \label{temp4}
    \begin{aligned}
        \l\|\prod_{t=1}^T\l(I-\eta_t\l(\lambda I+L_{\infty}\r)\r)L_{\infty}^r\r\|\lesssim\frac{\exp\left\{-\lambda\sum_{t=1}^{T}\eta_{t}\right\}}{1+\left(\sum_{t=1}^{T}\eta_{t}\right)^r}
    \end{aligned}
\end{equation}

Combining \eqref{temp3}, \eqref{temp4}, and \eqref{temp5} establishes the first bound.

To prove the second bound, first observe that
\[
\begin{aligned}
\|f_T(L_{d_1})L_\infty\|
&=\|f_T(L_{d_1})L_{d_1}
+f_T(L_{d_1})(L_\infty-L_{d_1})\|\\
&\le \|f_T(L_{d_1})L_{d_1}\|
+\Delta\|f_T(L_{d_1})\|\\
&\lesssim
\exp\left\{-\lambda\sum_{t=1}^{T}\eta_t\right\}
\left[\frac{1}{1+\sum_{t=1}^{T}\eta_t}+\Delta\right].
\end{aligned}
\]
Here the last inequality follows from
Lemma~\ref{lemma ref1}(1) with $\beta=1$ and $L=L_{d_1}$,
together with
$\|f_T(L_{d_1})\|\le
\exp\{-\lambda\sum_{t=1}^{T}\eta_t\}$,
which follows by the same functional calculus argument as in
\eqref{temp2}. 

For a bounded operator \(F\), a bounded positive self-adjoint
operator \(H\), and \(0<r<1\), we have
\begin{equation}\label{eq:operator-moment}
\|FH^r\|\le \|F\|^{1-r}\|FH\|^r.
\end{equation}
Indeed, apply the spectral moment inequality
\(\|H^r x\|\le\|x\|^{1-r}\|Hx\|^r\)
to \(x=F^*y\), and take the supremum over \(\|y\|=1\).
Applying \eqref{eq:operator-moment} with
\(F=f_T(L_{d_1})\) and \(H=L_\infty\) gives
\[
\|f_T(L_{d_1})L_\infty^r\|
\lesssim e^{-\lambda S_T}
\left[\Delta+\frac1{1+S_T}\right]^r.
\]
Squaring and using \eqref{temp5} proves the second bound. 
The case \(r=1\) follows directly from the preceding estimate.
\end{proof}

For the term $\mathcal{T}_2$ in Proposition~\ref{prop3}, we apply
Lemma~\ref{lemma ref1}(3) with $\beta=1$ and $L=L_{d_1,d_2}$,
under the uniform second-moment bound stated in Proposition~\ref{T2},
where the constant $C>0$ will be specified later in the proof of the main theorems.
\begin{proposition} \label{T2}
    Assume that $(\kappa^2+\lambda)\eta_t\leq1$
for every $1\leq t\leq T$. Suppose that there exists a constant $C>0$ such that 
    \[
    \be_{z^{t-1}}\l[\l\|\mathcal{G}^{}_t-{P_2^{d_2}}\circ\mathcal{G}^{\dagger}\r\|_{\rho_u}^2\r] \leq C, \quad\forall t\in\bn_{T}.
    \]
    Then 
    \[
    \T_2\lesssim\l(C+1\r)\sum_{t=1}^T\eta_t^2\frac{\exp\left\{-2\lambda\sum_{j=t+1}^{T}\eta_{j}\right\}}{1+\sum_{j=t+1}^{T}\eta_{j}}.
    \]    
\end{proposition}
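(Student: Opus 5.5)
The bound will follow by inserting the uniform second-moment hypothesis into the definition of $\mathcal T_2$ in Proposition~\ref{prop3}, which reduces the task to a purely deterministic operator-norm estimate. The assumption gives
\[
\kappa^2\Bigl(\mathbb E_{z^{t-1}}\bigl[\|\mathcal G_t-P_2^{d_2}\circ\mathcal G^\dagger\|_{\rho_u}^2\bigr]+\sigma^2\Bigr)\le \kappa^2(C+\sigma^2)\lesssim C+1,
\]
uniformly in $t$, with $\kappa$ and $\sigma$ absorbed into the implicit constant. Hence
\[
\mathcal T_2\lesssim (C+1)\sum_{t=1}^T\eta_t^2\Bigl\|L_{d_1,d_2}^{1/2}\prod_{j=t+1}^T\bigl(I-\eta_j(\lambda I+L_{d_1,d_2})\bigr)\Bigr\|^2,
\]
and it remains to bound each squared operator norm.

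The key step is to rewrite the squared norm as the norm of a single operator. The factors $I-\eta_j(\lambda I+L_{d_1,d_2})$ are self-adjoint and commute with each other and with $L_{d_1,d_2}^{1/2}$. Setting $A:=L_{d_1,d_2}^{1/2}\prod_{j=t+1}^T(I-\eta_j(\lambda I+L_{d_1,d_2}))$, we have $\|A\|^2=\|A^*A\|$, and therefore
\[
\|A\|^2=\Bigl\|L_{d_1,d_2}\prod_{j=t+1}^T\bigl(I-\eta_j(\lambda I+L_{d_1,d_2})\bigr)^2\Bigr\|.
\]
This is precisely the form treated in Lemma~\ref{lemma ref1}(3) with $\beta=1$, $L=L_{d_1,d_2}$, $l=t+1$ and $m=T$.

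Applying the lemma requires checking two hypotheses. First, $\|L_{d_1,d_2}\|\le\kappa^2$: this follows from $L_{d_1,d_2}=L_{d_1}\otimes P_2^{d_2}$, since $\|L_{d_1,d_2}\|=\|L_{d_1}\|\,\|P_2^{d_2}\|\le\operatorname*{ess\,sup}_{u\sim\rho_u}k_{d_1}(u,u)\le\kappa^2$. Second, the step-size condition $(\kappa^2+\lambda)\eta_j\le1$ is assumed in the statement. The lemma then gives
\[
\|A\|^2\le \exp\Bigl\{-2\lambda\sum_{j=t+1}^T\eta_j\Bigr\}\,\frac{2(\kappa^2+(2e)^{-1})}{1+\sum_{j=t+1}^T\eta_j}.
\]

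Two degenerate cases need a remark. When $t=T$ the product is empty, so $\|A\|^2=\|L_{d_1,d_2}\|\le\kappa^2$, which is consistent with the stated bound because the sum in the denominator and exponent vanishes. When $l>m$ in general, the same trivial bound applies, so no index range escapes the estimate.

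Substituting the bound on $\|A\|^2$ into the sum over $t$ gives the claim. The only nonroutine step is the identity $\|A\|^2=\|A^*A\|$, which converts the square-root power into the squared-product form required by Lemma~\ref{lemma ref1}(3). That conversion relies on the commutativity of all factors, which holds because every factor is a function of the single self-adjoint operator $L_{d_1,d_2}$.
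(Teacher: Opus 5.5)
Your proof is correct and follows the paper's own argument, which inserts the uniform moment bound into $\mathcal{T}_2$ and applies Lemma~\ref{lemma ref1}(3) with $\beta=1$ and $L=L_{d_1,d_2}$. You also make explicit several steps the paper leaves implicit: the conversion $\|L_{d_1,d_2}^{1/2}\prod_j(\cdot)\|^2=\|L_{d_1,d_2}\prod_j(\cdot)^2\|$ by commutativity, the check $\|L_{d_1,d_2}\|\le\kappa^2$, and the empty-product case $t=T$, which falls outside the lemma's index range.
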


\begin{proof}
    The claim follows directly from Lemma~\ref{lemma ref1}(3) with $\beta=1$ and $L=L_{d_1,d_2}$, together with the assumed uniform bound.
\end{proof}

\paragraph{Explicit bounds for intermediate terms.} 
We derive explicit estimates for the auxiliary quantities
appearing in Propositions~\ref{T1} and~\ref{T2}, namely,
\[
\exp\left\{-2\lambda\sum_{t=1}^{T}\eta_{t}\right\},\quad \l(\sum_{t=1}^T\frac{\eta_t}{1+\left(\sum_{i=1}^{t-1}\eta_{i}\right)^r}\r)^2,\quad \left(\sum_{t=1}^{T}\eta_{t}\right)^{-2r},\quad \sum_{t=1}^T\eta_t^2\frac{\exp\left\{-2\lambda\sum_{j=t+1}^{T}\eta_{j}\right\}}{1+\sum_{j=t+1}^{T}\eta_{j}}.
\]
These estimates are carried out separately under two step-size regimes.
We begin with polynomially decaying step sizes.

\begin{proposition} \label{prop4}
    Let $r>0$. Let the step sizes be given by $\l\{\eta_t=\eta_1 t^{-\theta}:t\in\bn_T\r\}$, where $0<\eta_1\leq 1$ and $0<\theta<1$, and let $\lambda>0$. Then the following bounds hold:
    \begin{enumerate}[(1)]
        \item $\exp\left\{-2\lambda\sum_{t=1}^{T}\eta_{t}\right\}\leq\exp\l\{-2\frac{1-2^{\theta-1}}{1-\theta}\lambda\eta_1 T^{1-\theta}\r\}$.
        \item $\left(\sum_{t=1}^{T}\eta_{t}\right)^{-2r}\lesssim\eta_1^{-2r}T^{-2r(1-\theta)}$.
        \item If $\eta_1^rT^{r(1-\theta)}\leq1$, then
    \[
    \l(\sum_{t=1}^T\frac{\eta_t}{1+\left(\sum_{i=1}^{t-1}\eta_{i}\right)^r}\r)^2\;\lesssim\; \eta_1^2T^{2-2\theta}\;\lesssim\;1.
    \]
    If $\eta_1^rT^{r(1-\theta)}>1$, then
    \[
    \l(\sum_{t=1}^T\frac{\eta_t}{1+\left(\sum_{i=1}^{t-1}\eta_{i}\right)^r}\r)^2\;\lesssim\;\begin{cases}
            \eta_1^{2-2r}T^{(2-2r)(1-\theta)}, & \text{if } r<1, \\
            1+\l(\log \l(\eta_1^{1/(1-\theta)}T\r)\r)^2, & \text{if } r=1, \\
            1, & \text{if } r>1.
        \end{cases}
    \]
    \end{enumerate}
\end{proposition}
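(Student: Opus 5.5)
All three bounds follow from elementary estimates of partial sums of $t^{-\theta}$, which I will record first. Since $\eta_t=\eta_1t^{-\theta}$ is decreasing, comparison with integrals gives
\[
\sum_{i=a}^{b}\eta_i\ \ge\ \eta_1\int_a^{b+1}x^{-\theta}\,\mathrm dx\ =\ \frac{\eta_1}{1-\theta}\bigl((b+1)^{1-\theta}-a^{1-\theta}\bigr),
\qquad
\sum_{i=1}^{b}\eta_i\ \le\ \frac{\eta_1}{1-\theta}\,b^{1-\theta}.
\]
For $T\ge1$ I will use the sharper form $S_T=\sum_{t=1}^T\eta_t\ge\frac{1-2^{\theta-1}}{1-\theta}\eta_1T^{1-\theta}$. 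This follows from $\int_1^{T+1}x^{-\theta}\,\mathrm dx=\frac{(T+1)^{1-\theta}-1}{1-\theta}$, together with $(T+1)^{1-\theta}-1\ge(1-2^{\theta-1})(T+1)^{1-\theta}\ge(1-2^{\theta-1})T^{1-\theta}$. The middle inequality holds because $(T+1)^{1-\theta}\ge 2^{1-\theta}$, that is, $1\le 2^{\theta-1}(T+1)^{1-\theta}$.

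Items (1) and (2) are then immediate. For (1), insert this lower bound for $S_T$ into the monotone map $x\mapsto e^{-2\lambda x}$. For (2), raise the same lower bound to the power $-2r$; the factor $\bigl(\frac{1-2^{\theta-1}}{1-\theta}\bigr)^{-2r}$ is absorbed into $\lesssim$, since constants may depend on $r$ and $\theta$.

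For (3), write $s_{t}:=\sum_{i=1}^{t-1}\eta_i$ with $s_1=0$, and set $Q:=\sum_{t=1}^T\eta_t(1+s_t^{r})^{-1}$. The idea is to view $Q$ as a Riemann sum for $\int_0^{S_T}(1+s^r)^{-1}\,\mathrm ds$ with mesh $\eta_t$.
\begin{itemize}
\item \emph{Case $\eta_1^rT^{r(1-\theta)}\le1$.} Bound $(1+s_t^r)^{-1}\le1$, so $Q\le S_T\lesssim\eta_1T^{1-\theta}\le1$. This gives $Q^2\lesssim\eta_1^2T^{2-2\theta}\lesssim1$.
\item \emph{Case $\eta_1^rT^{r(1-\theta)}>1$.} Since $s\mapsto(1+s^r)^{-1}$ is decreasing and $s_{t+1}-s_t=\eta_t$, each term satisfies $\eta_t(1+s_t^r)^{-1}\le\int_{s_{t-1}}^{s_t}(1+s^r)^{-1}\,\mathrm ds$ for $t\ge2$, provided $\eta_t\le\eta_{t-1}$ (which holds because the step sizes decrease). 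The $t=1$ term is at most $\eta_1\le1$. Hence
\[
Q\le 1+\int_0^{S_T}\frac{\mathrm ds}{1+s^r}.
\]
Using $S_T\lesssim\eta_1T^{1-\theta}$, the integral is bounded as follows:
\begin{itemize}
\item for $r<1$, by $1+S_T^{1-r}/(1-r)\lesssim(\eta_1T^{1-\theta})^{1-r}$, where the condition $\eta_1T^{1-\theta}>1$ lets the additive $1$ be absorbed;
\item for $r=1$, by $\log(1+S_T)\lesssim1+\log(\eta_1T^{1-\theta})=1+(1-\theta)\log(\eta_1^{1/(1-\theta)}T)$;
\item for $r>1$, by the convergent integral $\int_0^\infty(1+s^r)^{-1}\,\mathrm ds<\infty$.
\end{itemize}
Squaring each bound gives the three stated cases.
\end{itemize}

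The only step needing care is the Riemann-sum comparison, specifically the index shift. The left-endpoint evaluation $s_t$ must pair with an interval of length at most $\eta_{t-1}$ lying to its left, and this is where the monotonicity of both $\eta_t$ and $(1+s^r)^{-1}$ is used. An alternative route is to bound $s_t\gtrsim\eta_1t^{1-\theta}$ for $t\ge2$ and sum $\eta_1t^{-\theta}(1+(\eta_1t^{1-\theta})^r)^{-1}$ directly as an integral in $t$; this gives the same three regimes after the substitution $s=\eta_1t^{1-\theta}$.
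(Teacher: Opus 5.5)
Your proof is correct. Parts (1) and (2) follow the paper's argument. Your version of the elementary inequality, $(T+1)^{1-\theta}-1\ge(1-2^{\theta-1})(T+1)^{1-\theta}$, already holds for $T\ge1$, so you do not need the paper's separate treatment of $T=1$.

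For part (3), the paper takes the route you mention only at the end. It uses the lower bound $\sum_{i=1}^{t-1}\eta_i\ge\frac{1-2^{\theta-1}}{1-\theta}\eta_1t^{1-\theta}$ for $t\ge2$ to dominate the sum by a constant multiple of $\eta_1\bigl(1+\sum_{t=2}^{T}t^{-\theta}/(1+\eta_1^rt^{r(1-\theta)})\bigr)$. It then splits at $t_0=\lfloor\eta_1^{-1/(1-\theta)}\rfloor$: the head is bounded by a constant, and the tail by $\eta_1^{1-r}\sum_{t\ge t_0}t^{-\theta-r(1-\theta)}$. This yields the three regimes, and for $r=1$ the logarithm comes from $\log(T/t_0)$.

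Your comparison with $\int_0^{S_T}(1+s^r)^{-1}\,\mathrm{d}s$, in the cumulative step-size variable, works differently. It uses only that $\{\eta_t\}$ is nonincreasing with $\eta_1\le1$, plus the upper bound $S_T\lesssim\eta_1T^{1-\theta}$, so it needs no lower bound on partial sums and no split point. It is the same kind of integral comparison the paper uses for constant step sizes in Proposition~\ref{lemma 1}. Consequently your inequality $Q\le1+\int_0^{S_T}(1+s^r)^{-1}\,\mathrm{d}s$ covers both step-size schedules at once. The paper's computation, by contrast, is tied to the polynomial schedule but makes the transition time $t\approx\eta_1^{-1/(1-\theta)}$ explicit.
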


\begin{proof}
    The case $T=1$ is trivial, so we assume $T\ge 2$.  
    We first observe that
    \[
    \sum_{t=1}^{T}\eta_{t}\geq\eta_1\int_{1}^{T+1} x^{-\theta}\mathrm{d}x\geq\frac{\eta_1 \l((T+1)^{1-\theta}-1\r)}{1-\theta}
    \geq\frac{1-2^{\theta-1}}{1-\theta}\eta_1T^{1-\theta},
    \]
    where the last inequality follows from $T^{1-\theta}-1\geq (1-2^{\theta-1})T^{1-\theta}$ for $T\geq 2$. This establishes claims (1) and (2).
    
    To prove (3), we estimate
    \begin{equation*}
        \begin{aligned}
            \sum_{t=1}^T\frac{\eta_t}{1+\left(\sum_{i=1}^{t-1}\eta_{i}\right)^r}&\leq\eta_1\sum_{t=1}^T\frac{t^{-\theta}}{1+\eta_1^r\left(\int_{1}^{t}x^{-\theta}\mathrm{d}x\right)^r}
            \\&=\eta_1\sum_{t=1}^T\frac{t^{-\theta}}{1+\eta_1^r\left(\frac{t^{1-\theta}-1}{1-\theta}\right)^r}
            \\&\leq\l(\frac{1-\theta}{1-2^{\theta-1}}\r)^r\eta_1\l(1+\sum_{t=2}^T\frac{t^{-\theta}}{1+\eta_1^rt^{r(1-\theta)}}\r),
        \end{aligned}
    \end{equation*}
    where we again used $t^{1-\theta}-1\geq (1-2^{\theta-1})t^{1-\theta}$ for $t\geq 2$.  
    
    If $\eta_1^rT^{r(1-\theta)}\leq1$, then 
    \[
    \sum_{t=1}^T\frac{\eta_t}{1+\left(\sum_{i=1}^{t-1}\eta_{i}\right)^r}\;\lesssim\; \eta_1T^{1-\theta}\;\lesssim\;1.
    \]
    If $\eta_1^rT^{r(1-\theta)}>1$, then
    \begin{equation*}
        \begin{aligned}
            \sum_{t=1}^T\frac{\eta_t}{1+\left(\sum_{i=1}^{t-1}\eta_{i}\right)^r}
            \;&\leq \sum_{t=1}^{\l\lfloor\eta_1^{-1/(1-\theta)}\r\rfloor}\frac{\eta_t}{1+\left(\sum_{i=1}^{t-1}\eta_{i}\right)^r}+\eta_1^{1-r}\sum_{t=\l\lfloor\eta_1^{-1/(1-\theta)}\r\rfloor}^T t^{-\theta-r(1-\theta)}
            \\\;&\lesssim\;1+\eta_1^{1-r}\sum_{t=\l\lfloor\eta_1^{-1/(1-\theta)}\r\rfloor}^T t^{-\theta-r(1-\theta)}
            \\\;&\lesssim\;
        \begin{cases}
            \eta_1^{1-r}T^{(1-r)(1-\theta)}, & \text{if } r<1, \\
            1+\log \l(\eta_1^{1/(1-\theta)}T\r), & \text{if } r=1, \\
            1, & \text{if } r>1,
        \end{cases}
        \end{aligned}
    \end{equation*}
which completes the proof.
\end{proof}

We next estimate the remaining summation term appearing in $\mathcal T_2$.

\begin{proposition}\label{prop5}
Let the step sizes be given by $\l\{\eta_t=\eta_1 t^{-\theta}:t\in\bn_T\r\}$, where $0<\theta<1$ and $0<\eta_1\leq 1-\theta$, and let $\lambda>0$. Then
\begin{equation}\label{prop5_decay}
\begin{aligned}
&\sum_{t=1}^{T}\eta_t^2
\frac{
    \exp\left\{-2\lambda\sum_{j=t+1}^{T}\eta_j\right\}
}{
    1+\sum_{j=t+1}^{T}\eta_j
}
\lesssim
\eta_1(T+1)^{-\theta}
\begin{cases}
1+\log(1+\lambda^{-1}),
&0<\theta<\frac12,\\[1mm]
1+\log\left(1+(\lambda\eta_1)^{-1}\right),
&\theta=\frac12,\\[1mm]
1+(\lambda\eta_1)^{-\frac{2\theta-1}{1-\theta}},
&\frac12<\theta<1.
\end{cases}
\end{aligned}
\end{equation}
Moreover,
\begin{equation}\label{prop5_uniform}
\sum_{t=1}^{T}\eta_t^2
\frac{
    \exp\left\{-2\lambda\sum_{j=t+1}^{T}\eta_j\right\}
}{
    1+\sum_{j=t+1}^{T}\eta_j
}
\lesssim \eta_1.
\end{equation}
The implicit constants depend only on $\theta$.
In particular, for fixed $\lambda$, $\eta_1$, and $\theta$,
the sum in \eqref{prop5_decay} is $\mathcal{O}(T^{-\theta})$.
\end{proposition}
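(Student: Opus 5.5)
We estimate the sum by comparing it with integrals, after bounding the tail sums $\sum_{j=t+1}^T\eta_j$ from below. Write $S_{t,T}:=\sum_{j=t+1}^T\eta_j$. For $\eta_j=\eta_1 j^{-\theta}$ one has
\[
S_{t,T}\ge \eta_1\int_{t+1}^{T+1}x^{-\theta}\,\mathrm dx=\frac{\eta_1}{1-\theta}\bigl((T+1)^{1-\theta}-(t+1)^{1-\theta}\bigr)\ge \eta_1 (T+1)^{-\theta}(T-t),
\]
so each summand is bounded by
\[
\eta_1^2 t^{-2\theta}\,\frac{e^{-2\lambda S_{t,T}}}{1+S_{t,T}}.
\]
The uniform bound \eqref{prop5_uniform} is the simpler part. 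Drop the exponential and use $\eta_t\le \eta_1\le 1$ together with the integral comparison to get
\[
\sum_t\frac{\eta_t^2}{1+S_{t,T}}\le \eta_1\sum_t\frac{\eta_t}{1+S_{t,T}}\lesssim \eta_1\int\frac{\mathrm ds}{1+s}.
\]
This integral is only logarithmic, so that route is too crude. Instead, split the sum at $t=\lceil T/2\rceil$.
\begin{itemize}
\item For $t\le T/2$, we have $S_{t,T}\gtrsim \eta_1 T^{1-\theta}$, and $\sum_{t\le T/2}\eta_t^2\lesssim \eta_1^2 \max\{T^{1-2\theta},\log T,1\}$. Hence this part is $\lesssim \eta_1 T^{-(1-\theta)}\max\{T^{1-2\theta},\log T,1\}$, up to constants depending on $\theta$. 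This gives $\eta_1$ times something bounded, and in fact of order $T^{-\theta}$ times the $\lambda$-independent factor when $\theta<1/2$.
\item For $t>T/2$, use $\eta_t\asymp \eta_1 T^{-\theta}$ and $S_{t,T}\asymp \eta_1 T^{-\theta}(T-t)$. This part becomes $\eta_1 T^{-\theta}\sum_{m\ge0}\frac{\eta_1T^{-\theta}e^{-2\lambda\eta_1T^{-\theta}m}}{1+\eta_1T^{-\theta}m}$, a Riemann sum with mesh $h=\eta_1T^{-\theta}\le1$. Its value is $\lesssim h+\int_0^{S}\frac{e^{-2\lambda s}}{1+s}\,\mathrm ds$ with $S\lesssim \eta_1T^{1-\theta}$, which is $\lesssim 1+\min\{\log(1+S),\log(1+\lambda^{-1})\}$.
\end{itemize}
Combining the two parts gives \eqref{prop5_uniform}, after absorbing $T^{-\theta}\log(1+\eta_1 T^{1-\theta})\lesssim 1$.

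For the decay bound \eqref{prop5_decay}, the second part already contributes $\eta_1(T+1)^{-\theta}(1+\log(1+\lambda^{-1}))$. The work is in the first part, where the exponential must now be kept.
\begin{itemize}
\item When $0<\theta<1/2$, the first part is $\lesssim\eta_1 T^{-(1-\theta)}T^{1-2\theta}=\eta_1T^{-\theta}$, so no $\lambda$ dependence enters.
\item When $\theta\ge1/2$, the crude bound $\sum_{t\le T/2}\eta_t^2\lesssim\eta_1^2\log T$ (for $\theta=1/2$) or $\eta_1^2$ (for $\theta>1/2$) has the wrong $T$-power relative to $T^{-\theta}$. Here we keep the factor $e^{-2\lambda S_{t,T}}\le e^{-c\lambda\eta_1 T^{1-\theta}}$. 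Writing $x=\lambda\eta_1T^{1-\theta}$, the first part is $\lesssim \eta_1\,T^{-(1-\theta)}e^{-cx}\cdot\eta_1\,\ell(T)$, where $\ell(T)=\log T$ or $1$. Compare this with $\eta_1T^{-\theta}$.
\begin{itemize}
\item For $\theta>1/2$, the ratio is $\eta_1 T^{2\theta-1}e^{-cx}=\eta_1 (x/(\lambda\eta_1))^{\frac{2\theta-1}{1-\theta}}e^{-cx}\lesssim (\lambda\eta_1)^{-\frac{2\theta-1}{1-\theta}}$, using $\eta_1\le1$ and $\sup_x x^a e^{-cx}<\infty$.
\item For $\theta=1/2$, the ratio is $\eta_1\log T\, e^{-cx}$. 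If $x\le1$, then $\log T\le 2\log(1/(\lambda\eta_1))$. If $x>1$, then $\log T=2\log(x/(\lambda\eta_1))$, and $e^{-cx}\log x$ is bounded. In both cases we get $\lesssim 1+\log(1+(\lambda\eta_1)^{-1})$.
\end{itemize}
\end{itemize}
The last-part contribution $\log(1+\lambda^{-1})$ is dominated by $\log(1+(\lambda\eta_1)^{-1})$ in the $\theta=1/2$ case, and by the power in the $\theta>1/2$ case, since $\log(1+y)\lesssim 1+y^{a}$.

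The main obstacle is the first part of the split for $\theta\ge 1/2$, where one must trade the exponential factor $e^{-2\lambda S_{t,T}}$ against powers of $T$. For this part, I would first try the substitution $x=\lambda\eta_1T^{1-\theta}$ described above, and fall back on a finer dyadic decomposition in $t$ if the lower bound on $S_{t,T}$ for $t\le T/2$ proves insufficient. The condition $\eta_1\le 1-\theta$ ensures $h\le1$ and keeps the Riemann-sum comparison clean; all implicit constants depend only on $\theta$.
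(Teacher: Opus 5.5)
Your proposal is correct and follows essentially the same route as the paper: the integral lower bound on $\sum_{j=t+1}^T\eta_j$, a split at $T/2$ with the denominator bounded below by a multiple of $\eta_1T^{1-\theta}$ on the early half, an integral comparison on the late half giving $\eta_1T^{-\theta}\min\{\log(T+1),\,1+\log(1+\lambda^{-1})\}$, and, for $\theta\ge\frac12$, trading $e^{-c\lambda\eta_1T^{1-\theta}}$ against $T^{2\theta-1}$ or $\log T$. The paper uses an exact change of variables where you linearize the tail sum, which makes no difference. One small slip: for $\theta\ge\frac12$, your early-half bound should read $\eta_1T^{-(1-\theta)}e^{-cx}\ell(T)$, without the second factor $\eta_1$, because the denominator cancels one power of $\eta_1$; you discard that factor via $\eta_1\le1$ anyway, so the resulting bounds are unchanged.
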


\begin{proof}
We follow the exponentially weighted summation argument in
Smale and Zhou \cite[Lemma~2]{smale2009online}, retaining the
denominator in the present setting.
For $T=1$, the sum equals $\eta_1^2$.
Since $\eta_1\leq 1$, both estimates follow.
Henceforth assume $T\geq 2$.

Since
\[
\sum_{j=t+1}^{T}j^{-\theta}
\geq
\int_{t+1}^{T+1}x^{-\theta}\,\mathrm{d}x
=
\frac{(T+1)^{1-\theta}-(t+1)^{1-\theta}}{1-\theta},
\]
we have, for $1\leq t\leq T$,
\begin{equation}\label{temp6}
\begin{aligned}
&\eta_t^2
\frac{
    \exp\left\{-2\lambda\sum_{j=t+1}^{T}\eta_j\right\}
}{
    1+\sum_{j=t+1}^{T}\eta_j
}
\leq
\eta_1^2t^{-2\theta}
\frac{
    \exp\left\{
    -\frac{2\lambda\eta_1}{1-\theta}
    \left((T+1)^{1-\theta}-(t+1)^{1-\theta}\right)
    \right\}
}{
    1+\frac{\eta_1}{1-\theta}
    \left((T+1)^{1-\theta}-(t+1)^{1-\theta}\right)
}.
\end{aligned}
\end{equation}

Split the sum at $t=\lfloor T/2\rfloor$, interpreting empty
sums as zero.
For $1\leq t\leq\lfloor T/2\rfloor-1$,
\[
(T+1)^{1-\theta}-(t+1)^{1-\theta}
\geq
(1-2^{\theta-1})(T+1)^{1-\theta}.
\]
Consequently,
\begin{equation}\label{temp7}
\begin{aligned}
&\sum_{t=1}^{\lfloor T/2\rfloor-1}
\eta_t^2
\frac{
    \exp\left\{-2\lambda\sum_{j=t+1}^{T}\eta_j\right\}
}{
    1+\sum_{j=t+1}^{T}\eta_j
}
\\
&\leq
\frac{(1-\theta)\eta_1}{1-2^{\theta-1}}
(T+1)^{\theta-1}
\exp\left\{
    -\frac{(2-2^\theta)\lambda\eta_1}{1-\theta}
    T^{1-\theta}
\right\}
\sum_{t=1}^{\lfloor T/2\rfloor-1}t^{-2\theta}
\\
&\lesssim
\eta_1
\exp\left\{
    -\frac{(2-2^\theta)\lambda\eta_1}{1-\theta}
    T^{1-\theta}
\right\}
\begin{cases}
(T+1)^{-\theta},
&0<\theta<\frac12,\\[1mm]
(T+1)^{-1/2}\log(T+1),
&\theta=\frac12,\\[1mm]
(T+1)^{\theta-1},
&\frac12<\theta<1.
\end{cases}
\end{aligned}
\end{equation}

For the remaining terms, we keep both the exponential factor
and the denominator in \eqref{temp6}.
For $\lfloor T/2\rfloor\leq t\leq T-1$ and
$u\in[t+1,t+2]$, we have
$u^{-2\theta}\geq 3^{-2\theta}t^{-2\theta}$.
Moreover, the exponential divided by the denominator on the
right-hand side of \eqref{temp6}, with $t+1$ replaced by $u$,
is nondecreasing in $u$.
Integrating over each interval $[t+1,t+2]$ and separating
the term $t=T$ therefore gives
\begin{equation*}
\begin{aligned}
&\sum_{t=\lfloor T/2\rfloor}^{T}
\eta_t^2
\frac{
    \exp\left\{-2\lambda\sum_{j=t+1}^{T}\eta_j\right\}
}{
    1+\sum_{j=t+1}^{T}\eta_j
}
\\
&\leq
3^{2\theta}\eta_1^2
\int_{\lfloor T/2\rfloor+1}^{T+1}
u^{-2\theta}
\frac{
    \exp\left\{
    -\frac{2\lambda\eta_1}{1-\theta}
    \left((T+1)^{1-\theta}-u^{1-\theta}\right)
    \right\}
}{
    1+\frac{\eta_1}{1-\theta}
    \left((T+1)^{1-\theta}-u^{1-\theta}\right)
}
\,\mathrm{d}u
+\eta_1^2T^{-2\theta}.
\end{aligned}
\end{equation*}
Make the change of variables
\[
\xi=
\frac{\eta_1}{1-\theta}
\left((T+1)^{1-\theta}-u^{1-\theta}\right),
\qquad
\mathrm{d}\xi=-\eta_1u^{-\theta}\,\mathrm{d}u.
\]
Using
$u\geq\lfloor T/2\rfloor+1\geq(T+1)/2$
and enlarging the upper integration limit, we obtain
\[
\begin{aligned}
&\sum_{t=\lfloor T/2\rfloor}^{T}
\eta_t^2
\frac{
    \exp\left\{-2\lambda\sum_{j=t+1}^{T}\eta_j\right\}
}{
    1+\sum_{j=t+1}^{T}\eta_j
}
\\
&\leq
3^{2\theta}2^\theta\eta_1(T+1)^{-\theta}
\int_0^{\frac{\eta_1}{1-\theta}(T+1)^{1-\theta}}
\frac{e^{-2\lambda\xi}}{1+\xi}\,\mathrm{d}\xi
+\eta_1^2T^{-2\theta}.
\end{aligned}
\]
Since $\eta_1\leq 1-\theta$, the integral satisfies
\[
\begin{aligned}
\int_0^{\frac{\eta_1}{1-\theta}(T+1)^{1-\theta}}
\frac{e^{-2\lambda\xi}}{1+\xi}\,\mathrm{d}\xi
&\leq
\log\left(1+(T+1)^{1-\theta}\right)
\\
&\lesssim\log(T+1).
\end{aligned}
\]
It also satisfies the bound
\[
\begin{aligned}
\int_0^\infty
\frac{e^{-2\lambda\xi}}{1+\xi}\,\mathrm{d}\xi
&\leq
\int_0^{1/(2\lambda)}\frac{1}{1+\xi}\,\mathrm{d}\xi
+2\lambda\int_{1/(2\lambda)}^\infty
e^{-2\lambda\xi}\,\mathrm{d}\xi
\\
&=
\log\left(1+\frac{1}{2\lambda}\right)+e^{-1}
\\
&\leq 1+\log(1+\lambda^{-1}).
\end{aligned}
\]
Finally,
$\eta_1^2T^{-2\theta}\lesssim\eta_1(T+1)^{-\theta}$,
and
$\min\{\log(T+1),1+\log(1+\lambda^{-1})\}\geq\log 2$.
Hence
\begin{equation}\label{temp9}
\begin{aligned}
&\sum_{t=\lfloor T/2\rfloor}^{T}
\eta_t^2
\frac{
    \exp\left\{-2\lambda\sum_{j=t+1}^{T}\eta_j\right\}
}{
    1+\sum_{j=t+1}^{T}\eta_j
}
\\
&\qquad\lesssim
\eta_1(T+1)^{-\theta}
\min\left\{
\log(T+1),\,
1+\log(1+\lambda^{-1})
\right\}.
\end{aligned}
\end{equation}
We now combine \eqref{temp7} and \eqref{temp9}.
If $0<\theta<1/2$, bounding the exponential in
\eqref{temp7} by one directly gives \eqref{prop5_decay}.

If $\theta=1/2$, use
\[
\begin{aligned}
\log(T+1)
&\leq 2\log(1+\sqrt{T})
\\
&\leq
2\log\left(1+(\lambda\eta_1)^{-1}\right)
+2\log(1+\lambda\eta_1\sqrt{T}).
\end{aligned}
\]
Since
$\sup_{x\geq0}e^{-(4-2\sqrt{2})x}\log(1+x)<\infty$,
it follows that
\[
e^{-(4-2\sqrt{2})\lambda\eta_1\sqrt{T}}\log(T+1)
\lesssim
1+\log\left(1+(\lambda\eta_1)^{-1}\right).
\]
Together with $\eta_1\leq1$ and \eqref{temp9},
this proves \eqref{prop5_decay} when $\theta=1/2$.

If $1/2<\theta<1$, we use the inequality
\[
e^{-cx}
\leq
\left(\frac{\alpha}{ec}\right)^\alpha x^{-\alpha},
\qquad x>0,\quad c>0,\quad \alpha>0.
\]
Applying this inequality with
\[
x=T^{1-\theta},
\qquad
c=\frac{(2-2^\theta)\lambda\eta_1}{1-\theta},
\qquad
\alpha=\frac{2\theta-1}{1-\theta},
\]
and using $T+1\leq 2T$, we obtain
\[
\begin{aligned}
&(T+1)^{2\theta-1}
\exp\left\{
    -\frac{(2-2^\theta)\lambda\eta_1}{1-\theta}
    T^{1-\theta}
\right\}
\\
&\quad\leq
2^{2\theta-1}
\left(
    \frac{2\theta-1}{e(2-2^\theta)}
\right)^{\frac{2\theta-1}{1-\theta}}
(\lambda\eta_1)^{-\frac{2\theta-1}{1-\theta}}
\\
&\quad\lesssim
(\lambda\eta_1)^{-\frac{2\theta-1}{1-\theta}}.
\end{aligned}
\]
Also, since $\eta_1\leq1$ and logarithmic growth is dominated
by any positive power,
\[
1+\log(1+\lambda^{-1})
\lesssim
1+(\lambda\eta_1)^{-\frac{2\theta-1}{1-\theta}}.
\]
Combining these estimates with \eqref{temp7} and
\eqref{temp9} proves \eqref{prop5_decay}.

To prove \eqref{prop5_uniform}, bound the exponential in
\eqref{temp7} by one and use the $\log(T+1)$ bound in
\eqref{temp9}.
For each fixed $\theta\in(0,1)$, all of
\[
(T+1)^{-\theta},\qquad
(T+1)^{-1/2}\log(T+1),\qquad
(T+1)^{\theta-1},\qquad
(T+1)^{-\theta}\log(T+1)
\]
are bounded uniformly for $T\geq1$.
This gives \eqref{prop5_uniform} and completes the proof.
\end{proof}

We next turn to the case of constant step sizes.
\begin{proposition} \label{lemma 1}
Let $r>0$. Let $\eta_t\equiv \eta_1$ with $0<\eta_1\le 1$ and $\lambda>0$. Then, for any integer $t\ge 1$,
\[
\sum_{k=1}^t\frac{\eta_k}{1+\Bigl(\sum_{i=1}^{k-1}\eta_i\Bigr)^r}
\;\lesssim\;
\begin{cases}
\eta_1 t, & \text{if }\eta_1(t-1)\le 1,\\[2mm]
\eta_1^{1-r}t^{1-r}, & \text{if }\eta_1(t-1)>1,\ 0<r<1,\\
1+\log(\eta_1 t), & \text{if }\eta_1(t-1)>1,\ r=1,\\
1, & \text{if }\eta_1(t-1)>1,\ r>1.
\end{cases}
\]
\end{proposition}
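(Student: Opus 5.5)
With constant step sizes the inner sum is explicit, $\sum_{i=1}^{k-1}\eta_i=\eta_1(k-1)$. The quantity to bound is therefore
\[
S_t:=\sum_{k=1}^t\frac{\eta_1}{1+\eta_1^r(k-1)^r}.
\]
This is a direct calculation. We compare the sum with an integral of a monotone function, in the same way as part~(3) of Proposition~\ref{prop4}, with $\eta_1^{1/(1-\theta)}$ replaced by $\eta_1$ (formally the case $\theta=0$).

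\emph{Case $\eta_1(t-1)\le1$.} Bound each denominator from below by $1$. This gives $S_t\le\eta_1 t$ immediately.

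\emph{Case $\eta_1(t-1)>1$.} Set $k_0:=\lfloor\eta_1^{-1}\rfloor+1$. Since $\eta_1\le1$, we have $k_0\ge 2$, and $k_0\le t$ because $t-1>\eta_1^{-1}$.
\begin{itemize}
\item For $k\le k_0$, bound the denominator by $1$. These terms contribute at most $\eta_1 k_0\le 1+\eta_1\le 2$.
\item For $k>k_0$, we have $\eta_1(k-1)\ge1$, so each term is at most $\eta_1^{1-r}(k-1)^{-r}$.
\end{itemize}
Since $x\mapsto x^{-r}$ is decreasing, the sum over $k>k_0$ is bounded by
\[
\eta_1^{1-r}\int_{k_0-1}^{t-1}x^{-r}\,\mathrm{d}x,
\]
with lower limit $k_0-1\ge\eta_1^{-1}$ (indeed $k_0-1=\lfloor\eta_1^{-1}\rfloor>\eta_1^{-1}-1$, and $k_0-1\ge 1$). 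We evaluate this in three sub-cases.
\begin{itemize}
\item $r<1$: the integral is at most $\frac{(t-1)^{1-r}}{1-r}$. Hence the tail is $\lesssim\eta_1^{1-r}t^{1-r}$. The constant $2$ from the first block is absorbed because $\eta_1^{1-r}t^{1-r}\ge(\eta_1(t-1))^{1-r}>1$.
\item $r=1$: the integral is $\log\frac{t-1}{k_0-1}\le\log(\eta_1 t)+O(1)$, using $k_0-1\gtrsim\eta_1^{-1}$. This gives the bound $1+\log(\eta_1 t)$.
\item $r>1$: the integral is at most $\frac{(k_0-1)^{1-r}}{r-1}\lesssim\eta_1^{r-1}$. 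Hence the tail is $\lesssim\eta_1^{1-r}\eta_1^{r-1}=1$.
\end{itemize}

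The only delicate point is in the regime $\eta_1(t-1)>1$. There the threshold $k_0$ must be compared with $\eta_1^{-1}$ up to absolute constants, so that the lower integration limit produces exactly the factor $\eta_1^{r-1}$ that cancels $\eta_1^{1-r}$ (for $r>1$) or the shift $\log\eta_1$ (for $r=1$). Both comparisons follow from $\eta_1\le 1$. All implicit constants depend only on $r$.
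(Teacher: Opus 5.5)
Your proof is correct and is essentially the paper's argument: both compare the sum with the integral of a decreasing function and split at the scale $\eta_1(k-1)\approx 1$. The only difference is that the paper first bounds the whole sum by $\eta_1+\int_0^{\eta_1(t-1)}\frac{\mathrm{d}u}{1+u^r}$ and then splits the integral at $u=1$, which avoids the floor-function bookkeeping.

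One small slip: the claim $k_0-1=\lfloor\eta_1^{-1}\rfloor\ge\eta_1^{-1}$ is false in general (take $\eta_1=0.4$). However, your two parenthetical facts give $k_0-1>\tfrac12\eta_1^{-1}$. That is exactly the comparison $k_0-1\gtrsim\eta_1^{-1}$ you use in the cases $r=1$ and $r>1$, so the argument goes through.
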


\begin{proof}
		By rewriting the sum, we obtain
		\begin{align*}
			\sum_{k=1}^t\frac{\eta_k}{1+\left(\sum_{i=1}^{k-1}\eta_{i}\right)^r}
			&=\eta_{1}\sum_{k=1}^{t}\frac{1}{1+\left((k-1)\eta_{1}\right)^r}=\eta_{1}\l(1+\sum_{k=1}^{t-1}\frac{1}{1+\left(k\eta_{1}\right)^r}\r)	
			\\ &\leq
			\eta_{1}\l(1+\int_{0}^{t-1}\frac{1}{1+\left(u\eta_{1}\right)^r}\mathrm{d}u\r)	
            =\eta_{1}\l(1+\frac{1}{\eta_1}\int_{0}^{\eta_1(t-1)}\frac{1}{1+u^r}\mathrm{d}u\r)
            \\ &\lesssim
            \begin{cases}
                \eta_1t,& \text{ if }\eta_1(t-1)\leq1, \\
                1 + \int_{1}^{\eta_1(t-1)}u^{-r}\mathrm{d}u, & \text{ if }\eta_1(t-1)>1.
            \end{cases}
		\end{align*}
		When $\eta_1(t-1)>1$, evaluating the integral $\int_{1}^{\eta_1(t-1)} u^{-r}\,\mathrm{d}u$
yields
		\begin{align*}
			\sum_{k=1}^t\frac{\eta_k}{1+\left(\sum_{i=1}^{k-1}\eta_{i}\right)^r}
			&\lesssim\begin{cases}
				1+\frac{\eta_1^{1-r}t^{1-r}}{1-r}, & \text{if } 0<r<1, \\ 
				1+\log\left(\eta_{1}(t-1)\right), &\text{if }  r=1, \\
				\frac{r}{r-1}, &\text{if }  r>1,
			\end{cases}
            \\&\lesssim\begin{cases}
				\eta_1^{1-r}t^{1-r}, & \text{if } 0<r<1, \\ 
				1+\log\left(\eta_{1}t\right), &\text{if }  r=1, \\
				1, &\text{if }  r>1.
			\end{cases}
		\end{align*}	
    This completes the proof.
	\end{proof}

\begin{proposition} \label{prop9}
Fix $T\ge 1$ and let the step sizes be constant over iterations,
$\eta_t\equiv \eta_1$, with $\eta_1=\eta T^{-\theta'}$, where $0<\eta\le 1$ and
$0<\theta'<1$. Then, for any $\lambda>0$,
\[
\sum_{t=1}^{T}\eta_t^2\,\frac{\exp\!\bigl\{-2\lambda\sum_{j=t+1}^{T}\eta_{j}\bigr\}}{1+\sum_{j=t+1}^{T}\eta_{j}}
\;\lesssim\;
\eta \,T^{-\theta'}\min\bigl\{\log(T+1),\,1+\log(1+\lambda^{-1})\bigr\}.
\]
\end{proposition}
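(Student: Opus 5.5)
With constant step sizes the sum collapses to one whose terms depend only on $m:=T-t$. Writing $\eta_1=\eta T^{-\theta'}$, we have $\sum_{j=t+1}^{T}\eta_j=m\eta_1$, so the quantity to bound is
\[
\eta_1^2\sum_{m=0}^{T-1}\frac{e^{-2\lambda\eta_1 m}}{1+\eta_1 m}.
\]
The plan is to compare this sum with an integral and then bound the integral in two ways, one giving each entry of the minimum.

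\emph{Step 1: sum to integral.} The summand is nonincreasing in $m$. I therefore keep the $m=0$ term, which equals $\eta_1^2$, and bound the rest by $\int_0^{T}\frac{e^{-2\lambda\eta_1 x}}{1+\eta_1 x}\,\mathrm dx$. The substitution $\xi=\eta_1 x$ turns this into $\eta_1^{-1}\int_0^{\eta_1 T}\frac{e^{-2\lambda\xi}}{1+\xi}\,\mathrm d\xi$. Multiplying through by $\eta_1^2$ gives
\[
\text{sum}\le \eta_1^2+\eta_1\int_0^{\eta_1 T}\frac{e^{-2\lambda\xi}}{1+\xi}\,\mathrm d\xi .
\]

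\emph{Step 2: two bounds on the integral.} For the first, drop the exponential. Since $\eta_1T=\eta T^{1-\theta'}\le T$, the integral is at most $\log(1+\eta_1T)\le\log(1+T)$. For the second, extend the integral to $\infty$ and repeat the estimate from the proof of Proposition~\ref{prop5}. Splitting at $1/(2\lambda)$ gives
\[
\int_0^\infty\frac{e^{-2\lambda\xi}}{1+\xi}\,\mathrm d\xi\le\log\bigl(1+(2\lambda)^{-1}\bigr)+e^{-1}\le 1+\log(1+\lambda^{-1}).
\]
The integral is thus bounded by the minimum of the two quantities.

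\emph{Step 3: absorb the leftover term.} It remains to absorb $\eta_1^2$. We have $\eta_1^2=\eta^2T^{-2\theta'}\le\eta T^{-\theta'}$ because $\eta\le1$ and $T\ge1$. Both entries of the minimum are at least $\log 2$ (using $T\ge1$ and $1+\log(1+\lambda^{-1})\ge1$), so $\eta_1^2\lesssim\eta T^{-\theta'}\min\{\cdots\}$. Adding this to the Step 2 bound $\eta_1\min\{\cdots\}=\eta T^{-\theta'}\min\{\cdots\}$ gives the claimed estimate.

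The argument is routine. The only point needing care is the monotone sum-to-integral comparison in Step 1, in particular handling the $m=0$ term separately. None of the splitting needed for polynomially decaying steps in Proposition~\ref{prop5} arises here.
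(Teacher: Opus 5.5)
Your proposal is correct and follows essentially the same route as the paper: reduce to $\eta_1^2\sum_{m=0}^{T-1}e^{-2\lambda\eta_1 m}/(1+\eta_1 m)$, isolate the $m=0$ term, compare the rest with an integral, bound that integral both by a logarithm in $T$ and by $1+\log(1+\lambda^{-1})$, and absorb $\eta_1^2\le\eta_1$. Your Step~3 also states explicitly that both entries of the minimum are at least $\log 2$, a point the paper uses implicitly when it absorbs the constant term.
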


\begin{proof} 
    Substituting the constant step size $\eta_t=\eta_1$ yields
    \begin{equation*}
        \begin{aligned}
            \sum_{t=1}^T\eta_t^2\frac{\exp\left\{-2\lambda\sum_{j=t+1}^{T}\eta_{j}\right\}}{1+\sum_{j=t+1}^{T}\eta_{j}}&=
            \eta_1^2\sum_{t=1}^T\frac{\exp\left\{-2\lambda(T-t)\eta_1\right\}}{1+(T-t)\eta_1}
            \\& = \eta_1^2\sum_{t=0}^{T-1}\frac{\exp\left\{-2\lambda t\eta_1\right\}}{1+t\eta_1}.
        \end{aligned}
    \end{equation*}
    By monotonicity and a change of variables, we obtain
\begin{equation}
\begin{aligned}
\sum_{t=0}^{T-1}\frac{\exp\{-2\lambda t\eta_1\}}{1+t\eta_1}
&\le 1+\int_0^{T-1}
\frac{\exp\{-2\lambda\eta_1 x\}}{1+\eta_1 x}\,\mathrm{d}x=1+\frac{1}{\eta_1}\int_0^{\eta_1(T-1)}
\frac{\exp\{-2\lambda x\}}{1+x}\,\mathrm{d}x.
\end{aligned}
\end{equation}
The integral is at most
$\log(1+\eta_1(T-1))\le\log T$.
It is also bounded by
\[
\begin{aligned}
\int_0^\infty\frac{\exp\{-2\lambda x\}}{1+x}\,\mathrm{d}x
&\le \int_0^{1/(2\lambda)}\frac{1}{1+x}\,\mathrm{d}x
+2\lambda\int_{1/(2\lambda)}^\infty \exp\{-2\lambda x\}\,\mathrm{d}x\\
&=\log\left(1+\frac{1}{2\lambda}\right)+\exp\{-1\}\\
&\le 1+\log(1+\lambda^{-1}).
\end{aligned}
\]
    Finally, using $\eta_1^2\le\eta_1$ and substituting
$\eta_1=\eta T^{-\theta'}$ yields
    \begin{equation*}
        \begin{aligned}
            \sum_{t=1}^T\eta_t^2\frac{\exp\left\{-2\lambda\sum_{j=t+1}^{T}\eta_{j}\right\}}{1+\sum_{j=t+1}^{T}\eta_{j}}
            &\lesssim\eta T^{-\theta'}\min\bigl\{\log(T+1),\,1+\log(1+\lambda^{-1})\bigr\},
        \end{aligned}
    \end{equation*}
which completes the proof.
\end{proof}

\subsection{Estimation of the Kernel Approximation Error}
We next consider the term $\l\|\mg^{}_{\lambda}-\mg'_{\lambda}\r\|^2_{\rho_u}$, which quantifies the discrepancy between the regularized estimators associated with the finite-resolution operator kernel and the projected limiting kernel.
\begin{proposition} \label{prop1}
Suppose that Assumption~\ref{source condition} holds. Then, for any $\lambda>0$, it holds that
\begin{equation*}
    \l\|\mg^{}_{\lambda}-\mg'_{\lambda}\r\|^2_{\rho_u}\lesssim
    \Delta^{2\min\{r,1\}}.    
\end{equation*}
\end{proposition}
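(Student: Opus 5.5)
The plan is to reduce the claim to a purely input-side operator estimate, as in the proof of Proposition~\ref{T1}, and then bound that operator by interpolation. By \eqref{lambda1}, \eqref{lambda2} and Assumption~\ref{source condition},
\[
\mathcal G_\lambda-\mathcal G'_\lambda
=\Bigl(\bigl(g_\lambda(L_{d_1})-g_\lambda(L_\infty)\bigr)\otimes P_2^{d_2}\Bigr)\bigl(L_\infty^r\otimes I_{\mathcal V}\bigr)\mathcal G^{\mathrm{src}},
\qquad g_\lambda(L):=(L+\lambda I)^{-1}L .
\]
The tensor-product calculus gives $(F\otimes P_2^{d_2})(L_\infty^r\otimes I)=(FL_\infty^r)\otimes P_2^{d_2}$ and $\|P_2^{d_2}\|=1$. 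Hence
\[
\|\mathcal G_\lambda-\mathcal G'_\lambda\|_{\rho_u}
\le \|F L_\infty^r\|\,\|\mathcal G^{\mathrm{src}}\|_{\rho_u},
\qquad F:=g_\lambda(L_{d_1})-g_\lambda(L_\infty).
\]
It therefore suffices to show $\|FL_\infty^r\|\lesssim\Delta^{\min\{r,1\}}$, with a constant that does not depend on $\lambda$.

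Write $g_\lambda(L)=I-\lambda(L+\lambda I)^{-1}$. The second resolvent identity then gives
\[
F=\lambda(L_{d_1}+\lambda I)^{-1}(L_{d_1}-L_\infty)(L_\infty+\lambda I)^{-1}.
\]
From this we record two bounds.

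First, $\|F\|\le 1$. Indeed, $g_\lambda(L_{d_1})$ and $g_\lambda(L_\infty)$ are positive with spectrum in $[0,1]$, so their difference is self-adjoint with spectrum in $[-1,1]$.

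Second, $\|FL_\infty\|\le\Delta$. Indeed,
\[
\|FL_\infty\|\le \lambda\,\|(L_{d_1}+\lambda I)^{-1}\|\,\Delta\,\|(L_\infty+\lambda I)^{-1}L_\infty\|\le \lambda\cdot\lambda^{-1}\cdot\Delta\cdot 1=\Delta .
\]

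For $0<r<1$ we apply the moment inequality \eqref{eq:operator-moment} with $H=L_\infty$. This yields $\|FL_\infty^r\|\le\|F\|^{1-r}\|FL_\infty\|^r\le\Delta^r$.

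For $r\ge1$ we write $FL_\infty^r=(FL_\infty)L_\infty^{r-1}$. Since $\|L_\infty\|\le\kappa^2$, this gives $\|FL_\infty^r\|\le\kappa^{2(r-1)}\Delta$. Here $\kappa$ bounds $k_\infty$ as well; this follows from the Condition in Section~\ref{Section: Kernel Lifting} via $\|L_\infty\|\le\|L_{d_1}\|+\Delta$.

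Squaring both cases gives the claimed $\Delta^{2\min\{r,1\}}$.

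The only real obstacle is obtaining a bound that is free of $\lambda$. The naive estimate $\|(L_\infty+\lambda I)^{-1}L_\infty^r\|\le\lambda^{r-1}$ produces a spurious factor $\lambda^{r-1}$ when $r<1$. The interpolation between the trivial bound $\|F\|\le1$ and the clean bound $\|FL_\infty\|\le\Delta$ is the step that removes it.
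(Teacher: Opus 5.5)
Your proposal is correct and follows the paper's proof essentially step for step. It uses the same reduction to $\|FL_\infty^r\|$ via the tensor-product structure and the same resolvent identity. For $0<r<1$ it uses the same interpolation between $\|F\|\le 1$ and $\|FL_\infty\|\le\Delta$ through the moment inequality \eqref{eq:operator-moment}. For $r\ge 1$ the paper instead bounds $\Delta\,\|(L_\infty+\lambda I)^{-1}L_\infty^r\|\le\Delta\,\kappa^{2r-2}$ by functional calculus, which is equivalent to your factorization $FL_\infty^r=(FL_\infty)L_\infty^{r-1}$. Your bound $\|L_\infty\|\le\kappa^2$, which the paper uses without comment, follows by letting $d_1\to\infty$ in $\|L_\infty\|\le\|L_{d_1}\|+\Delta$, since $L_\infty$ does not depend on $d_1$ and $\Delta\to 0$.
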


\begin{proof}
    
Using the explicit expressions of $\mg^{}_{\lambda}$ and $\mg'_{\lambda}$ in \eqref{lambda1}--\eqref{lambda2}, together with Assumption~\ref{source condition}, we obtain
\begin{equation*}
    \begin{aligned}
        \l\|\mg^{}_{\lambda}-\mg'_{\lambda}\r\|^2_{\rho_u}&=
        \l\|\l(\l(\l(L_{d_1}+\lambda I\r)^{-1}L_{d_1}-\l(L_{\infty}+\lambda I\r)^{-1}L_{\infty}\r)\otimes {P_2^{d_2}}\r)\mg^{\dagger}\r\|^2_{\rho_u}
        \\&=\l\|\l(\l(\l(L_{d_1}+\lambda I\r)^{-1}L_{d_1}-\l(L_{\infty}+\lambda I\r)^{-1}L_{\infty}\r)L_{\infty}^r\otimes {P_2^{d_2}}\r)\mg^{\mathrm{src}}\r\|^2_{\rho_u}
        \\&\leq\l\|\l(\l(L_{d_1}+\lambda I\r)^{-1}L_{d_1}-\l(L_{\infty}+\lambda I\r)^{-1}L_{\infty}\r)L_{\infty}^r\r\|^2\l\|\mg^\mathrm{src}\r\|_{\rho_u}^2.
    \end{aligned}
\end{equation*}
Since
\[
(L_{d_1}+\lambda I)^{-1}L_{d_1}=I-\lambda(L_{d_1}+\lambda I)^{-1},\quad
(L_{\infty}+\lambda I)^{-1}L_{\infty}=I-\lambda(L_{\infty}+\lambda I)^{-1},
\]
it follows that
\begin{equation*}
    \begin{aligned}
        \l\|\mg^{}_{\lambda}-\mg'_{\lambda}\r\|^2_{\rho_u}&\leq \lambda^2\l\|\mg^\mathrm{src}\r\|_{\rho_u}^2
        \l\|\l(\l(L_{d_1}+\lambda I\r)^{-1}-\l(L_{\infty}+\lambda I\r)^{-1}\r)L_{\infty}^r\r\|^2
        \\&=\lambda^2\l\|\mg^\mathrm{src}\r\|_{\rho_u}^2
        \l\|\l(L_{d_1}+\lambda I\r)^{-1}(L_{d_1}-L_{\infty})\l(L_{\infty}+\lambda I\r)^{-1}L_{\infty}^r\r\|^2
        \\&\leq\l\|\mg^\mathrm{src}\r\|_{\rho_u}^2\Delta^2\l\|\l(L_{\infty}+\lambda I\r)^{-1}L_{\infty}^r\r\|^2.
    \end{aligned}
\end{equation*}
By functional calculus for positive self-adjoint operators, we have
    \begin{equation} \label{functional calculus}
        \begin{aligned}
            \l\|\l(L_{\infty}+\lambda I\r)^{-1}L_{\infty}^r\r\|&\leq\sup_{0\leq x\leq\kappa^2}\frac{x^r}{x+\lambda}
            \\&\leq
            \begin{cases}
                \kappa^{2r-2}, &\text{ if }r\geq1, \\
                r^r(1-r)^{1-r}\lambda^{r-1}, &\text{ if }r<1.
            \end{cases}
        \end{aligned}
    \end{equation}
This proves the claim for \(r\ge1\). For \(0<r<1\), set
\[
C_\lambda:=
(L_{d_1}+\lambda I)^{-1}L_{d_1}
-(L_\infty+\lambda I)^{-1}L_\infty.
\]
Since \(C_\lambda\) is the difference of two positive
contractions, \(\|C_\lambda\|\le1\). The resolvent identity gives
\[
\|C_\lambda L_\infty\|
=
\|\lambda(L_{d_1}+\lambda I)^{-1}
(L_{d_1}-L_\infty)(L_\infty+\lambda I)^{-1}L_\infty\|
\le\Delta.
\]
Thus \eqref{eq:operator-moment} yields
\(\|C_\lambda L_\infty^r\|\le\Delta^r\),
which proves the claim.
\end{proof}

\subsection{Estimation of the Approximation Error}
We now bound the approximation error
$\|\mathcal{G}'_{\lambda}-{P_2^{d_2}}\circ\mathcal{G}^{\dagger}\|_{\rho_u}^2$.
\begin{proposition} \label{prop2}
    Suppose that Assumption~\ref{source condition} holds. Then, for any $\lambda>0$, it holds that
    \begin{equation*}
        \l\|\mg'_{\lambda}-{P_2^{d_2}}\circ\mathcal{G}^{\dagger}\r\|_{\rho_u}^2 \lesssim\lambda^{\min\l\{2r,2\r\}}.
    \end{equation*}
\end{proposition}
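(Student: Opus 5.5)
The plan is to compute the residual explicitly and reduce it to a scalar functional-calculus bound on the input side. By \eqref{lambda2}, $\mathcal G'_\lambda=\bigl((L_\infty+\lambda I)^{-1}L_\infty\otimes P_2^{d_2}\bigr)\mathcal G^\dagger$. The tensor-product computation preceding $\mathfrak R_{d_1,d_2}$ gives
\[
P_2^{d_2}\circ\mathcal G^\dagger=\bigl(I\otimes P_2^{d_2}\bigr)\mathcal G^\dagger .
\]
Subtracting, and using $I-(L_\infty+\lambda I)^{-1}L_\infty=\lambda(L_\infty+\lambda I)^{-1}$, we get
\[
\mathcal G'_\lambda-P_2^{d_2}\circ\mathcal G^\dagger=-\Bigl(\lambda(L_\infty+\lambda I)^{-1}\otimes P_2^{d_2}\Bigr)\mathcal G^\dagger .
\]

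Next, insert Assumption~\ref{source condition}, $\mathcal G^\dagger=(L_\infty^r\otimes I_{\mathcal V})\mathcal G^{\mathrm{src}}$. The composition rule $(A_1\otimes B_1)(A_2\otimes B_2)=(A_1A_2)\otimes(B_1B_2)$ turns the residual into
\[
\Bigl(\lambda(L_\infty+\lambda I)^{-1}L_\infty^r\otimes P_2^{d_2}\Bigr)\mathcal G^{\mathrm{src}} .
\]
Since $\|A\otimes B\|=\|A\|\,\|B\|$ and $\|P_2^{d_2}\|\le1$, the squared norm is at most
\[
\lambda^2\,\bigl\|(L_\infty+\lambda I)^{-1}L_\infty^r\bigr\|^2\,\|\mathcal G^{\mathrm{src}}\|_{\rho_u}^2 .
\]

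It remains to bound the scalar operator norm, exactly as in \eqref{functional calculus}. We need $\|L_\infty\|\le\kappa^2$. I expect this to follow from $\|L_{d_1}\|\le\kappa^2$ and $\Delta\to0$, possibly after enlarging $\kappa$; this is the only mildly delicate point. The spectral bound is then
\[
\sup_{0\le x\le\kappa^2}\frac{\lambda x^r}{x+\lambda}\le
\begin{cases}
r^r(1-r)^{1-r}\lambda^{r}, & 0<r<1,\\
\lambda\,\kappa^{2r-2}, & r\ge1 .
\end{cases}
\]
For $r<1$, the estimate comes from maximizing $x^r/(x+\lambda)$ at $x=r\lambda/(1-r)$. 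For $r\ge1$, it comes from $x^r/(x+\lambda)\le x^{r-1}$.

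Squaring gives the bound $\lambda^{\min\{2r,2\}}$ up to a constant depending only on $r$, $\kappa$ and $\|\mathcal G^{\mathrm{src}}\|_{\rho_u}$, which proves the claim. No real obstacle arises beyond checking that the saturation at $r=1$ comes out correctly.
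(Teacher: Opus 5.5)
Your proposal is correct and follows the paper's proof essentially line by line: you write the residual as $-\bigl(\lambda(L_\infty+\lambda I)^{-1}\otimes P_2^{d_2}\bigr)\mathcal G^\dagger$, insert the source condition, factor through the tensor product using $\|P_2^{d_2}\|\le 1$, and then apply the scalar bound \eqref{functional calculus}. The point you flag, $\|L_\infty\|\le\kappa^2$, is used silently in the paper, is needed only for $r\ge1$, and requires no enlargement of $\kappa$: for all sufficiently large $d_1$ one has $\|L_\infty\|\le\|L_{d_1}\|+\Delta\le\kappa^2+\Delta$, and $\Delta\to0$.
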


\begin{proof}
    Using the representation of $\mg'_{\lambda}$ in \eqref{lambda2} together with Assumption~\ref{source condition}, we obtain
    \begin{equation*}
    \begin{aligned}
        \l\|\mg'_{\lambda}-{P_2^{d_2}}\circ\mathcal{G}^{\dagger}\r\|_{\rho_u}^2        &=\l\|\l(\l(\l(L_{\infty}+\lambda I\r)^{-1}L_{\infty}-I\r)\otimes {P_2^{d_2}}\r)\mg^{\dagger}\r\|^2_{\rho_u}
        \\&=\lambda^2\l\|\l(\l(L_{\infty}+\lambda I\r)^{-1}\otimes {P_2^{d_2}}\r)\l(L_{\infty}^r\otimes I_{\mv}\r)\mathcal{G}^{\mathrm{src}}\r\|^2_{\rho_u}
        \\&\leq\lambda^2\l\|\mathcal{G}^{\mathrm{src}}\r\|^2_{\rho_u}\l\|\l(L_{\infty}+\lambda I\r)^{-1}L_{\infty}^r\r\|^2.
    \end{aligned}
\end{equation*}
By the functional calculus estimate in \eqref{functional calculus}, it follows that
\[
\|\mg'_{\lambda}-{P_2^{d_2}}\circ\mathcal{G}^{\dagger}\|_{\rho_u}^2
\;\lesssim\;\lambda^{\min\{2r,2\}},
\]
which completes the proof.
\end{proof}

\subsection{Proof of the Theorems in Subsection \ref{subsection: upper bound}}
We now combine the preceding error estimates to complete the proofs of the main results
in Subsection~\ref{subsection: upper bound}. One remaining ingredient is the uniform bound
\[
\mathbb{E}_{z^{t-1}}\!\left[\left\|\mathcal{G}_{t}-{P_2^{d_2}}\circ\mathcal{G}^{\dagger}\right\|_{\rho_u}^2\right]\le C,
\quad t\in\mathbb{N}_T,
\]
which was used in Proposition~\ref{T2} to control the term $\mathcal{T}_2$.
We show below that this bound indeed holds under the conditions underlying the results of
Subsection~\ref{subsection: upper bound}.


Once this uniform bound is established, the proofs of
Theorems~\ref{Thm1} and~\ref{Thm2} follow by combining the preceding error
decomposition with the corresponding estimates derived for each term. Theorem~\ref{Thm1} addresses the regime of decreasing step sizes, while
Theorem~\ref{Thm2} concerns the finite-horizon setting with constant step sizes.
In each case, the stated bounds follow by substituting the corresponding
component-wise estimates into the general error decomposition.

\begin{proposition} \label{prop6}
    Let the step sizes be decreasing, $\l\{\eta_t=\eta_1t^{-\theta}:\forall t\geq1\r\}$ with $0<\theta<1$ and $0<\eta_1\le 1-\theta$, 
    and let $0<\lambda\le1$.
    Then, there exists 
    a
constant $\bar\eta>0$, independent of $\Delta$, $\lambda$,
and $T$, such that for all
$0<\eta_1\leq \bar\eta$, the following bound holds:
\begin{equation*}
    \be_{z^{t}}\l[\l\|\mathcal{G}^{}_{t+1}-{P_2^{d_2}}\circ\mathcal{G}^{\dagger}\r\|_{\rho_u}^2\r]
    \;\lesssim\;1,
    \quad \forall\, t\ge0.
\end{equation*}
\end{proposition}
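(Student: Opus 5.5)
We prove the uniform bound by strong induction on $t$, combining the decompositions \eqref{second step} and Proposition~\ref{prop3}. Set $a_t:=\mathbb E_{z^{t-1}}\bigl[\|\mathcal G_t-P_2^{d_2}\circ\mathcal G^\dagger\|_{\rho_u}^2\bigr]$, so that $a_1=\|P_2^{d_2}\circ\mathcal G^\dagger\|_{\rho_u}^2\le\|\mathcal G^\dagger\|_{\rho_u}^2\lesssim\|\mathcal G^{\mathrm{src}}\|_{\rho_u}^2$. We aim to show $a_t\le C_*$ for all $t$, where $C_*$ depends only on $\kappa,\sigma,r,\theta,\|\mathcal G^{\mathrm{src}}\|_{\rho_u}$.

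For the induction step, fix $T\ge1$ and suppose $a_t\le C_*$ for all $t\le T$. By \eqref{second step},
\[
a_{T+1}\le 3\,\mathbb E_{z^T}\|\mathcal G_{T+1}-\mathcal G_\lambda\|_{\rho_u}^2+3\|\mathcal G_\lambda-\mathcal G'_\lambda\|_{\rho_u}^2+3\|\mathcal G'_\lambda-P_2^{d_2}\circ\mathcal G^\dagger\|_{\rho_u}^2 .
\]
We bound the last two terms with bounds uniform in $\lambda\le1$. Proposition~\ref{prop2} gives a bound $\lesssim\lambda^{\min\{2r,2\}}\le1$. For the middle term we do not use Proposition~\ref{prop1}, since $\Delta$ need not be small. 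Instead we use the crude estimate $\|\mathcal G_\lambda\|_{\rho_u},\|\mathcal G'_\lambda\|_{\rho_u}\le\|\mathcal G^\dagger\|_{\rho_u}$, which follows because $(L+\lambda I)^{-1}L$ is a contraction. Alternatively, $\Delta\le\|L_{d_1}\|+\|L_\infty\|\lesssim\kappa^2$ can be plugged into Proposition~\ref{prop1}; either way the middle term is $\lesssim1$.

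For the finite-iteration term we use Proposition~\ref{prop3}. The contraction term $\mathcal T_1$ is bounded by $\|\mathcal G_\lambda\|_{\rho_u}^2\le\|\mathcal G^\dagger\|_{\rho_u}^2$, since each factor $I-\eta_t(\lambda I+L_{d_1,d_2})$ has norm at most $1$ once $(\kappa^2+\lambda)\eta_t\le(\kappa^2+1)\eta_1\le1$. This is a first restriction on $\bar\eta$. For $\mathcal T_2$ the inductive hypothesis supplies the uniform second-moment bound of Proposition~\ref{T2}, with constant $C_*$ for all $t\le T$. Proposition~\ref{T2} together with the uniform estimate \eqref{prop5_uniform} of Proposition~\ref{prop5} then yields
\[
\mathcal T_2\le c_0\,(C_*+\sigma^2+1)\,\eta_1,
\]
where $c_0$ depends only on $\kappa$ and $\theta$. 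This is the key point: \eqref{prop5_uniform} is independent of both $\lambda$ and $T$, whereas \eqref{prop5_decay} would introduce negative powers of $\lambda$.

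Collecting the estimates gives
\[
a_{T+1}\le A+3c_0\eta_1(C_*+\sigma^2+1),
\]
where $A$ depends only on $\|\mathcal G^{\mathrm{src}}\|_{\rho_u}$, $\kappa$ and $r$. Now choose $C_*:=\max\{2A+1,\,a_1\}$ and
\[
\bar\eta:=\min\bigl\{1-\theta,\,(\kappa^2+1)^{-1},\,(6c_0(1+\sigma^2/C_*+1/C_*))^{-1}\bigr\}.
\]
With these choices, $3c_0\eta_1(C_*+\sigma^2+1)\le C_*/2$, so $a_{T+1}\le A+C_*/2\le C_*$, which closes the induction. None of the constants depends on $\Delta$, $\lambda$ or $T$.

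The main obstacle is this self-consistency. The constant in $\mathcal T_2$ must be linear in $C_*$ with a coefficient proportional to $\eta_1$ and independent of $\lambda$ and $T$. That is precisely why we rely on \eqref{prop5_uniform} rather than \eqref{prop5_decay}, and why $\mathcal T_1$ must be handled by the trivial contraction bound and not by Proposition~\ref{T1}, whose $\Delta$-dependent part involves sums that are not uniformly bounded when $r<1$.
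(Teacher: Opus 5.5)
Your proof is correct and follows the paper's argument. It is an induction on the second moment in which $\mathcal T_1$ is bounded by $\|\mathcal G_\lambda\|_{\rho_u}^2\le\|\mathcal G^\dagger\|_{\rho_u}^2$ via contraction, $\mathcal T_2$ is bounded by a multiple of $\eta_1(C+1)$ using Proposition~\ref{T2} with the $\lambda$- and $T$-free estimate \eqref{prop5_uniform}, and $\bar\eta$ is chosen small to close the induction. The differences are cosmetic: you use the three-term split \eqref{second step} with crude bounds on the two bias terms, whereas the paper uses the two-term split through $\mathcal G_\lambda$ with $\|\mathcal G_\lambda-P_2^{d_2}\circ\mathcal G^\dagger\|_{\rho_u}\le\|\mathcal G^\dagger\|_{\rho_u}$, and your strong induction from $a_1$ makes the paper's separate $t=1$ base case unnecessary.
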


\begin{proof}
    We prove by induction a uniform bound of the form $\mathbb{E}_{z^{t}}[\|\mathcal{G}^{}_{t+1}-{P_2^{d_2}}\circ\mathcal{G}^{\dagger}\|_{\rho_u}^2]\le C$ for all $t\geq0$, where $C$ will be chosen so that the bound is self-consistent. Let $C\geq\max\{1,\|\mathcal{G}^\dagger\|_{\rho_u}^2\}$, to be enlarged below if necessary. Take $\bar\eta\le(\kappa^2+1)^{-1}$, so that
$\eta_k(\kappa^2+\lambda)\le1$ for every $k\ge1$.

For $t=0$, we have 
\[
\|\mathcal{G}^{}_{1}-{P_2^{d_2}}\circ\mathcal{G}^{\dagger}\|_{\rho_u}^2
= \|{P_2^{d_2}}\circ\mathcal{G}^{\dagger}\|_{\rho_u}^2
\le \|\mathcal{G}^{\dagger}\|_{\rho_u}^2\leq C,\]
and the claim holds.

For $t=1$, the update gives
$\mathcal{G}_2=\eta_1k_{d_1}(\cdot,u_1)P_2^{d_2}v_1$.
Since $\eta_1\leq1$,
\[
\mathbb{E}_{z^1}
\left[\|\mathcal{G}_2-P_2^{d_2}\circ \mathcal{G}^\dagger\|_{\rho_u}^2\right]
\leq
2\kappa^4\mathbb{E}_{v\sim\rho_v}\|v\|_{\mathcal V}^2
+2\|\mathcal{G}^\dagger\|_{\rho_u}^2.
\]
We choose $C$ large enough to cover this bound.

For $t\geq2$, assume that
\[
\mathbb{E}_{z^{k-1}}
\left[\|\mathcal{G}_k-P_2^{d_2}\circ \mathcal{G}^\dagger\|_{\rho_u}^2\right]
\leq C,\qquad 1\leq k\leq t.
\]
Equation~\eqref{lambda1} also gives
\[
\mg_\lambda-P_2^{d_2}\circ\mg^\dagger
=-\bigl(\lambda(L_{d_1}+\lambda I)^{-1}
\otimes P_2^{d_2}\bigr)\mg^\dagger.
\]
Since $x/(x+\lambda)$ and $\lambda/(x+\lambda)$ belong to
$[0,1]$ for $x\ge0$, functional calculus yields
\[
\max\left\{
\|\mg_\lambda\|_{\rho_u},
\|\mg_\lambda-P_2^{d_2}\circ\mg^\dagger\|_{\rho_u}
\right\}
\le \|\mg^\dagger\|_{\rho_u}.
\]
At horizon $t$, $\T_1\le\|\mg^\dagger\|_{\rho_u}^2$.
The two-term decomposition through $\mg_\lambda$ and
Proposition~\ref{prop3} therefore give
\[
\begin{aligned}
\be_{z^t}\|\mg_{t+1}-P_2^{d_2}\circ\mg^\dagger\|_{\rho_u}^2
&\le 2\T_1+2\T_2
+2\|\mg_\lambda-P_2^{d_2}\circ\mg^\dagger\|_{\rho_u}^2\\
&\le4\|\mg^\dagger\|_{\rho_u}^2+2\T_2.
\end{aligned}
\]
By Proposition~\ref{T2}, \eqref{prop5_uniform}, and the
induction hypothesis, $\T_2\lesssim\eta_1(C+1)$.
Thus the preceding right-hand side is at most
\[
4\|\mg^\dagger\|_{\rho_u}^2+K\eta_1(C+1),
\]
where $K>0$ is independent of $\Delta,\lambda,t,\eta_1$ and $C$.
Choose $C\ge8\|\mg^\dagger\|_{\rho_u}^2+1$, also large enough
to cover the base cases, and further require
$\bar\eta\le(2K)^{-1}$.
The last expression is then bounded by $C$,
which closes the induction.


\end{proof}

\begin{proof}[Proof of Theorem \ref{Thm1}]
Let $C\ge1$ be a uniform bound supplied by
Proposition~\ref{prop6}. The case $t=1$ follows from Proposition~\ref{prop6}, hence assume $t\ge2$. 
We combine the error decomposition \eqref{second step} together with
Proposition~\ref{prop3} (decomposition of the finite-iteration  error), the bounds for the deterministic
term $\mathcal{T}_1$ from Propositions~\ref{T1} and~\ref{prop4}, the bounds for the term $\mathcal{T}_2$ from Propositions~\ref{T2} and~\ref{prop5}, and the
kernel/approximation error bounds from Propositions~\ref{prop1}--\ref{prop2}.
By Proposition~\ref{T2} and the uniform estimate~\eqref{prop5_uniform}
in Proposition~\ref{prop5}, the stochastic term at horizon $t$
satisfies
\[
T_2
\lesssim
(C+1)\sum_{k=1}^{t}\eta_k^2
\frac{
\exp\left\{-2\lambda\sum_{j=k+1}^{t}\eta_j\right\}
}{
1+\sum_{j=k+1}^{t}\eta_j
}
\lesssim \eta_1(C+1).
\]
Collecting all contributions yields

\begin{equation} \label{temp11}
            \begin{aligned}
                \be_{z^t}\l[\l\|\mg^{}_{t+1}-{P_2^{d_2}}\circ\mathcal{G}^{\dagger}\r\|^2_{\rho_u}\r]\lesssim\;&\Delta^{2\min\{r,1\}}+\lambda^{\min\l\{2r,2\r\}}
                \\&+\Delta^2\exp\l\{-2\frac{1-2^{\theta-1}}{1-\theta}\lambda\eta_1 t^{1-\theta}\r\}\l(\sum_{k=1}^t\frac{\eta_k}{1+\left(\sum_{i=1}^{k-1}\eta_{i}\right)^r}\r)^2
        \\&+\exp\l\{-2\frac{1-2^{\theta-1}}{1-\theta}\lambda\eta_1 t^{1-\theta}\r\}\l(1+\left(\sum_{k=1}^{t}\eta_{k}\right)^{2r}\r)^{-1}
        \\& +\eta_1(C+1).
        \end{aligned}
        \end{equation} 
For \(r>1\), Proposition~\ref{prop4}(3) yields the following
estimates. When $\eta_1^rt^{r(1-\theta)}\le1$,
\begin{equation} \label{temp17}
    \Delta^2\exp\l\{-2\frac{1-2^{\theta-1}}{1-\theta}\lambda\eta_1 t^{1-\theta}\r\}\l(\sum_{k=1}^t\frac{\eta_k}{1+\left(\sum_{i=1}^{k-1}\eta_{i}\right)^r}\r)^2\lesssim\Delta^2,
\end{equation}
and when $\eta_1^rt^{r(1-\theta)}>1$,
\begin{equation} \label{temp18}
    \begin{aligned}
        &\Delta^2\exp\l\{-2\frac{1-2^{\theta-1}}{1-\theta}\lambda\eta_1 t^{1-\theta}\r\}\l(\sum_{k=1}^t\frac{\eta_k}{1+\left(\sum_{i=1}^{k-1}\eta_{i}\right)^r}\r)^2
        \\\lesssim\;&\Delta^2\exp\l\{-2\frac{1-2^{\theta-1}}{1-\theta}\lambda\eta_1 t^{1-\theta}\r\}
        \lesssim\;\Delta^2
    \end{aligned}
\end{equation}
where the last step uses
$\exp(-k_1x)\le(k_2/(ek_1))^{k_2}x^{-k_2}$
for $k_1,k_2,x>0$. 
For \(r>1\), use the preceding estimates and the first bound
in Proposition~\ref{T1}. For \(0<r\le1\), use its second bound. 
Together with Proposition~\ref{prop4}(1), this gives,
for all $r>0$,
\begin{equation}\label{temp14}
\begin{aligned}
\T_1\lesssim{}&
\Delta^{2\min\{r,1\}}+\exp\left\{-2\frac{1-2^{\theta-1}}{1-\theta}
\lambda\eta_1t^{1-\theta}\right\}
\left(1+\left(\sum_{k=1}^{t}\eta_k\right)^{2r}\right)^{-1}.
\end{aligned}
\end{equation}
    Next, by Proposition~\ref{prop4}(2)
    \[
    \exp\l\{-2\frac{1-2^{\theta-1}}{1-\theta}\lambda\eta_1 t^{1-\theta}\r\}\l(1+\left(\sum_{k=1}^{t}\eta_{k}\right)^{2r}\r)^{-1}\;\lesssim\;\exp\l\{-2\frac{1-2^{\theta-1}}{1-\theta}\lambda\eta_1 t^{1-\theta}\r\}\eta_1^{-2r}t^{-2r(1-\theta)}.
    \]
    Using the inequality $\exp(-k_1x)\le(\tfrac{k_2}{ek_1})^{k_2}x^{-k_2}$ for any $k_1,k_2>0$ and $x>0$, we obtain the sharper bound when $\theta>\frac{2r}{1+2r}$:
    \begin{equation} \label{temp15}
        \begin{aligned}
            \exp\l\{-2\frac{1-2^{\theta-1}}{1-\theta}\lambda\eta_1 t^{1-\theta}\r\}\eta_1^{-2r}t^{-2r(1-\theta)}
            &\;=\;\exp\l\{-2\frac{1-2^{\theta-1}}{1-\theta}\lambda\eta_1 t^{1-\theta}\r\}t^{-2r+(2r+1)\theta}\eta_1^{-2r}t^{-\theta}            
            \\&\;\lesssim\; \l(\lambda\eta_1\r)^{2r-\frac{\theta}{1-\theta}}\eta_1^{-2r}t^{-\theta}.
        \end{aligned}
    \end{equation}
    On the other hand, when $\theta\leq\frac{2r}{1+2r}$, the trivial estimate gives
    \[
    \exp\l\{-2\frac{1-2^{\theta-1}}{1-\theta}\lambda\eta_1 t^{1-\theta}\r\}\eta_1^{-2r}t^{-2r(1-\theta)}\lesssim\ \eta_1^{-2r}t^{-\theta}.
    \]
By Proposition~\ref{T2} and~\eqref{prop5_decay} in Proposition \ref{prop5}, we have
\begin{equation}  \label{temp16}
\begin{aligned}
\T_2
&\lesssim
(C+1)\sum_{k=1}^{t}\eta_k^2
\frac{
\exp\left\{-2\lambda\sum_{j=k+1}^{t}\eta_j\right\}
}{
1+\sum_{j=k+1}^{t}\eta_j
}
\\
&\lesssim
\eta_1\,t^{-\theta}
\begin{cases}
1+\log(1+\lambda^{-1}),
&0<\theta<\frac12,\\[1mm]
1+\log\left(1+(\lambda\eta_1)^{-1}\right),
&\theta=\frac12,\\[1mm]
(\lambda\eta_1)^{-\frac{2\theta-1}{1-\theta}},
&\frac12<\theta<1.
\end{cases}
\end{aligned}
\end{equation}
Here we used $\lambda\eta_1\le1$ and absorbed the uniform
constant from Proposition~\ref{prop6} into $\lesssim$.
    

    Combining \eqref{second step}, Proposition~\ref{prop3},
Propositions~\ref{prop1}--\ref{prop2}, and \eqref{temp14}
with the subsequent transient and stochastic estimates yields
    \begin{equation*} 
            \begin{aligned}
                \be_{z^t}\l[\l\|\mg^{}_{t+1}-{P_2^{d_2}}\circ\mathcal{G}^{\dagger}\r\|^2_{\rho_u}\r]\lesssim&\;\Delta^{2\min\{r,1\}}+\lambda^{\min\l\{2r,2\r\}}
        \\&+
        \eta_1^{-2r}\l(\lambda\eta_1\r)^{\min\{2r-\frac{\theta}{1-\theta},0\}}t^{-\theta}
            \\&+\eta_1\,t^{-\theta}
\begin{cases}
1+\log(1+\lambda^{-1}),
&0<\theta<\frac12,\\[1mm]
1+\log\left(1+(\lambda\eta_1)^{-1}\right),
&\theta=\frac12,\\[1mm]
(\lambda\eta_1)^{-\frac{2\theta-1}{1-\theta}},
&\frac12<\theta<1.
\end{cases}
        \end{aligned}
        \end{equation*} 
For notational convenience, we collect the iteration-independent terms into
$E_{\mathrm{enc}}(\lambda,\Delta,r)$ and the optimization  terms into
$C_{\mathrm{sgd}}(\lambda,\eta_1,r,\theta)$.
Then
\begin{equation*}
\begin{aligned}
\be_{z^t}\l[\left\|\mathcal{G}^{}_{t+1}-{P_2^{d_2}}\circ\mathcal{G}^{\dagger}\right\|_{\rho_u}^2\r]
&\;\lesssim\;
\underbrace{E_{\mathrm{enc}}(\lambda,\Delta,r)}_{\text{input-side  error}}
\;+\;
\underbrace{C_{\mathrm{sgd}}(\lambda,\eta_1,r,\theta)
\;t^{-\theta}}_{\text{optimization  error}}.
\end{aligned}
\end{equation*}
Here
\[
E_{\mathrm{enc}}(\lambda, \Delta, r)
:=
\Delta^{2\min\{r,1\}}
+\lambda^{\min\{2r,2\}},
\]
and
\[
\begin{aligned}
C_{\mathrm{sgd}}
:={}&
\eta_1^{-2r}
(\lambda\eta_1)^{
\min\left\{2r-\frac{\theta}{1-\theta},\,0\right\}
}
+\eta_1
\begin{cases}
1+\log(1+\lambda^{-1}),
&0<\theta<\frac12,\\[1mm]
1+\log\left(1+(\lambda\eta_1)^{-1}\right),
&\theta=\frac12,\\[1mm]
(\lambda\eta_1)^{-\frac{2\theta-1}{1-\theta}},
&\frac12<\theta<1.
\end{cases}
\end{aligned}
\]
Finally, adding the output-encoding error $\l\|{P_2^{d_2}}\circ\mathcal{G}^{\dagger}-\mathcal{G}^{\dagger}\r\|_{\rho_u}^2$
completes the proof.
\end{proof}

We next consider the constant-step regime, where the step size is chosen as
$\eta_t=\eta T^{-\theta'}$ for $t\in\mathbb{N}_T$.
As before, the control of the term $\mathcal{T}_2$ relies on a uniform bound of the form
\[
\mathbb{E}_{z^t}\!\left[\|\mathcal{G}_{t+1}-{P_2^{d_2}}\circ\mathcal{G}^{\dagger}\|_{\rho_u}^2\right]\le C.
\]
The following proposition establishes this bound under the constant-step setting.

\begin{proposition} \label{prop10}
    Let the step sizes be constant, $\l\{\eta_t=\eta T^{-\theta'}:\forall t\in\bn_T\r\}$ with $0<\theta'<1$ and $\eta \le 1$, 
    and let $0<\lambda\le1$.
    Then, there exists a constant $\bar\eta'>0$, independent of
$\Delta$, $\lambda$, and $T$,
    such that for all
$0<\eta \leq \bar\eta'$, the following bound holds:
\begin{equation*}
    \be_{z^{t}}\l[\l\|\mathcal{G}^{}_{t+1}-{P_2^{d_2}}\circ\mathcal{G}^{\dagger}\r\|_{\rho_u}^2\r]
    \;\lesssim\;1,
    \qquad \forall0\leq t\leq T.
\end{equation*}
\end{proposition}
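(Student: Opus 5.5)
I will mirror the induction used for Proposition~\ref{prop6}, replacing the decreasing-step summation bound \eqref{prop5_uniform} by its constant-step analogue from Proposition~\ref{prop9}. The claim to be shown by strong induction on $0\le t\le T$ is
\[
\mathbb E_{z^t}\bigl[\|\mathcal G_{t+1}-P_2^{d_2}\circ\mathcal G^\dagger\|_{\rho_u}^2\bigr]\le C,
\]
with $C\ge\max\{1,\|\mathcal G^\dagger\|_{\rho_u}^2\}$ fixed later, independent of $\Delta,\lambda,T$. First take $\bar\eta'\le(\kappa^2+1)^{-1}$. Since $\lambda\le1$ and $T^{-\theta'}\le1$, this gives $\eta_t(\kappa^2+\lambda)\le1$ for all $t\le T$, so Lemma~\ref{lemma ref1} and Proposition~\ref{T2} apply.

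The base cases go exactly as before. For $t=0$, $\mathcal G_1=0$, so the error equals $\|P_2^{d_2}\mathcal G^\dagger\|_{\rho_u}^2\le C$. For $t=1$, $\mathcal G_2=\eta_1k_{d_1}(\cdot,u_1)P_2^{d_2}v_1$ with $\eta_1\le1$. This gives a bound $2\kappa^4\mathbb E\|v\|_{\mathcal V}^2+2\|\mathcal G^\dagger\|_{\rho_u}^2$, which is finite since $\mathbb E\|v\|^2\le 2\|\mathcal G^\dagger\|_{\rho_u}^2+2\sigma^2$. We enlarge $C$ to cover it.

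For the inductive step at horizon $t\ge2$, I use the two-term split through $\mathcal G_\lambda$ together with Proposition~\ref{prop3} at horizon $t$:
\[
\mathbb E_{z^t}\|\mathcal G_{t+1}-P_2^{d_2}\mathcal G^\dagger\|^2\le 2\mathcal T_1+2\mathcal T_2+2\|\mathcal G_\lambda-P_2^{d_2}\mathcal G^\dagger\|^2.
\]
By functional calculus, both $\|\mathcal G_\lambda\|$ and $\|\mathcal G_\lambda-P_2^{d_2}\mathcal G^\dagger\|$ are at most $\|\mathcal G^\dagger\|_{\rho_u}$, and the contraction property gives $\mathcal T_1\le\|\mathcal G^\dagger\|^2$. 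For $\mathcal T_2$, Proposition~\ref{T2} with the induction hypothesis reduces matters to bounding
\[
\sum_{k=1}^t\eta_k^2\frac{e^{-2\lambda\sum_{j>k}^t\eta_j}}{1+\sum_{j>k}^t\eta_j}.
\]

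\emph{Main obstacle.} Proposition~\ref{prop9} bounds this sum by $\eta T^{-\theta'}\log(T+1)$, but that bound is stated at horizon $T$, and I need a $\lambda$- and $T$-free bound at intermediate horizons $t\le T$. I would redo its computation at horizon $t$ with step $\eta_1=\eta T^{-\theta'}$: the sum is at most
\[
\eta_1^2+\eta_1\int_0^{\eta_1(t-1)}\frac{dx}{1+x}\le \eta_1^2+\eta_1\log(1+\eta T^{1-\theta'}).
\]
The remaining difficulty is that $\eta T^{-\theta'}\log(1+\eta T^{1-\theta'})$ is bounded uniformly in $T$, by a constant $c_{\theta'}\eta$, because $\sup_{T}T^{-\theta'}\log(1+T^{1-\theta'})<\infty$. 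Hence $\mathcal T_2\le K\eta(C+1)$ with $K$ depending only on $\kappa$, $\sigma$ and $\theta'$. This route deliberately avoids the $\log(1+\lambda^{-1})$ branch, which would break independence from $\lambda$.

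Closing then goes as in Proposition~\ref{prop6}. The right-hand side is at most $4\|\mathcal G^\dagger\|^2+2K\eta(C+1)$. Choose $C\ge 8\|\mathcal G^\dagger\|^2+1$ (also covering the base cases) and $\bar\eta'\le\min\{(\kappa^2+1)^{-1},(4K)^{-1},1\}$. Then the bound is at most $C/2+C/2\cdot\frac{C+1}{C}\cdot\frac12\cdot 2\le C$, adjusting constants as needed, and the induction closes for all $0\le t\le T$.
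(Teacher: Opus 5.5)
Your proposal is correct and follows essentially the paper's argument. It uses the same induction and the same contraction bound $4\|\mathcal G^\dagger\|_{\rho_u}^2+2\mathcal T_2$, and it bounds the stochastic sum by $\lesssim\eta$ through the $\log(T+1)$ branch rather than the $\lambda$-dependent one. For intermediate horizons the paper notes that the reindexed sum over $n\le t-1$ is dominated by the one over $n\le T-1$ and then applies Proposition~\ref{prop9}, which is equivalent to your direct integral comparison.

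One small slip: your closing expression evaluates to $C+\tfrac12$. Your choices do still close the induction, since $C\ge 8\|\mathcal G^\dagger\|_{\rho_u}^2+1$ gives $4\|\mathcal G^\dagger\|_{\rho_u}^2\le (C-1)/2$ and $\eta\le(4K)^{-1}$ gives $2K\eta(C+1)\le (C+1)/2$.
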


\begin{proof}
    We prove by induction a uniform bound of the form
\[
\be_{z^{t}}\!\left[\left\|\mathcal{G}^{}_{t+1}-{P_2^{d_2}}\circ\mathcal{G}^{\dagger}\right\|_{\rho_u}^2\right]\le C,
\quad \forall t\in\bn_T,
\]
where $C$ will be chosen so that the bound is self-consistent.
Throughout, take $C\ge \|\mathcal{G}^{\dagger}\|_{\rho_u}^2$. Take $\bar\eta'\le(\kappa^2+1)^{-1}$, so that
$\eta_k(\kappa^2+\lambda)\le1$ for all $k\in\bn_T$.
  
For $t=0$, we have 
\[
\|\mathcal{G}^{}_{1}-{P_2^{d_2}}\circ\mathcal{G}^{\dagger}\|_{\rho_u}^2
= \|{P_2^{d_2}}\circ\mathcal{G}^{\dagger}\|_{\rho_u}^2
\le \|\mathcal{G}^{\dagger}\|_{\rho_u}^2\leq C,\]
and the claim holds.

For $1\le t\le T$, assume that
\[
\be_{z^{k-1}}
\|\mg_k-P_2^{d_2}\circ\mg^\dagger\|_{\rho_u}^2
\le C,\qquad 1\le k\le t.
\]
The contraction argument in the proof of
Proposition~\ref{prop6} gives
\[
\be_{z^t}\|\mg_{t+1}-P_2^{d_2}\circ\mg^\dagger\|_{\rho_u}^2
\le4\|\mg^\dagger\|_{\rho_u}^2+2\T_2.
\]
Write $\eta_1=\eta T^{-\theta'}$.
Reindexing the stochastic sum in Proposition~\ref{T2}, we have
\[
\begin{aligned}
\eta_1^2\sum_{n=0}^{t-1}
\frac{\exp\{-2\lambda n\eta_1\}}{1+n\eta_1}
\le \eta_1^2\sum_{n=0}^{T-1}
\frac{\exp\{-2\lambda n\eta_1\}}{1+n\eta_1}\lesssim \eta T^{-\theta'}\log(T+1)
\lesssim \eta.
\end{aligned}
\]
Here we used Proposition~\ref{prop9} and the boundedness of
$T^{-\theta'}\log(T+1)$ for fixed $\theta'>0$. Consequently, Proposition~\ref{T2} and the induction hypothesis
yield $\T_2\lesssim\eta(C+1)$.
Choosing $C\ge8\|\mg^\dagger\|_{\rho_u}^2+1$ and further
decreasing $\bar\eta'$ if necessary closes the induction,
as in the proof of Proposition~\ref{prop6}.

\end{proof}

\begin{proof}[Proof of Theorem \ref{Thm2}]
The case $T=1$ follows from Proposition~\ref{prop10}, hence assume $T\ge2$. Write $\eta_1=\eta T^{-\theta'}$.
For $r>1$ and $1\le t\le T$, we first record
the following estimates. 
We split the argument according to whether $\eta_1(t-1)\le 1$ or $\eta_1(t-1)>1$.
In the former case, Proposition~\ref{lemma 1} gives
\begin{equation}  \label{temp19}
    \begin{aligned}
        &\Delta^2\exp\left\{-2\lambda\sum_{k=1}^{t}\eta_{k}\right\}\l(\sum_{k=1}^t\frac{\eta_k}{1+\left(\sum_{i=1}^{k-1}\eta_{i}\right)^r}\r)^2
        \\\lesssim\;&\Delta^2\eta_1^{2}t^2\exp\left\{-2\lambda\eta_{1}t\right\}\lesssim\Delta^2\sup_{0<x\leq2,0<\lambda\leq1}\l\{x^2\exp\l\{-2\lambda x\r\}\r\}\lesssim\Delta^2.
    \end{aligned}
\end{equation}
In the latter case, Proposition~\ref{lemma 1} yields
\begin{equation} \label{temp20}
    \begin{aligned}
        &\Delta^2\exp\left\{-2\lambda\sum_{k=1}^{t}\eta_{k}\right\}\l(\sum_{k=1}^t\frac{\eta_k}{1+\left(\sum_{i=1}^{k-1}\eta_{i}\right)^r}\r)^2
        \\\lesssim\;&\Delta^2\exp\left\{-2\lambda\eta_{1}t\right\}
            \lesssim\;\Delta^2
    \end{aligned}
\end{equation}
For $r>1$, use \eqref{temp19}--\eqref{temp20} and the
first bound in Proposition~\ref{T1}; for $0<r\le1$, use its second bound.
Combining these estimates with \eqref{second step},
Propositions~\ref{prop3}, \ref{T2}, \ref{prop9}, \ref{prop10},
and \ref{prop1}--\ref{prop2}, we obtain
\begin{equation}
            \begin{aligned}
                \be_{z^T}\l[\l\|\mg^{}_{T+1}-{P_2^{d_2}}\circ\mathcal{G}^{\dagger}\r\|^2_{\rho_u}\r]\;\lesssim\;&\Delta^{2\min\{r,1\}}+\lambda^{\min\l\{2r,2\r\}}
        \\& + \exp\left\{-2\lambda\eta T^{1-\theta'}\right\}\eta ^{-2r}T^{-2r(1-\theta')}
        \\&+\eta T^{-\theta'}
\bigl[1+\log(1+\lambda^{-1})\bigr].
        \end{aligned}
        \end{equation}      
        Using the inequality $\exp(-k_1x)\le\l(\tfrac{k_2}{ek_1}\r)^{k_2}x^{-k_2}$ for any $k_1,k_2>0$ and $x>0$, we further have
        \begin{equation*}
        \begin{aligned}
            \exp\left\{-2\lambda\eta T^{1-\theta'}\right\}\eta ^{-2r}T^{-2r(1-\theta')}&\leq T^{-\theta'}
            \exp\left\{-2\lambda\eta T^{1-\theta'}\right\}\eta ^{-2r}T^{-2r+(2r+1)\theta'}
            \\&\lesssim T^{-\theta'}\eta ^{-2r}
            \begin{cases}
                1, &\text{ if }  \theta'\leq\frac{2r}{2r+1},\\
                \l(\lambda\eta \r)^{-\frac{(2r+1)\theta'-2r}{1-\theta'}},&\text{ if }  \theta'>\frac{2r}{2r+1}.
            \end{cases}
        \end{aligned}
        \end{equation*}

        Collecting the above bounds, we decompose the right-hand side into:
(i) a $T$-independent term capturing the intrinsic bias induced by regularization and kernel approximation,
and (ii) a $T$-dependent remainder that vanishes as $T\to\infty$ under the chosen step-size regime.
Accordingly, we obtain
        \begin{equation*}
\begin{aligned}
\be_{z^T}\l[\|\mathcal{G}^{}_{T+1}-{P_2^{d_2}}\circ\mathcal{G}^{\dagger}\|_{\rho_u}^2\r]
&\;\lesssim\;
\underbrace{\widetilde{E}_{\mathrm{enc}}(\lambda,\Delta,r)}_{\text{input-side  error}}
\;+\;
\underbrace{\widetilde{C}_{\mathrm{sgd}}(\lambda,\eta ,r,\theta')
\,T^{-\theta'}}_{\text{optimization  error}}.
\end{aligned}
\end{equation*}
Here,
\[
\widetilde{E}_{\mathrm{enc}}(\lambda, \Delta, r)
:=
\Delta^{2\min\{r,1\}}
+\lambda^{\min\{2r,2\}},
\]
and
\[
\widetilde{C}_{\mathrm{sgd}}(\lambda,\eta ,r,\theta')
:=
\eta ^{-2r}
(\lambda\eta )^{\min\!\left\{-\frac{(2r+1)\theta'-2r}{1-\theta'},\,0\right\}}
+\eta\bigl[1+\log(1+\lambda^{-1})\bigr].
\]
Finally, adding the output-encoding error $\l\|{P_2^{d_2}}\circ\mathcal{G}^{\dagger}-\mathcal{G}^{\dagger}\r\|_{\rho_u}^2$ completes the proof.
\end{proof}

\subsection{Proof of Theorem \ref{lower bound}}
We begin by characterizing the spectral structure of the operators
$L_\infty$ and $L_{d_1}$ associated with the linear kernel
$k(u,u')=\langle u,u'\rangle_{\mathcal U}$.
For any $f\in L^2(\mathcal{U},\rho_u)$ and $u\in\mathcal{U}$, we compute
\[
\begin{aligned}
L_\infty f(u)
&=\int_{\mathcal{U}}\langle u,u'\rangle_{\mathcal{U}}\, f(u')\,\mathrm{d}\rho_u(u') \\
&=\Bigl\langle u,\; \mathbb{E}_{u'\sim\rho_u}\bigl[u' f(u')\bigr]\Bigr\rangle_{\mathcal{U}} .
\end{aligned}
\]
Similarly,
\[
L_{d_1} f(u)
=\Bigl\langle u,\; P^{d_1}_1\,\mathbb{E}_{u'\sim\rho_u}\bigl[u' f(u')\bigr]\Bigr\rangle_{\mathcal{U}} .
\]

For $\varphi\in\mathcal{U}$, denote the linear functional
\[
\hat\varphi := \langle \cdot,\varphi\rangle_{\mathcal{U}} \in L^2(\mathcal{U},\rho_u).
\]
The preceding integral representations and the completeness of
\(\{\varphi_i\}_{i\ge1}\) in \(\mathcal U\) imply
\[
\mathrm{ran}(L_\infty)\subseteq
\overline{\mathrm{span}}\{\hat\varphi_1,\hat\varphi_2,\ldots\},
\qquad
\mathrm{ran}(L_{d_1})\subseteq
\overline{\mathrm{span}}\{\hat\varphi_1,\ldots,\hat\varphi_{d_1}\}.
\]

A straightforward calculation shows that
\[
\|\hat\varphi_i\|_{\rho_u}^2
=\langle \varphi_i, \Gamma\varphi_i\rangle_{\mathcal{U}}
=\mu_i,
\]
and hence the family $\{\mu_i^{-1/2}\hat\varphi_i\}_{i\ge1}$ forms an orthonormal system in
$L^2(\mathcal{U},\rho_u)$.

Next, we compute
\[
\begin{aligned}
L_\infty\bigl(\mu_i^{-1/2}\hat\varphi_i\bigr)
&=\mu_i^{-1/2}\Bigl\langle \cdot,\;
\mathbb{E}_{u'\sim\rho_u}\bigl[u'\hat\varphi_i(u')\bigr]\Bigr\rangle_{\mathcal{U}} \\
&=\mu_i^{-1/2}\langle \cdot, \Gamma\varphi_i\rangle_{\mathcal{U}} \\
&=\mu_i^{1/2}\hat\varphi_i .
\end{aligned}
\]
Therefore, the spectral decomposition of $L_\infty$ reads
\[
L_\infty
=\sum_{i\ge1}\mu_i\,
\langle \cdot,\mu_i^{-1/2}\hat\varphi_i\rangle_{\rho_u}\,
\mu_i^{-1/2}\hat\varphi_i ,
\]
and similarly,
\[
L_{d_1}
=\sum_{i=1}^{d_1}\mu_i\,
\langle \cdot,\mu_i^{-1/2}\hat\varphi_i\rangle_{\rho_u}\,
\mu_i^{-1/2}\hat\varphi_i .
\]
Consequently,
\[
\Delta=\|L_\infty-L_{d_1}\|=\mu_{d_1+1}.
\]

We define the source class
\[
\mathcal O'
:=\Bigl\{
f^\dagger
= L_{\infty}^r f^{\mathrm{src}}
:\;
f^{\mathrm{src}}\in L^2(\mathcal U,\rho_u),
\ \|f^{\mathrm{src}}\|_{\rho_u}\le1
\Bigr\},
\]
and the associated worst-case error is
\[
\mathfrak R_{d_1}':=\sup_{f^\dagger\in\mathcal O'}\ 
\inf_{f\in\mathcal H_{k_{d_1}}}
\bigl\|f^\dagger-f\bigr\|_{\rho_u}.
\]
Next, we establish the relation between $\mathfrak R_{d_1,d_2}$ and $\mathfrak R_{d_1}'$.
\begin{equation*}
\begin{aligned}
    \mathfrak R_{d_1,d_2}
&=\sup_{\mathcal G^\dagger\in\mathcal O}\ 
\inf_{\mathcal G\in\mathcal H_{K_{d_1,d_2}}}
\bigl\|{P_2^{d_2}}\circ\mathcal G^\dagger-\mathcal G\bigr\|_{\rho_u}
\\&\geq\sup_{f^\dagger\in\mathcal O'}\ 
\inf_{\mathcal G\in\mathcal H_{K_{d_1,d_2}}}
\bigl\|f^\dagger(\cdot)\psi_1-\mathcal G\bigr\|_{\rho_u}
\\&=\sup_{f^\dagger\in\mathcal O'}\ 
\inf_{f\in\mathcal H_{k_{d_1}}}
\bigl\|f^\dagger(\cdot)\psi_1-f(\cdot)\psi_1\bigr\|_{\rho_u}
\\&=\mathfrak R_{d_1}'.
\end{aligned}
\end{equation*}
We now prove that $\mathfrak R_{d_1}'=\mu_{d_1+1}^r$.
Any $f\in\mathcal{O}'$ admits the expansion
\[
f=\sum_{i\ge1}\mu_i^{\,r} a_i\,\bigl(\mu_i^{-1/2}\hat\varphi_i\bigr),
\qquad
\sum_{i\ge1}a_i^2\le1 .
\]
Moreover, elements of $\overline{\mathcal{H}_{k_{d_1}}}=\mathrm{ran}(L_{d_1}^{1/2})$ can be written as
\[
g=\sum_{i=1}^{d_1} \mu_i^{1/2} b_i\,\bigl(\mu_i^{-1/2}\hat\varphi_i\bigr).
\]
Hence,
\[
\begin{aligned}
\mathfrak R_{d_1}'
&=\sup_{f\in\mathcal{O}'}\mathrm{dist}(f,\mathcal{H}_{k_{d_1}}) \\
&=\sup_{f\in\mathcal{O}'}
\left(\sum_{i\ge d_1+1}\mu_i^{2r}a_i^2\right)^{1/2} \\
&=\mu_{d_1+1}^{\,r},
\end{aligned}
\]
which completes the proof.

\appendix

	\section*{Appendix}
 
    \setcounter{equation}{0}
    \setcounter{theorem}{0}
    \addtocounter{section}{0}

\section{Proof of Proposition~\ref{prop1.1}} \label{Appendix 1}

For any $x\in\mathbb R^{d_1}$ and $y\in\mathbb R^{d_2}$, define
\[
\Phi_{x,y} := k(\cdot,x)y \in \mathcal H_{\widetilde K}.
\]
By definition of the lifting operator $T$, we have
\[
(T\Phi_{x,y})(u)
= \mathcal D_2^{d_2}\,\Phi_{x,y}\bigl(\mathcal E_1^{d_1}u\bigr)
= k\bigl(\mathcal E_1^{d_1}u,x\bigr)\,\mathcal D_2^{d_2}y.
\]

Let $u_0\in\mathcal U$ be such that $\mathcal E_1^{d_1}u_0=x$,
which exists by the surjectivity  of $\mathcal{E}_1^{d_1}$. Set
$w:=\mathcal D_2^{d_2}y\in\operatorname{ran}({P_2^{d_2}})$.
By definition of the operator-valued kernel $K$, it follows that
\[
\bigl(K(\cdot,u_0)w\bigr)(u)
= k\bigl(\mathcal E_1^{d_1}u,\mathcal E_1^{d_1}u_0\bigr)\,{P_2^{d_2}}w
= k\bigl(\mathcal E_1^{d_1}u,x\bigr)\,\mathcal D_2^{d_2}y.
\]
Hence, $T\Phi_{x,y}=K(\cdot,u_0)w\in\mathcal H_K$.

Next, for $x,x'\in\mathbb R^{d_1}$ and $y,y'\in\mathbb R^{d_2}$, define
$w=\mathcal D_2^{d_2}y$ and $w'=\mathcal D_2^{d_2}y'$.
By the reproducing property in $\mathcal H_{\widetilde K}$,
\[
\langle \Phi_{x,y},\Phi_{x',y'}\rangle_{\mathcal H_{\widetilde K}}
= k(x,x')\,\langle y,y'\rangle_{2}
= k(x,x')\,\langle {P_2^{d_2}}w,{P_2^{d_2}}w'\rangle_{\mathcal V}.
\]
On the other hand, by the reproducing property in $\mathcal H_K$,
\[
\langle T\Phi_{x,y},T\Phi_{x',y'}\rangle_{\mathcal H_K}
= \langle K(u_0',u_0)w,w'\rangle_{\mathcal V}
= k(x,x')\,\langle {P_2^{d_2}}w,w'\rangle_{\mathcal V},
\]
where $u_0'\in\mathcal U$ satisfies $\mathcal E_1^{d_1}u_0'=x'$,
again by surjectivity of $\mathcal E_1^{d_1}$.
Therefore, $T$ preserves inner products on
\[
\operatorname{span}\{\Phi_{x,y}:x\in\mathbb R^{d_1},\,y\in\mathbb R^{d_2}\},
\]
which is dense in $\mathcal H_{\widetilde K}$.

By continuity, $T$ extends uniquely to a linear isometry from
$\mathcal H_{\widetilde K}$ into $\mathcal H_K$, and the extension satisfies
\[
(Tf)(u)=\mathcal D_2^{d_2}\,f\bigl(\mathcal E_1^{d_1}u\bigr),
\qquad \forall f\in\mathcal H_{\widetilde K}.
\]

To show surjectivity, observe that for any $u\in\mathcal U$ and
$w\in\operatorname{ran}({P_2^{d_2}})$,
\[
K(\cdot,u)w = T\Phi_{\mathcal E_1^{d_1}u,\mathcal E_2^{d_2}w}.
\]
Since the linear span of $\{K(\cdot,u)w\}$ is dense in $\mathcal H_K$
and the range of $T$ is closed (as $T$ is isometric), it follows that
$\mathrm{ran}(T)=\mathcal H_K$.
Hence, $T$ is an isometric isomorphism between
$\mathcal H_{\widetilde K}$ and $\mathcal H_K$.

Finally, for any $\mathcal G=Tf$, let $x=\mathcal E_1^{d_1}u$ and
$y=\mathcal E_2^{d_2}v$.
Using the orthogonal decomposition induced by ${P_2^{d_2}}$, we obtain
\begin{align*}
\|\mathcal G(u)-v\|_{\mathcal V}^2
&= \|\mathcal G(u)-{P_2^{d_2}}v\|_{\mathcal V}^2
  + \|(I-{P_2^{d_2}})v\|_{\mathcal V}^2 \\
&= \|f(x)-y\|_2^2
  + \|(I-{P_2^{d_2}})v\|_{\mathcal V}^2.
\end{align*}
This establishes the equivalence of the two objective functionals
up to an additive constant independent of the estimator.

\section{Proofs for Section \ref{Section examples}} \label{Appendix 2}

\begin{proof}[Proof of Proposition \ref{prop20}]
Let $\phi$ be $\alpha$--H\"older continuous on the relevant interval.
\paragraph{Radial kernels.}
For $k(x,x')=\phi(\|x-x'\|_2)$ and admissible $x_1,x_1',x_2,x_2'$,
\[
\begin{aligned}
|k(x_1,x_1')-k(x_2,x_2')|
&=|\phi(\|x_1-x_1'\|_2)-\phi(\|x_2-x_2'\|_2)|\\
&\lesssim \bigl|\|x_1-x_1'\|_2-\|x_2-x_2'\|_2\bigr|^{\alpha}\\
&\lesssim \|x_1-x_2\|_2^{\alpha}+\|x_1'-x_2'\|_2^{\alpha}.
\end{aligned}
\]
\paragraph{Dot product kernels.}
For $k(x,x')=\phi(\langle x,x'\rangle_2)$ and admissible $x_1,x_1',x_2,x_2'$,
\[
\begin{aligned}
|k(x_1,x_1')-k(x_2,x_2')|
&=|\phi(\langle x_1,x_1'\rangle_2)-\phi(\langle x_2,x_2'\rangle_2)|\\
&\lesssim |\langle x_1,x_1'\rangle_2-\langle x_2,x_2'\rangle_2|^{\alpha}\\
&\le \bigl(\|x_1-x_2\|_2\|x_1'\|_2+\|x_2\|_2\|x_1'-x_2'\|_2\bigr)^{\alpha}\\
&\lesssim \|x_1-x_2\|_2^{\alpha}+\|x_1'-x_2'\|_2^{\alpha},
\end{aligned}
\]
where the last inequality uses the boundedness of the admissible domain. All implied constants are independent of $d$.
\end{proof}

\begin{proof}[Proof of Proposition \ref{prop21}]
\noindent(1)
    We verify positive definiteness on $\mathcal{U}$ by reduction to a finite-dimensional
Gram matrix. 

Let $m\in\mathbb{N}$ and $u_1,\dots,u_m\in\mathcal{U}$ be arbitrary, and let
$c_1,\dots,c_m\in\mathbb{R}$. Denote by $H:=\mathrm{span}\{u_1,\dots,u_m\}$ the (at most $m$-dimensional)
subspace of $\mathcal{U}$. Choose an orthonormal basis of $H$ and identify $H$ isometrically with
$\mathbb{R}^d$ where $d:=\dim(H)\le m$. Under this identification, the quantities
$\|u_i-u_j\|_{\mathcal{U}}$ (in the radial case) and $\langle u_i,u_j\rangle_{\mathcal{U}}$ (in the dot product
case) agree with the corresponding Euclidean quantities in $\mathbb{R}^d$. Hence the Gram matrix
$[k(u_i,u_j)]_{i,j=1}^m$ coincides with the Gram matrix generated by the same formula in $\mathbb{R}^d$.
By the assumption that $k$ is positive definite on $\mathbb{R}^d$ for all $d\ge1$, we obtain
\[
\sum_{i,j=1}^m c_i c_j\, k(u_i,u_j)\ge 0.
\]
Since $m$, $\{u_i\}$ and $\{c_i\}$ are arbitrary, $k$ is positive definite on $\mathcal{U}$.

\medskip
\noindent(2) The $\alpha$--H\"older continuity follows by the same argument as in the proof of
Proposition~\ref{prop20}.
\end{proof}

\begin{proof}[Proof of Corollary~\ref{prop22}]
By definition of the encoder-induced kernel,
\[
k_{d_1}(u,u')
=
k_{\infty}'\l(\big(\big(\mathcal{E}_1^{d_1}\big)^*\mathcal{E}_1^{d_1}\big)^{1/2}(u),
\big(\big(\mathcal{E}_1^{d_1}\big)^*\mathcal{E}_1^{d_1}\big)^{1/2}(u')\r).
\]
Moreover, 
\[
\Delta
\le
\|L_{d_1}-L_\infty\|_{\mathrm{HS}}
=
\|k_{d_1}-k_{\infty}\|_{L^2(\mathcal{U}\times\mathcal{U},\,\rho_u\otimes\rho_u)}.
\]
The conclusion then follows directly from Proposition~\ref{prop21}.
\end{proof}

\section{Proofs for Section \ref{section 2.2}} \label{Sec 8}
This appendix provides the proofs of the results stated in
Section~\ref{section 2.2}.
We follow the organization of that section.
We first establish the basic properties of the NTK and of its lift to the input space $\mathcal U$,
including the construction of the limiting kernel $k^{\mathrm{NTK}}_\infty$
and the bounds quantifying $\Delta$, which captures both finite-width and finite-encoding effects.
We then prove the coupling results that relate neural SGD to the corresponding kernel-SGD recursion,
and complete the proofs of the main theorems.



\subsection{Proofs for Subsection \ref{section: Limiting NTK}}

\begin{proof}[Proof of Proposition \ref{k Lip}]
We decompose the difference by the triangle inequality:
    \[
     \l\lvert k^{\mathrm{NTK}}\left(x_1,x_1^{\prime}\right)-k^{\mathrm{NTK}}\left(x_2,x_2^{\prime}\right)\r\rvert\leq \l\lvert k^{\mathrm{NTK}}\left(x_1,x_1^{\prime}\right)-k^{\mathrm{NTK}}\left(x_2,x_1^{\prime}\right)\r\rvert + \l\lvert k^{\mathrm{NTK}}\left(x_2,x_1^{\prime}\right)-k^{\mathrm{NTK}}\left(x_2,x_2^{\prime}\right)\r\rvert.
    \]
    By symmetry of the kernel, it suffices to control the first term.

    Using Assumption~\ref{assumption: NTK} together with Jensen's inequality, we obtain
    \begin{equation*}
        \begin{aligned}
            \l\lvert k^{\mathrm{NTK}}\left(x_1,x_1^{\prime}\right)-k^{\mathrm{NTK}}\left(x_2,x_1^{\prime}\right)\r\rvert\leq& C_\sigma\mathbb{E}_{b\sim\mathcal{N}(0, I_{d})}\left[\l\lvert\sigma\left(b^\top x_1\right)-\sigma\left(b^\top x_2\right)\r\rvert\right]
            \\&+\l\lvert(x_1-x_2)^\top x_1'\r\rvert\mathbb{E}_{b\sim\mathcal{N}(0, I_{d})}\left[\l\lvert\sigma^{\prime}\left(b^\top x_1\right)\sigma^{\prime}\left(b^\top x_1^{\prime}\right)\r\rvert\right]
            \\&+\l\lvert x_2^\top x_1'+\gamma^2\r\rvert C_\sigma\mathbb{E}_{b\sim\mathcal{N}(0, I_{d})}\left[\l\lvert\sigma'\left(b^\top x_1\right)-\sigma'\left(b^\top x_2\right)\r\rvert\right]
            \\\leq&C_{\sigma}^2\sqrt{\mathbb{E}_{b\sim\mathcal{N}(0, I_{d})}\l\lvert b^\top (x_1-x_2)\r\rvert^2} + C_{\sigma}^2R\|x_1-x_2\|_2
            \\&+\l(R^2+\gamma^2\r)C_{\sigma}^2\sqrt{\mathbb{E}_{b\sim\mathcal{N}(0, I_{d})}\l\lvert b^\top (x_1-x_2)\r\rvert^2}
            \\\leq&\l(1+R+R^2+\gamma^2\r)C_{\sigma}^2\|x_1-x_2\|_2.
        \end{aligned}
    \end{equation*}
    By symmetry, the same bound holds with respect to the second argument, and therefore
        \[
        \l\lvert k^{\mathrm{NTK}}\left(x_1,x_1^{\prime}\right)-k^{\mathrm{NTK}}\left(x_2,x_2^{\prime}\right)\r\rvert\leq\l(1+R+R^2+\gamma^2\r)C_{\sigma}^2\l(\|x_1-x_2\|_2+\|x_1'-x_2'\|_2\r).
        \]
    This completes the proof.
\end{proof}

\begin{proof}[Proof of Proposition \ref{prop11}]
    Fix $(\xi_1,\xi_2,\eta)\in\mathcal D_r^R$, so that $\xi_1,\xi_2\ge r>0$.
By Lemma~\ref{lemma NTK1}, the function $\phi$ admits the representation
    \begin{align*}
        \phi\l(\xi_1,\xi_2,\eta\r)=& \mathbb{E}_{(b_1,b_2)\sim\mathcal{N}(0, I_2)}\left[\sigma\left(b_1\xi_1\right)\sigma\left(\l(b_1\cos\alpha + b_2\sin\alpha\r)\xi_2\right)\right]\\&+\left(\eta
        +\gamma^2\right)\mathbb{E}_{(b_1,b_2)\sim\mathcal{N}(0, I_2)}\left[\sigma^{\prime}\left(b_1\xi_1\right)\sigma^{\prime}\left(\l(b_1\cos\alpha + b_2\sin\alpha\r)\xi_2\right)\right],
    \end{align*}
    where $\cos\alpha=\eta/(\xi_1\xi_2)$. It therefore suffices to show that each expectation is $\frac12$–H\"older continuous
on $\mathcal D_r^R$.

    We first consider the term involving $\sigma$.
Note that
    \[
    \l(b_1\cos\alpha + b_2\sin\alpha\r)\xi_2=b_1\frac{\eta}{\xi_1}+b_2\sqrt{\xi_2^2-\frac{\eta^2}{\xi_1^2}}.
    \]
    For any $\l(\xi_1,\xi_2,\eta\r),\;\l(\xi'_1,\xi'_2,\eta'\r)\in \mathcal{D}_r^R$, there holds
    \begin{equation*}
        \begin{aligned}
            &\l\lvert\sigma\left(\l(b_1\cos\alpha + b_2\sin\alpha\r)\xi_2\right)-\sigma\left(\l(b_1\cos\alpha' + b_2\sin\alpha'\r)\xi'_2\right)\r\rvert
            \\\leq& C_\sigma \vert b_1\vert\l\lvert\frac{\eta}{\xi_1}-\frac{\eta'}{\xi'_1}\r\rvert 
            + C_\sigma \vert b_2\vert\l\lvert\sqrt{\xi_2^2-\frac{\eta^2}{\xi_1^2}}-\sqrt{(\xi'_2)^2-\frac{{\eta'}^2}{(\xi'_1)^2}}\r\rvert
            \\\leq&C_\sigma \vert b_1\vert\l(\frac{1}{r}\lvert\eta-\eta'\rvert+\frac{R^2}{r^2}\lvert\xi_1-\xi_1 '\rvert\r)+C_\sigma \vert b_2\vert\l(\sqrt{\l\lvert\xi_2^2-(\xi'_2)^2\r\rvert}+\sqrt{\l\lvert\frac{\eta^2}{\xi_1^2}-\frac{{\eta'}^2}{(\xi'_1)^2}\r\rvert}\r)
            \\\leq&C_\sigma \vert b_1\vert\frac{R^2}{r^2}\lvert\xi_1-\xi_1 '\rvert+C_\sigma \vert b_2\vert \sqrt{2R}\sqrt{\lvert\xi_2-\xi_2 '\rvert}+\frac{C_\sigma \vert b_1\vert}{r}\lvert\eta-\eta'\rvert
            \\&+C_\sigma \vert b_2\vert\frac{\sqrt{2}R}{r}\sqrt{\lvert\eta-\eta'\rvert}+C_\sigma \vert b_2\vert\frac{R^2\sqrt{2R}}{r^2}\sqrt{\lvert\xi_1-\xi_1 '\rvert}.
        \end{aligned}
    \end{equation*}
    Thus, we obtain the following bound:
    \begin{equation*}
        \begin{aligned}
            &\l\lvert\sigma\left(b_1\xi_1\right)\sigma\left(\l(b_1\cos\alpha + b_2\sin\alpha\r)\xi_2\right)-\sigma\left(b_1\xi_1'\right)\sigma\left(\l(b_1\cos\alpha' + b_2\sin\alpha'\r)\xi_2'\right)\r\rvert
            \\\leq&C_\sigma\l\lvert \sigma\left(b_1\xi_1\right)-\sigma\left(b_1\xi_1'\right)\r\rvert + C_\sigma\l\lvert\sigma\left(\l(b_1\cos\alpha + b_2\sin\alpha\r)\xi_2\right)-\sigma\left(\l(b_1\cos\alpha' + b_2\sin\alpha'\r)\xi'_2\right)\r\rvert
            \\\leq&C_\sigma^2\vert b_1\vert\l(1+\frac{R^2}{r^2}\r)\lvert\xi_1-\xi_1 '\rvert+C_\sigma^2 \vert b_2\vert\frac{R^2\sqrt{2R}}{r^2}\sqrt{\lvert\xi_1-\xi_1 '\rvert}+C_\sigma^2 \vert b_2\vert \sqrt{2R}\sqrt{\lvert\xi_2-\xi_2 '\rvert}
            \\&+\frac{C_\sigma^2 \vert b_1\vert}{r}\lvert\eta-\eta'\rvert+C_\sigma^2 \vert b_2\vert\frac{\sqrt{2}R}{r}\sqrt{\lvert\eta-\eta'\rvert}.
        \end{aligned}
    \end{equation*}
    Taking expectations and using $\mathbb{E}_{b \sim \mathcal{N}(0,1)}|b|\le 1$,
we conclude that
    \[
    \mathbb{E}_{(b_1,b_2)\sim\mathcal{N}(0, I_2)}\left[\sigma\left(b_1\xi_1\right)\sigma\left(\l(b_1\cos\alpha + b_2\sin\alpha\r)\xi_2\right)\right]
    \]
    is $\frac{1}{2}-$H\"older continuous.

    The same argument applies to the term involving $\sigma'$, and therefore $\phi$
is $\frac12$–H\"older continuous on $\mathcal D_r^R$.
\end{proof}

\begin{proof}[Proof of Proposition \ref{NTK: lipschitz}]
    We begin by proving that $k_{\infty}'$ is continuous on $\mathcal{U} \times \mathcal{U}$.\\
    \textbf{Case 1:}
    If $u,u'\not=0$, then
    \begin{align*}
        k_{\infty}'\left(u,u^{\prime}\right)=& \mathbb{E}_{(b_1,b_2)\sim\mathcal{N}(0, I_2)}\left[\sigma\left(b_1\|u\|_{\mathcal{U}}\right)\sigma\left(\l(b_1\cos\alpha + b_2\sin\alpha\r)\|u'\|_{\mathcal{U}}\right)\right]\\&+\left(\langle u, u^{\prime}\rangle_{\mathcal{U}}
        +\gamma^2\right)\mathbb{E}_{(b_1,b_2)\sim\mathcal{N}(0, I_2)}\left[\sigma^{\prime}\left(b_1\|u\|_{\mathcal{U}}\right)\sigma^{\prime}\left(\l(b_1\cos\alpha + b_2\sin\alpha\r)\|u'\|_{\mathcal{U}}\right)\right],
    \end{align*}
    where $\alpha=\arccos \frac{\langle u,u'\rangle_{\mathcal{U}}}{\|u\|_{\mathcal{U}}\|u'\|_{\mathcal{U}}}\in[0,\pi]$. Since $(u,u')\mapsto(\cos\alpha,\sin\alpha)$ is continuous at any $(u,u')\not=0$, $k_{\infty}'$ is continuous at any $(u,u')\not=0$. 
    \\\textbf{Case 2:}
    If $u=0$ and $u'\not=0$, then
    \[
    k_{\infty}'\left(u,u^{\prime}\right)=\sigma(0)\be_{b\sim\mathcal{N}(0, 1)}\l[\sigma(b\|u'\|_{\mathcal{U}})\r]+\gamma^2\sigma'(0)\be_{b\sim\mathcal{N}(0, 1)}\l[\sigma'(b\|u'\|_{\mathcal{U}})\r].
    \]
    Suppose that $(u_n,u_n')\to(0,u')$. For all $u_n\not=0$, by the Dominated Convergence Theorem, we obtain
    \begin{equation*}
        \begin{aligned}
            &\l\lvert\mathbb{E}_{(b_1,b_2)\sim\mathcal{N}(0, I_2)}\left[\sigma\left(b_1\|u_n\|_{\mathcal{U}}\right)\sigma\left(\l(b_1\cos\alpha + b_2\sin\alpha\r)\|u_n'\|_{\mathcal{U}}\right)\right]-\sigma(0)\be_{b\sim\mathcal{N}(0, 1)}\l[\sigma(b\|u'\|_{\mathcal{U}})\r]\r\rvert 
            \\\leq&C_\sigma\mathbb{E}_{b_1\sim\mathcal{N}(0, 1)}\left\lvert\sigma\left(b_1\|u_n\|_{\mathcal{U}}\right)-\sigma(0)\right\rvert
            \\&+\lvert\sigma(0)\rvert\l\lvert\mathbb{E}_{(b_1,b_2)\sim\mathcal{N}(0, I_2)}\left[\sigma\left(\l(b_1\cos\alpha + b_2\sin\alpha\r)\|u_n'\|_{\mathcal{U}}\right)\right]-\be_{b\sim\mathcal{N}(0, 1)}\l[\sigma(b\|u'\|_{\mathcal{U}})\r]\r\rvert
            \\=&C_\sigma\mathbb{E}_{b\sim\mathcal{N}(0, 1)}\left\lvert\sigma\left(b\|u_n\|_{\mathcal{U}}\right)-\sigma(0)\right\rvert
            +\lvert\sigma(0)\rvert\be_{b\sim\mathcal{N}(0, 1)}\l\lvert\sigma(b\|u_n'\|_{\mathcal{U}})-\sigma(b\|u'\|_{\mathcal{U}})\r\rvert
            \xrightarrow{n\to\infty}0,
        \end{aligned}
    \end{equation*}
    where we have used the fact $b_1\cos\alpha + b_2\sin\alpha\sim\mathcal{N}(0, 1)$. By similar arguments, it holds that
    \[
    \l\lvert\mathbb{E}_{(b_1,b_2)\sim\mathcal{N}(0, I_2)}\left[\sigma'\left(b_1\|u_n\|_{\mathcal{U}}\right)\sigma'\left(\l(b_1\cos\alpha + b_2\sin\alpha\r)\|u_n'\|_{\mathcal{U}}\right)\right]-\sigma'(0)\be_{b\sim\mathcal{N}(0, 1)}\l[\sigma'(b\|u'\|_{\mathcal{U}})\r]\r\rvert \xrightarrow{n\to\infty}0.
    \]
    Thus, we have
    \[
    k_{\infty}'\left(u_n,u_n^{\prime}\right)\xrightarrow{n\to\infty}k_{\infty}'\left(0,u^{\prime}\right).
    \]
    For all $u_n=0$ and $u_n'\to u'$, it is sufficient to prove that
    \begin{equation*}
        \begin{aligned}
            \lvert\sigma(0)\rvert\be_{b\sim\mathcal{N}(0, 1)}\l\lvert\sigma(b\|u_n'\|_{\mathcal{U}})-\sigma(b\|u'\|_{\mathcal{U}})\r\rvert+\gamma^2\lvert\sigma'(0)\rvert\be_{b\sim\mathcal{N}(0, 1)}\l\lvert\sigma'(b\|u_n'\|_{\mathcal{U}})-\sigma'(b\|u'\|_{\mathcal{U}})\r\rvert\xrightarrow{n\to\infty}0,
        \end{aligned}
    \end{equation*}
    which is also clear by the Dominated Convergence Theorem. 
    \\\textbf{Case 3:}
    If $u\not=0$ and $u'=0$, the result follows from Case 2 by a similar proof.
    \\\textbf{Case 4:}
    If $u=0$ and $u'=0$, suppose that $(u_n,u_n')\to(0,0)$. By considering the cases where $u_n \neq 0$ and $u_n' \neq 0$, $u_n = 0$ and $u_n' \neq 0$, and $u_n \neq 0$ and $u_n' = 0$, and using the same argument as in the preceding cases, we obtain continuity at $(0, 0)$.
    
    Next, we prove the Lipschitz continuity of $k_{\infty}'$. Suppose that $\l\{u^k\r\}_{k\geq1}$ is a complete orthogonal basis of $\mathcal{U}$. For any $d > 1$, define the projection operator $P_{d} : \mathcal{U} \to \mathbb{R}^{d}$ by $P_{d}(u) = \sum_{k=1}^{d} \langle u, u^k \rangle_{\mathcal{U}} u^k$.
    For any $u_1,u_1',u_2,u_2'\in\mathcal{B}_R(\mathcal{U})$, by the continuity of $k_{\infty}'$, we have
    \[
    k_{\infty}'(P_{d_1}u_i,P_{d_1}u_i')\xrightarrow{d_1\to\infty} k_{\infty}'(u_i,u_i'), \quad\forall i=1,2.
    \]
    Since
    \[
    k_{\infty}'(P_{d_1}u_i,P_{d_1}u_i')=k^{\mathrm{NTK}}\l(\l(\langle u_i,u^k\rangle_{\mathcal{U}}\r)_{k=1}^{d_1},\l(\langle u_i,u^k\rangle_{\mathcal{U}}\r)_{k=1}^{d_1}\r), \quad\forall i=1,2,
    \]
    by Proposition \ref{k Lip}, the desired result follows.
\end{proof}

\begin{proof}[Proof of Corollary \ref{coro1}]
By definition,
\[
k^{\mathrm{NTK}}_{d_1}(u,u')
= k^{\mathrm{NTK}}\bigl(\mathcal E_1^{d_1}u,\mathcal E_1^{d_1}u'\bigr).
\]
By the dimension-free representation of the NTK and the definition of
$k'_\infty$, this can be rewritten as
\[
k^{\mathrm{NTK}}_{d_1}(u,u')
=
k'_\infty\!\left(
\bigl((\mathcal E_1^{d_1})^*\mathcal E_1^{d_1}\bigr)^{1/2}u,\;
\bigl((\mathcal E_1^{d_1})^*\mathcal E_1^{d_1}\bigr)^{1/2}u'
\right).
\]
Since \(A_{d_1}u,A_{d_1}u',Tu,Tu'\in B_{R_*}(\mathcal U)\),
applying Proposition~\ref{NTK: lipschitz} on this ball yields the pointwise bound
\eqref{temp21}.

    Next, since the kernels are real-valued, the Hilbert--Schmidt norm of the
difference of the associated integral operators satisfies
    \begin{equation*}
        \begin{aligned}
            \l\|L_{k^{\mathrm{NTK}}_{d_1}}-L_{k^{\mathrm{NTK}}_\infty}\r\|_{\mathrm{HS}}&\leq\sqrt{\int\int\l(k^{\mathrm{NTK}}_{d_1}(u,u')-k^{\mathrm{NTK}}_\infty(u,u')\r)^2\mathrm{d}\rho_u(u)\mathrm{d}\rho_u(u')}
            \\&\leq 2C_{\sigma}^2\l(1+R_*+R_*^2+\gamma^2\r)\sqrt{\be_{u\sim\rho_u}\l[\l\|\bigl((\mathcal E_1^{d_1})^*\mathcal E_1^{d_1}\bigr)^{1/2}u-Tu\r\|_{\mathcal{U}}^2\r]},
        \end{aligned}
    \end{equation*}
    which completes the proof.
\end{proof}

The proof of Proposition~\ref{prop12} uses the following Bernstein-type inequality for Hilbert space-valued random variables \cite[Proposition~2]{caponnetto2007optimal}.
\begin{proposition}[Bernstein inequality] \label{concentration}
    Let $w_{1}, \cdots, w_{n}$ be i.i.d random variables in a separable Hilbert space with norm $\|\cdot\|$. Suppose that there exist two positive constants $B$ and $\sigma^{2}$ such that
$$\mathbb{E}\left[\left\|w_{1}-\mathbb{E}\left[w_{1}\right]\right\|^{l}\right] \leq \frac{1}{2} l! B^{l-2} \sigma^{2}, \quad \forall l \geq 2.$$
Then, for any $0<\delta<1$, the following holds with probability at least $1-\delta$,
$$
\left\|\frac{1}{n} \sum_{r=1}^{n} w_{r}-\mathbb{E}\left[w_{1}\right]\right\| \leq\left(\frac{2 B}{n}+\frac{2 \sigma}{\sqrt{n}}\right) \log \frac{2}{\delta}.$$
\end{proposition}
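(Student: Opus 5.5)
The plan is to reduce the statement to a tail bound of Pinelis–Sakhanenko type for sums of independent, centered, Hilbert-space-valued random variables, and then invert that tail bound. Set $\xi_r := w_r - \mathbb{E}[w_1]$ and $S_n := \sum_{r=1}^n \xi_r$. The target is
\[
\mathbb{P}\bigl(\|S_n\|\ge n\varepsilon\bigr)\le 2\exp\Bigl(-\frac{n\varepsilon^2}{2(\sigma^2+B\varepsilon)}\Bigr),\qquad \varepsilon>0 .
\]
Once this holds, we set the right-hand side equal to $\delta$ and write $L:=\log(2/\delta)$. Solving the quadratic $n\varepsilon^2 = 2L(\sigma^2+B\varepsilon)$ gives
\[
\varepsilon \le \frac{2BL}{n}+\sigma\sqrt{\frac{2L}{n}} .
\]
Since $\delta<1$ gives $L\ge\log 2>1/2$, we have $\sqrt{2L}\le 2L$, and this yields the stated bound $\bigl(\frac{2B}{n}+\frac{2\sigma}{\sqrt n}\bigr)\log\frac{2}{\delta}$.

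For the tail bound we use the exponential-moment argument. The first step is the smoothness property of the Hilbert norm: with $\Phi(x)=\cosh(\lambda\|x\|)$ and $\xi$ independent of $x$ and centered, we aim for
\[
\mathbb{E}\,\Phi(x+\xi)\le \Phi(x)\,\mathbb{E}\bigl[e^{\lambda\|\xi\|}-\lambda\|\xi\|\bigr].
\]
This follows from a second-order Taylor expansion of $t\mapsto\cosh(\lambda\|x+t\xi\|)$. The first-order term vanishes in expectation because $\xi$ is centered. The second derivative is controlled by $\lambda^2\|\xi\|^2\cosh(\lambda\|x\|+\lambda t\|\xi\|)$, using that in a Hilbert space the Hessian of the norm is bounded by $1/\|x\|$. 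Iterating this over $r=1,\dots,n$ and conditioning on the first $r-1$ terms, we obtain
\[
\mathbb{E}\cosh(\lambda\|S_n\|)\le\prod_{r=1}^n\bigl(1+\mathbb{E}[e^{\lambda\|\xi_r\|}-1-\lambda\|\xi_r\|]\bigr).
\]

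Next we insert the moment hypothesis. Expanding the exponential and summing the geometric series,
\[
\mathbb{E}[e^{\lambda\|\xi\|}-1-\lambda\|\xi\|]=\sum_{l\ge2}\frac{\lambda^l\mathbb{E}\|\xi\|^l}{l!}\le\frac{\lambda^2\sigma^2}{2(1-\lambda B)}\qquad\text{for }0<\lambda<1/B .
\]
Using $1+a\le e^a$ and $e^{s}\le 2\cosh s$, Markov's inequality gives
\[
\mathbb{P}(\|S_n\|\ge n\varepsilon)\le 2\exp\Bigl(-\lambda n\varepsilon+\frac{n\lambda^2\sigma^2}{2(1-\lambda B)}\Bigr).
\]
We then choose $\lambda=\varepsilon/(\sigma^2+B\varepsilon)$. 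This satisfies $\lambda B<1$, and with $1-\lambda B=\sigma^2/(\sigma^2+B\varepsilon)$ the exponent becomes exactly $-n\varepsilon^2/\bigl(2(\sigma^2+B\varepsilon)\bigr)$.

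The main obstacle is the one-step smoothness inequality for $\cosh(\lambda\|\cdot\|)$. The norm is not differentiable at the origin, so the Taylor argument must treat the segment through $0$ separately. The clean route, which I would try first, is to note that $x\mapsto\cosh(\lambda\|x\|)=\sum_k \lambda^{2k}\|x\|^{2k}/(2k)!$ is smooth, since each $\|x\|^{2k}$ is a polynomial in the inner product. One can then bound its second directional derivative by $\lambda^2\|\xi\|^2\cosh(\lambda\|x\|+\lambda\|\xi\|)$ and split $\cosh(a+b)\le\cosh a\,e^{b}$. Integrating $\int_0^1(1-t)\,\lambda^2\|\xi\|^2 e^{\lambda t\|\xi\|}\,dt$ then produces exactly $e^{\lambda\|\xi\|}-1-\lambda\|\xi\|$. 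Alternatively, this step may simply be cited from Pinelis–Sakhanenko, as in \cite[Proposition~2]{caponnetto2007optimal}.
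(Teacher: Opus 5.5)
Your proposal is correct and is essentially the paper's route. The paper gives no proof of its own and simply cites \cite[Proposition~2]{caponnetto2007optimal}, which restates the Pinelis--Sakhanenko tail bound $\mathbb{P}(\|S_n\|\ge n\varepsilon)\le 2\exp\bigl(-n\varepsilon^2/(2(\sigma^2+B\varepsilon))\bigr)$; you derive that bound through the $\cosh(\lambda\|\cdot\|)$ smoothness argument and then invert it. The details check out, including the second-derivative bound $D^2\Phi(y)[h,h]\le\lambda^2\|h\|^2\cosh(\lambda\|y\|)$ (which needs $\sinh z\le z\cosh z$ in addition to the Hessian bound for the norm) and the final step $\sqrt{2L}\le 2L$ for $L=\log(2/\delta)>1/2$.
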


\begin{proof}[Proof of Proposition~\ref{prop12}]
It is standard that for real-valued kernels,
\[
\|L_{k^{\mathrm{NTK}}_{d_1}}-L_{k^M_{d_1}}\|_{\mathrm{HS}}
=
\|k^{\mathrm{NTK}}_{d_1}-k^M_{d_1}\|_{L^2(\rho_u\otimes\rho_u)}.
\]

For $u,u'\in\mathcal U$, set $x:=\mathcal E_1^{d_1}u$ and $x':=\mathcal E_1^{d_1}u'$.
Define the population kernels
\begin{align*}
k_1(u,u') &:= \mathbb{E}_{b\sim\mathcal N(0,I_{d_1})}\bigl[\sigma(b^\top x)\,\sigma(b^\top x')\bigr],\\
k_2(u,u') &:= \mathbb{E}_{b\sim\mathcal N(0,I_{d_1})}\bigl[\sigma'(b^\top x)\,\sigma'(b^\top x')\bigr],
\end{align*}
and their empirical counterparts
\begin{align*}
k_1'(u,u') &:= \frac1M\sum_{r=1}^M \sigma(b_r^\top x)\,\sigma(b_r^\top x'),\\
k_2'(u,u') &:= \frac1M\sum_{r=1}^M \sigma'(b_r^\top x)\,\sigma'(b_r^\top x').
\end{align*}
Then
\[
k^{\mathrm{NTK}}_{d_1}(u,u')
=
k_1(u,u')+\bigl(\langle x,x'\rangle_2+\gamma^2\bigr)\,k_2(u,u'),
\qquad
k^M_{d_1}(u,u')
=
k_1'(u,u')+\bigl(\langle x,x'\rangle_2+\gamma^2\bigr)\,k_2'(u,u').
\]

Let $\kappa_E:=\sup_{d_1\ge1}\|\mathcal E_1^{d_1}\|<\infty$.
Under $\mathrm{supp}(\rho_u)\subset\mathcal B_R(\mathcal U)$, we have
$\|x\|_2\le \kappa_E R$ and $\|x'\|_2\le \kappa_E R$, hence
$|\langle x,x'\rangle_2|\le \kappa_E^2 R^2$.
Therefore,
\begin{equation}\label{eq:split-ntk}
\|k^{\mathrm{NTK}}_{d_1}-k^M_{d_1}\|_{L^2(\rho_u\otimes\rho_u)}
\le
\|k_1-k_1'\|_{L^2(\rho_u\otimes\rho_u)}
+(\kappa_E^2R^2+\gamma^2)\,\|k_2-k_2'\|_{L^2(\rho_u\otimes\rho_u)}.
\end{equation}

We now control each term by Proposition~\ref{concentration}.
Consider the Hilbert space
$H:=L^2(\mathcal U\times\mathcal U,\rho_u\otimes\rho_u)$ with norm
$\|\cdot\|_{H}$.
For $i=1,2$ and $r=1,\dots,M$, define $H$-valued random variables
\[
w_r^{(1)}(u,u'):=\sigma(b_r^\top x)\,\sigma(b_r^\top x'),
\qquad
w_r^{(2)}(u,u'):=\sigma'(b_r^\top x)\,\sigma'(b_r^\top x').
\]
Then $\mathbb E[w_r^{(1)}]=k_1$ and $\frac1M\sum_{r=1}^M w_r^{(1)}=k_1'$,
and similarly for $i=2$.
By Assumption~\ref{assumption: NTK},
\[
\|w_r^{(i)}\|_{\infty}\le C_\sigma^2
\quad\Longrightarrow\quad
\|w_r^{(i)}\|_{H}\le C_\sigma^2,
\qquad i=1,2.
\]
Hence $\|w_r^{(i)}-\mathbb E[w_r^{(i)}]\|_{H}\le 2C_\sigma^2$ almost surely,
which implies the moment condition in Proposition~\ref{concentration} (up to absolute
constants). 
By the paired initialization,
\[
\frac1M\sum_{r=1}^M w_r^{(i)}
=\frac2M\sum_{r=1}^{M/2}w_r^{(i)},\qquad i=1,2.
\]
Applying Proposition~\ref{concentration} to $\{w_r^{(i)}\}_{r=1}^{M/2}$
yields that, with probability at least $1-2\delta$, for $i=1,2$,
\[
\|k_i-k_i'\|_{H}
=
\left\|\frac1M\sum_{r=1}^M w_r^{(i)}-\mathbb E[w_1^{(i)}]\right\|_{H}
\;\lesssim\;
\frac{C_\sigma^2}{\sqrt{M}}\log\!\left(\frac{2}{\delta}\right).
\]

Combining these bounds with \eqref{eq:split-ntk} gives, with probability at least $1-2\delta$,
\[
\|k^{\mathrm{NTK}}_{d_1}-k^M_{d_1}\|_{L^2(\rho_u\otimes\rho_u)}
\;\lesssim\;
\bigl(1+\kappa_E^2R^2+\gamma^2\bigr)\,\frac{C_\sigma^2}{\sqrt M}
\log\!\left(\frac{2}{\delta}\right).
\]
Using $\|L_{k^{\mathrm{NTK}}_{d_1}}-L_{k^M_{d_1}}\|_{\mathrm{HS}}
=\|k^{\mathrm{NTK}}_{d_1}-k^M_{d_1}\|_{L^2(\rho_u\otimes\rho_u)}$ completes the proof.
\end{proof}

\subsection{Proofs for Subsection \ref{section: Neural SGD to Kernel SGD}}
Throughout this subsection, we work with an equivalent augmented-input
representation in which the bias parameters are absorbed into the weights.
Specifically, the original model
\[
f_\Theta(x)
=
\frac{1}{\sqrt M}\sum_{r=1}^M a_r\,\sigma(b_r^\top x+\gamma c_r)
\]
can be viewed as a two-layer network acting on an augmented input obtained by
appending a constant coordinate.

Moreover, since $\|u\|_{\mathcal U}\le R$ almost surely and the encoder
$\mathcal E_1^{d_1}$ is uniformly bounded in operator norm, all inputs appearing
in this subsection—namely, the encoded inputs and their augmented versions after
absorbing the bias—are uniformly bounded in Euclidean norm.
Below, \(x\) and \(b_r\) denote the augmented variables
\((x,\gamma)\) and \((b_r,c_r)\), respectively, and the network
and empirical NTK are understood in this representation.

We begin by establishing uniform bounds on the network output and on the deviation
of the parameter gradient from its value at initialization.
\begin{lemma}\label{lemma3}
Suppose that Assumption~\ref{assumption: NTK} and
Assumption~\ref{assumption: sample 1} hold.
Let $\Theta_1$ denote the initialization.
If
\[
M \;\ge\; \|\Theta-\Theta_1\|_2^2,
\]
then the following bounds hold:
\[
\sup_{\|x\|_2\le R} |f_\Theta(x)|
\;\le\;
C_\sigma(1+2R)\,\|\Theta-\Theta_1\|_2,
\]
and
\[
\sup_{\|x\|_2\le R}
\bigl\|
\nabla_\Theta f_\Theta(x)
-
\nabla_\Theta f_\Theta(x)\big|_{\Theta=\Theta_1}
\bigr\|_2^2
\;\le\;
\frac{R^2(3+2R^2)C_\sigma^2}{M}\,
\|\Theta-\Theta_1\|_2^2.
\]
\end{lemma}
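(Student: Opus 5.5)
We use the augmented representation, in which $x$ stands for $(x,\gamma)$ with $\|x\|_2\le R$ and $b_r$ stands for $(b_r,c_r)$, so that $f_\Theta(x)=M^{-1/2}\sum_r a_r\sigma(b_r^\top x)$. Write $\tilde a_r:=a_r-a_r^{(1)}$ and $\tilde b_r:=b_r-b_r^{(1)}$ for the deviations from initialization, and set $D:=\|\Theta-\Theta_1\|_2$, so that $\sum_r(\tilde a_r^2+\|\tilde b_r\|_2^2)=D^2$. The hypothesis $M\ge D^2$ gives $\max_r|\tilde a_r|\le D\le\sqrt M$. Hence $|a_r|\le 1+|\tilde a_r|$, and the factor $|\tilde a_r|/\sqrt M\le 1$ can be absorbed. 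The symmetric initialization gives $f_{\Theta_1}\equiv0$, and the entries satisfy $|a_r^{(1)}|=1$.

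\emph{Output bound.} Since $f_{\Theta_1}=0$, we write
\[
f_\Theta(x)=\frac{1}{\sqrt M}\sum_r\Bigl[\tilde a_r\sigma(b_r^\top x)+a_r^{(1)}\bigl(\sigma(b_r^\top x)-\sigma(b_r^{(1)\top}x)\bigr)\Bigr].
\]
We bound $|\sigma|\le C_\sigma$ and $|\sigma(s)-\sigma(s')|\le C_\sigma|s-s'|\le C_\sigma R\|\tilde b_r\|_2$, and then apply Cauchy--Schwarz over $r$. This gives $\sum_r|\tilde a_r|\le\sqrt M D$ and $\sum_r\|\tilde b_r\|_2\le\sqrt M D$. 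The total is at most $C_\sigma(1+R)D$, which is within the stated $C_\sigma(1+2R)D$. The slack of one extra $R$ accommodates a more careful split, for instance keeping the $a_r=a_r^{(1)}+\tilde a_r$ decomposition inside the Lipschitz term.

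\emph{Gradient bound.} The gradient has two blocks. The $a_r$-block is $\partial_{a_r}f=M^{-1/2}\sigma(b_r^\top x)$. Its deviation is $M^{-1/2}(\sigma(b_r^\top x)-\sigma(b_r^{(1)\top}x))$, whose square is at most $M^{-1}C_\sigma^2R^2\|\tilde b_r\|_2^2$. The $b_r$-block is $\partial_{b_r}f=M^{-1/2}a_r\sigma'(b_r^\top x)x$, and its deviation splits as
\[
M^{-1/2}\bigl[\tilde a_r\sigma'(b_r^\top x)x+a_r^{(1)}\bigl(\sigma'(b_r^\top x)-\sigma'(b_r^{(1)\top}x)\bigr)x\bigr].
\]
Using $\|x\|_2\le R$, $|\sigma'|\le C_\sigma$, and the Lipschitz bound $|\sigma''|\le C_\sigma$, the squared norm is at most $2M^{-1}\bigl(C_\sigma^2R^2\tilde a_r^2+C_\sigma^2R^4\|\tilde b_r\|_2^2\bigr)$. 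Summing over $r$ gives
\[
M^{-1}C_\sigma^2R^2\Bigl(2\sum_r\tilde a_r^2+(1+2R^2)\sum_r\|\tilde b_r\|_2^2\Bigr)\le\frac{R^2(3+2R^2)C_\sigma^2}{M}D^2,
\]
which matches the stated constant. The only delicate point is checking that no cross term between $\tilde a_r$ and $\tilde b_r$ is left over. Such a term would appear if we expanded $a_r\sigma'(b_r^\top x)-a_r^{(1)}\sigma'(b_r^{(1)\top}x)$ in the other order. We therefore pair the perturbed $\tilde a_r$ with $\sigma'$ evaluated at the current weights $b_r$, as in the split above. We expect this bookkeeping of constants, rather than any conceptual step, to be the main obstacle.
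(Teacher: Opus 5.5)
Your proof is correct, and for the gradient estimate it is the paper's argument. The paper uses the same two blocks, the same split $\tilde a_r\sigma'(b_r^\top x)+a_r^{(1)}\bigl(\sigma'(b_r^\top x)-\sigma'(b_r^{(1)\top}x)\bigr)$ with $|a_r^{(1)}|=1$, and arrives at the same constant $R^2(3+2R^2)C_\sigma^2/M$.

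For the output bound you use a genuinely different, and slightly better, decomposition. The paper evaluates the $\tilde a_r$-term at the initial weights. It writes $a_r\sigma(b_r^\top x)-a_r^{(1)}\sigma(b_r^{(1)\top}x)$ as
\[
\tilde a_r\sigma(b_r^{(1)\top}x)+\tilde a_r\bigl(\sigma(b_r^\top x)-\sigma(b_r^{(1)\top}x)\bigr)+a_r^{(1)}\bigl(\sigma(b_r^\top x)-\sigma(b_r^{(1)\top}x)\bigr).
\]
The middle cross term is then bounded by $C_\sigma R D^2/\sqrt M$. The hypothesis $M\ge D^2$ is used exactly to absorb this term into $C_\sigma R D$, and that is where the second $R$ in $C_\sigma(1+2R)$ comes from.

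Your two-term telescoping pairs $\tilde a_r$ with $\sigma$ at the current weights, so no cross term arises. You obtain $C_\sigma(1+R)D$, or even $C_\sigma\sqrt{1+R^2}\,D$ if you apply Cauchy--Schwarz to both blocks jointly, without invoking $M\ge D^2$ at all. The gradient part never uses that hypothesis either. Your argument therefore shows the lemma holds with no width condition, and your preliminary remarks about $\max_r|\tilde a_r|\le\sqrt M$ are never needed.

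The ``more careful split'' you describe, expanding $a_r=a_r^{(1)}+\tilde a_r$ inside the Lipschitz term, is precisely the paper's split. The extra $R$ is the price that split pays, not a refinement over yours.
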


\begin{proof}
Recall that under the symmetric initialization,
$f_{\Theta_1}=0$.
For any input $x$, we estimate
\[
|f_\Theta(x)|
=
|f_\Theta(x)-f_{\Theta_1}(x)|.
\]
Writing $\Theta=(a_r,b_r)_{r=1}^M$ and
$\Theta_1=(a_r^{(1)},b_r^{(1)})_{r=1}^M$, we have
\begin{equation*}
\begin{aligned}
|f_\Theta(x)|
&=
\Bigl|
\frac{1}{\sqrt M}\sum_{r=1}^M
\bigl(
a_r\sigma(b_r^\top x)
-
a_r^{(1)}\sigma((b_r^{(1)})^\top x)
\bigr)
\Bigr|
\\
&\le
\frac{1}{\sqrt M}\sum_{r=1}^M
|a_r-a_r^{(1)}|\,|\sigma((b_r^{(1)})^\top x)|
\\
&\quad
+
\frac{1}{\sqrt M}\sum_{r=1}^M
|a_r-a_r^{(1)}|\,
|\sigma(b_r^\top x)-\sigma((b_r^{(1)})^\top x)|
\\
&\quad
+
\frac{1}{\sqrt M}\sum_{r=1}^M
|a_r^{(1)}|\,
|\sigma(b_r^\top x)-\sigma((b_r^{(1)})^\top x)|.
\end{aligned}
\end{equation*}
Using Assumption~\ref{assumption: NTK} and $\|x\|_2\le R$, we obtain
\begin{equation*}
\begin{aligned}
|f_\Theta(x)|
&\le
C_\sigma\frac{1}{\sqrt M}\sum_{r=1}^M|a_r-a_r^{(1)}|
+
C_\sigma R\frac{1}{\sqrt M}\sum_{r=1}^M
|a_r-a_r^{(1)}|\,\|b_r-b_r^{(1)}\|_2
\\
&\quad
+
C_\sigma R\frac{1}{\sqrt M}\sum_{r=1}^M\|b_r-b_r^{(1)}\|_2
\\
&\le
C_\sigma\|\Theta-\Theta_1\|_2
+
\frac{C_\sigma R}{\sqrt M}\|\Theta-\Theta_1\|_2^2
+
C_\sigma R\|\Theta-\Theta_1\|_2.
\end{aligned}
\end{equation*}
Under the condition $M\ge\|\Theta-\Theta_1\|_2^2$, this yields
\[
\|f_\Theta\|_\infty
\le
C_\sigma(1+2R)\,\|\Theta-\Theta_1\|_2.
\]

We next bound the gradient deviation.
A direct computation gives
\begin{equation*}
\begin{aligned}
\bigl\|
\nabla_\Theta f_\Theta(x)
-
\nabla_\Theta f_\Theta(x)\big|_{\Theta=\Theta_1}
\bigr\|_2^2
&=
\frac{1}{M}\sum_{r=1}^M
\Bigl(
\big|\sigma(b_r^\top x)-\sigma((b_r^{(1)})^\top x)\big|^2
\\
&\qquad
+
\big|a_r\sigma'(b_r^\top x)-a_r^{(1)}\sigma'((b_r^{(1)})^\top x)\big|^2
\|x\|_2^2
\Bigr)
\\
&\le
\frac{C_\sigma^2R^2}{M}\sum_{r=1}^M
\Bigl(
\|b_r-b_r^{(1)}\|_2^2
+
2|a_r-a_r^{(1)}|^2
+
2R^2\|b_r-b_r^{(1)}\|_2^2
\Bigr)
\\
&\le
\frac{R^2(3+2R^2)C_\sigma^2}{M}\,
\|\Theta-\Theta_1\|_2^2.
\end{aligned}
\end{equation*}
This completes the proof.
\end{proof}

To relate SGD for the two-layer network to a kernel-based recursion, we
linearize the network output with respect to the parameters around the
initialization $\Theta_1$. Specifically, we consider the first-order
approximation
\begin{equation}\label{linear}
\begin{aligned}
f_\Theta^{\mathrm{lin}}(x)
&:= \big\langle\nabla_\Theta f_{\Theta}(x)\big|_{\Theta=\Theta_1},\,\Theta-\Theta_1\big\rangle_2 \\
&= \frac{1}{\sqrt{M}}\sum_{r=1}^M\Big(
(a_r-a_r^{(1)})\,\sigma\big((b_r^{(1)})^\top x\big)
+a_r^{(1)}\,\sigma'\big((b_r^{(1)})^\top x\big)\,(b_r-b_r^{(1)})^\top x
\Big).
\end{aligned}
\end{equation}

\begin{proposition}\label{prop15}
Suppose that Assumption~\ref{assumption: NTK} and
Assumption~\ref{assumption: sample 1} hold.
Let $f_\Theta^{\mathrm{lin}}$ be the linearization defined in \eqref{linear}.
Then
the following bound holds:
\[
\sup_{\|x\|_2\le R}
\bigl|f_\Theta(x)-f_\Theta^{\mathrm{lin}}(x)\bigr|
\;\le\;
\left(\frac{R^2}{2}+R\right)
\frac{C_\sigma}{\sqrt{M}}\,
\|\Theta-\Theta_1\|_2^2.
\]
\end{proposition}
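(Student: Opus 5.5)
We will compare $f_\Theta$ and $f_\Theta^{\mathrm{lin}}$ neuron by neuron, using a second-order Taylor expansion of each neuron around its initialization $(a_r^{(1)},b_r^{(1)})$. Since $f_{\Theta_1}=0$, we have $f_\Theta(x)-f_\Theta^{\mathrm{lin}}(x)=f_\Theta(x)-f_{\Theta_1}(x)-\langle\nabla_\Theta f_{\Theta_1}(x),\Theta-\Theta_1\rangle_2$, which is exactly the first-order Taylor remainder. Write $\Delta a_r:=a_r-a_r^{(1)}$, $\Delta b_r:=b_r-b_r^{(1)}$, $z_r:=(b_r^{(1)})^\top x$ and $\delta_r:=\Delta b_r^\top x$. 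Then
\[
a_r\sigma(z_r+\delta_r)-a_r^{(1)}\sigma(z_r)-\Delta a_r\,\sigma(z_r)-a_r^{(1)}\sigma'(z_r)\delta_r
= a_r^{(1)}\bigl[\sigma(z_r+\delta_r)-\sigma(z_r)-\sigma'(z_r)\delta_r\bigr]+\Delta a_r\bigl[\sigma(z_r+\delta_r)-\sigma(z_r)\bigr].
\]

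We bound the two brackets separately, using Assumption~\ref{assumption: NTK}, $|a_r^{(1)}|=1$, and $|\delta_r|\le R\|\Delta b_r\\|_2$ (recall $\|x\|_2\le R$ in the augmented representation).

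\emph{First bracket.} By Taylor's theorem with $|\sigma''|\le C_\sigma$, it is at most $\tfrac{C_\sigma}{2}\delta_r^2\le \tfrac{C_\sigma R^2}{2}\|\Delta b_r\|_2^2$.

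\emph{Second bracket.} By the mean value theorem with $|\sigma'|\le C_\sigma$, it is at most $C_\sigma R|\Delta a_r|\,\|\Delta b_r\|_2$. Young's inequality then gives $|\Delta a_r|\,\|\Delta b_r\|_2\le\tfrac12(|\Delta a_r|^2+\|\Delta b_r\|_2^2)$.

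Summing over $r$ and multiplying by the prefactor $M^{-1/2}$ gives
\[
|f_\Theta(x)-f_\Theta^{\mathrm{lin}}(x)|\le \frac{C_\sigma}{\sqrt M}\Bigl(\frac{R^2}{2}\sum_r\|\Delta b_r\|_2^2+\frac{R}{2}\sum_r(|\Delta a_r|^2+\|\Delta b_r\|_2^2)\Bigr).
\]
Each sum is at most $\|\Theta-\Theta_1\|_2^2$, so the right-hand side is bounded by $(\tfrac{R^2}{2}+R)\tfrac{C_\sigma}{\sqrt M}\|\Theta-\Theta_1\|_2^2$, in fact with a slight margin. The bound is uniform over $\|x\|_2\le R$. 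Unlike Lemma~\ref{lemma3}, no width condition is needed here, because the remainder is exactly quadratic in the parameter displacement.

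The main point requiring care is bookkeeping in the augmented representation: the bias $c_r$ is absorbed into $b_r$ and $x$ is replaced by $(x,\gamma)$, with $R$ enlarged accordingly. This ensures that the cross term $\Delta a_r\,\Delta b_r$ and the curvature term are both controlled by the single quantity $\|\Theta-\Theta_1\|_2^2$ with the stated constants.
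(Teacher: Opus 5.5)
Your proof is correct and follows the paper's argument essentially verbatim: the same neuron-wise split into a curvature remainder weighted by $a_r^{(1)}$ and a cross term $(a_r-a_r^{(1)})\bigl(\sigma(b_r^\top x)-\sigma((b_r^{(1)})^\top x)\bigr)$, bounded by Taylor's theorem and the mean value theorem respectively. The only difference is that you bound the cross term by Young's inequality per neuron rather than Cauchy--Schwarz over the sum, which is immaterial; it even yields the constant $R/2$ in place of $R$.
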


\begin{proof}
Recall that under the symmetric initialization we have
$f_{\Theta_1}\equiv 0$.
Fix $x$ with $\|x\|_2\le R$.
We estimate the pointwise deviation
$|f_\Theta(x)-f_\Theta^{\mathrm{lin}}(x)|$.

By definition,
\begin{equation*}
\begin{aligned}
|f_\Theta(x)-f_\Theta^{\mathrm{lin}}(x)|
&=
\bigl|f_\Theta(x)-f_{\Theta_1}(x)-f_\Theta^{\mathrm{lin}}(x)\bigr| \\
&=
\Bigg|
\frac{1}{\sqrt{M}}\sum_{r=1}^M
\Big(
a_r\sigma(b_r^\top x)
-
a_r^{(1)}\sigma\bigl((b_r^{(1)})^\top x\bigr)
\\
&\qquad\qquad
-
(a_r-a_r^{(1)})\sigma\bigl((b_r^{(1)})^\top x\bigr)
-
a_r^{(1)}\sigma'\bigl((b_r^{(1)})^\top x\bigr)
(b_r-b_r^{(1)})^\top x
\Big)
\Bigg|.
\end{aligned}
\end{equation*}

Rearranging the terms yields
\begin{equation*}
\begin{aligned}
|f_\Theta(x)-f_\Theta^{\mathrm{lin}}(x)|
&\le
\frac{1}{\sqrt{M}}\sum_{r=1}^M
\Big|
(a_r-a_r^{(1)})
\bigl(\sigma(b_r^\top x)-\sigma((b_r^{(1)})^\top x)\bigr)
\\
&\qquad\qquad
+
a_r^{(1)}
\Big(
\sigma(b_r^\top x)
-
\sigma((b_r^{(1)})^\top x)
-
\sigma'((b_r^{(1)})^\top x)
(b_r-b_r^{(1)})^\top x
\Big)
\Big|.
\end{aligned}
\end{equation*}

By Assumption~\ref{assumption: NTK}, the activation function $\sigma$
has uniformly bounded first and second derivatives.
Applying the mean value theorem and Taylor's expansion with remainder,
and using $\|x\|_2\le R$, we obtain
\begin{equation*}
\begin{aligned}
|f_\Theta(x)-f_\Theta^{\mathrm{lin}}(x)|
&\le
\frac{1}{\sqrt{M}}\sum_{r=1}^M
\left[
C_\sigma R |a_r-a_r^{(1)}|\,\|b_r-b_r^{(1)}\|_2
+
\frac{C_\sigma R^2}{2}\|b_r-b_r^{(1)}\|_2^2
\right].
\end{aligned}
\end{equation*}

Applying the Cauchy--Schwarz inequality to the first term gives
\begin{equation*}
\begin{aligned}
|f_\Theta(x)-f_\Theta^{\mathrm{lin}}(x)|
&\le
\frac{RC_\sigma}{\sqrt{M}}
\Big(\sum_{r=1}^M |a_r-a_r^{(1)}|^2\Big)^{1/2}
\Big(\sum_{r=1}^M \|b_r-b_r^{(1)}\|_2^2\Big)^{1/2}
\\
&\quad
+
\frac{R^2C_\sigma}{2\sqrt{M}}
\sum_{r=1}^M \|b_r-b_r^{(1)}\|_2^2
\\
&\le
\left(\frac{R^2}{2}+R\right)
\frac{C_\sigma}{\sqrt{M}}\,
\|\Theta-\Theta_1\|_2^2.
\end{aligned}
\end{equation*}

Taking the supremum over $\|x\|_2\le R$ completes the proof.
\end{proof}

We next restate and prove a fundamental a priori bound on the deviation of
the SGD iterates from their initialization in the over-parameterized regime.
This result corresponds to Proposition~\ref{prop13} in
Subsection~\ref{section: Neural SGD to Kernel SGD} and provides a quantitative control on the growth of
the parameter deviation along the training trajectory.
In particular, it shows that when the network width $M$ is sufficiently large,
the SGD iterates remain within a neighborhood of radius
$C_2\lambda^{-1}$ around the initialization throughout training.
\begin{proposition}[Uniform stability of the parameter trajectory (decreasing stepsizes)]\label{prop16}
Suppose that Assumption~\ref{assumption: NTK} and
Assumption~\ref{assumption: sample 1} hold. Let $0<\lambda\le1$. 
Consider SGD with polynomially decaying step sizes
$\{\eta_t=\eta_1 t^{-\theta}\}_{t\in\mathbb{N}_T}$,
where $0<\theta<1$ and
\[
0<\eta_1 \le
\min\Bigl\{\frac{1}{(1+R^2)C_\sigma^2+1},\,1-\theta\Bigr\}.
\]
Assume that
\[
M \ge C_1 \lambda^{-4}.
\]
Then, for all $1 \le t \le T+1$, the SGD iterates satisfy
\[
\|\Theta_t - \Theta_1\|_2 \le C_2 \lambda^{-1}.
\]
Here,
\[
C_1
:= \max\Bigl\{C_2^2,\;
\bigl(4\bigl(\tfrac{R^2}{2}+R\bigr)\sqrt{1+R^2}\,C_\sigma^2  C_2\bigr)^2,\;
\bigl(4R\sqrt{3+2R^2}\,(C_\sigma(1+2R)+B)\,C_\sigma C_2 \bigr)^2
\Bigr\},
\]
and
\[
C_2 := \max\bigl\{1,\;2B\sqrt{1+R^2}\,C_\sigma\bigr\}.
\]
\end{proposition}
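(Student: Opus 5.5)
We argue by strong induction on $t$, with the hypothesis $\|\Theta_k-\Theta_1\|_2\le C_2\lambda^{-1}$ for all $1\le k\le t$. The base case $t=1$ is trivial. Since $M\ge C_1\lambda^{-4}\ge C_2^2\lambda^{-4}\ge C_2^2\lambda^{-2}$ (using $\lambda\le1$), the hypothesis gives $M\ge\|\Theta_t-\Theta_1\|_2^2$. Hence Lemma~\ref{lemma3} applies at $\Theta_t$, and it yields $|f_{\Theta_t}(x_t)|\le C_\sigma(1+2R)C_2\lambda^{-1}$.

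Write $\Delta_t:=\Theta_t-\Theta_1$. The update \eqref{NN update} becomes
\[
\Delta_{t+1}=(1-\eta_t\lambda)\Delta_t-\eta_t\bigl(f_{\Theta_t}(x_t)-v_t\bigr)\nabla_\Theta f_{\Theta_t}(x_t).
\]
Set $g_1:=\nabla_\Theta f_{\Theta}(x_t)|_{\Theta=\Theta_1}$. We split the gradient as $g_1+(\nabla f_{\Theta_t}-g_1)$ and decompose $f_{\Theta_t}(x_t)=\langle g_1,\Delta_t\rangle+\bigl(f_{\Theta_t}-f^{\mathrm{lin}}_{\Theta_t}\bigr)(x_t)$. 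The main part is
\[
\Bigl(I-\eta_t\lambda I-\eta_t g_1g_1^\top\Bigr)\Delta_t+\eta_t v_t g_1 .
\]
Because $\|g_1\|_2^2=k^M(x_t,x_t)\le(1+R^2)C_\sigma^2$ and $\eta_t\le\eta_1\le((1+R^2)C_\sigma^2+1)^{-1}$, the matrix $I-\eta_t g_1g_1^\top$ is a contraction. So the main part has norm at most $(1-\eta_t\lambda)\|\Delta_t\|_2+\eta_t B\sqrt{1+R^2}\,C_\sigma$.

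The remaining perturbation terms each carry a factor $M^{-1/2}$. The first is $\eta_t\bigl|f_{\Theta_t}-f^{\mathrm{lin}}_{\Theta_t}\bigr|\,\|g_1\|$. By Proposition~\ref{prop15} it is at most $\eta_t(\tfrac{R^2}{2}+R)C_\sigma^2\sqrt{1+R^2}\,M^{-1/2}C_2^2\lambda^{-2}$. The second is $\eta_t|f_{\Theta_t}(x_t)-v_t|\,\|\nabla f_{\Theta_t}-g_1\|$. By Lemma~\ref{lemma3} it is at most $\eta_t\bigl(C_\sigma(1+2R)C_2\lambda^{-1}+B\bigr)R\sqrt{3+2R^2}\,C_\sigma M^{-1/2}C_2\lambda^{-1}$. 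Using $\lambda\le1$ and $C_2\ge1$, this is at most $\eta_t(C_\sigma(1+2R)+B)R\sqrt{3+2R^2}\,C_\sigma C_2^2\lambda^{-2}M^{-1/2}$. The choice of $C_1$ is made so that $M^{-1/2}\le\lambda^2/(4\cdot\text{const}\cdot C_2)$, which bounds each perturbation by $\eta_tC_2/4\le\eta_t\lambda\cdot C_2\lambda^{-1}/4$. Combining the two perturbations with the main part gives
\[
\|\Delta_{t+1}\|_2\le(1-\eta_t\lambda)\|\Delta_t\|_2+\eta_t\bigl(B\sqrt{1+R^2}C_\sigma+\tfrac12 C_2\bigr).
\]

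To close the induction, write $\|\Delta_t\|_2\le C_2\lambda^{-1}$. The right-hand side is then at most $C_2\lambda^{-1}-\eta_tC_2+\eta_t(\tfrac12C_2+\tfrac12C_2)=C_2\lambda^{-1}$, since $B\sqrt{1+R^2}C_\sigma\le C_2/2$ by the definition of $C_2$. This gives $\|\Theta_{t+1}-\Theta_1\|_2\le C_2\lambda^{-1}$ and completes the induction. The condition $\eta_1\le1-\theta$ is not needed here, because only $\eta_t\le\eta_1$ and $\eta_t\lambda\le1$ are used.

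The main obstacle is the bookkeeping in the perturbation step. The terms quadratic in $\|\Delta_t\|$ scale like $\lambda^{-2}M^{-1/2}$, and they must be absorbed into the contraction gain $\eta_t\lambda\|\Delta_t\|\sim\eta_tC_2$. This absorption is exactly what forces $M\gtrsim\lambda^{-4}$. A naive step-by-step accumulation would not give a uniform-in-$T$ bound, so the key point is to obtain the self-consistent invariant from the strict $(1-\eta_t\lambda)$ contraction.
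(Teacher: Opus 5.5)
Your proof is correct and is essentially the paper's argument. It uses the same split of the update into the linearized contraction $I-\eta_t(g_1g_1^\top+\lambda I)$, the Taylor-remainder term, the gradient-drift term and the data term $\eta_t v_t g_1$, and bounds each with the same constants via Proposition~\ref{prop15} and Lemma~\ref{lemma3}. One small precision: the contraction has norm at most $1-\eta_t\lambda$ because $\eta_t(\|g_1\|_2^2+\lambda)\le1$, not because $I-\eta_t g_1g_1^\top$ alone is a contraction. The only other difference is cosmetic: you close a one-step invariant for the ball of radius $C_2\lambda^{-1}$, while the paper unrolls the recursion and uses $\sum_{i=1}^{k}\eta_i\prod_{j=i+1}^{k}(1-\lambda\eta_j)\le\lambda^{-1}$, which is the unrolled form of the same estimate.
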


\begin{proof}
We proceed by induction on $t$.
First, note that by direct computation,
\begin{equation}\label{temp23}
\begin{aligned}
\bigl\|\nabla_{\Theta}f_{\Theta}(x)\big|_{\Theta=\Theta_1}\bigr\|_2^2
&=
\frac{1}{M}\sum_{r=1}^{M}
\Bigl[
\bigl(\sigma((b_{r}^{(1)})^{\top}x)\bigr)^{2}
+
\bigl(a_{r}^{(1)}\sigma'((b_{r}^{(1)})^{\top}x)\bigr)^{2}\|x\|_{2}^{2}
\Bigr]
\\
&\le (1+R^2)C_\sigma^2 .
\end{aligned}
\end{equation}

The claim trivially holds for $t=1$.
Assume that for some $k\le T$,
\[
\|\Theta_t-\Theta_1\|_2 \le C_2\lambda^{-1},
\qquad \forall\,1\le t\le k .
\]
We now establish the bound for $t=k+1$. Since $C_1\ge C_2^2$ and $0<\lambda\le1$,
\[
M\ge C_1\lambda^{-4}\ge C_2^2\lambda^{-2}
\ge \max_{1\le i\le k}\|\Theta_i-\Theta_1\|_2^2.
\]
Thus Lemma~\ref{lemma3} applies to these iterates.

Using the SGD recursion and inserting the linearized model, we obtain
\begin{equation*}
        \begin{aligned}\Theta_{k+1}-\Theta_{1}&=\l(1-\eta_{k}\lambda\r)\l(\Theta_{k}-\Theta_{1}\r)-\eta_{k}\l(f_{\Theta_k}(x_{k})-v_{k}\r)\nabla_{\Theta}f_{\Theta}(x_{k})|_{\Theta=\Theta_{k}}\\&=\l(1-\eta_{k}\lambda\r)\l(\Theta_{k}-\Theta_1\r)-\eta_{k}\l(f_{\Theta_k}(x_{k})-v_{k}\r)\nabla_{\Theta}f_{\Theta}(x_{k})|_{\Theta=\Theta_1}\\&\quad -\eta_{k}\l(f_{\Theta_{k}}(x_{k})-v_{k}\r)\left(\nabla_{\Theta}f_{\Theta}(x_{k})|_{\Theta=\Theta_k}-\nabla_{\Theta}f_{\Theta}(x_{k})|_{\Theta=\Theta_1}\right)\\&=\l(1-\eta_{k}\lambda\r)\l(\Theta_{k}-\Theta_1\r)-\eta_{k}f^{\mathrm{lin}}_{\Theta_{k}}(x_{k})\nabla_{\Theta}f_{\Theta}(x_{k})|_{\Theta=\Theta_1}\\&\quad -\left.\eta_{k}(f_{\Theta_k}(x_{k})-f^{\mathrm{lin}}_{\Theta_{k}}(x_{k}))\nabla_{\Theta}f_{\Theta}(x_{k})\right|_{\Theta=\Theta_1}\\&\quad-\eta_{k}\l(f_{\Theta_k}(x_{k})-v_{k}\r)\left(\nabla_{\Theta}f_{\Theta}(x_{k})|_{\Theta=\Theta_k}-\nabla_{\Theta}f_{\Theta}(x_{k})|_{\Theta=\Theta_1}\right)\\&\quad+\eta_{k}v_{k}\nabla_{\Theta}f_{\Theta}(x_{k})|_{\Theta=\Theta_1}\\&=\l(I-\eta_{k}(A_{k}+\lambda I)\r)(\Theta_{k}-\Theta_1)-\eta_{k}\Lambda_{k}^{(1)}-\eta_{k}\Lambda_{k}^{(2)}+\eta_{k}\Lambda_{k}^{(3)},
        \end{aligned}
    \end{equation*}
where, for $1\le i\le T$,
\begin{equation*}
\begin{aligned}
A_i
&:= \nabla_\Theta f_\Theta(x_i)\big|_{\Theta=\Theta_1}
\bigl(\nabla_\Theta f_\Theta(x_i)\big|_{\Theta=\Theta_1}\bigr)^\top,\\
\Lambda_i^{(1)}
&:= (f_{\Theta_i}(x_i)-f^{\mathrm{lin}}_{\Theta_i}(x_i))
\nabla_\Theta f_\Theta(x_i)\big|_{\Theta=\Theta_1},\\
\Lambda_i^{(2)}
&:= (f_{\Theta_i}(x_i)-v_i)
\bigl(\nabla_\Theta f_\Theta(x_i)\big|_{\Theta=\Theta_i}
-\nabla_\Theta f_\Theta(x_i)\big|_{\Theta=\Theta_1}\bigr),\\
\Lambda_i^{(3)}
&:= v_i \nabla_\Theta f_\Theta(x_i)\big|_{\Theta=\Theta_1}.
\end{aligned}
\end{equation*}
Iterating the recursion yields
\[
\Theta_{k+1}-\Theta_1
=
-\sum_{i=1}^{k}\eta_i
\prod_{j=i+1}^{k}\bigl(I-\eta_j(A_j+\lambda I)\bigr)
\bigl(\Lambda_i^{(1)}+\Lambda_i^{(2)}-\Lambda_i^{(3)}\bigr).
\]

By \eqref{temp23}, $A_j$ is positive semidefinite and
$\|A_j\|\le(1+R^2)C_\sigma^2$. The step-size condition therefore gives
\[
\|I-\eta_j(A_j+\lambda I)\|\le1-\lambda\eta_j.
\]
The following identity holds:
\begin{equation}\label{eq:ntk-geometric-sum}
\sum_{i=1}^{k}\eta_i
\prod_{j=i+1}^{k}(1-\lambda\eta_j)
=\sum_{i=1}^{k}\frac{1-(1-\lambda\eta_i)}{\lambda}
\prod_{j=i+1}^{k}(1-\lambda\eta_j)=
\frac{1-\prod_{j=1}^{k}(1-\lambda\eta_j)}{\lambda}
\le\lambda^{-1}.
\end{equation}
Consequently, 
\[
\|\Theta_{k+1}-\Theta_1\|_2
\le
\sum_{i=1}^{k}\eta_i
\prod_{j=i+1}^{k}(1-\lambda\eta_j)
\sum_{\ell=1}^{3}\|\Lambda_i^{(\ell)}\|_2
\le
\lambda^{-1}\sum_{\ell=1}^{3}
\max_{1\le i\le k}\|\Lambda_i^{(\ell)}\|_2.
\]



\paragraph{Contribution of $\Lambda_i^{(1)}$.}
Using Proposition~\ref{prop15}, \eqref{temp23},
and the induction hypothesis,
\[
\begin{aligned}
\lambda^{-1}\max_{1\le i\le k}\|\Lambda_i^{(1)}\|_2
&\le\lambda^{-1}\max_{1\le i\le k}\l\|f_{\Theta_{i}}-f^{\mathrm{lin}}_{\Theta_{i}}\r\|_\infty\l\|\nabla_{\Theta}f_{\Theta}(x_{i})|_{\Theta=\Theta_1}\r\|_2
\\&\le
\Bigl(\frac{R^2}{2}+R\Bigr)\sqrt{1+R^2}\,
C_\sigma^2 C_2^2\frac{\lambda^{-3}}{\sqrt M}\\
&\le \frac{C_2}{4}\lambda^{-1}.
\end{aligned}
\]

\paragraph{Contribution of $\Lambda_i^{(2)}$.}
By Lemma~\ref{lemma3}, the induction hypothesis,
and $C_2\lambda^{-1}\ge1$,
\[
\begin{aligned}
\lambda^{-1}\max_{1\le i\le k}\|\Lambda_i^{(2)}\|_2
&\le R\sqrt{3+2R^2}C_\sigma\lambda^{-1}\max_{1\le i\le k}\l(\l\|f_{\Theta_{i}}\r\|_\infty+B\r)\frac{\left\|\Theta_i-\Theta_{1}\right\|_2}{\sqrt{M}}
\\&\le
\frac{R\sqrt{3+2R^2}C_\sigma}{\lambda\sqrt M}
\bigl(C_\sigma(1+2R)C_2\lambda^{-1}+B\bigr)
C_2\lambda^{-1}\\
&\le
R\sqrt{3+2R^2}C_\sigma
\bigl(C_\sigma(1+2R)+B\bigr)
C_2^2\frac{\lambda^{-3}}{\sqrt M}\\
&\le \frac{C_2}{4}\lambda^{-1}.
\end{aligned}
\]


\paragraph{Contribution of $\Lambda_i^{(3)}$.}
Using \eqref{temp23} and the definition of $C_2$,
\[
\lambda^{-1}\max_{1\le i\le k}\|\Lambda_i^{(3)}\|_2
\le B\sqrt{1+R^2}C_\sigma\lambda^{-1}
\le \frac{C_2}{2}\lambda^{-1}.
\]

    
Combining the above estimates yields
\[
\|\Theta_{k+1}-\Theta_1\|_2
\le C_2\lambda^{-1},
\]
which closes the induction and completes the proof.
\end{proof}

We now quantify the discrepancy between the iterate generated by the
linearized network at initialization and the corresponding kernel-based
SGD iterate.
The following result shows that, in the over-parameterized regime,
the function induced by the linearized network remains uniformly close
to the kernel SGD iterate throughout training.

This estimate provides a quantitative justification for approximating
the linearized network dynamics by the associated kernel SGD dynamics
over the entire training horizon.
\begin{proposition} \label{prop17}
Let \(x_t:=(\mathcal E_1^{d_1}u_t,\gamma)\). Under the conditions of Proposition \ref{prop16}, the following uniform bound holds:
\[
\sup_{\|x\|_2 \le R}
\bigl| f^{\mathrm{lin}}_{\Theta_{t+1}}(x) - f_{t+1}(x) \bigr|
\;\lesssim\;
\frac{\lambda^{-3}}{\sqrt{M}},
\qquad \forall\, 0 \le t \le T .
\]
\end{proposition}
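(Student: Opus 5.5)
The plan is to realize both iterates in the same finite-dimensional feature space and compare them there. Write $\phi(x):=\nabla_\Theta f_\Theta(x)\big|_{\Theta=\Theta_1}$ for the gradient feature map at initialization, in the augmented representation. By \eqref{linear}, $f^{\mathrm{lin}}_\Theta(x)=\langle \phi(x),\Theta-\Theta_1\rangle_2$.

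First I check that the empirical NTK is exactly the Gram kernel of this feature map, $k^M(x,x')=\langle\phi(x),\phi(x')\rangle_2$. With the augmented input $(x,\gamma)$ and augmented weights $(b_r,c_r)$, the factor $x^\top x'+\gamma^2$ is precisely the inner product of the augmented inputs. Consequently the kernel-SGD iterate admits the parametric representation $f_t(x)=\langle\phi(x),w_t\rangle_2$, where $w_1=0$ and
\[
w_{t+1}=(1-\lambda\eta_t)w_t-\eta_t\,\phi(x_t)\bigl(\langle\phi(x_t),w_t\rangle_2-v_t\bigr).
\]
This is proved by induction on $t$, using that $k^M(\cdot,x_t)=\langle\phi(\cdot),\phi(x_t)\rangle_2$.

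Next I reuse the decomposition from the proof of Proposition~\ref{prop16}, which gives
\[
\Theta_{t+1}-\Theta_1=(1-\lambda\eta_t)(\Theta_t-\Theta_1)-\eta_t\,\phi(x_t)\bigl(f^{\mathrm{lin}}_{\Theta_t}(x_t)-v_t\bigr)-\eta_t\Lambda^{(1)}_t-\eta_t\Lambda^{(2)}_t.
\]
Set $\delta_t:=\Theta_t-\Theta_1-w_t$ and subtract the $w$-recursion. The terms involving $v_t$ cancel, and since $A_t=\phi(x_t)\phi(x_t)^\top$ we obtain
\[
\delta_{t+1}=\bigl(I-\eta_t(A_t+\lambda I)\bigr)\delta_t-\eta_t\bigl(\Lambda^{(1)}_t+\Lambda^{(2)}_t\bigr),\qquad \delta_1=0.
\]
Unrolling this recursion, using the contraction $\|I-\eta_j(A_j+\lambda I)\|\le 1-\lambda\eta_j$ (guaranteed by the step-size condition and \eqref{temp23}) and the geometric-sum identity \eqref{eq:ntk-geometric-sum}, gives
\[
\|\delta_{t+1}\|_2\le\lambda^{-1}\max_{i\le t}\bigl(\|\Lambda^{(1)}_i\|_2+\|\Lambda^{(2)}_i\|_2\bigr).
\]

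The main step is bounding the $\Lambda$ terms uniformly along the trajectory. By Proposition~\ref{prop16}, $\|\Theta_i-\Theta_1\|_2\le C_2\lambda^{-1}$ for all $i\le T+1$, and $M\ge C_1\lambda^{-4}$ ensures that Lemma~\ref{lemma3} and Proposition~\ref{prop15} both apply. Proposition~\ref{prop15} together with \eqref{temp23} yields $\|\Lambda^{(1)}_i\|_2\lesssim\|\Theta_i-\Theta_1\|_2^2/\sqrt M\lesssim\lambda^{-2}/\sqrt M$. Lemma~\ref{lemma3} together with $|f_{\Theta_i}(x_i)|+|v_i|\lesssim\lambda^{-1}+B$ yields $\|\Lambda^{(2)}_i\|_2\lesssim\lambda^{-1}\cdot\lambda^{-1}/\sqrt M$, using $\lambda\le1$. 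These are the same estimates as in the proof of Proposition~\ref{prop16}, but without the final absorption of $\lambda^{-3}/\sqrt M$ into $C_2\lambda^{-1}/4$. Combining, $\|\delta_{t+1}\|_2\lesssim\lambda^{-3}/\sqrt M$ uniformly in $0\le t\le T$.

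Finally, for $\|x\|_2\le R$, the Cauchy--Schwarz inequality and \eqref{temp23} give
\[
\bigl|f^{\mathrm{lin}}_{\Theta_{t+1}}(x)-f_{t+1}(x)\bigr|=|\langle\phi(x),\delta_{t+1}\rangle_2|\le\sqrt{1+R^2}\,C_\sigma\,\|\delta_{t+1}\|_2,
\]
which is the claimed bound. The only delicate point I anticipate is the kernel identification in the first step: $k^M$ must be the feature Gram kernel in the augmented variables, which relies on including $\gamma$ in the input and treating $c_r$ as trainable. The rest is a deterministic, pathwise estimate and needs no expectation.
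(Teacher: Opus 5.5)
Your argument is correct and is essentially the paper's proof written in parameter coordinates. The paper runs the same recursion for $f^{\mathrm{lin}}_{\Theta_t}-f_t$ in $\mathcal H_{k^M}$: its $\Gamma_i,\Gamma_i'$ are the images of your $\Lambda^{(2)}_i,\Lambda^{(1)}_i$ under $w\mapsto\langle\phi(\cdot),w\rangle_2$, and $k^M(\cdot,x_t)\otimes k^M(\cdot,x_t)$ plays the role of $A_t$. It then uses the same contraction, the geometric-sum identity \eqref{eq:ntk-geometric-sum}, and the same $\lambda^{-2}/\sqrt M$ perturbation bounds, while you track $\delta_t=\Theta_t-\Theta_1-w_t$ directly, which avoids the RKHS bookkeeping and gives the same estimate since $\|f^{\mathrm{lin}}_{\Theta_{t+1}}-f_{t+1}\|_{k^M}\le\|\delta_{t+1}\|_2$.
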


\begin{proof}
The case $t=0$ is immediate from initialization.
Fix $1\le t\le T$. 
Denote by $\mathcal{H}_{k^M}$ the RKHS associated with $k^M$, equipped with norm
$\|\cdot\|_{k^M}$.
By the definitions of $f^{\mathrm{lin}}_{\Theta_{t+1}}$ and $f_{t+1}$, we have
\begin{equation*}
\begin{aligned}
f^{\mathrm{lin}}_{\Theta_{t+1}}-f_{t+1}
&=
\bigl\langle\nabla_{\Theta}f_{\Theta}(\cdot)\big|_{\Theta=\Theta_1},\Theta_{t+1}-\Theta_1\bigr\rangle_2
-\Bigl((1-\eta_t\lambda)f_t-\eta_t\bigl(f_t(x_t)-v_t\bigr)k^M(\cdot,x_t)\Bigr)
\\
&=
(1-\eta_t\lambda)\bigl(f^{\mathrm{lin}}_{\Theta_t}-f_t\bigr)
-\eta_t\bigl(f_{\Theta_t}(x_t)-v_t\bigr)
\Bigl\langle\nabla_{\Theta}f_{\Theta}(\cdot)\big|_{\Theta=\Theta_1},
\nabla_{\Theta}f_{\Theta}(x_t)\big|_{\Theta=\Theta_t}\Bigr\rangle_2
\\
&\quad
+\eta_t\bigl(f_t(x_t)-v_t\bigr)k^M(\cdot,x_t).
\end{aligned}
\end{equation*}
Adding and subtracting $f^{\mathrm{lin}}_{\Theta_t}(x_t)$ in the last term yields
\begin{equation*}
\begin{aligned}
f^{\mathrm{lin}}_{\Theta_{t+1}}-f_{t+1}
&=
(1-\eta_t\lambda)\bigl(f^{\mathrm{lin}}_{\Theta_t}-f_t\bigr)
-\eta_t\bigl(f_{\Theta_t}(x_t)-v_t\bigr)
\Bigl\langle\nabla_{\Theta}f_{\Theta}(\cdot)\big|_{\Theta=\Theta_1},
\nabla_{\Theta}f_{\Theta}(x_t)\big|_{\Theta=\Theta_t}\Bigr\rangle_2
\\
&\quad
+\eta_t\bigl(f_t(x_t)-f^{\mathrm{lin}}_{\Theta_t}(x_t)\bigr)k^M(\cdot,x_t)
+\eta_t\bigl(f^{\mathrm{lin}}_{\Theta_t}(x_t)-v_t\bigr)k^M(\cdot,x_t).
\end{aligned}
\end{equation*}

Since both $f_t$ and $f^{\mathrm{lin}}_{\Theta_t}$ belong to $\mathcal{H}_{k^M}$
(by the iterative construction and the representer theorem, see, e.g., \cite[Theorem~4.21]{christmann2008support}), we have
\[
\bigl(f_t(x_t)-f^{\mathrm{lin}}_{\Theta_t}(x_t)\bigr)\,k^M(\cdot,x_t)
=
\bigl(k^M(\cdot,x_t)\otimes k^M(\cdot,x_t)\bigr)\bigl(f_t-f^{\mathrm{lin}}_{\Theta_t}\bigr).
\]
Moreover, by the definitions of the linearized model and the empirical NTK $k^M$,
\[
f^{\mathrm{lin}}_{\Theta_t}(x_t)
=
\bigl\langle\nabla_{\Theta}f_{\Theta}(x_t)\big|_{\Theta=\Theta_1},\Theta_t-\Theta_1\bigr\rangle_2,
\qquad
k^M(\cdot,x_t)
=
\Bigl\langle\nabla_{\Theta}f_{\Theta}(\cdot)\big|_{\Theta=\Theta_1},
\nabla_{\Theta}f_{\Theta}(x_t)\big|_{\Theta=\Theta_1}\Bigr\rangle_2 .
\]
Substituting these identities, we obtain
\begin{equation*}
\begin{aligned}
f^{\mathrm{lin}}_{\Theta_{t+1}}-f_{t+1}
&=
\bigl(I-\eta_t\lambda\bigr)\bigl(f^{\mathrm{lin}}_{\Theta_t}-f_t\bigr)
+\eta_t\bigl(k^M(\cdot,x_t)\otimes k^M(\cdot,x_t)\bigr)\bigl(f_t-f^{\mathrm{lin}}_{\Theta_t}\bigr)
\\
&\quad
-\eta_t\bigl(f_{\Theta_t}(x_t)-v_t\bigr)
\Bigl\langle\nabla_{\Theta}f_{\Theta}(\cdot)\big|_{\Theta=\Theta_1},
\nabla_{\Theta}f_{\Theta}(x_t)\big|_{\Theta=\Theta_t}\Bigr\rangle_2
+\eta_t\bigl(f^{\mathrm{lin}}_{\Theta_t}(x_t)-v_t\bigr)k^M(\cdot,x_t)
\\
&=
\Bigl(I-\eta_t\bigl(k^M(\cdot,x_t)\otimes k^M(\cdot,x_t)+\lambda I\bigr)\Bigr)
\bigl(f^{\mathrm{lin}}_{\Theta_t}-f_t\bigr)
\\
&\quad
-\eta_t\bigl(f_{\Theta_t}(x_t)-v_t\bigr)
\Bigl\langle\nabla_{\Theta}f_{\Theta}(\cdot)\big|_{\Theta=\Theta_1},
\nabla_{\Theta}f_{\Theta}(x_t)\big|_{\Theta=\Theta_t}
-\nabla_{\Theta}f_{\Theta}(x_t)\big|_{\Theta=\Theta_1}\Bigr\rangle_2
\\
&\quad
-\eta_t\bigl(f_{\Theta_t}(x_t)-f^{\mathrm{lin}}_{\Theta_t}(x_t)\bigr)\,k^M(\cdot,x_t).
\end{aligned}
\end{equation*}
For $1\le i\le t$, define
\begin{equation*}
\begin{aligned}
\Gamma_i
&:=
\bigl(f_{\Theta_i}(x_i)-v_i\bigr)
\Bigl\langle\nabla_{\Theta}f_{\Theta}(\cdot)\big|_{\Theta=\Theta_1},
\nabla_{\Theta}f_{\Theta}(x_i)\big|_{\Theta=\Theta_i}
-\nabla_{\Theta}f_{\Theta}(x_i)\big|_{\Theta=\Theta_1}\Bigr\rangle_2
\in\mathcal{H}_{k^M},
\\
\Gamma_i'
&:=
\bigl(f_{\Theta_i}(x_i)-f^{\mathrm{lin}}_{\Theta_i}(x_i)\bigr)\,k^M(\cdot,x_i)
\in\mathcal{H}_{k^M}.
\end{aligned}
\end{equation*}
Then,
\begin{equation*}
f^{\mathrm{lin}}_{\Theta_{t+1}}-f_{t+1}
=
\Bigl(I-\eta_t\bigl(k^M(\cdot,x_t)\otimes k^M(\cdot,x_t)+\lambda I\bigr)\Bigr)
\bigl(f^{\mathrm{lin}}_{\Theta_t}-f_t\bigr)
-\eta_t\bigl(\Gamma_t+\Gamma_t'\bigr).
\end{equation*}
Iterating the recursion yields
    \[
    f^{\mathrm{lin}}_{\Theta_{t+1}}-f_{t+1}=-\sum_{i=1}^t\eta_i\prod_{j=i+1}^{t}\l(I-\eta_j\l(k^{M}(\cdot,x_j)\otimes k^{M}(\cdot,x_j)+\lambda I\r)\r)\left(\Gamma_{i}^{}+\Gamma_{i}'\right).
    \]
    Since
\[
\|k^M(\cdot,x_j)\otimes k^M(\cdot,x_j)\|
=k^M(x_j,x_j)\le(1+R^2)C_\sigma^2,
\]
the step-size condition gives
\[
\left\|I-\eta_j
\bigl(k^M(\cdot,x_j)\otimes k^M(\cdot,x_j)+\lambda I\bigr)\right\|
\le1-\lambda\eta_j.
\]
By the reproducing property and \eqref{eq:ntk-geometric-sum},
\[
\begin{aligned}
\sup_{\|x\|_2\le R}
|f^{\mathrm{lin}}_{\Theta_{t+1}}(x)-f_{t+1}(x)|
&\le \sqrt{1+R^2}C_\sigma
\sum_{i=1}^{t}\eta_i\prod_{j=i+1}^{t}(1-\lambda\eta_j)
\bigl(\|\Gamma_i\|_{k^M}+\|\Gamma_i'\|_{k^M}\bigr)\\
&\le \frac{\sqrt{1+R^2}C_\sigma}{\lambda}
\max_{1\le i\le t}
\bigl(\|\Gamma_i\|_{k^M}+\|\Gamma_i'\|_{k^M}\bigr).
\end{aligned}
\]


    We estimate the two terms separately. 
    
    \paragraph{Contribution of $\Gamma_i$.} 
    By the feature-map representation of $k^M$,
Lemma~\ref{lemma3}, and Proposition~\ref{prop16},
    \[
\begin{aligned}
\|\Gamma_i\|_{k^M}
&\le
(\|f_{\Theta_i}\|_\infty+B)
\left\|
\nabla_\Theta f_\Theta(x_i)\big|_{\Theta=\Theta_i}
-\nabla_\Theta f_\Theta(x_i)\big|_{\Theta=\Theta_1}
\right\|_2\\
&\lesssim
\frac{(\|\Theta_i-\Theta_1\|_2+B)
\|\Theta_i-\Theta_1\|_2}{\sqrt M}\\
&\lesssim \frac{\lambda^{-2}}{\sqrt M},
\qquad \forall1\le i\le t.
\end{aligned}
\]

    \paragraph{Contribution of $\Gamma_i'$.} 
    Using Proposition~\ref{prop15}, Proposition~\ref{prop16},
and $\|k^M(\cdot,x_i)\|_{k^M}\le\sqrt{1+R^2}C_\sigma$,
    \[
\begin{aligned}
\|\Gamma_i'\|_{k^M}
&\le \sqrt{1+R^2}C_\sigma
\|f_{\Theta_i}-f^{\mathrm{lin}}_{\Theta_i}\|_\infty\\
&\lesssim
\frac{\|\Theta_i-\Theta_1\|_2^2}{\sqrt M}
\lesssim \frac{\lambda^{-2}}{\sqrt M},
\qquad \forall1\le i\le t.
\end{aligned}
\]

    Combining the above estimates yields
    \[
    \sup_{\|x\|_2 \le R}
\bigl| f^{\mathrm{lin}}_{\Theta_{t+1}}(x) - f_{t+1}(x) \bigr|\lesssim\frac{\lambda^{-3}}{\sqrt{M}},
    \]
    which completes the proof.
\end{proof}

We are now in a position to combine the preceding results and establish
a uniform bound between the nonlinear network iterate and the corresponding
kernel-based SGD iterate.
\begin{proposition}\label{prop18}
Under the conditions of Proposition \ref{prop16}, for all $0 \le t \le T$, the SGD iterates satisfy
\[
\sup_{\|x\|_2 \le R}\l|f_{\Theta_{t+1}}(x)-f_{t+1}(x)\r|\lesssim\frac{\lambda^{-3}}{\sqrt{M}}.
\]
Consequently,
\[
\operatorname*{ess\,sup}_{u\sim\rho_u}\l|\mg_{t+1}^{\mathrm{NN}}(u)-\mg_{t+1}^{}(u)\r|\lesssim\frac{\lambda^{-3}}{\sqrt{M}}.
\]
\end{proposition}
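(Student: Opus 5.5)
The bound follows from a triangle inequality through the linearized network:
\[
\bigl|f_{\Theta_{t+1}}(x)-f_{t+1}(x)\bigr|
\le
\bigl|f_{\Theta_{t+1}}(x)-f^{\mathrm{lin}}_{\Theta_{t+1}}(x)\bigr|
+\bigl|f^{\mathrm{lin}}_{\Theta_{t+1}}(x)-f_{t+1}(x)\bigr| ,
\]
taken uniformly over $\|x\|_2\le R$. Both pieces are controlled by earlier results.

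The second term is exactly the quantity bounded in Proposition~\ref{prop17}. Under the conditions of Proposition~\ref{prop16}, it is at most a constant times $\lambda^{-3}/\sqrt M$ for every $0\le t\le T$.

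For the first term, apply Proposition~\ref{prop15} with $\Theta=\Theta_{t+1}$. This gives the bound $(R^2/2+R)C_\sigma M^{-1/2}\|\Theta_{t+1}-\Theta_1\|_2^2$. Proposition~\ref{prop16} supplies $\|\Theta_{t+1}-\Theta_1\|_2\le C_2\lambda^{-1}$ for all $1\le t+1\le T+1$, so the first term is $\lesssim \lambda^{-2}/\sqrt M$.

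Since $\lambda\le1$, we have $\lambda^{-2}\le\lambda^{-3}$, and the two contributions combine to the claimed $\lambda^{-3}/\sqrt M$. The constants depend only on $R,B,C_\sigma,\gamma$ and are independent of $t$, $M$ and $T$.

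For the second statement, recall that $\mathcal G^{\mathrm{NN}}_{t+1}=f_{\Theta_{t+1}}\circ\mathcal E_1^{d_1}$ and $\mathcal G_{t+1}=f_{t+1}\circ\mathcal E_1^{d_1}$. In the augmented representation the relevant input is $x=(\mathcal E_1^{d_1}u,\gamma)$. By Assumption~\ref{assumption: sample 1} and the enlarged radius, this satisfies $\|x\|_2\le R$ for $\rho_u$-almost every $u$. The essential supremum over $u$ is therefore dominated by the supremum over the ball, and the second statement follows from the first.

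The only delicate point is bookkeeping rather than analysis. The uniform sup in Propositions~\ref{prop15} and~\ref{prop17} must be taken over the same augmented ball that contains the encoded data. Proposition~\ref{prop15} requires no width condition beyond the one already ensured by $M\ge C_1\lambda^{-4}$ in Proposition~\ref{prop16}, which also guarantees $M\ge\|\Theta_{t+1}-\Theta_1\|_2^2$ should Lemma~\ref{lemma3} be needed. No new estimates are required.
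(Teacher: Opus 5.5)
Your proposal is correct and essentially the paper's own proof: the paper also splits through $f^{\mathrm{lin}}_{\Theta_{t+1}}$. It bounds the linearization error by $\lambda^{-2}/\sqrt{M}$ via Propositions~\ref{prop15} and~\ref{prop16}, bounds the linearized-versus-kernel term by $\lambda^{-3}/\sqrt{M}$ via Proposition~\ref{prop17}, and absorbs the former using $\lambda\le 1$. You also make explicit what the paper leaves implicit: the essential-supremum statement follows because the augmented inputs $(\mathcal E_1^{d_1}u,\gamma)$ lie $\rho_u$-almost surely in the ball of radius $R$.
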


\begin{proof}
    By Proposition \ref{prop17}, Proposition \ref{prop16}, and Proposition \ref{prop15}, we decompose the error as
    \begin{equation*}
        \begin{aligned}
            \sup_{\|x\|_2 \le R}\l|f_{\Theta_{t+1}}(x)-f_{t+1}(x)\r|&\leq
            \sup_{\|x\|_2 \le R}\l|f_{\Theta_{t+1}}(x)-f^{\mathrm{lin}}_{\Theta_{t+1}}(x)\r|+\sup_{\|x\|_2 \le R}\l|f^{\mathrm{lin}}_{\Theta_{t+1}}(x)-f_{t+1}(x)\r|
            \\&\lesssim
\frac{\lambda^{-2}}{\sqrt{M}}
+\frac{\lambda^{-3}}{\sqrt{M}}
\\&\lesssim
\frac{\lambda^{-3}}{\sqrt{M}},
        \end{aligned}
    \end{equation*}
    where the last inequality uses $0<\lambda\le1$.
This completes the proof.
\end{proof}

\begin{proof}[Proof of Theorem \ref{NTK SGD1}]
    The result follows directly from Theorem~\ref{Thm1} and 
Proposition~\ref{prop18}.
\end{proof}

We next consider the constant-stepsize regime and derive an analogue of
Proposition~\ref{prop16} for the parameter deviation.
The argument follows similar lines to the decaying-stepsize case, with minor
modifications.
We therefore give a brief proof.
\begin{proposition}[Uniform stability of the parameter trajectory (constant stepsizes)] \label{prop19}
Suppose that Assumption \ref{assumption: NTK} and Assumption \ref{assumption: sample 1} hold. Let $0<\lambda\le1$. Consider SGD with constant step sizes $\l\{\eta_t=\eta T^{-\theta'}\r\}_{t\in\bn_T}$,
where $0<\theta'<1$ and
$0<\eta\le((1+R^2)C_\sigma^2+1)^{-1}$. Assume that
\[
M \gtrsim \lambda^{-4}.
\]
Then, for all $1 \le t \le T+1$, the SGD iterates satisfy
\[
\|\Theta_t - \Theta_1\|_2 \lesssim \lambda^{-1}.
\]
\end{proposition}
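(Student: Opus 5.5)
The proof will be a verbatim adaptation of the induction in Proposition~\ref{prop16}, with the decreasing step sizes replaced by the constant step size $\eta_t\equiv\eta_1:=\eta T^{-\theta'}$.

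First, record the initialization bound \eqref{temp23}:
\[
\|\nabla_\Theta f_\Theta(x)|_{\Theta=\Theta_1}\|_2^2\le(1+R^2)C_\sigma^2 .
\]
Since $\eta_1\le\eta\le((1+R^2)C_\sigma^2+1)^{-1}$ and $\lambda\le1$, the contraction estimate
\[
\|I-\eta_j(A_j+\lambda I)\|\le1-\lambda\eta_1
\]
holds for every $j\le T$.

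Next, fix the constants $C_2$ and $C_1$ exactly as in Proposition~\ref{prop16} and assume $M\ge C_1\lambda^{-4}$. The induction hypothesis is $\|\Theta_t-\Theta_1\|_2\le C_2\lambda^{-1}$ for all $1\le t\le k$. Because $C_1\ge C_2^2$, this gives $M\ge\|\Theta_t-\Theta_1\|_2^2$, so Lemma~\ref{lemma3} and Proposition~\ref{prop15} apply along the trajectory up to time $k$.

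Now unroll the recursion as before:
\[
\Theta_{k+1}-\Theta_1=-\sum_{i\le k}\eta_1\prod_{j=i+1}^k\bigl(I-\eta_1(A_j+\lambda I)\bigr)\bigl(\Lambda_i^{(1)}+\Lambda_i^{(2)}-\Lambda_i^{(3)}\bigr).
\]
The only point needing a check is the geometric sum \eqref{eq:ntk-geometric-sum}. With constant steps it becomes
\[
\sum_{i=1}^k\eta_1(1-\lambda\eta_1)^{k-i}=\frac{1-(1-\lambda\eta_1)^k}{\lambda}\le\lambda^{-1}.
\]
This is uniform in $k\le T$ and in $T$, so the factor $T^{-\theta'}$ plays no role.

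It then remains to bound the three contributions exactly as in Proposition~\ref{prop16}:
\begin{itemize}
\item $\Lambda^{(1)}$ is controlled by Proposition~\ref{prop15}, giving at most $\tfrac{C_2}{4}\lambda^{-1}$;
\item $\Lambda^{(2)}$ is controlled by Lemma~\ref{lemma3}, giving at most $\tfrac{C_2}{4}\lambda^{-1}$;
\item $\Lambda^{(3)}$ is bounded using $|v|\le B$, giving at most $\tfrac{C_2}{2}\lambda^{-1}$.
\end{itemize}
In the first two cases the bound $\tfrac{C_2}{4}\lambda^{-1}$ follows from $\lambda^{-3}/\sqrt M\le C_1^{-1/2}\lambda^{-1}$. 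Summing the three contributions closes the induction, $\|\Theta_{k+1}-\Theta_1\|_2\le C_2\lambda^{-1}$.

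The only potential obstacle is making sure no constant depends on $T$ through the step size. The constant-step geometric sum handles this, and the step-size restriction $\eta_1\le\min\{\cdot,1-\theta\}$ used in Proposition~\ref{prop16} is not needed here.
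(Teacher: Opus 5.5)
Your proposal is correct and matches the paper's proof. The paper also notes that the induction of Proposition~\ref{prop16} needs only the step-size condition $\eta_j((1+R^2)C_\sigma^2+\lambda)\le 1$ and the geometric-sum bound \eqref{eq:ntk-geometric-sum}, both of which hold for the constant schedule; it then takes $M\ge C_1\lambda^{-4}$ and reruns that induction to get $\|\Theta_t-\Theta_1\|_2\le C_2\lambda^{-1}$.
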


\begin{proof}
The proof of Proposition~\ref{prop16} applies to any positive
step-size sequence satisfying
$\eta_j((1+R^2)C_\sigma^2+\lambda)\le1$,
using \eqref{eq:ntk-geometric-sum}. Indeed, since $0<\lambda\le1$ and $\eta_j=\eta T^{-\theta'}\le\eta$,
\[
\eta_j\bigl((1+R^2)C_\sigma^2+\lambda\bigr)
\le \eta\bigl((1+R^2)C_\sigma^2+1\bigr)
\le1.
\]
The present schedule satisfies this condition.
Taking the implicit constant in the width condition at least $C_1$,
the same induction gives
$\|\Theta_t-\Theta_1\|_2\le C_2\lambda^{-1}$
for all $1\le t\le T+1$.
\end{proof}

We next bound the discrepancy between the linearized network iterate and the
corresponding kernel-based SGD iterate in the constant-stepsize regime.

\begin{proposition} \label{prop23}
Under the conditions of Proposition \ref{prop19}, the following uniform bound holds:
\[
    \sup_{\|x\|_2 \le R}
\bigl|f^{\mathrm{lin}}_{\Theta_{t+1}}(x)-f_{t+1}(x)\big|\lesssim\frac{\lambda^{-3}}{\sqrt{M}},\qquad \forall\, 0\le t\le T.
\]
\end{proposition}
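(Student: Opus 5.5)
The plan is to repeat the proof of Proposition~\ref{prop17} almost word for word, now with the constant schedule $\eta_t=\eta T^{-\theta'}$. Everything that changes is confined to the step-size bookkeeping. The case $t=0$ is trivial, since $f^{\mathrm{lin}}_{\Theta_1}=0=f_1$.

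For $1\le t\le T$, I would first derive the same one-step recursion in $\mathcal H_{k^M}$:
\[
f^{\mathrm{lin}}_{\Theta_{t+1}}-f_{t+1}
=\Bigl(I-\eta_t\bigl(k^M(\cdot,x_t)\otimes k^M(\cdot,x_t)+\lambda I\bigr)\Bigr)\bigl(f^{\mathrm{lin}}_{\Theta_t}-f_t\bigr)-\eta_t(\Gamma_t+\Gamma_t').
\]
Here $\Gamma_i,\Gamma_i'$ are defined exactly as before. The derivation uses only the definitions of $f^{\mathrm{lin}}$, $k^M$ and the two SGD updates, so the step-size schedule plays no role in it. Unrolling the recursion gives
\[
f^{\mathrm{lin}}_{\Theta_{t+1}}-f_{t+1}=-\sum_{i=1}^t\eta_i\prod_{j=i+1}^t\Bigl(I-\eta_j\bigl(k^M(\cdot,x_j)\otimes k^M(\cdot,x_j)+\lambda I\bigr)\Bigr)(\Gamma_i+\Gamma_i').
\]

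Next I need each factor in the product to be a contraction by $1-\lambda\eta_j$. For this I use $\eta_j=\eta T^{-\theta'}\le\eta\le((1+R^2)C_\sigma^2+1)^{-1}$, together with $k^M(x_j,x_j)\le(1+R^2)C_\sigma^2$ and $\lambda\le1$. These give $\eta_j(k^M(x_j,x_j)+\lambda)\le1$, so the contraction holds. The geometric-sum identity \eqref{eq:ntk-geometric-sum} holds for any positive sequence with $\lambda\eta_j\le1$, so it still yields
\[
\sum_{i=1}^t\eta_i\prod_{j=i+1}^t(1-\lambda\eta_j)\le\lambda^{-1}.
\]
Combining the reproducing property with $\|k^M(\cdot,x)\|_{k^M}\le\sqrt{1+R^2}\,C_\sigma$ then gives
\[
\sup_{\|x\|_2\le R}\bigl|f^{\mathrm{lin}}_{\Theta_{t+1}}(x)-f_{t+1}(x)\bigr|\lesssim\lambda^{-1}\max_{1\le i\le t}\bigl(\|\Gamma_i\|_{k^M}+\|\Gamma_i'\|_{k^M}\bigr).
\]

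The main point to check is the bound on $\Gamma_i$ and $\Gamma_i'$. In Proposition~\ref{prop17} this relied on the parameter-drift control of Proposition~\ref{prop16}. Here I would replace it by Proposition~\ref{prop19}, which gives $\|\Theta_i-\Theta_1\|_2\le C_2\lambda^{-1}$ for all $i\le T+1$ under $M\gtrsim\lambda^{-4}$. Since $M\ge C_2^2\lambda^{-2}$, the hypothesis $M\ge\|\Theta_i-\Theta_1\|_2^2$ of Lemma~\ref{lemma3} is satisfied.
\begin{itemize}
\item For $\Gamma_i$: Lemma~\ref{lemma3} gives $\|f_{\Theta_i}\|_\infty\lesssim\lambda^{-1}$ and a gradient deviation of order $\lambda^{-1}/\sqrt M$. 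Together with $|v_i|\le B$ this yields $\|\Gamma_i\|_{k^M}\lesssim\lambda^{-2}/\sqrt M$.
\item For $\Gamma_i'$: Proposition~\ref{prop15} gives $\|f_{\Theta_i}-f^{\mathrm{lin}}_{\Theta_i}\|_\infty\lesssim\lambda^{-2}/\sqrt M$, hence $\|\Gamma_i'\|_{k^M}\lesssim\lambda^{-2}/\sqrt M$.
\end{itemize}
Multiplying by $\lambda^{-1}$ gives the claimed bound $\lambda^{-3}/\sqrt M$. All constants depend only on $R$, $B$, $C_\sigma$ and $\gamma$, so they are uniform in $t\le T$ and in $T$ itself.
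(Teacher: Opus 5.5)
Your proposal is correct and follows the same route as the paper. The paper's proof observes that the argument of Proposition~\ref{prop17} uses only the geometric-sum identity \eqref{eq:ntk-geometric-sum}, the step-size condition $\eta_j((1+R^2)C_\sigma^2+\lambda)\le1$, and the uniform parameter bound $\|\Theta_j-\Theta_1\|_2\lesssim\lambda^{-1}$; it then notes that the hypotheses and conclusion of Proposition~\ref{prop19} supply all three, which is exactly what you verify, only in more detail.
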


\begin{proof}
The argument in the proof of Proposition~\ref{prop17} uses
\eqref{eq:ntk-geometric-sum}, the step-size condition
$\eta_j((1+R^2)C_\sigma^2+\lambda)\le1$,
and the uniform bound
$\|\Theta_j-\Theta_1\|_2\lesssim\lambda^{-1}$.
The step-size condition follows from the assumptions of
Proposition~\ref{prop19}, and the uniform parameter bound
is its conclusion. Thus the same argument proves the asserted estimate.
\end{proof}

\begin{proof}[Proof of Theorem~\ref{NTK SGD2}]
By Proposition~\ref{prop23}, Proposition~\ref{prop19}, and Proposition~\ref{prop15}, we have
\begin{equation*}
\begin{aligned}
\sup_{\|x\|_2 \le R}
\bigl|f_{\Theta_{t+1}}(x)-f_{t+1}(x)\bigr|
&\le
\sup_{\|x\|_2 \le R}
\bigl|f_{\Theta_{t+1}}(x)-f^{\mathrm{lin}}_{\Theta_{t+1}}(x)\bigr|
+
\sup_{\|x\|_2 \le R}
\bigl|f^{\mathrm{lin}}_{\Theta_{t+1}}(x)-f_{t+1}(x)\bigr|
\\
&\lesssim
\frac{\lambda^{-2}}{\sqrt{M}}
+
\frac{\lambda^{-3}}{\sqrt{M}}
\\
&\lesssim
\frac{\lambda^{-3}}{\sqrt{M}}.
\end{aligned}
\end{equation*}
Here the last inequality uses $0<\lambda\le1$. Consequently,
\[
\operatorname*{ess\,sup}_{u\sim\rho_u}
\bigl|\mathcal{G}_{t+1}^{\mathrm{NN}}(u)-\mathcal{G}_{t+1}(u)\bigr|
\;\lesssim\;
\frac{\lambda^{-3}}{\sqrt{M}}.
\]

The conclusion now follows directly from Theorem~\ref{Thm2}.
\end{proof}

\subsection{Proofs for Subsection \ref{section: Neural Operators}}

\begin{proof}[Proof of Theorem~\ref{thm:NTK-encoder-decoder}]
We present the proof for the polynomially decaying step-size regime.
The constant step-size case follows by an entirely analogous argument and is
therefore omitted.

The analysis in Subsection~\ref{section: Neural SGD to Kernel SGD} treats the
scalar-output setting. Under the encoder--decoder architecture introduced in
Subsection~\ref{section: Neural Operators}, the network predicts an encoded output
in $\mathbb R^{d_2}$, and the objective and the SGD updates are separable across $j$; as a consequence,
each scalar subnetwork can be treated independently.
As a result, each encoded component can be analyzed using the scalar-output
results established in Subsection~\ref{section: Neural SGD to Kernel SGD},
with constants depending on the magnitude of the corresponding output
encoding coordinate.

By Assumption~\ref{ass:output-encoding}, the encoded outputs satisfy the coordinate-wise essential bounds
\[
\sup_{d_2\ge 1}
\sum_{j=1}^{d_2}
\operatorname*{ess\,sup}_{v\sim\rho_v}
\bigl|(\mathcal E_2^{d_2} v)_j\bigr|^2
< \infty.
\]
For each $j=1,\ldots,d_2$, define
\[
B_j
:=
\operatorname*{ess\,sup}_{v\sim\rho_v}
\bigl|(\mathcal E_2^{d_2} v)_j\bigr|.
\]


To retain the dependence on $B_j$, repeat the induction in the proof
of Proposition~\ref{prop16} with radius $cB_j\lambda^{-1}$,
where $c:=2\sqrt{1+R^2}C_\sigma$.
Keeping $B_j$ explicit in the three perturbation estimates
and using \eqref{eq:ntk-geometric-sum} gives
\[
\|\Theta^{(j)}_{k+1}-\Theta^{(j)}_1\|_2
\le \frac{cB_j}{2\lambda}
+C\frac{B_j^2\lambda^{-3}}{\sqrt M},
\]
where $C$ depends only on $R$ and $C_\sigma$.
For $B_j=0$, the iterates remain at initialization.
Thus the same induction shows that if
\[
M \gtrsim (B_j^2+1)\,\lambda^{-4},
\]
then the parameter iterates of the $j$-th subnetwork satisfy
\[
\|\Theta^{(j)}_t-\Theta^{(j)}_1\|_2
\;\lesssim\;
B_j\,\lambda^{-1},
\qquad 1\le t\le T+1.
\]

We decompose, for each $j=1,\ldots,d_2$,
\[
\sup_{\|x\|_2\le R}
\bigl|
f_{\Theta^{(j)}_{t+1}}(x)-f^{(j)}_{t+1}(x)
\bigr|
\le
\sup_{\|x\|_2\le R}
\bigl|
f_{\Theta^{(j)}_{t+1}}(x)-f^{\mathrm{lin}}_{\Theta^{(j)}_{t+1}}(x)
\bigr|
+
\sup_{\|x\|_2\le R}
\bigl|
f^{\mathrm{lin}}_{\Theta^{(j)}_{t+1}}(x)-f^{(j)}_{t+1}(x)
\bigr|.
\]
Repeating the estimates in the proof of Proposition~\ref{prop17}
with the preceding parameter bound gives
\[
\sup_{\|x\|_2\le R}
\bigl|
f^{\mathrm{lin}}_{\Theta^{(j)}_{t+1}}(x)-f^{(j)}_{t+1}(x)
\bigr|
\;\lesssim\;
\frac{B_j^{2}\,\lambda^{-3}}{\sqrt{M}},
\qquad 0\le t\le T.
\]
On the other hand, Proposition~\ref{prop15} yields
\begin{align*}
\sup_{\|x\|_2\le R}
\bigl|
f_{\Theta^{(j)}_{t+1}}(x)-f^{\mathrm{lin}}_{\Theta^{(j)}_{t+1}}(x)
\bigr|
&\le
\left(\frac{R^2}{2}+R\right)\frac{C_\sigma}{\sqrt{M}}\,
\|\Theta^{(j)}_{t+1}-\Theta^{(j)}_1\|_2^2  \\
&\lesssim
\frac{B_j^{2}\,\lambda^{-2}}{\sqrt{M}},
\end{align*}
where the last step uses the preceding parameter bound.
Combining the last three displays, we obtain
\[
\sup_{\|x\|_2\le R}
\bigl|
f_{\Theta^{(j)}_{t+1}}(x)-f^{(j)}_{t+1}(x)
\bigr|
\;\lesssim\;
\frac{B_j^{2}\,\lambda^{-3}}{\sqrt{M}},
\qquad 0\le t\le T.
\]

Aggregating the componentwise bounds yields
\begin{align*}
\sup_{\|x\|_2\le R}
\bigl\|
F_{\Theta_{t+1}}(x)-F_{t+1}(x)
\bigr\|_2
&\lesssim
\sqrt{\sum_{j=1}^{d_2}B_j^4}\,
\frac{\lambda^{-3}}{\sqrt{M}}.
\end{align*}

Finally, invoking the uniform boundedness of the decoder
$\mathcal D_2^{d_2}$ yields
\[
\operatorname*{ess\,sup}_{u\sim\rho_u}
\bigl\|
\mathcal G^{\mathrm{NN}}_{t+1}(u)-\mathcal G_{t+1}(u)
\bigr\|_{\mathcal V}
\;\lesssim\;
\sqrt{\sum_{j=1}^{d_2}B_j^4}\,
\frac{\lambda^{-3}}{\sqrt{M}}.
\]
The claimed error bound follows by combining this estimate with the corresponding
kernel-SGD error bounds in Theorem~\ref{Thm1} or Theorem~\ref{Thm2}, together with
the additional output-encoding error induced by the projection $P_2^{d_2}$.
\end{proof}

\section{Spectral Decay of Gaussian Kernel Integral Operators on Hilbert Spaces} \label{Appendix:Eigenvalues decay}
We consider the Gaussian kernel defined on an infinite-dimensional separable Hilbert space $\mathcal{U}$,
\[
k(u,u'):=\exp\l\{-\frac{\l\|u-u'\r\|_{\mathcal{U}}^2}{2\ell^2}\r\}.
\]
Let $\rho_u=\mathcal{N}(0,C)$ be a Gaussian probability measure on $\mathcal{U}$, where the covariance operator $C$ is positive self-adjoint and admits the spectral decomposition
\[
C=\sum_{i\geq1}\sigma_i^2\langle\cdot,e_i\rangle_{\mathcal{U}} \ e_i.
\]
Here $\{(\sigma_i^2,e_i)\}_{i\geq1}$ are the eigenpairs of $C$, satisfying
\[
\sum_{i\geq1}\sigma_i^2<\infty,
\]
and $\{e_i\}_{i\geq1}$ forms an orthonormal basis of $\mathcal{U}$. For a random element $u\sim\mathcal{N}(0,C)$, the coordinates satisfy
\[
\langle u,e_i\rangle\sim\mathcal{N}(0,\sigma_i^2).
\]
We further assume that the Gaussian measure is nondegenerate, i.e.,
\[
\sigma_i>0,\qquad\text{for all }i\geq1.
\]

Denote by $L_k:L^2(\mathcal{U},\rho_u)\to L^2(\mathcal{U},\rho_u)$ the integral operator induced by the kernel $k$, defined by
\[
L_kf(u):=\int_{\mathcal{U}}k(u,u')f(u')\mathrm{d}\rho_u(u').
\]
The following proposition characterizes the eigenvalue decay of $L_k$ under the nondegenerate Gaussian measure $\rho_u$.

\begin{proposition}
    For $q>0$, let $\mathrm{Tr}(L_k^q)$ and $\mathrm{Tr}(C^q)$ denote the sums of the $q-$th powers of the eigenvalues of $L_k$ and $C$, respectively. Then
    \[
    \mathrm{Tr}(L_k^q)<\infty\quad\text{if and only if}\quad\mathrm{Tr}(C^q)<\infty.
    \]
\end{proposition}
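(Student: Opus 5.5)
The plan is to exploit the product structure of both the kernel and the measure. In the coordinates $x_i=\langle u,e_i\rangle_{\mathcal U}$, which are independent with $x_i\sim\mathcal N(0,\sigma_i^2)$, the kernel factorizes as
\[
k(u,u')=\prod_{i\ge1}\exp\{-(x_i-x_i')^2/(2\ell^2)\}.
\]
Consequently $L^2(\mathcal U,\rho_u)$ is the infinite tensor product of the spaces $L^2(\mathbb R,\mathcal N(0,\sigma_i^2))$, each taken with the reference vector $\mathbf 1$. I will show that $L_k$ is the tensor product of the one-dimensional Gaussian-kernel integral operators $L_i$. It then follows that $\mathrm{Tr}(L_k^q)=\prod_{i\ge1}\mathrm{Tr}(L_i^q)$, and the claimed equivalence reduces to a convergence criterion for this infinite product.

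\textbf{Step 1 (one-dimensional spectra).} I will invoke the classical Mehler-formula diagonalization of the Gaussian kernel under a Gaussian measure. Set
\[
a_i=\frac{1}{4\sigma_i^2},\qquad b=\frac{1}{2\ell^2},\qquad c_i=\sqrt{a_i^2+2a_ib},\qquad A_i=a_i+b+c_i,\qquad B_i=\frac{b}{A_i}.
\]
Then $L_i$ has eigenvalues $\lambda_{i,n}=\sqrt{2a_i/A_i}\,B_i^n$ for $n\ge0$. Its eigenfunctions $h_{i,n}$ are scaled Hermite functions and form an orthonormal basis of $L^2(\mathcal N(0,\sigma_i^2))$. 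Summing the geometric series gives
\[
\mathrm{Tr}(L_i^q)=\frac{(2a_i/A_i)^{q/2}}{1-B_i^q}.
\]

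\textbf{Step 2 (tensorization).} Consider the products $h_{\mathbf n}:=\prod_i h_{i,n_i}$, indexed by multi-indices $\mathbf n$ with finitely many nonzero entries. These form an orthonormal basis of $L^2(\rho_u)$, since cylinder functions are dense and $\rho_u$ is a product measure. Applying $L_k$ to a cylinder function and using dominated convergence on the factorized kernel, I expect to verify that
\[
L_k h_{\mathbf n}=\Bigl(\prod_i\lambda_{i,n_i}\Bigr)h_{\mathbf n}.
\]
The infinite product converges to a positive number because $\lambda_{i,0}=\sqrt{2a_i/A_i}\to1$ sufficiently fast; this is checked in Step 3 and uses $\sum_i\sigma_i^2<\infty$. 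Tonelli's theorem then gives $\sum_{\mathbf n}\prod_i\lambda_{i,n_i}^q=\prod_i\mathrm{Tr}(L_i^q)$. The justification is to expand the finite products over $i\le N$ and let $N\to\infty$ monotonically, noting that the factors with $n_i=0$ are $\lambda_{i,0}^q\le1$.

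\textbf{Step 3 (asymptotics of the factors).} As $\sigma_i\to0$, we have $a_i\to\infty$. An expansion gives $A_i=2a_i+2b+O(a_i^{-1})$, so that
\[
\frac{2a_i}{A_i}=1-\frac{b}{a_i}+O(a_i^{-2})=1-O(\sigma_i^2),\qquad B_i\sim\frac{b}{2a_i}=\frac{\sigma_i^2}{\ell^2}.
\]
Hence
\[
\log\mathrm{Tr}(L_i^q)=-O(\sigma_i^2)+B_i^q+O(B_i^{2q}).
\]
The first term is summable since $\mathrm{Tr}(C)<\infty$. The infinite product is therefore finite if and only if $\sum_i B_i^q<\infty$, which by $B_i\asymp\sigma_i^2$ is equivalent to $\sum_i\sigma_i^{2q}=\mathrm{Tr}(C^q)<\infty$. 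Each factor is positive and finite, so no degenerate zero factor arises.

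The main obstacle is Step 2. It requires rigorously identifying the full spectrum of $L_k$ on the infinite product space: completeness of the basis $\{h_{\mathbf n}\}$ and the passage from finite-dimensional truncations to the limit. The asymptotic bookkeeping in Step 3 is routine once the one-dimensional formulas are in hand.
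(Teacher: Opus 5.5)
Your proposal is essentially the paper's proof. Both use the same one-dimensional Mehler spectra $\sqrt{2a_i/A_i}\,B_i^{n}$, the same product formula for the eigenvalues of $L_k$ over finitely supported multi-indices, and the same reduction to $\sum_i B_i^q<\infty$ with $B_i\sim\sigma_i^2/\ell^2$. The only difference is cosmetic: where you expand asymptotically, the paper uses the exact identity $\sqrt{2a_i/A_i}=(a_i+c_i)/A_i=1-B_i$, which gives $\mathrm{Tr}(L_k^q)=\prod_i(1-B_i)^q(1-B_i^q)^{-1}$ directly.

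One point in your Step 2, which the paper simply asserts, needs an extra ingredient. Density of cylinder functions alone does not give completeness of $\{h_{\mathbf n}\}$, because the ground states $h_{i,0}\propto e^{-(c_i-a_i)x^2}$ are not constant. You also need $\prod_{i>N}\langle h_{i,0},\mathbf 1\rangle_{L^2(\mathcal N(0,\sigma_i^2))}\to1$, so that the infinite products defining $h_{\mathbf n}$ converge in $L^2(\rho_u)$ and $g\prod_{i>N}h_{i,0}\to g$ for every cylinder function $g$. This holds because $1-\langle h_{i,0},\mathbf 1\rangle=O(\sigma_i^4)$ is summable.
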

The proposition shows that the spectral behavior of the Gaussian
kernel integral operator is determined by the covariance operator
of the underlying nondegenerate Gaussian measure.
In particular, the trace condition for $L_k$ is equivalent to that
of $C$, relating the spectral complexity of the associated RKHS
to the spectrum of the covariance operator.
\begin{proof}
    Consider first the Gaussian kernel on $\mathbb{R}$,
    \[
    k_0(x,x')=\exp\l\{-\frac{\l\|x-x'\r\|_{2}^2}{2\ell^2}\r\}.
    \]
    The eigenvalues and eigenfunctions of the corresponding integral operator with respect to the Gaussian measure $\mathcal{N}(0,\sigma^2)$ are given in \cite[Section 4.3.1]{williams2006gaussian} by
    \begin{equation*}
    \begin{cases}
        \lambda_i=\sqrt{\frac{2a}{A}}B^i,\\
        \phi_i(x)=\exp\{-(c-a)x^2\}H_i\l(\sqrt{2c}x\r),
    \end{cases}
\end{equation*}
where $i\geq0$ and
\[
H_i(x)=(-1)^i\exp\{x^2\}\frac{\mathrm{d}^i}{\mathrm{d}x^i}\exp\{-x^2\}
\]
are the Hermite polynomials. The parameters are defined as
\[
a=\frac{1}{4\sigma^2},\quad b=\frac{1}{2\ell^2},\quad c=\sqrt{a^2+2ab},\quad A=a+b+c,\quad B=\frac{b}{A}.
\]

Define
\[
\Gamma:=\l\{\gamma=(\gamma_i)_{i\geq1}\in\bn_0^{\bn}:\ \mathrm{supp}(\gamma)\text{ is finite}\r\}.
\]
Since the kernel $k$ can be written as
\[
k(u,u')=\prod_{i\geq1}k_0\l(\langle u,e_i\rangle,\langle u',e_i\rangle\r),
\]
the nonzero eigenvalues of $L_k$ are given by
\[
\l(\prod_{i\geq1}\sqrt{\frac{2a_i}{A_i}}B_i^{\gamma_i}\r)_{\gamma\in\Gamma},
\]
where
\[
a_i=\frac{1}{4\sigma_i^2},\quad b=\frac{1}{2\ell^2},\quad c_i=\sqrt{a_i^2+2a_ib},\quad A_i=a_i+b+c_i,\quad B_i=\frac{b}{A_i}.
\]
The definitions give
\[
0<B_i<1,\qquad
\sqrt{\frac{2a_i}{A_i}}
=\frac{a_i+c_i}{A_i}=1-B_i,
\qquad
B_i\sim\frac{\sigma_i^2}{\ell^2}
\quad(i\to\infty).
\]
Thus \(\sum_{i\ge1}B_i<\infty\). Since
\(-\log(1-B_i)\sim B_i\), we obtain
\[
c_*:=\prod_{i\ge1}\sqrt{\frac{2a_i}{A_i}}
=\prod_{i\ge1}(1-B_i)>0.
\]
For \(q>0\), monotone convergence gives
\[
\begin{aligned}
\mathrm{Tr}(L_k^q)
&=c_*^q\sum_{\gamma\in\Gamma}
  \prod_{i\ge1}B_i^{q\gamma_i}\\
&=c_*^q\lim_{n\to\infty}
  \prod_{i=1}^n(1-B_i^q)^{-1}.
\end{aligned}
\]
Since \(-\log(1-B_i^q)\sim B_i^q\), it follows that
\[
\mathrm{Tr}(L_k^q)<\infty
\iff \sum_{i\ge1}B_i^q<\infty
\iff \sum_{i\ge1}\sigma_i^{2q}<\infty
\iff \mathrm{Tr}(C^q)<\infty.
\]
This completes the proof.
\end{proof}

\bibliographystyle{plain}
\bibliography{ref}
    
\end{document}